\DeclareMathOperator*{\argmax}{arg\,max}
\newtheorem{Theorem}{\textbf{Theorem}}
\newtheorem{Lemma}{\textbf{Lemma}}
\newtheorem*{Remark}{\textbf{Remark}}
\newtheorem*{Proof}{\textbf{Proof}}
\newtheorem{Assumption}{\textbf{Assumption}}
\def\GDIHmeanhns{9620.98}
\def\GDIHmeanhnsle{4.81E-07}
\def\GDIHmedianhns{1146.39}
\def\GDIHmedianhnsle{5.73E-08}
\def\GDIHHWRB{22}
\def\GDIHmeanHWRNS{154.27}
\def\GDIHmeanHWRNSle{7.71E-09}
\def\GDIHmedianHWRNS{50.63}
\def\GDIHmedianHWRNSle{2.53E-09}
\def\GDIHmeanSABER{71.26}
\def\GDIHmeanSABERle{3.56E-09}
\def\GDIHmedianSABER{50.63}
\def\GDIHmedianSABERle{2.53E-09}
\def\GDIHgametime{0.114}
\def\GDIImeanhns{7810.6}
\def\GDIImedianhns{832.5}
\def\GDIIHWRB{17}
\def\GDIImeanHWRNS{117.99}
\def\GDIImedianHWRNS{35.78}
\def\GDIImeanSABER{61.66}
\def\GDIImedianSABER{35.78}
\def\GDIIgametime{0.114}
\newcommand{\best}[1]{\textbf{#1}}
\title{GDI: Rethinking What Makes Reinforcement Learning Different from Supervised Learning}
\author{%
  Jiajun Fan \\
  Tsinghua University\\
  \texttt{fanjj21@mails.tsinghua.edu.cn} \\
   \And
  Changnan Xiao \\
  ByteDance\\
\texttt{xiaochangnan@bytedance.com} \\
   \And
  Yue Huang \\
  ByteDance \\
  \texttt{yuehuanghit@gmail.com}\\
}
\begin{document}

\maketitle
 
\begin{abstract}

Deep Q Network (DQN) firstly kicked the door of deep reinforcement learning (DRL) via combining deep learning (DL) with reinforcement learning (RL), which has noticed that the distribution of the acquired data would change during the training process. DQN found this property might cause instability for training, so it proposed effective methods to handle the downside of the property. Instead of focusing on the unfavorable aspects, we find it critical for RL to ease the gap between the estimated data distribution and the ground truth data distribution while supervised learning (SL) fails to do so. From this new perspective, we extend the basic paradigm of RL called the Generalized Policy Iteration (GPI) into a more generalized version, which is called the Generalized Data Distribution Iteration (GDI). We see massive RL algorithms and techniques can be unified into the GDI paradigm, which can be considered as one of the special cases of GDI. We provide theoretical proof of why GDI is better than GPI and how it works. Several practical algorithms based on GDI have been proposed to verify its effectiveness and extensiveness.  Empirical experiments prove our state-of-the-art (SOTA) performance on Arcade Learning Environment (ALE), wherein our algorithm has achieved \textbf{\GDIHmeanhns\%} mean human
normalized score (HNS), \textbf{\GDIHmedianhns\%} median HNS and \textbf{\GDIHHWRB} human world record breakthroughs (HWRB) using only \textbf{200M} training frames. Our work aims to lead the RL research to step into the journey of conquering the human world records and seek real superhuman agents on both performance and efficiency.
\end{abstract}

\section{Introduction}
\label{sec: introduction}

Machine learning (ML) can be defined as improving some measure performance P at some task T according to the acquired data or experience E \citep{mitchell1997machine}. As one of the three main components of ML \citep{mitchell1997machine}, the training experiences matter in ML, which can be reflected from many aspects. For example, three major ML paradigms can be distinguished from the  perspective of the different training experiences. Supervised learning (SL) is learning from a training set of \textbf{labeled experiences} provided by a knowledgable external supervisor \citep{sutton}. Unsupervised learning (UL) is typically about seeking structure hidden in collections of \textbf{unlabeled experiences} \citep{sutton}. 
Unlike UL or SL, reinforcement learning (RL) focuses on the problem that agents learn from \textbf{experiences gained through trial-and-error interactions with a dynamic environment} \citep{kaelbling1996reinforcement}. As \citep{mitchell1997machine} said, there is no free lunch in the ML problem - no way to generalize beyond the specific training examples. The performance can only be improved through learning from the acquired experiences in ML problems \citep{mitchell1997machine}. All of them have revealed the importance of the training experiences and thus the selection of the training distribution appears to be a fundamental problem in ML.

Recalling these three paradigms, SL and RL receive explicit learning signals from data. 
In SL, there is no way to make up the gap between the distribution estimated by the collected data and the ground truth without any domain knowledge unless collecting more data.
Researchers have found RL explicitly and naturally transforming the training distribution \citep{dqn}, which makes RL distinguished from SL. 
In the recent RL advances, many researchers \citep{dqn} have realized that RL agents hold the property of changing the data distribution and massive works have revealed the unfavorable aspect of the property. Among those algorithms, DQN \citep{dqn} firstly noticed the unique property of RL and considered it as one of the reasons for the training instability of DRL. After that, massive methods like replay buffer \citep{dqn}, periodically updated target \citep{dqn} and importance sampling \citep{impala} have been proposed to mitigate the impact of the data distribution shift. However, after rethinking this property, we wonder whether changing the data distribution always brings unfavorable nature. 
What if we can control it? More precisely, what if we can control the ability to  select superior data distribution for training automatically? Prior works in ML have revealed the great potential of this property. As \citep{cohn1996active} put it, when training examples are appropriately selected, the data requirements for some problems decrease drastically, and some NP-complete learning problems become polynomial in computation time \citep{angluin1988queries,baum1991neural}, which means that carefully selecting good training data benefits learning efficiency. 
Inspired by this perspective, instead of discussing how to ease the disadvantages caused by the change of data distribution like other prior works of RL, in this paper, we rethink the property distinguishing RL from SL and explore more effective aspects of it. 
One of the fundamental reasons RL holds the ability to change the data distribution is the change of behavior policies, which directly interact with the dynamic environments to obtain training data \citep{dqn}. Therefore, the training experiences can be controlled by adjusting the behavior policies, which makes behavior selection the bridge between RL agents and training examples.

In the RL problem, the agent has to exploit what it already knows to obtain the reward, but it also has to explore to make better action selections in
the future, which is called the exploration and exploitation dilemma \citep{sutton}. 
Therefore, diversity is one of the main factors that should be considered while selecting the training examples. 
In the recent advances of RL, some works have also noticed the importance of the diversity of training experiences \citep{agent57,ngu,DvD,niu2011hogwild,li2019generalized,DIAYN}, most of which have obtained diverse data via enriching the policy diversity. 
Among those algorithms, DIAYN \citep{DIAYN} focused entirely on the diversity of policy via learning skills without a reward function, which has revealed the effect of policy diversity but ignored its relationship with the RL objective.
DvD \citep{DvD} introduced a diversity-based regularizer into the RL objective to obtain more diverse data, which changed the optimal solution of the environment \citep{sutton}. 
Besides, training a population of agents to gather more diverse experiences seems to be a promising approach. 
Agent57 \citep{agent57} and NGU \citep{ngu} trained a family of policies with different degrees of exploratory behaviors using a shared network architecture. Both of them have obtained SOTA performance at the cost of increasing the uncertainty of environmental transition, which leads to extremely low learning efficiency. 
Through those successes, it is evident that the diversity  of the training data benefit the RL training. 
However, why does it perform better and whether more diverse data always benefit RL training? 
In other words, we have to explore the following question:

\begin{center}
    \textit{Does diverse data always benefit effective learning?}
\end{center}

To investigate this problem, we seek inspiration from the natural biological processes. 
In nature, the population evolves typically faster than individuals because the diversity of the populations boosts more \textbf{beneficial mutations} which provide more possibility for acquiring more adaptive direction of evolution \citep{mutation}. 
Furthermore, beneficial mutations rapidly spread among the population, thus enhancing population adaptability  \citep{mutation}. 
Therefore, an appropriate diversity brings high-value individuals, and  active learning among the population promotes its prosperity.\footnote{According to \citep{evolution}, most mutations are deleterious and cause a reduction in population fitness known as the mutational load. 
Therefore, excessive and redundant diversity may be harmful.} 
From this perspective, the RL agents have to pay more attention to \textbf{experiences worthy of learning from}.  
DisCor \citep{discor}, which re-weighted the \textbf{existing data buffer} by the distribution that explicitly optimizes for corrective feedback, has also noticed the fact that \textbf{the choice of the sampling distribution is of crucial importance for the stability and efficiency
of approximation dynamic
programming algorithms}. 
Unfortunately, DisCor only changes the existing data distribution instead of directly controlling the source of the training experiences, which may be more important and also more complex.
In conclusion, it seems that both \textbf{expanding the capacity of policy space for behaviors} and \textbf{selecting suitable behavior policies from a diverse behavior population} matter for efficient learning. This new perspective motivates us to investigate another critical problem:

\begin{center}
    \textit{How to select superior behaviors from the behavior policy space?}
\end{center}

To address those problems, we proposed a novel RL paradigm called \textbf{G}eneralized \textbf{D}ata Distribution \textbf{I}teration (\textbf{GDI}), which consists of two major process, the policy iteration operator $\mathcal{T}$ and the data distribution iteration operator $\mathcal{E}$. 
Specifically, behaviors will be sampled from a policy space according to a selective distribution, which will be iteratively optimized through the operator $\mathcal{E}$. 
Simultaneously, elite training data will be used for policy iteration via the operator $\mathcal{T}$. 
More details about our methodology can see Sec. \ref{Sec: Methodology}.
\begin{figure*}[!t]
    \centering
	\subfigure{
		\includegraphics[width=0.46\textwidth]{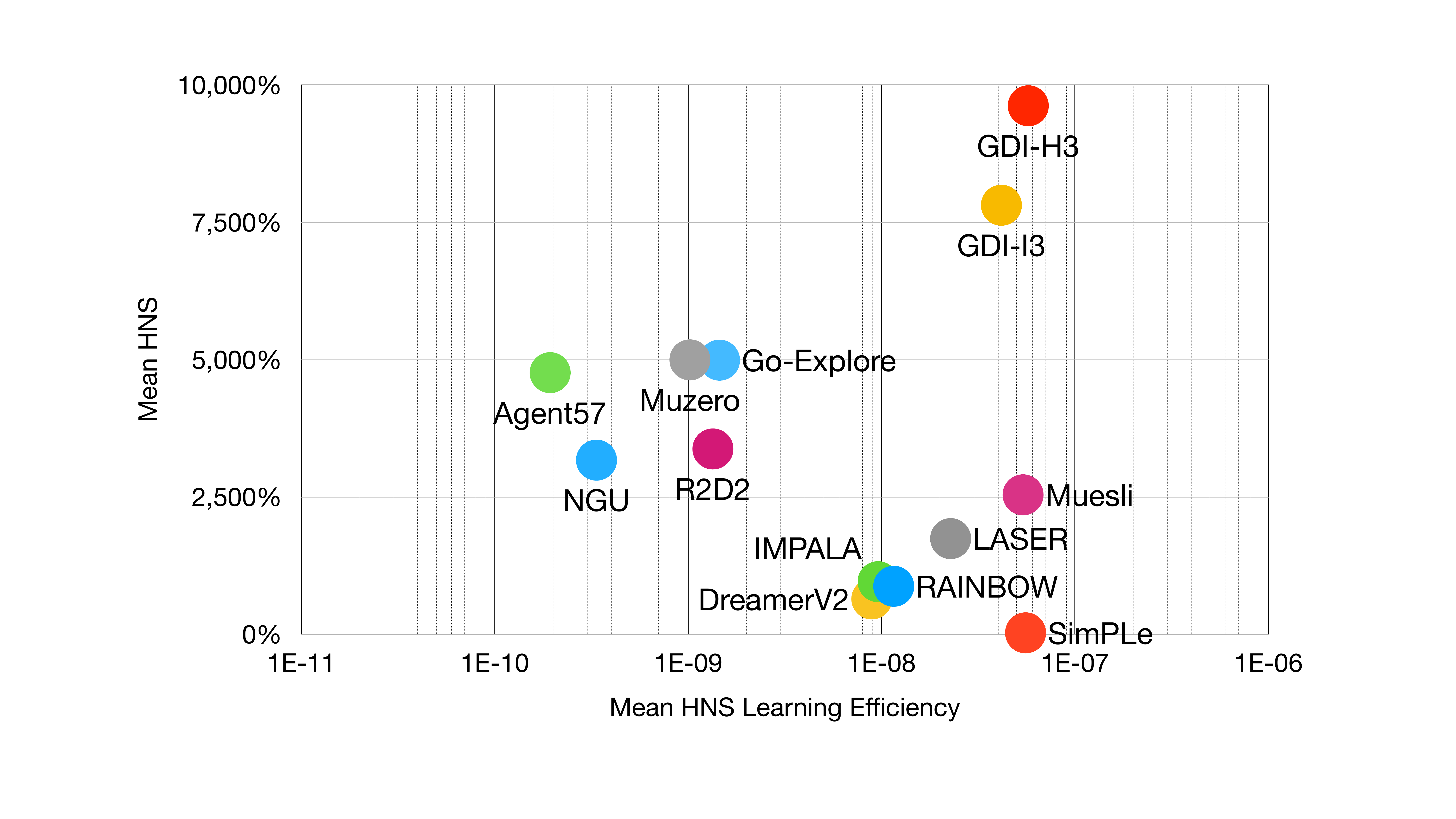}
	}
	\subfigure{
		\includegraphics[width=0.46\textwidth]{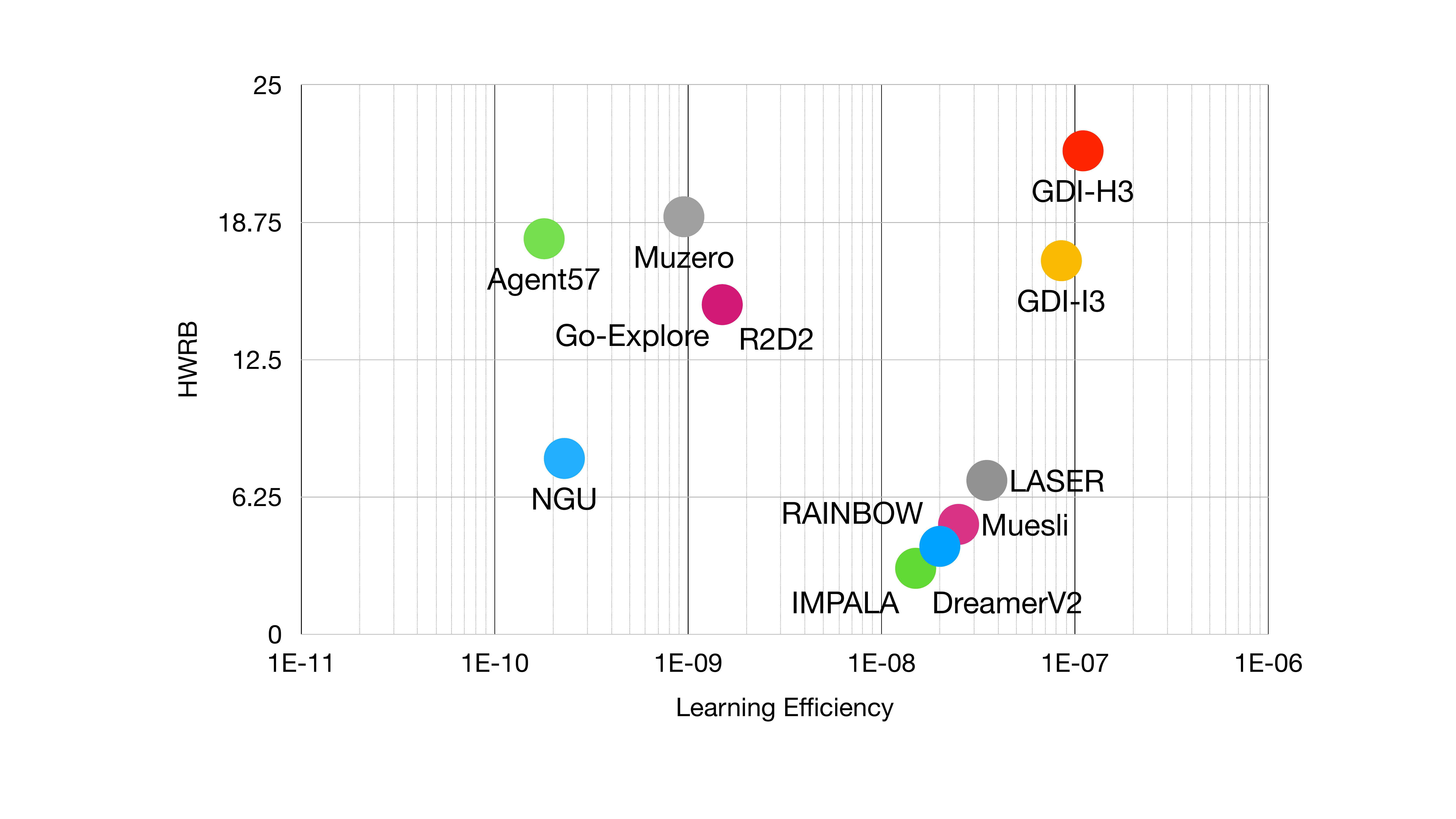}
	}
	\centering
	\caption{Performance of SOTA algorithms of Atari 57 games on mean HNS(\%) with corresponding learning efficiency and human world record breakthrough with corresponding game time. Details on those evaluation criteria can see App. \ref{sec:app Experiment Details}.}
	\label{fig: mean med hns and learning efficiency}
\end{figure*}

In conclusion, the main contributions of our work are:
\begin{itemize}
    \item \textbf{A Novel RL Paradigm:} Rethinking the difference between RL and SL, we discover RL can ease the gap between the sampled data distribution and the ground truth data distribution via adjusting the behavior policies. Based on the perspective, we extend GPI into GDI, a more general version containing a data optimization process. This novel perspective allows us to unify massive RL algorithms, and various improvements can be considered a special case of data distribution optimization, detailed in Sec. \ref{Sec: Methodology}.
   
   \item \textbf{Theoretical Proof of GDI:} We provide sufficient theoretical proof of GDI. 
    The effectiveness of the data distribution optimization of GDI has been proved on both first-order optimization and second-order optimization, and the guarantee of monotonic improvement induced by the data distribution optimization operator $\mathcal{E}$ has also been proved. 
    More details can see Sec. \ref{Sec: Methodology} and App. \ref{App: proof}.

    \item \textbf{A General Practical Framework of GDI:} Based on GDI, we propose a general practical framework, wherein behavior policy belongs to a soft $\epsilon$-greedy space which unifies $\epsilon$-greedy policies \citep{epsilongreedy} and Boltzmann policies \citep{bozeman}. 
    As a practical framework of GDI, a self-adaptable meta-controller is proposed to optimize the distribution of the behavior policies. 
    More implementation details can see App. \ref{App: Algorithm Pseudocode} and App. \ref{Sec: appendix MAB}.
    
    \item \textbf{The State-Of-The-Art Performance:} 
    From Figs. \ref{fig: mean med hns and learning efficiency}, our approach has achieved \GDIHmeanhns\% mean HNS and \GDIHmedianhns\% median HNS, which achieves new SOTA. More importantly, our learning efficiency has approached the human level as achieving the SOTA performance within less than 1.5 months of game time. We have also illustrated the RL Benchmark on HNS in App. \ref{app: RL Benchmarks on HNS} and recorded their scores in App. \ref{app: Atari Games Table of Scores Based on Human Average Records}.

    \item \textbf{Human World Records Breakthrough:} As our algorithms have achieved SOTA on mean HNS, median HNS and learning efficiency, we aim to lead RL research on ALE to step into a new era of conquering human world records and seeking  the real superhuman agents. Therefore, we propose several novel evaluation criteria and an open challenge on the  Atari benchmark based on the human world records.  From 
   Figs. \ref{fig: mean med hns and learning efficiency}, our method has surpassed \GDIHHWRB \ human world records, which  has also surpassed all previous algorithms. We have also illustrated the RL Benchmark on human world records normalized scores (HWRNS), SABER \citep{atarihuman}, HWRB in App. \ref{app: RL Benchmarks on HWRNS}, \ref{app: RL Benchmarks on SABER} and  \ref{app: RL Benchmarks on HWRB}, respectively. Relevant scores are recorded in App.  \ref{app: Atari Games Table of Scores Based on Human World Records} and App.  \ref{app: Atari Games Table of Scores Based on SABER}.
\end{itemize}

\section{Preliminaries}

 The RL problem can be formulated as a Markov Decision Process \citep[MDP]{howard1960dynamic} defined by $\left(\mathcal{S}, \mathcal{A}, p, r, \gamma, \rho_{0}\right)$. 
 Considering a discounted episodic MDP, the initial state $s_0$ is sampled from the initial distribution $\rho_0(s): \mathcal{S} \rightarrow \Delta(\mathcal{S})$, where we use $\Delta$ to represent the probability simplex.
 At each time $t$, the agent chooses an action $a_t \in \mathcal{A}$ according to the policy $\pi(a_t|s_t): \mathcal{S} \rightarrow \Delta(\mathcal{A})$ at state $s_t \in \mathcal{S}$. 
 The environment receives $a_t$, produces the reward $r_t \sim r(s,a): \mathcal{S} \times \mathcal{A} \rightarrow \mathbf{R}$ and transfers to the next state $s_{t+1}$  according to the transition distribution $p\left(s^{\prime} \mid s, a\right): \mathcal{S} \times \mathcal{A} \rightarrow \Delta(\mathcal{S})$. 
 The process continues until the agent reaches a terminal state or a maximum time step. 
 Define the discounted state visitation distribution as 
 $d_{\rho_0}^{\pi} (s) = (1 - \gamma) \textbf{E}_{s_0 \sim \rho_0} 
 \left[ \sum_{t=0}^{\infty} \gamma^t \textbf{P} (s_t = s | s_0) \right]$.
 The goal of reinforcement learning is to find the optimal policy $\pi^*$ that maximizes the expected sum of discounted rewards, denoted by $\mathcal{J}$ \citep{sutton}:
\begin{equation}
\label{eq_accmulate_reward}
\pi^{*}
=\underset{\pi}{\operatorname{argmax}} \mathcal{J}_{\pi} 
= \underset{\pi}{\operatorname{argmax}} \textbf{E}_{s_t \sim d_{\rho_0}^{\pi}} \textbf{E}_{\pi}\left[G_t | s_t \right]
= \underset{\pi}{\operatorname{argmax}} \textbf{E}_{s_t \sim d_{\rho_0}^{\pi}} \textbf{E}_{\pi} \left[\sum_{k=0}^{\infty} \gamma^{k} r_{t+k} | s_t \right]
\end{equation}
where $\gamma \in(0,1)$ is the discount factor.

RL algorithms can be divided into off-policy manners \citep{dqn,a3c,sac,impala} and on-policy manners \citep{ppo}. 
Off-policy algorithms select actions according to a behavior policy $\mu$ that can be different from the learning policy $\pi$.
On-policy algorithms evaluate and improve the learning policy through data sampled from the same policy.
RL algorithms can also be divided into value-based methods \citep{dqn,doubledqn,dueling_q,rainbow,apex} and policy-based methods \citep{ppo,a3c,impala,laser}. 
In the value-based methods, agents learn the policy indirectly, where the policy is defined by consulting the learned value function, like $\epsilon$-greedy, and the value function is learned by a typical GPI.
In the policy-based methods, agents learn the policy directly, where the correctness of the gradient direction is guaranteed by the policy gradient theorem \citep{sutton}, and the convergence of the policy gradient methods is also guaranteed \citep{pgtheory}. 
More background on RL can see App. \ref{app: background on RL}.

\section{Methodology}
\label{Sec: Methodology}

\subsection{Generalized Data Distribution Iteration}

Let's abstract our notations first, which is also summarized in App. \ref{app: Abbreviation and Notation}.

Define $\Lambda$ to be an index set, $\Lambda \subseteq \textbf{R}^k$.
$\lambda \in \Lambda$ is an index in $\Lambda$.
$(\Lambda, \mathcal{B}|_{\Lambda}, \mathcal{P}_{\Lambda})$ is a probability space, where $\mathcal{B}|_{\Lambda}$ is a Borel $\sigma$-algebra restricted to $\Lambda$.
Under the setting of meta-RL, $\Lambda$ can be regarded as the set of all possible meta information.
Under the setting of population-based training (PBT) \citep{PBT}, $\Lambda$ can be regarded as the set of the whole population.

Define $\Theta$ to be a set of all possible values of parameters.
$\theta \in \Theta$ is some specific value of parameters.
For each index $\lambda$, there exists a specific mapping between each parameter of $\theta$ and $\lambda$, denoted as $\theta_\lambda$, to indicate the parameters in $\theta$ corresponding to $\lambda$.
Under the setting of linear regression $y = w \cdot x$, $\Theta = \{w \in R^n\}$ and $\theta = w$.
If $\lambda$ represents using only the first half features to make regression, assume $w = (w_1, w_2)$, then $\theta_\lambda = w_1$.  
Under the setting of RL, $\theta_{\lambda}$ defines a parameterized policy indexed by $\lambda$, denoted as $\pi_{\theta_{\lambda}}$.

Define $\mathcal{D} \overset{def}{=} \{d^\pi_{\rho_{0}} |\ \pi \in {\Delta (\mathcal{A})}^\mathcal{S}, \rho_{0} \in \Delta(\mathcal{S}) \}$ to be the set of all states visitation distributions.
For the parameterized policies, denote 
$\mathcal{D}_{\Lambda, \Theta, \rho_{0}} \overset{def}{=} \{d^{\pi_{\theta_{\lambda}}}_{\rho_{0}} |\ \theta \in \Theta, \lambda \in \Lambda \}$.
Note that $(\Lambda, \mathcal{B}|_{\Lambda}, \mathcal{P}_{\Lambda})$ is a probability space on $\Lambda$, 
which induces a probability space on $\mathcal{D}_{\Theta, \Lambda, \rho_{0}}$,
with the probability measure given by 
$\mathcal{P}_{\mathcal{D}} (\mathcal{D}_{\Lambda_0, \Theta, \rho_{0}}) 
= \mathcal{P}_{\Lambda} (\Lambda_0),\ \forall \Lambda_0 \in \mathcal{B}|_\Lambda$.

We use $x$ to represent one sample, which contains all necessary information for learning. 
For DQN, $x = (s_t, a_t, r_t, s_{t+1})$.
For R2D2, $x = (s_t, a_t, r_t, \dots, s_{t+N}, a_{t+N}, r_{t+N}, s_{t+N+1})$.
For IMPALA, $x$ also contains the distribution of the behavior policy.
The content of $x$ depends on the algorithm, but it's sufficient for learning.
We use $\mathcal{X}$ to represent the set of samples.
At training stage $t$, 
given the parameter $\theta = \theta^{(t)}$, 
the distribution of the index set $\mathcal{P}_{\Lambda} = \mathcal{P}^{(t)}_{\Lambda}$
and the distribution of the initial state $\rho_0$, 
we denote the set of samples as
\begin{equation*}
    \mathcal{X}_{\rho_{0}}^{(t)}
    \overset{def}{=} \bigcup_{d_{\rho_{0}}^\pi \sim \mathcal{P}_\mathcal{D}^{(t)}} \{ x | x \sim d_{\rho_{0}}^\pi \} 
    = \bigcup_{\lambda \sim \mathcal{P}_{\Lambda}^{(t)}} 
    \{ x | x \sim d_{\rho_{0}}^{\pi_\theta}, 
    \theta = {\theta^{(t)}_{\lambda}} \}
    \triangleq \bigcup_{\lambda \sim \mathcal{P}_{\Lambda}^{(t)}} \mathcal{X}^{(t)}_{\rho_{0}, \lambda}.
\end{equation*}

Now we introduce our main algorithm:
\begin{figure}[ht]
  \centering
  \begin{minipage}{.7\linewidth}
    \begin{algorithm}[H]
      \caption{Generalized Data Distribution Iteration (GDI).}  
          \begin{algorithmic}
            \STATE Initialize $\Lambda$, $\Theta$, $\mathcal{P}_{\Lambda}^{(0)}$, $\theta^{(0)}$.
            \FOR{$t=0,1,2,\dots$}
                \STATE Sample $\{\mathcal{X}^{(t)}_{\rho_0, \lambda}\}_{\lambda \sim \mathcal{P}^{(t)}_{\Lambda}}$.
                \STATE $\theta^{(t+1)} = \mathcal{T}( \theta^{(t)}, \{\mathcal{X}^{(t)}_{\rho_0, \lambda}\}_{\lambda \sim \mathcal{P}^{(t)}_{\Lambda}} )$.
                \STATE $\mathcal{P}_{\Lambda}^{(t+1)}  = \mathcal{E}(\mathcal{P}_{\Lambda}^{(t)}, \{\mathcal{X}^{(t)}_{\rho_0, \lambda}\}_{\lambda \sim \mathcal{P}^{(t)}_{\Lambda}} )$.
            \ENDFOR
          \end{algorithmic}
        \label{alg:GDI}
    \end{algorithm}
  \end{minipage}
\end{figure}

$\mathcal{T}$ defined as $\theta^{(t+1)} 
= \mathcal{T}( \theta^{(t)}, \{\mathcal{X}^{(t)}_{\rho_0, \lambda}\}_{\lambda \sim \mathcal{P}^{(t)}_{\Lambda}} )$
is a typical optimization operator of RL algorithms, 
which utilizes the collected samples to update the parameters for maximizing some function $L_{\mathcal{T}}$.
For instance, $L_{\mathcal{T}}$ may contain the policy gradient and the state value evaluation for the policy-based methods, 
may contain generalized policy iteration for the value-based methods, 
may also contain some auxiliary tasks or intrinsic rewards for special designed methods.

$\mathcal{E}$ defined as $\mathcal{P}_{\Lambda}^{(t+1)}  
= \mathcal{E}(\mathcal{P}_{\Lambda}^{(t)}, \{\mathcal{X}^{(t)}_{\rho_0, \lambda}\}_{\lambda \sim \mathcal{P}^{(t)}_{\Lambda}} )$ 
is a data distribution optimization operator.
It uses the samples $\{\mathcal{X}^{(t)}_{\rho_0, \lambda}\}_{\lambda \sim \mathcal{P}^{(t)}_{\Lambda}}$ to maximize some function $L_{\mathcal{E}}$, namely,
\begin{equation*}
    \mathcal{P}_{\Lambda}^{(t+1)} = \argmax_{\mathcal{P}_{\Lambda}} L_{\mathcal{E}} (\{\mathcal{X}^{(t)}_{\rho_0, \lambda}\}_{\lambda \sim \mathcal{P}_{\Lambda}}).
\end{equation*}
When $\mathcal{P}_{\Lambda}$ is parameterized, we abuse the notation and use $\mathcal{P}_{\Lambda}$ to represent the parameter of $\mathcal{P}_{\Lambda}$.
If $\mathcal{E}$ is a first order optimization operator, then we can write $\mathcal{E}$ explicitly as
\begin{equation*}
    \mathcal{P}_{\Lambda}^{(t+1)} = \mathcal{P}_{\Lambda}^{(t)} + \eta \nabla_{\mathcal{P}_{\Lambda}^{(t)}} L_{\mathcal{E}} (\{\mathcal{X}^{(t)}_{\rho_0, \lambda}\}_{\lambda \sim \mathcal{P}^{(t)}_{\Lambda}}).
\end{equation*}
If $\mathcal{E}$ is a second order optimization operator, like natural gradient, we can write $\mathcal{E}$ formally as
    \begin{gather*}
        \mathcal{P}_{\Lambda}^{(t+1)} = \mathcal{P}_{\Lambda}^{(t)} + \eta
        \textbf{F}(\mathcal{P}_{\Lambda}^{(t)})^\dagger
        \nabla_{\mathcal{P}_{\Lambda}^{(t)}} L_{\mathcal{E}} (\{\mathcal{X}^{(t)}_{\rho_0, \lambda}\}_{\lambda \sim \mathcal{P}^{(t)}_{\Lambda}}), \\
        \textbf{F}(\mathcal{P}_{\Lambda}^{(t)}) = \left[\nabla_{\mathcal{P}_{\Lambda}^{(t)}} \log \mathcal{P}_{\Lambda}^{(t)} \right]
        \cdot
        \left[\nabla_{\mathcal{P}_{\Lambda}^{(t)}} \log \mathcal{P}_{\Lambda}^{(t)} \right]^\top, \\
    \end{gather*}
where $\dagger$ denotes the Moore-Penrose pseudoinverse of the matrix.

\subsection{Systematization of GDI}

We can further divide all algorithms into two categories, GDI-I$^n$ and GDI-H$^n$.
$n$ represents the degree of freedom of $\Lambda$.
I represents Isomorphism. 
We say one algorithm belongs to GDI-I$^n$, if $\theta = \theta_{\lambda}, \, \forall \lambda \in \Lambda$.
H represents Heterogeneous.
We say one algorithm belongs to GDI-H$^n$, if $\theta_{\lambda_1} \neq \theta_{\lambda_2}, \, \exists \lambda_1, \lambda_2 \in \Lambda$.
We say one algorithm is "w/o $\mathcal{E}$" if it doesn't have the operator $\mathcal{E}$, in another word, its $\mathcal{E}$ is an identical mapping.

Now we discuss the connections between GDI and some algorithms.

For DQN, RAINBOW, PPO and IMPALA, they are in GDI-I$^0$ w/o $\mathcal{E}$. Let $|\Lambda| = 1$, WLOG, assume $\Lambda = \{\lambda_0\}$.
The probability measure $\mathcal{P}_{\Lambda}$ collapses to $\mathcal{P}_{\Lambda} (\lambda_0) = 1$.
$\Theta = \{\theta_{\lambda_0}\}$.
$\mathcal{E}$ is an identical mapping of $\mathcal{P}_{\Lambda}^{(t)}$.
$\mathcal{T}$ is the first order operator that optimizes the loss functions, respectively.

For Ape-X and R2D2, they are in GDI-I$^1$ w/o $\mathcal{E}$. 
Let $\Lambda = \{\epsilon_l |\ l = 1, \dots, 256\}$.
$\mathcal{P}_{\Lambda}$ is uniform, $\mathcal{P}_{\Lambda} (\epsilon_l) = |\Lambda|^{-1}$.
Since all actors and the learner share parameters, we have $\theta_{\epsilon_1} = \theta_{\epsilon_2}$ for $\forall \epsilon_1, \epsilon_2 \in \Lambda$, hence $\Theta = \bigcup_{\epsilon \in \Lambda} \{\theta_{\epsilon}\} = \{\theta_{\epsilon_l}\}, \ \forall\ l = 1,\dots, 256$.
$\mathcal{E}$ is an identical mapping, because $\mathcal{P}_{\Lambda}^{(t)}$ is always a uniform distribution.
$\mathcal{T}$ is the first order operator that optimizes the loss functions.

For LASER, it's in GDI-H$^1$ w/o $\mathcal{E}$. 
Let $\Lambda = \{i |\ i = 1, \dots, K\}$ to be the number of learners.
$\mathcal{P}_{\Lambda}$ is uniform, $\mathcal{P}_{\Lambda} (i) = |\Lambda|^{-1}$.
Since different learners don't share parameters, $\theta_{i_1} \cap \theta_{i_2} = \emptyset$ for $\forall i_1, i_2 \in \Lambda$, hence $\Theta = \bigcup_{i \in \Lambda} \{\theta_i\}$.
$\mathcal{E}$ is an identical mapping.
$\mathcal{T}$ can be formulated as a union of $\theta^{(t+1)}_i 
= \mathcal{T}_{i}( \theta^{(t)}_i, \{\mathcal{X}^{(t)}_{\rho_0, \lambda}\}_{\lambda \sim \mathcal{P}^{(t)}_{\Lambda}} )$, 
which represents optimizing $\theta_i$ of $i$th learner with shared samples from other learners.

For PBT, it's in GDI-H$^{n+1}$, where $n$ is the number of  searched hyperparameters.
Let $\Lambda = \{h\} \times \{i| i=1,\dots, K\}$, where $h$ represents the hyperparameters being searched and $K$ is the population size.
$\Theta = \bigcup_{i=1,\dots, K} \{\theta_{i, h}\}$, where $\theta_{i, h_1} = \theta_{i, h_2}$ for $\forall (h_1, i), (h_2, i) \in \Lambda$.
$\mathcal{E}$ is the meta-controller that adjusts $h$ for each $i$, which can be formally written as 
$\mathcal{P}_{\Lambda}^{(t+1)}(\cdot, i)
= \mathcal{E}_{i}(\mathcal{P}_{\Lambda}^{(t)}(\cdot, i), \{\mathcal{X}^{(t)}_{\rho_0, (h, i)}\}_{h \sim \mathcal{P}^{(t)}_{\Lambda}(\cdot, i)} )$,
which optimizes $\mathcal{P}_{\Lambda}$ according to the performance of all agents in the population.
$\mathcal{T}$ can also be formulated as a union of $\mathcal{T}_i$, but is 
$\theta^{(t+1)}_i 
= \mathcal{T}_{i}( \theta^{(t)}_i, \{\mathcal{X}^{(t)}_{\rho_0, (h, i)}\}_{h \sim \mathcal{P}^{(t)}_{\Lambda}(\cdot, i)})$,
which represents optimizing the $i$th agent with only samples from the $i$th agent.

For NGU and Agent57, it's in GDI-I$^2$. 
Let $\Lambda = \{\beta_i | i=1,\dots,m\} \times \{\gamma_j | j=1,\dots,n\}$, where $\beta$ is the weight of the intrinsic value function and $\gamma$ is the discount factor.
Since all actors and the learner share variables, $\Theta = \bigcup_{(\beta, \gamma) \in \Lambda} \{\theta_{(\beta, \gamma)}\} = \{\theta_{(\beta, \gamma)}\}$ for $\forall (\beta, \gamma) \in \Lambda$.
$\mathcal{E}$ is an optimization operator of a multi-arm bandit controller with UCB, which aims to maximize the expected cumulative rewards by adjusting $\mathcal{P}_{\Lambda}$.
Different from above, $\mathcal{T}$ is identical to our general definition $\theta^{(t+1)} 
= \mathcal{T}( \theta^{(t)}, \{\mathcal{X}^{(t)}_{\rho_0, \lambda}\}_{\lambda \sim \mathcal{P}^{(t)}_{\Lambda}} )$,
which utilizes samples from different $\lambda$s to update the shared $\theta$.

For Go-Explore, it's in GDI-H$^1$. 
Let $\Lambda = \{\tau\}$, where $\tau$ represents the stopping time of switching between robustification and exploration.
$\Theta = \{\theta_r\} \cup \{\theta_e\}$, where $\theta_r$ is the robustification model and $\theta_e$ is the exploration model.
$\mathcal{E}$ is a search-based controller, which defines the next $\mathcal{P}_{\Lambda}$ for a better exploration.
$\mathcal{T}$ can be decomposed into $(\mathcal{T}_r, \mathcal{T}_e)$.

\subsection{Monotonic Data Distribution Optimization}

We see massive algorithms can be formulated as a special case of GDI.
For the algorithms without a meta-controller, whose data distribution optimization operator $\mathcal{E}$ is trivially an identical mapping, the guarantee that the learned policy could converge to the optimal policy has been wildly studied, for instance, GPI in \citep{sutton} and policy gradient in \citep{pgtheory}.
But for the algorithms with a meta-controller, whose data distribution optimization operator $\mathcal{E}$ is non-identical, though most algorithms in this class show superior performance, it still lacks a general study on why the data distribution optimization operator $\mathcal{E}$ helps.
In this section, with a few assumptions, we show that given the same optimization operator $\mathcal{T}$, a GDI with a non-identical data distribution optimization operator $\mathcal{E}$ is always superior to a GDI w/o $\mathcal{E}$.

For brevity, we denote the expectation of $L_\mathcal{E}, L_\mathcal{T}$ for each $\lambda \in \Lambda$ as
    $\mathcal{L}_{\mathcal{E}} (\lambda, \theta_\lambda) 
    = \textbf{E}_{x \sim \pi_{\theta_\lambda}} [L_{\mathcal{E}} (\{\mathcal{X}_{\rho_0, \lambda}\})], \ 
    \mathcal{L}_{\mathcal{T}} (\lambda, \theta_\lambda) 
    = \textbf{E}_{x \sim \pi_{\theta_\lambda}} [L_{\mathcal{T}} (\{\mathcal{X}_{\rho_0, \lambda}\})], $
and denote the expectation of $L_\mathcal{E}, L_\mathcal{T}$ for any $\mathcal{P}_{\Lambda}$ as 
    $\mathcal{L}_{\mathcal{E}} (\mathcal{P}_{\Lambda}, \theta) 
    = \textbf{E}_{\lambda \sim \mathcal{P}_{\Lambda}} [\mathcal{L}_{\mathcal{E}} (\lambda, \theta_\lambda) ], \ 
    \mathcal{L}_{\mathcal{T}} (\mathcal{P}_{\Lambda}, \theta) 
    = \textbf{E}_{\lambda \sim \mathcal{P}_{\Lambda}}  [\mathcal{L}_{\mathcal{T}} (\lambda, \theta_\lambda)].$

\begin{Assumption}[Uniform Continuous Assumption]
    For $\forall \epsilon > 0,\  \forall s \in \mathcal{S},\ \exists\, \delta > 0,\ s.t. |V^{\pi_1} (s) - V^{\pi_2} (s)| < \epsilon,\ \forall\, d_{\pi} (\pi_1, \pi_2) < \delta$,
    where $d_\pi$ is a metric on ${\Delta(\mathcal{A})}^\mathcal{S}$.
    If $\pi$ is parameterized by $\theta$, then for $\forall \epsilon > 0,\  \forall s \in \mathcal{S},\ \exists\, \delta > 0,\ s.t. |V^{\pi_{\theta_1}} (s) - V^{\pi_{\theta_2}} (s)| < \epsilon,\ \forall\, || \theta_1 - \theta_2 || < \delta$.
\label{asp:1}
\end{Assumption}
\begin{Remark}
    \citep{polytope} shows $V^\pi$ is infinitely differentiable everywhere on $\Delta (\mathcal{A})^{\mathcal{S}}$ if $|\mathcal{S}| < \infty, |\mathcal{A}| < \infty$.
    \citep{pgtheory} shows $V^\pi$ is $\beta$-smooth, namely bounded second order derivative, for direct parameterization.
    If $\Delta (\mathcal{A})^{\mathcal{S}}$ is compact, continuity implies uniform continuity.
\end{Remark}

\begin{Assumption}[Formulation of $\mathcal{E}$ Assumption]
    Assume
    $\mathcal{P}_{\Lambda}^{(t+1)} 
    = \mathcal{E}(\mathcal{P}_{\Lambda}^{(t)}, \{\mathcal{X}^{(t)}_{\rho_0, \lambda}\}_{\lambda \sim \mathcal{P}^{(t)}_{\Lambda}}) $
    can be written as 
    $\mathcal{P}_{\Lambda}^{(t+1)} (\lambda)= \mathcal{P}_{\Lambda}^{(t)}(\lambda) \frac{\exp (\eta \mathcal{L}_{\mathcal{E}} (\lambda, \theta_{\lambda}^{(t)})  )}{Z^{(t+1)}}$, 
    $Z^{(t+1)} = \textbf{E}_{\lambda \sim \mathcal{P}_{\Lambda}^{(t)}}[\exp (\eta \mathcal{L}_{\mathcal{E}} (\lambda, \theta_{\lambda}^{(t)})  )]$.
\label{asp:2}
\end{Assumption}
\begin{Remark}
    The assumption is actually general.
    Regarding $\Lambda$ as an action space and 
    $r_\lambda 
    = \mathcal{L}_{\mathcal{E}} (\lambda, \theta_{\lambda}^{(t)})$, when solving $\argmax_{\mathcal{P}_{\Lambda}} \textbf{E}_{\lambda \sim \mathcal{P}_{\Lambda}} [\mathcal{L}_{\mathcal{E}} (\lambda, \theta_{\lambda}^{(t)})] 
    = \argmax_{\mathcal{P}_{\Lambda}} \textbf{E}_{\lambda \sim \mathcal{P}_{\Lambda}} [r_\lambda]$, the data distribution optimization operator $\mathcal{E}$ is equivalent to solving a multi-arm bandit (MAB) problem.
    For the first order optimization, \citep{eq_pg_q} shows that the solution of a KL-regularized version, $\argmax_{\mathcal{P}_{\Lambda}} \textbf{E}_{\lambda \sim \mathcal{P}_{\Lambda}} [r_\lambda] - \eta KL(\mathcal{P}_{\Lambda} || \mathcal{P}_{\Lambda}^{(t)})$, is exactly the assumption.
    For the second order optimization, let $\mathcal{P}_{\Lambda} = softmax (\{r_\lambda\})$, \citep{pgtheory} shows that the natural policy gradient of a softmax parameterization also induces exactly the assumption.
\end{Remark}

\begin{Assumption}[First Order Optimization Co-Monotonic Assumption]
    For $\forall\, \lambda_1, \lambda_2 \in \Lambda$, we have
    $[ \mathcal{L}_{\mathcal{E}} (\lambda_1, \theta_{\lambda_1})  -  \mathcal{L}_{\mathcal{E}} (\lambda_2, \theta_{\lambda_2}) ] \cdot
    [ \mathcal{L}_{\mathcal{T}} (\lambda_1, \theta_{\lambda_1})  -  \mathcal{L}_{\mathcal{T}} (\lambda_2, \theta_{\lambda_2}) ] \geq 0$.
\label{asp:3}
\end{Assumption}

\begin{Assumption}[Second Order Optimization Co-Monotonic Assumption]
    For $\forall\, \lambda_1, \lambda_2 \in \Lambda$,
    $\exists\, \eta_0 > 0$, s.t. $\forall\, 0 < \eta < \eta_0$, we have
    $[ \mathcal{L}_{\mathcal{E}} (\lambda_1, \theta_{\lambda_1})  -  \mathcal{L}_{\mathcal{E}} (\lambda_2, \theta_{\lambda_2}) ] \cdot
    [ G^{\eta} \mathcal{L}_{\mathcal{T}} (\lambda_1, \theta_{\lambda_1}) 
    - G^{\eta} \mathcal{L}_{\mathcal{T}} (\lambda_2, \theta_{\lambda_2}) ] \geq 0$,
    where $\theta_{\lambda}^{\eta} = \theta_{\lambda} + \eta \nabla_{\theta_{\lambda}} \mathcal{L}_{\mathcal{T}} (\lambda, \theta_{\lambda})$ and
    $G^{\eta} \mathcal{L}_{\mathcal{T}} (\lambda, \theta_{\lambda})
    = \frac{1}{\eta} \left[\mathcal{L}_{\mathcal{T}} (\lambda, \theta_{\lambda}^{\eta}) - \mathcal{L}_{\mathcal{T}} (\lambda, \theta_{\lambda}) \right]$.
\label{asp:4}
\end{Assumption}

Under Assumption \eqref{asp:1} \eqref{asp:2} \eqref{asp:3}, if $\mathcal{T}$ is a first order operator, namely a gradient accent operator, to maximize $\mathcal{L}_{\mathcal{T}}$, GDI can be guaranteed to be superior to that w/o $\mathcal{E}$.
Under Assumption \eqref{asp:1} \eqref{asp:2} \eqref{asp:4}, if $\mathcal{T}$ is a second order operator, namely a natural gradient operator, to maximize $\mathcal{L}_{\mathcal{T}}$, GDI can also be guaranteed to be superior to that w/o $\mathcal{E}$.

\begin{Theorem}[First Order Optimization with Superior Target]
    Under Assumption \eqref{asp:1} \eqref{asp:2} \eqref{asp:3}, we have
    $\mathcal{L}_{\mathcal{T}} (\mathcal{P}_{\Lambda}^{(t+1)}, \theta^{(t+1)}) 
     = \textbf{E}_{\lambda \sim \mathcal{P}_{\Lambda}^{(t+1)}}  [\mathcal{L}_{\mathcal{T}} (\lambda, \theta^{(t+1)}_{\lambda})]
     \geq \textbf{E}_{\lambda \sim \mathcal{P}_{\Lambda}^{(t)}}  [\mathcal{L}_{\mathcal{T}} (\lambda, \theta^{(t+1)}_{\lambda})]
     = \mathcal{L}_{\mathcal{T}} (\mathcal{P}_{\Lambda}^{(t)}, \theta^{(t+1)})$.
\label{thm:1st_gdi}
\end{Theorem}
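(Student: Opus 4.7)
The plan is to reduce the theorem to a single covariance inequality between two functions of $\lambda$ and then invoke Assumption~\ref{asp:3} (together with Assumption~\ref{asp:1}) to conclude that the covariance is nonnegative.

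First I would use Assumption~\ref{asp:2} to write $\mathcal{P}_{\Lambda}^{(t+1)}$ explicitly as a tilting of $\mathcal{P}_{\Lambda}^{(t)}$ by the factor $\exp(\eta \mathcal{L}_{\mathcal{E}}(\lambda, \theta_{\lambda}^{(t)}))/Z^{(t+1)}$. Substituting this form into $\textbf{E}_{\lambda \sim \mathcal{P}_{\Lambda}^{(t+1)}}[\mathcal{L}_{\mathcal{T}}(\lambda, \theta^{(t+1)}_{\lambda})]$ and subtracting $\textbf{E}_{\lambda \sim \mathcal{P}_{\Lambda}^{(t)}}[\mathcal{L}_{\mathcal{T}}(\lambda, \theta^{(t+1)}_{\lambda})]$, a direct algebraic manipulation yields
\begin{equation*}
\mathcal{L}_{\mathcal{T}}(\mathcal{P}_{\Lambda}^{(t+1)}, \theta^{(t+1)}) - \mathcal{L}_{\mathcal{T}}(\mathcal{P}_{\Lambda}^{(t)}, \theta^{(t+1)}) = \frac{1}{Z^{(t+1)}} \, \text{Cov}_{\lambda \sim \mathcal{P}_{\Lambda}^{(t)}}\!\left[\mathcal{L}_{\mathcal{T}}(\lambda, \theta^{(t+1)}_{\lambda}),\ \exp\!\left(\eta \mathcal{L}_{\mathcal{E}}(\lambda, \theta^{(t)}_{\lambda})\right)\right].
\end{equation*}
Since $Z^{(t+1)} > 0$, it suffices to show this covariance is nonnegative.

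Second, I would invoke the classical coupling identity: for independent copies $\lambda_1, \lambda_2 \sim \mathcal{P}_{\Lambda}^{(t)}$ and any two functions $f, h$, we have $2\,\text{Cov}[f,h] = \textbf{E}[(f(\lambda_1) - f(\lambda_2))(h(\lambda_1) - h(\lambda_2))]$. It therefore suffices to prove that the integrand is almost surely nonnegative, i.e., that $\mathcal{L}_{\mathcal{T}}(\cdot, \theta^{(t+1)}_{\cdot})$ and $\exp(\eta \mathcal{L}_{\mathcal{E}}(\cdot, \theta^{(t)}_{\cdot}))$ are co-monotonic in $\lambda$. Because $\eta \geq 0$ and $\exp$ is strictly increasing, this reduces to co-monotonicity of $\mathcal{L}_{\mathcal{T}}(\cdot, \theta^{(t+1)}_{\cdot})$ and $\mathcal{L}_{\mathcal{E}}(\cdot, \theta^{(t)}_{\cdot})$.

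The main obstacle is the parameter mismatch: Assumption~\ref{asp:3} yields co-monotonicity of $\mathcal{L}_{\mathcal{E}}$ and $\mathcal{L}_{\mathcal{T}}$ evaluated at a common $\theta$, whereas the covariance above pairs $\mathcal{L}_{\mathcal{E}}$ at $\theta^{(t)}$ with $\mathcal{L}_{\mathcal{T}}$ at $\theta^{(t+1)}$. I would close this gap via Assumption~\ref{asp:1}: applying Assumption~\ref{asp:3} at $\theta = \theta^{(t+1)}$ gives co-monotonicity of $\mathcal{L}_{\mathcal{E}}(\cdot, \theta^{(t+1)}_{\cdot})$ and $\mathcal{L}_{\mathcal{T}}(\cdot, \theta^{(t+1)}_{\cdot})$, and uniform continuity of the value function (through which $\mathcal{L}_{\mathcal{E}}$ depends on $\theta$) guarantees that for a sufficiently small step in $\mathcal{T}$ the ranking of $\mathcal{L}_{\mathcal{E}}(\lambda, \theta^{(t+1)}_{\lambda})$ across $\lambda$ matches that of $\mathcal{L}_{\mathcal{E}}(\lambda, \theta^{(t)}_{\lambda})$. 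Transitivity of "having the same ranking" then transfers co-monotonicity to the pair we need, and combining this with the covariance identity finishes the proof.
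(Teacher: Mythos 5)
Your argument is correct in substance but takes a genuinely different route from the paper. The paper proves $\textbf{E}_{\mu}[f] \leq \textbf{E}_{\beta}[f]$ (for $\beta \propto \mu\, e^{g}$ with $f,g$ co-monotonic) by explicitly constructing an upper triangular transport plan between the two measures --- a discretize, sort-by-$g$, and induct coupling argument (\textbf{Lemma} \ref{lemma:dct_inc_R1} through \textbf{Theorem} \ref{thm:cts_Rp}) --- and then passes to the continuum via the uniform continuity of Assumption \ref{asp:1}. You instead note that the exponential tilting in Assumption \ref{asp:2} makes the difference of expectations equal to $\frac{1}{Z^{(t+1)}}\mathrm{Cov}_{\mathcal{P}_{\Lambda}^{(t)}}\left[f, e^{\eta g}\right]$ and invoke the two-independent-copies identity, i.e.\ Chebyshev's association inequality. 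This is shorter, avoids the discretization and induction entirely, and does not need continuity of $f$; what the paper's coupling buys is a stronger structural object (the monotone transport plan itself), which it reuses verbatim for \textbf{Theorem} \ref{thm:2nd_gdi} --- your covariance argument transfers there just as easily.

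One soft spot in your final step: you are right to flag the $\theta^{(t)}$-versus-$\theta^{(t+1)}$ mismatch, which the paper's one-line proof silently ignores, but uniform continuity (Assumption \ref{asp:1}) controls changes in the \emph{values} of $\mathcal{L}_{\mathcal{E}}$, not their \emph{ordering} across $\lambda$: two indices with nearly equal $\mathcal{L}_{\mathcal{E}}$ can swap rank under an arbitrarily small parameter perturbation, so "the ranking is preserved" does not follow. The clean repair within your framework is to bound the covariance directly: if $\sup_{\lambda}\left|\mathcal{L}_{\mathcal{E}}(\lambda,\theta^{(t+1)}_{\lambda})-\mathcal{L}_{\mathcal{E}}(\lambda,\theta^{(t)}_{\lambda})\right|<\epsilon$, then $\mathrm{Cov}\left[f, e^{\eta \mathcal{L}_{\mathcal{E}}(\cdot,\theta^{(t)}_{\cdot})}\right]$ differs from the covariance with the tilt evaluated at $\theta^{(t+1)}$ --- which is nonnegative by Assumption \ref{asp:3} applied at $\theta^{(t+1)}$ --- by $O(\epsilon)$, yielding the inequality only up to an $O(\epsilon)$ error for a fixed gradient step. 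This residual gap is present in the paper's own proof as well; it closes exactly if Assumption \ref{asp:3} is read as pairing $\mathcal{L}_{\mathcal{E}}(\cdot,\theta^{(t)}_{\cdot})$ with $\mathcal{L}_{\mathcal{T}}(\cdot,\theta^{(t+1)}_{\cdot})$ directly, which is evidently the intended reading.
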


\begin{Proof}
    By \textbf{Theorem} \ref{thm:cts_Rp} (see App. \ref{App: proof}), the upper triangular transport inequality, 
    let $f(\lambda) = \mathcal{L}_{\mathcal{T}} (\lambda, \theta_{\lambda})$ and 
    $g(\lambda) = \mathcal{L}_{\mathcal{E}} (\lambda, \theta_{\lambda})$,
    the proof is done.
\end{Proof}

\begin{Remark}[Why Superior Target]
     In Algorithm \ref{alg:GDI}, if $\mathcal{E}$ updates $\mathcal{P}_{\Lambda}^{(t)}$ at time $t$, then the operator $\mathcal{T}$ at time $t+1$ can be written as $\theta^{(t+2)} = \theta^{(t+1)} + \eta \nabla_{\theta^{(t+1)}} \mathcal{L}_{\mathcal{T}} (\mathcal{P}_{\Lambda}^{(t+1)}, \theta^{(t+1)})$.
     If $\mathcal{P}_{\Lambda}^{(t)}$ hasn't been updated at time $t$, then the operator $\mathcal{T}$ at time $t+1$ can be written as $\theta^{(t+2)} = \theta^{(t+1)} + \eta \nabla_{\theta^{(t+1)}} \mathcal{L}_{\mathcal{T}} (\mathcal{P}_{\Lambda}^{(t)}, \theta^{(t+1)})$.
     \textbf{Theorem} \ref{thm:1st_gdi} shows that the target of $\mathcal{T}$ at time $t+1$ becomes higher if $\mathcal{P}_{\Lambda}^{(t)}$ is updated by $\mathcal{E}$ at time $t$.
\end{Remark}

\begin{Remark}[Practical Implementation]
    We provide one possible practical setting of GDI. 
    Let $\mathcal{L}_{\mathcal{E}} (\lambda, \theta_{\lambda}) = \mathcal{J}_{\pi_{\theta_{\lambda}}}$ and $\mathcal{L}_{\mathcal{T}} (\lambda, \theta_{\lambda}) = \mathcal{J}_{\pi_{\theta_{\lambda}}}$.
    $\mathcal{E}$ can update $\mathcal{P}_{\Lambda}$ by the Monte-Carlo estimation of $\mathcal{J}_{\pi_{\theta_{\lambda}}}$.
    $\mathcal{T}$ is to maximize $\mathcal{J}_{\pi_{\theta_{\lambda}}}$, which can be any RL algorithms.
\end{Remark}

\begin{Theorem}[Second Order Optimization with Superior Improvement]
    Under Assumption \eqref{asp:1} \eqref{asp:2} \eqref{asp:4}, we have
    $\textbf{E}_{\lambda \sim \mathcal{P}_{\Lambda}^{(t+1)}}  [G^{\eta} \mathcal{L}_{\mathcal{T}} (\lambda, \theta_{\lambda}^{(t+1)})] 
    \geq \textbf{E}_{\lambda \sim \mathcal{P}_{\Lambda}^{(t)}}  [G^{\eta} \mathcal{L}_{\mathcal{T}} (\lambda, \theta_{\lambda}^{(t+1)})] $, more specifically,
    $\textbf{E}_{\lambda \sim \mathcal{P}_{\Lambda}^{(t+1)}}
    [\mathcal{L}_{\mathcal{T}} (\lambda, \theta_{\lambda}^{(t+1),\eta}) - \mathcal{L}_{\mathcal{T}} (\lambda, \theta_{\lambda}^{(t+1)}) ]
    \geq \textbf{E}_{\lambda \sim \mathcal{P}_{\Lambda}^{(t)}}
    [\mathcal{L}_{\mathcal{T}} (\lambda, \theta_{\lambda}^{(t+1),\eta}) - \mathcal{L}_{\mathcal{T}} (\lambda, \theta_{\lambda}^{(t+1)}) ]$.
\label{thm:2nd_gdi}
\end{Theorem}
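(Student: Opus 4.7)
The plan is to mirror the one-line proof of \textbf{Theorem} \ref{thm:1st_gdi} almost verbatim, with the only change being that the monotonic "target" function is replaced by the first-order progress functional $G^{\eta} \mathcal{L}_{\mathcal{T}}$. Concretely, I would set $f(\lambda) = G^{\eta} \mathcal{L}_{\mathcal{T}}(\lambda, \theta_{\lambda}^{(t+1)})$ and $g(\lambda) = \mathcal{L}_{\mathcal{E}}(\lambda, \theta_{\lambda}^{(t)})$, then invoke the upper triangular transport inequality (\textbf{Theorem} \ref{thm:cts_Rp} in App.~\ref{App: proof}) against these two functions under the Gibbs-type reweighting supplied by Assumption \ref{asp:2}.

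First I would verify the hypotheses of the transport inequality. Assumption \ref{asp:2} tells us exactly that $\mathcal{P}_{\Lambda}^{(t+1)}$ is obtained from $\mathcal{P}_{\Lambda}^{(t)}$ by exponential tilting along $g$, so the transport step is justified. Assumption \ref{asp:4} supplies precisely the co-monotonicity $[g(\lambda_1)-g(\lambda_2)] \cdot [f(\lambda_1)-f(\lambda_2)] \geq 0$ required to run the triangular rearrangement argument, provided the step size $\eta$ is chosen below the $\eta_0$ guaranteed there. Assumption \ref{asp:1} (uniform continuity of $V^{\pi}$ in the policy parameters) is needed as a mild regularity condition so that $f$ and $g$ are measurable and integrable with respect to $\mathcal{P}_{\Lambda}^{(t)}$ and $\mathcal{P}_{\Lambda}^{(t+1)}$ — essentially the same role it plays in the proof of \textbf{Theorem} \ref{thm:1st_gdi}.

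Applying the transport inequality then yields
\begin{equation*}
\textbf{E}_{\lambda \sim \mathcal{P}_{\Lambda}^{(t+1)}}[G^{\eta} \mathcal{L}_{\mathcal{T}}(\lambda, \theta_{\lambda}^{(t+1)})] \;\geq\; \textbf{E}_{\lambda \sim \mathcal{P}_{\Lambda}^{(t)}}[G^{\eta} \mathcal{L}_{\mathcal{T}}(\lambda, \theta_{\lambda}^{(t+1)})],
\end{equation*}
which is the first conclusion of the theorem. The second (more explicit) form is obtained by unfolding the definition $G^{\eta} \mathcal{L}_{\mathcal{T}}(\lambda, \theta_{\lambda}^{(t+1)}) = \eta^{-1}[\mathcal{L}_{\mathcal{T}}(\lambda, \theta_{\lambda}^{(t+1),\eta}) - \mathcal{L}_{\mathcal{T}}(\lambda, \theta_{\lambda}^{(t+1)})]$ on both sides and multiplying through by $\eta > 0$.

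The main conceptual obstacle is not the reduction itself (which is essentially an instantiation of the same transport lemma) but justifying that Assumption \ref{asp:4} is the right second-order analogue of Assumption \ref{asp:3}: one has to argue that replacing the value functional $\mathcal{L}_{\mathcal{T}}$ by its finite-difference improvement $G^{\eta} \mathcal{L}_{\mathcal{T}}$ still produces a function $f$ that is co-monotone with $g$, and that this persists uniformly for small $\eta$. Once $\eta < \eta_0$ is fixed and the co-monotonicity is in hand, the remainder of the argument is a routine appeal to \textbf{Theorem} \ref{thm:cts_Rp}, just as in the first-order case.
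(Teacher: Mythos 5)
Your proposal matches the paper's own proof: the paper likewise sets $f(\lambda) = G^{\eta} \mathcal{L}_{\mathcal{T}} (\lambda, \theta_{\lambda})$ and $g(\lambda) = \mathcal{L}_{\mathcal{E}} (\lambda, \theta_{\lambda})$ and invokes the upper triangular transport inequality (\textbf{Theorem} \ref{thm:cts_Rp}) under Assumptions \ref{asp:2} and \ref{asp:4}. Your write-up is simply a more explicit version of the same one-line argument, with the hypothesis-checking spelled out.
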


\begin{Proof}
    By \textbf{Theorem} \ref{thm:cts_Rp} (see App. \ref{App: proof}), the upper triangular transport inequality,
    let $f(\lambda) = G^{\eta} \mathcal{L}_{\mathcal{T}} (\lambda, \theta_{\lambda})$ and 
    $g(\lambda) = \mathcal{L}_{\mathcal{E}} (\lambda, \theta_{\lambda})$,
    the proof is done.
\end{Proof}

\begin{Remark}[Why Superior Improvement]
    \textbf{Theorem} \ref{thm:2nd_gdi} shows that, if $\mathcal{P}_{\Lambda}$ is updated by $\mathcal{E}$, the expected improvement of $\mathcal{T}$ is higher.
\end{Remark}

\begin{Remark}[Practical Implementation]
    Let $\mathcal{L}_{\mathcal{E}} (\lambda, \theta_{\lambda}) = \textbf{E}_{s \sim d_{\rho_0}^{\pi}} \textbf{E}_{a \sim \pi(\cdot | s) \exp(\epsilon A^{\pi}(s, \cdot))/Z} [A^{\pi}(s, a)]$, where $\pi = \pi_{\theta_{\lambda}}$.
    Let $\mathcal{L}_{\mathcal{T}} (\lambda, \theta_{\lambda}) = \mathcal{J}_{\pi_{\theta_{\lambda}}}$.
    If we optimize $\mathcal{L}_{\mathcal{T}} (\lambda, \theta_{\lambda})$ by natural gradient, 
    \citep{pgtheory} shows that, for direct parameterization, the natural policy gradient gives $\pi^{(t+1)} \propto \pi^{(t)} \exp (\epsilon A^{\pi^{(t)}})$, by \textbf{Lemma} \ref{lemma:perfdiff} (see App. \ref{App: proof}), the performance difference lemma,
    $V^{\pi} (s_0) - V^{\pi'} (s_0) = \frac{1}{1 - \gamma} \textbf{E}_{s \sim d_{s_0}^\pi} \textbf{E}_{a \sim \pi (\cdot | s)} [ A^{\pi'} (s, a) ]$, 
    hence if we ignore the gap between the states visitation distributions of $\pi^{(t)}$ and $\pi^{(t+1)}$, 
    $\mathcal{L}_{\mathcal{E}} (\lambda, \theta_{\lambda}^{(t)}) \approx \frac{1}{1 - \gamma} \textbf{E}_{s \sim d_{\rho_0}^{\pi}} [V^{\pi^{(t+1)}}(s) - V^{\pi^{(t)}} (s)]$, 
    where $\pi^{(t)} = \pi_{\theta_{\lambda}^{(t)}}$.
    Hence, $\mathcal{E}$ is actually putting more measure on $\lambda$ that can achieve more improvement.
\end{Remark}

\section{Experiment}
\label{sec: experiment}
We begin this section by describing our experimental setup. Then we report and analyze our SOTA results on ALE, specifically, 57 games, which are summarized and illustrated in App. \ref{appendix: experiment results}. To further investigate the mechanism of our algorithm, we study the effect of several major components.

\subsection{Experimental Setup}

The overall training architecture is on the top of the Learner-Actor  framework \citep{impala}, which supports large-scale training. Additionally, the recurrent encoder with LSTM \citep{lstm} is used to handle the partially observable MDP problem \citep{ale}. 
\textit{burn-in} technique is adopted to deal with the representational drift as \citep{r2d2}, and we train each sample twice.
A complete description of the hyperparameters can be found in App. \ref{Sec: appendix hyperparameters}. 
We employ additional environments to evaluate the scores during training, and the undiscounted episode returns averaged over 32 environments with different seeds have been recorded. 
Details on ALE and relevant evaluation criteria can be found in App. \ref{sec:app Experiment Details}.

To illustrate the generality and efficiency of GDI, we propose one implementation of GDI-I$^3$ and GDI-H$^3$, respectively. 
Let $\Lambda = \{\lambda | \lambda = (\tau_1, \tau_2, \epsilon)\}$.
The behavior policy belongs to a soft $\epsilon$-greedy policy space, which contains $\epsilon$-greedy policy and Boltzmann policy.
We define the behavior policy $\pi_{\theta_{\lambda}}$ as
\begin{equation}
\label{equ: soft epsilon policy space}
    \lambda = (\tau_1, \tau_2, \epsilon), \ 
    \pi_{\theta_{\lambda}}=\varepsilon \cdot \operatorname{Softmax}\left(\frac{A_1}{\tau_{1}}\right)+(1-\varepsilon) \cdot \operatorname{Softmax}\left(\frac{A_2}{\tau_{2}}\right)
\end{equation}
For GDI-I$^3$, $A_1$ and $A_2$ are identical, so it is estimated by an isomorphic family of trainable variables.
The learning policy is also $\pi_{\theta_{\lambda}}$.
For GDI-H$^3$, $A_1$ and $A_2$ are different, and they are estimated by two different families of trainable variables.
Since GDI needn't assume $A_1$ and $A_2$ are learned from the same MDP, so we use two kinds of reward shaping to learn $A_1$ and $A_2$ respectively, which can be found in App. \ref{app: Hyperparameters Used}.
Full algorithm can be found in App. \ref{App: Algorithm Pseudocode}.

The operator $\mathcal{T}$ is achieved by policy gradient, V-Trace and ReTrace \citep{impala, retrace} (see App. \ref{app: background on RL}), which meets Theorem \ref{thm:1st_gdi} by first order optimization. 

The operator $\mathcal{E}$, which optimizes $\mathcal{P}_{\Lambda}$, is achieved by a variant of Multi-Arm Bandits \citep[MAB]{sutton}, 
where Assumption \ref{asp:2} holds naturally.
More details can be found in App. \ref{Sec: appendix MAB}. 

\subsection{Summary of Results}
\label{sec: Summary of Results}

\begin{table}[H]
\small
\setlength{\tabcolsep}{1.0pt}
    \centering
    \begin{tabular}{c c c c c c c c c}
    \toprule
                      & GDI-H$^3$                   & GDI-I$^3$               & Muesli & RAINBOW & LASER & R2D2 & NGU & Agent57\\
    \midrule
    Num. Frames       &\textbf{2E+8}        &\textbf{2E+8}       &\textbf{2E+8}  & \textbf{2E+8} & \textbf{2E+8}  & 1E+10   & 3.5E+10  &1E+11 \\
    Game Time (year)  & \textbf{\GDIIgametime}       & \textbf{\GDIHgametime}      & \textbf{0.114} & \textbf{0.114} & \textbf{0.114}  & 5.7    & 19.9    & 57 \\
    HWRB              &\textbf{\GDIHHWRB}          &\GDIIHWRB         & 5             & 4             & 7              & 15      & 8                  & 18 \\
    Mean HNS(\%)      &\textbf{\GDIHmeanhns}     &\GDIImeanhns    & 2538.66        & 873.97        & 1741.36        & 3374.31 &3169.90   &4763.69 \\
    Median HNS(\%)    &\GDIHmedianhns               &\GDIImedianhns               & 1077.47        & 230.99        & 454.91         & 1342.27 &1208.11   &\textbf{1933.49}\\
    Mean HWRNS(\%)    &\textbf{\GDIHmeanHWRNS}      &\GDIImeanHWRNS              & 75.52         & 28.39         &45.39           & 98.78   & 76.00     &125.92\\
    Median HWRNS(\%)  &\textbf{\GDIHmedianHWRNS}                &\GDIImedianHWRNS               & 24.86          & 4.92          & 8.08           &33.62    & 21.19    &43.62\\
    Mean SABER(\%)    &\GDIHmeanSABER                &\GDIImeanSABER               & 48.74          & 28.39         & 36.78          &60.43    & 50.47    &\textbf{76.26}\\
    Median SABER(\%)  &\textbf{\GDIHmedianSABER}                & \GDIImedianSABER              & 24.68          & 4.92         & 8.08           &33.62    & 21.19     &43.62\\
    \bottomrule
    \end{tabular}
    \caption{Experiment results of Atari.
    Muesli's scores are from \citep{muesli}.
    RAINBOW's scores are from \citep{impala}.
    LASER's scores are from \citep{laser}, no sweep at 200M.
    R2D2's scores are from \citep{r2d2}.
    NGU's scores are from \citep{ngu}.
    Agent57's scores are from \citep{agent57}. More details on abbreviations and notations can see App. \ref{app: Abbreviation and Notation} and \ref{sec:app Experiment Details}. Full comparison among all algorithms can see App. \ref{appendix: experiment results}.}
    \label{tab:atari_results}
\end{table}
\normalsize

We construct a multivariate evaluation system to emphasize the superiority of our algorithm in all aspects, and more discussions on those evaluation criteria are in App. \ref{App: ALE} and details are in App. \ref{sec:app Experiment Details}. Furthermore, to avoid any issues that aggregated metrics may have, App. \ref{appendix: experiment results} provides full learning curves for all games, as well as detailed comparison tables of raw and normalized scores. 

The aggregated results across games are reported in Tab. \ref{tab:atari_results}. Our agents obtain the highest mean HNS with an extraordinary learning efficiency from this table. 
Furthermore, our agents have achieved \GDIHHWRB \  human world record breakthroughs and more than 90 times the average human score of Atari games via playing from scratch for less than 1.5 months. 
Although Agent57 obtains the highest median HNS, it costs each of the agents more than 57 years to obtain such performance, revealing its low learning efficiency.  
It is obvious that there is no such world record achieved by a human who played for over 57 years. 
This is due to the fact that Agent57 fails to handle the balance between exploration and exploitation, thus collecting a large number of inferior samples, which further hinders the efficient-learning and makes it harder for policy improvement. 
Other algorithms gain higher learning efficiency than Agent57 but relatively lower final performance, such as NGU and R2D2, which acquire over 10B frames. 
Except for median HNS, our performance is better on all criteria than NGU and R2D2.
In addition, other algorithms with 200M training frames are struggling to match our performance.

These results come from the following aspects:
\begin{enumerate}
    \item Several games have been solved completely, achieving the historically highest score, such as RoadRunner, Seaquest, Jamesbond.
    \item Massive games show enormous potentialities for improvement but fail to converge for lack of training, such as BeamRider, BattleZone, SpaceInvaders.
    \item This paper aims to illustrate that GDI is general for seeking a suitable balance between exploration and exploitation, so we refuse to adopt any handcrafted and domain-specific tricks such as the intrinsic reward. Therefore, we suffer from the hard exploration problem, such as PrivateEye, Surround, Amidar. 
\end{enumerate}
Therefore, there are several aspects of potential improvement. For example, a more extensive training scale may benefit higher performance. More exploration techniques can be incorporated into GDI to handle those hard-exploration problems through guiding the direction of the acquired samples.

\subsection{Ablation Study}
In the ablation study, we further investigate the effects of several properties of GDI. 
We set GDI-I$^3$ and GDI-H$^3$ as our baseline control group. 
To prove the effects of the data distribution optimization operator $\mathcal{E}$, we set two ablation groups, which are Fixed Selection from GDI-I$^0$ w/o $\mathcal{E}$ and Random Selection from GDI-I$^3$ w/o $\mathcal{E}$.
To prove the capacity of the behavior policy space matters in GDI, we set two ablation groups, which are $\epsilon$-greedy Selection $\Lambda = \{\lambda | \lambda = (\epsilon)\}$ and Boltzmann Selection $\Lambda = \{\lambda | \lambda = (\tau)\}$. 
Both $\epsilon$-greedy Selection and Boltzmann Selection implement $\mathcal{E}$ by the same MAB as our baselines'. 
More details on ablation study can see App. \ref{app: appendix Ablation Study Details}.

From results in App. \ref{Sec: Appendix Ablation Study Results}, it is evident that both the data distribution optimization operator $\mathcal{E}$ and  the capacity of the behavior policy space are critical. 
This is since if they lack the cognition to identify suitable experiences from various data, high variance and massive poor experiences will hinder the policy improvement, and if the RL agents lack the vision to find more examples to learn, they may ignore some shortcuts. 
To further prove the capacity of the policy space does bring more diverse data, we draw the t-SNE of GDI-I$^3$, GDI-H$^3$ and Boltzmann Selection in App. \ref{app: tsne}, from which we see GDI-I$^3$ and GDI-H$^3$ can explore more high-value states that Boltzmann selection has less chance to find. We also evaluate Fixed Selection and Boltzmann Selection in all 57 Atari games, and recorded the comparison tables of raw and normalized scores in App. \ref{app: ablation score}.

\section{Conclusion}
This paper proposes a novel RL paradigm to effectively and adaptively trade-off the exploration and exploitation, integrating the data distribution optimization into the generalized policy iteration paradigm. 
Under this paradigm, we propose feasible implementations, which both have achieved new SOTA among all 200M scale algorithms on all evaluation criteria and obtained the best mean final performance and learning efficiency compared with all 10B+ scale algorithms. 
Furthermore, we have achieved 22 human world record breakthroughs within less than 1.5 months of game time. 
It implies that our algorithm obtains both superhuman learning performance and human-level learning efficiency. 
In the experiment, we discuss the potential improvement of our method in future work.

\bibliographystyle{rusnat}
\bibliography{ref}

\begin{appendices}

\clearpage

\section{Abbreviation and Notation}
\label{app: Abbreviation and Notation}
In this Section, we briefly summarize some common notations and abbreviations in our paper for the convenience of readers, which is illustraed in Tab. \ref{tab: abbreviation} and Tab. \ref{tab: notation}.

\begin{table}[!hb]
	\centering
	\caption{Abbreviation}
	\label{tab: abbreviation}
	\begin{tabular}{c c}
		\toprule
		\textbf{Abbreviation} &\textbf{Description}\\
		\midrule
		SOTA & State-of-The-Art \citep{agent57}    \\
		RL  & Reinforcement Learning \citep{sutton} \\
		DRL & Deep Reinforcement Learning \citep{sutton} \\
        GPI & Generalized Policy Iteration \citep{sutton} \\
        PG  & Policy Gradient \citep{sutton} \\
        AC  & Actor Critic \citep{sutton} \\
        ALE & Atari Learning Environment \citep{ale} \\
        HNS   & Human Normalized Score \citep{ale} \\
        HWRB & Human World Records Breakthrough \\
        HWRNS & Human World Records Normalized Score \\
        SABER & Standardized Atari BEnchmark for RL \citep{atarihuman}\\
        CHWRNS & Capped Human World Records Normalized Score \\      
        WLOG   & without loss of generality \\
        w/o    & without \\
		\bottomrule
	\end{tabular} 
\end{table}

\begin{table}[!hb]
	\centering
	\caption{Notation}
	\label{tab: notation}
	\begin{tabular}{c c}
		\toprule
		\textbf{Symbol} &\textbf{Description}\\
		\midrule
	    $s$ & state \\
	    $a$ & action \\
	    $\mathcal{S} $ & set of all states \\
	    $\mathcal{A} $ & set of all actions \\
	    $\Delta$  & probability simplex \\
	    $\mu $ & behavior policy \\
	    $\pi $ & target policy \\
	    $G_t $ & cumulative discounted reward or return at $t$ \\
	    $d_{\rho_0}^{\pi}$  & the states visitation distribution of $\pi$ with the initial state distribution $\rho_0$ \\
	    $J_{\pi}$     & the expectation of the returns with the states visitation distribution of $\pi$ \\
	    $V^{\pi}$ & the state value function of $\pi$\\
	    $Q^{\pi}$ &  the state-action value function of $\pi$\\
	    $\gamma$ & discount-rate parameter \\
	    $\delta_{t}$ & temporal-difference error at $t$\\
	    $\Lambda$ & set of indexes  \\
	    $\lambda$ & one index in $\Lambda$ \\
	    $\mathcal{P}_{\Lambda}$ & one probability measure on $\Lambda$ \\
	    $\Theta$  & set of all possible parameter values \\
	    $\theta$  & one parameter value in $\Theta$ \\
	    $\theta_\lambda$ & a subset of $\theta$, indicates the parameter in $\theta$ being used by the index $\lambda$ \\
	    $\mathcal{X}$ & set of samples \\
	    $x$ & one sample in $\mathcal{X}$ \\
	    $\mathcal{D}$ & set of all possible states visitation distributions \\
	    $\mathcal{E}$ & the data distribution optimization operator \\
	    $\mathcal{T}$ & the RL algorithm optimization operator \\
	    $L_{\mathcal{E}}$ & the loss function of $\mathcal{E}$ to be maximized, calculated by the samples set $\mathcal{X}$ \\
	    $\mathcal{L}_\mathcal{E}$ & expectation of $L_{\mathcal{E}}$, with respect to each sample $x \in \mathcal{X}$ \\
	    $L_{\mathcal{T}}$ & the loss function of $\mathcal{T}$ to be maximized, calculated by the samples set $\mathcal{X}$ \\
	    $\mathcal{L}_\mathcal{T}$ & expectation of $L_{\mathcal{T}}$, with respect to each sample $x \in \mathcal{X}$ \\
		\bottomrule
	\end{tabular} 
\end{table}

\clearpage

\section{Background on RL}
\label{app: background on RL}

 The RL problem can be formulated by a Markov decision process \citep[MDP]{howard1960dynamic} defined by the tuple  $\left(\mathcal{S}, \mathcal{A}, p, r, \gamma, \rho_{0}\right)$. 
 Considering a discounted episodic MDP, the initial state $s_0$ will be sampled from the distribution denoted by $\rho_0(s): \mathcal{S} \rightarrow \Delta(\mathcal{S})$. 
 At each time t, the agent choose an action $a_t \in \mathcal{A}$ according to the policy $\pi(a_t|s_t): \mathcal{S} \rightarrow \Delta(\mathcal{A})$ at state $s_t \in \mathcal{S}$. 
 The environment receives the action, produces a reward $r_t \sim r(s,a): \mathcal{S} \times \mathcal{A} \rightarrow \mathbf{R}$ and transfers to the next state $s_{t+1}$  submitted to the transition distribution $p\left(s^{\prime} \mid s, a\right): \mathcal{S} \times \mathcal{A} \rightarrow \Delta(\mathcal{S})$. 
 The process continues until the agent reaches a terminal state or a maximum time step. 
 Define return $G_t = \sum_{k=0}^\infty \gamma^k r_{t+k}$, state value function $V^{\pi}(s_t) = \textbf{E}\left[ \sum_{k=0}^\infty \gamma^k r_{t+k} | s_t \right]$, state-action value function $Q^{\pi}(s_t, a_t) = \textbf{E}\left[ \sum_{k=0}^\infty \gamma^k r_{t+k} | s_t, a_t \right]$, and advantage function $A^{\pi}(s_t,a_t) = Q^{\pi}(s_t, a_t) - V^{\pi}(s_t)$, wherein $\gamma \in(0,1)$ is the discount factor.
The connections between $V^\pi$ and $Q^\pi$ is given by the Bellman equation,
\begin{equation*}
    \mathcal{T}Q^{\pi} (s_t, a_t) = \textbf{E}_{'\pi} [r_t + \gamma V^{\pi}(s_{t+1})],
\end{equation*}
where
\begin{equation*}
    V^{\pi} (s_t)  = \textbf{E}_{\pi} [Q^{\pi} (s_t, a_t)].
\end{equation*}
The goal of reinforcement learning is to find the optimal policy $\pi^*$ that maximizes the expected sum of discounted rewards, denoted by $\mathcal{J}$ \citep{sutton}:
\begin{equation*}
\pi^{*}=\underset{\pi}{\operatorname{argmax}} \mathcal{J}_{\pi}(\tau) = \underset{\pi}{\operatorname{argmax}} \textbf{E}_{\pi}\left[G_{t}\right]= \underset{\pi}{\operatorname{argmax}} \textbf{E}_{\pi}[\sum_{k=0}^{\infty} \gamma^{k} r_{t+k}]
\end{equation*}


Model-free reinforcement learning (MFRL) has made many impressive breakthroughs in a wide range of Markov decision processes  \citep[MDP]{alpha_star,ftw,agent57}.
MFRL mainly consists of two categories, valued-based methods \citep{dqn,rainbow} and policy-based methods \citep{trpo,ppo,impala}.

Value-based methods learn state-action values and select actions according to these values. 
One merit of value-based methods is to accurately control the exploration rate of the behavior policies by some trivial mechanism, such like $\epsilon$-greedy.
The drawback is also apparent. 
The policy improvement of valued-based methods totally depends on the policy evaluation. 
Unless the selected action is changed by a more accurate policy evaluation, the policy won't be improved. 
So the policy improvement of each policy iteration is limited, which leads to a low learning efficiency.
Previous works equip valued-based methods with many appropriated designed structures, achieving a more promising learning efficiency \citep{dueling_q,priority_q,r2d2}.

In practice, value-based methods maximize $\mathcal{J}$ by policy iteration \citep{sutton}. 
The policy evaluation is fulfilled by minimizing $\textbf{E}_{\pi} [(G - Q^\pi) ^ 2]$, which gives the gradient ascent direction 
$\textbf{E}_{\pi} [(G - Q^\pi) \nabla Q^\pi]$. 
The policy improvement is usually achieved by $\epsilon$-greedy.

Q-learning is a typical value-based methods, which updates the state-action value function $Q(s,a)$ with Bellman Optimality Equation \citep{qlearning}: 
\begin{equation*}
    \begin{array}{c}
    \delta_{t}=r_{t+1}+\gamma \arg \max _{a} Q\left(s_{t+1}, a\right)-Q\left(s_{t}, a_{t}\right) \\
    Q\left(s_{t}, a_{t}\right) \leftarrow Q\left(s_{t}, a_{t}\right)+\alpha \delta_{t}
    \end{array}
\end{equation*}
wherein $\delta_t$ is the temporal difference error \citep{TDerror}, and $\alpha$ is the learning rate.

A refined structure design of $Q^\pi$ is achieved by \citep{dueling_q}. It estimates $Q^\pi$ by a summation of two separated networks, $Q^\pi = A^\pi + V^\pi$.

Policy gradient \citep[PG]{williams1992simple} methods is an outstanding representative of policy-based RL algorithms, which directly parameterizes the policy and  updates through optimizing the following objective: 
\begin{equation*}
    \mathcal{J} (\theta)=\mathbb{E}_{\pi}\left[\sum_{t=0}^{\infty} \log \pi_{\theta}\left(a_{t} \mid s_{t}\right) R(\tau)\right]
\end{equation*}
wherein $R(\tau)$ is the cumulative return on trajectory $\tau$. In PG method, policy improves via ascending  along the gradient of the above equation, denoted as policy gradient:
\begin{equation*}
\nabla_{\theta} \mathcal{J} \left(\pi_{\theta}\right) =\underset{\tau \sim \pi_{\theta}}{\mathrm{E}}\left[\sum_{t=0}^{\infty} \nabla_{\theta} \log \pi_{\theta}\left(a_{t} \mid s_{t}\right) R(\tau)\right]
\end{equation*}

One merit of policy-based methods is that they incorporate a policy improvement phase every training step, suggesting a higher learning efficiency than value-based methods.
Nevertheless, policy-based methods easily fall into a suboptimal solution, where the entropy drops to $0$ \citep{sac}.
The actor-critic methods introduce a value function as the baseline to reduce the variance of the policy gradient \citep{a3c}, but maintain the other characteristics unchanged.

Actor-Critic \citep[AC]{sutton} reinforcement learning updates the policy gradient with an value-based critic, which can reduce variance of estimates and thus ensure  more stable and rapid optimization.
\begin{equation*}
    \nabla_{\theta} \mathcal{J}(\theta)=\mathbb{E}_{\pi}\left[\sum_{t=0}^{\infty} \psi_{t} \nabla_{\theta} \log \pi_{\theta}\left(a_{t} \mid s_{t}\right)\right]
\end{equation*}
wherein $\psi_{t}$ is the critic to guide the improvement directions of policy improvement, which can be the state-action value function $Q^{\pi}\left(s_{t}, a_{t}\right)$, the advantage function $A^{\pi}\left(s_{t}, a_{t}\right)=Q^{\pi}\left(s_{t}, a_{t}\right)-V^{\pi}(s_t)$.

\subsection{Retrace}

When large scale training is involved, the off-policy problem is inevitable.
Denote $\mu$ to be the behavior policy, $\pi$ to be the target policy, and $c_t = \min\{\frac{\pi_t}{\mu_t}, \Bar{c}\}$ to be the clipped importance sampling. 
For brevity, denote $c_{[t: t+k]} = \prod_{i=0}^{k} c_{t+i}$. 
ReTrace \citep{retrace} estimates $Q(s_t, a_t)$ by clipped per-step importance sampling
\begin{equation*}
\label{Equ: retrace}
    Q^{\Tilde{\pi}} (s_t, a_t) 
= \textbf{E}_{\mu} [ Q(s_t, a_t) + \sum_{k \geq 0} \gamma^k 
c_{[t+1:t+k]} \delta^{Q}_{t+k} Q ],
\end{equation*}
where $\delta^{Q}_t Q \overset{def}{=} r_t + \gamma Q(s_{t+1}, a_{t+1}) - Q(s_t, a_t)$. 
The above operator is a contraction mapping, 
and $Q$ converges to $Q^{\Tilde{\pi}_{ReTrace}}$ that corresponds to some $\Tilde{\pi}_{ReTrace}$.

\subsection{Vtrace}
Policy-based methods maximize $\mathcal{J}$ by policy gradient. 
It's shown \citep{sutton} that $\nabla \mathcal{J} = \textbf{E}_\pi [G \nabla \log \pi]$. 
When involved with a baseline, it becomes an actor-critic algorithm such as $\nabla \mathcal{J} = \textbf{E}_\pi [(G - V^\pi) \nabla \log \pi]$, where $V^\pi$ is optimized by minimizing $\textbf{E}_\pi [(G - V^\pi)^2]$, i.e. gradient ascent direction $\textbf{E}_\pi [(G - V^\pi)\nabla V^\pi]$.

IMPALA \citep{impala} introduces V-Trace off-policy actor-critic algorithm to correct for the discrepancy between target policy and behavior policy. Denote $\rho_t = \min\{\frac{\pi_t}{\mu_t}, \Bar{\rho} \}$. V-Trace estimates $V(s_t)$ by
\begin{equation*}
\label{Equ: vtrace}
    V^{\Tilde{\pi}} (s_t) 
        = \textbf{E}_{\mu} [ 
        V(s_t) + \sum_{k \geq 0} \gamma^k 
     c_{[t:t+k-1]} \rho_{t+k}  \delta^{V}_{t+k} V ],
\end{equation*}
where $\delta^{V}_t V \overset{def}{=} r_t + \gamma V(s_{t+1}) - V(s_t)$. 
If $\Bar{c} \leq \Bar{\rho}$, the above operator is a contraction mapping, and $V$ converges to $V^{\Tilde{\pi}}$ that corresponds to 
$$
        \Tilde{\pi}(a|s) = \frac
        {\min \left\{\Bar{\rho} \mu (a|s), \pi(a|s)\right\}}
        {\sum_{b \in \mathcal{A}}\min \left\{\Bar{\rho} \mu (b|s), \pi(b|s)\right\}}.
$$
The policy gradient is given by
$$
\textbf{E}_\mu \left[\rho_t (r_t + \gamma V^{\Tilde{\pi}}(s_{t+1}) - V(s_t)) \nabla \log \pi \right].
$$

\clearpage
\section{Background on ALE}
\label{App: ALE}
Human intelligence is able to solve many tasks of different natures. In pursuit of generality in artificial intelligence, video games have become an important testing ground: they require a wide set of skills such as perception, exploration and control. Reinforcement Learning  is at the forefront of this development, especially when combined with deep neural networks in DRL.

The Arcade Learning Environment \citep[ALE]{ale} was proposed as a platform for empirically assessing agents designed for general competency across a wide range of games. It provides many different tasks ranging from simple paddle control in the ball game Pong to complex labyrinth exploration in Montezuma’s Revenge which remains unsolved by general algorithms up to today. ALE offers an interface to a diverse set of Atari 2600 game environments designed to be engaging and challenging for human players. As \citep{ale} put it, the Atari 2600 games are well suited for evaluating
general competency in AI agents for three main reasons:
\begin{enumerate}
    \item Varied enough to claim generality.
    \item Each interesting enough to be representative of settings that might be faced in practice.
    \item Each created by an independent party to be free of experimenter’s bias.
\end{enumerate}

\subsection{Human Normalized Score}

Agents are expected to perform well in as many games as possible without the use of game-specific information. Deep Q-Networks  \citep[DQN]{dqn} was the first algorithm to achieve human-level control in a large number of the Atari 2600 games, measured by human normalized scores \citep[HNS]{ale}. Subsequently, using HNS to assess performance on Atari games has become one of the most widely used benchmarks in deep reinforcement learning , despite the human baseline scores potentially underestimating human performance relative to what is possible \citep{atarihuman}.

\subsection{Human World Records Baseline}

Except for comparing with the average human performance, a more common way to evaluate AI for games is to let agents compete against human world champions. Recent examples for DRL include the victory of OpenAI Five on Dota 2 \citep{dota} or AlphaStar versus Mana for StarCraft 2 \citep{alpha_star}. In the same spirit, one
of the most used metric for evaluating RL agents on Atari is to compare them to the human baseline
introduced by \citep{ale}. Previous works use the normalized human score, i.e. 0\% is the score of a random player and 100\% is the score of the human baseline, which allows to summarize the
performance on the whole Atari set in one number, instead of individually comparing raw scores for
each of the 57 games. However, it's obvious that this human baseline is far from being representative of
the best human player, which means that using it to claim superhuman performance is misleading.

\subsection{Human World Records Normalized Score}
As \citep{atarihuman} said, previous claims of superhuman performance of RL might not be accurate owing to comparing with the averaged performance of  normal human  instead of the human world records, which means there are still massive games of Atari where human champions outperform the RL agents. 
Thus, we believe the \emph{human world records normalized score} (HWRNS) can serve as a more suitable evaluation criterion than the origin human normalized score, which directly compare the RL agents with the best human performance. HWRNS of a Atari game  surpass 100\% proves the fact that the DRL agents surpass the human world records and actually surpass the human on that game. When the mean HWRNS surpass 100\% we can say the RL agents can reach and even surpass the highest level of humanity, and then we can say our algorithms really achieve the superhuman level control. Recommended by \citep{atarihuman}, we also adopt the capped HWRNS that each HWRNS will be capped below 200\% as a evaluation criterion to avoid argument. 

\subsection{Learning Efficiency}

The goal of reinforcement learning is to achieve human level control. It is reflected in two aspects. On the one hand, the RL agents can reach and even surpass the human world records, which is the central focus of massive studies. On the other hand, we should not ignore the essential pursuit of reinforcement learning is to master human learning ability, which acquire the RL agents to not only learn how to do but also learn how to learn efficiently. For example, human can achieve one world records of Atari within only few years or even few months, however present SOTA RL algorithms like Agent57 acquires tens of years to achieve similar results, which implies the fact that there is still much room to improve the learning efficiency of reinforcement learning algorithm. 
\clearpage

\section{Atari Benchmark}
\label{app: atari benchmark}

Artificial intelligence (AI) in video games is a longstanding research area. It studies how to learn human-level and even surpassing-human-level agents when playing video games. 
The Arcade Learning Environment  \citep[ALE]{ale} is a universal experiment platform for empirically assessing the general competency of agents across a wide range of games. 
In addition, ALE offers an interface to a diverse set of Atari 2600 game environments designed to engage and challenge human players.  
Agents are expected to perform well in as many games as possible without the use of game-specific information. 

Since Deep Q Network \citep[DQN]{dqn} firstly achieves human level control of Atari games, reinforcement learning (RL) has brought the dawn of solving challenges of ALE and surpassing the human level control, which inspires researchers to pursuit more state-of-the-art(SOTA) performance. 
At the beginning, massive variants of DQN achieve new SOTA results. 
Double DQN \citep{doubledqn} introduces independent target network to alleviating overestimation problem. 
Dueling DQN \citep{dueling_q} adopts the dueling neural network architecture and achieved a new SOTA. 
RAINBOW \citep{rainbow} combines various effective extensions of DQN and improves the learning efficiency and the final performance.
Retrace($\lambda$) \citep{retrace} takes the per-step importance sampling, off policy Q($\lambda$), and tree-backup($\lambda$) \citep{sutton} to estimate $Q(s,a)$, resulting in a low variance estimation of $Q(s, a)$:
\begin{equation}
    Q^{\tilde{\pi}}\left(s_{t}, a_{t}\right)=\mathbf{E}_{\mu}[Q\left(s_{t}, a_{t}\right)+\sum_{k \geq 0} \gamma^{k} c_{[t+1: t+k]} \delta_{t+k}^{Q} Q]
\end{equation}
where $c_{t}=\min \left\{\frac{\pi_{t}}{\mu_{t}}, \bar{c}\right\}$, $c_{[t: t+k]}=\prod_{i=0}^{k} c_{t+i}$ and $\delta_{t}^{Q} Q \stackrel{\text { def }}{=} r_{t}+\gamma Q\left(s_{t+1}, a_{t+1}\right)-Q\left(s_{t}, a_{t}\right)$.

At the same time, PG methods is also booming, wherein AC framework is one of the brightest pearls. 
Asynchronous advantage actor-critic \citep[A3C]{a3c} introduces a novel asynchronous training with several actors, wherein an entropy regularization term 
is introduced into the objective function to encourage the exploration.
Importance-Weighted Actor Learner Architecture  \citep[IMPALA]{impala} is a novel large scale distributed training framework, which achieves stable learning by combining decoupled acting and learning with a novel V-trace off-policy correction method to estimate $V(s)$:
\begin{equation}
    V^{\tilde{\pi}}\left(s_{t}\right)=\mathbf{E}_{\mu}[V\left(s_{t}\right)+\sum_{k \geq 0} \gamma^{k} c_{[t: t+k-1]} \rho_{t+k} \delta_{t+k}^{V} V]
\end{equation}
where $\rho_{t}=\min \left\{\frac{\pi_{t}}{\mu_{t}}, \bar{\rho}\right\}$, $\delta_{t}^{V} V \stackrel{\text { def }}{=} r_{t}+\gamma V\left(s_{t+1}\right)-V\left(s_{t}\right)$. 
IMPALA reaches a new SOTA of policy-based methods on ALE. 
However, there still exist some hard-to-explore games with long horizon and sparse reward, like Montezuma’s Revenge, which need better exploration ability, namely, a breakthrough on the method.

Go-Explore \citep{goexplore} learns exploration and robustification separately, and achieves huge breakthroughs on games which acquire massive exploration.
However, there still exist some extremely hard games like Skiing where the average human performs better than RL agents. 
Agent57 \citep{agent57} firstly surpasses the average human performance in all 57 games, which is marked as a new milestone on ALE. 
Nevertheless, the breakthrough is achieved at the expense of tremendous training samples, called the low learning efficiency problem, which hinders the application of the method into real-world problems. 

For solving the low learning efficiency problem, model-based methods are regarded as one solution. 
MuZero \citep{muzero} is based on the frameworks of AlphaZero, which combines MCTS with a learned model
to make planning. 
It extends model-based RL to a range of logically complex and visually complex domains, and achieves a SOTA performance. 

Unfortunately, both value-based SOTA method RAINBOW, policy-based SOTA method IMPALA, model-free SOTA method Agent57 and the model-based SOTA method MuZero fail to synchronously guarantee the learning efficiency and the final performance. 

We concluded the SOTA results on the Atari benchmark and the corresponding learning efficiency in Figure \ref{fig: efficiency mean HNS time}. 
It's seen that our method reaches a new SOTA on both mean HNS and learning efficiency.
Our final performance is competitive with the best model-free algorithm Agent57, and simultaneously achieves a better learning efficiency than the best model-based algorithm Muzero.


\subsection{RL Benchmarks on HNS}
\label{app: RL Benchmarks on HNS}
We report several milestones of Atari benchmarks on HNS, including DQN \citep{dqn}, RAINBOW \citep{rainbow}, IMPALA \citep{impala}, LASER \citep{laser}, R2D2 \citep{r2d2}, NGU \citep{ngu}, Agent57 \citep{agent57}, Go-Explore \citep{goexplore}, MuZero \citep{muzero}, DreamerV2 \citep{dreamerv2}, SimPLe \citep{modelbasedatari} and Musile \citep{muesli}. We summary mean HNS and median HNS of these algorithms marked with their game time (year), learning efficiency and training scale in Fig \ref{fig: year mean HNS time},  \ref{fig: efficiency mean HNS time} and \ref{fig: scale mean HNS time}.

\begin{figure*}[!t]
    \centering
	\subfigure{
		\includegraphics[width=0.45\textwidth]{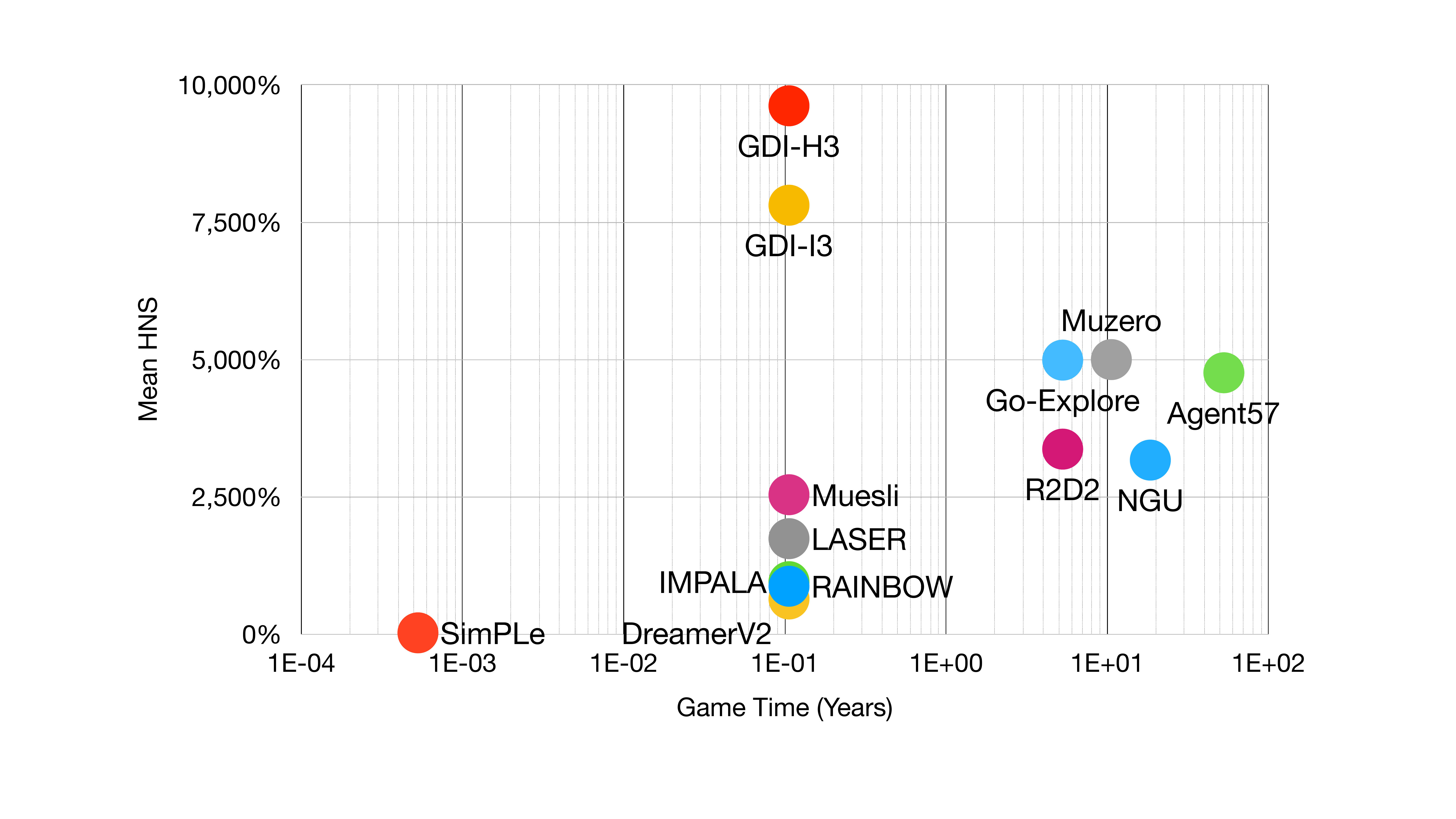}
	}
	\subfigure{
		\includegraphics[width=0.45\textwidth]{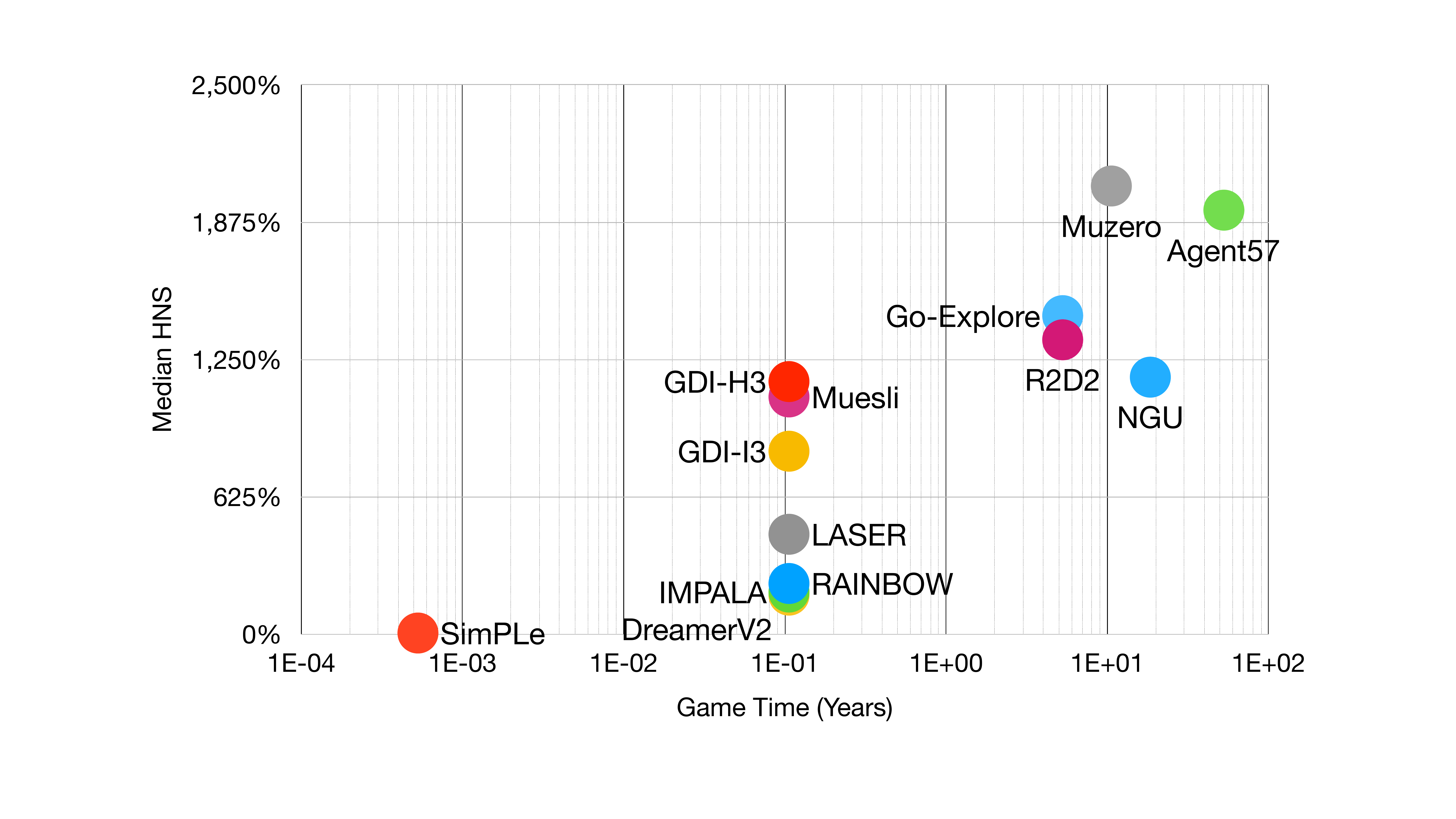}
	}
	\centering
	\caption{SOTA algorithms of Atari 57 games on mean and median HNS (\%) and game time (year).}
	\label{fig: year mean HNS time}
\end{figure*}

\begin{figure*}[!t]
    \centering
	\subfigure{
		\includegraphics[width=0.45\textwidth]{photo/tongjitu/learning efficiency/meanHNS.pdf}
	}
	\subfigure{
		\includegraphics[width=0.45\textwidth]{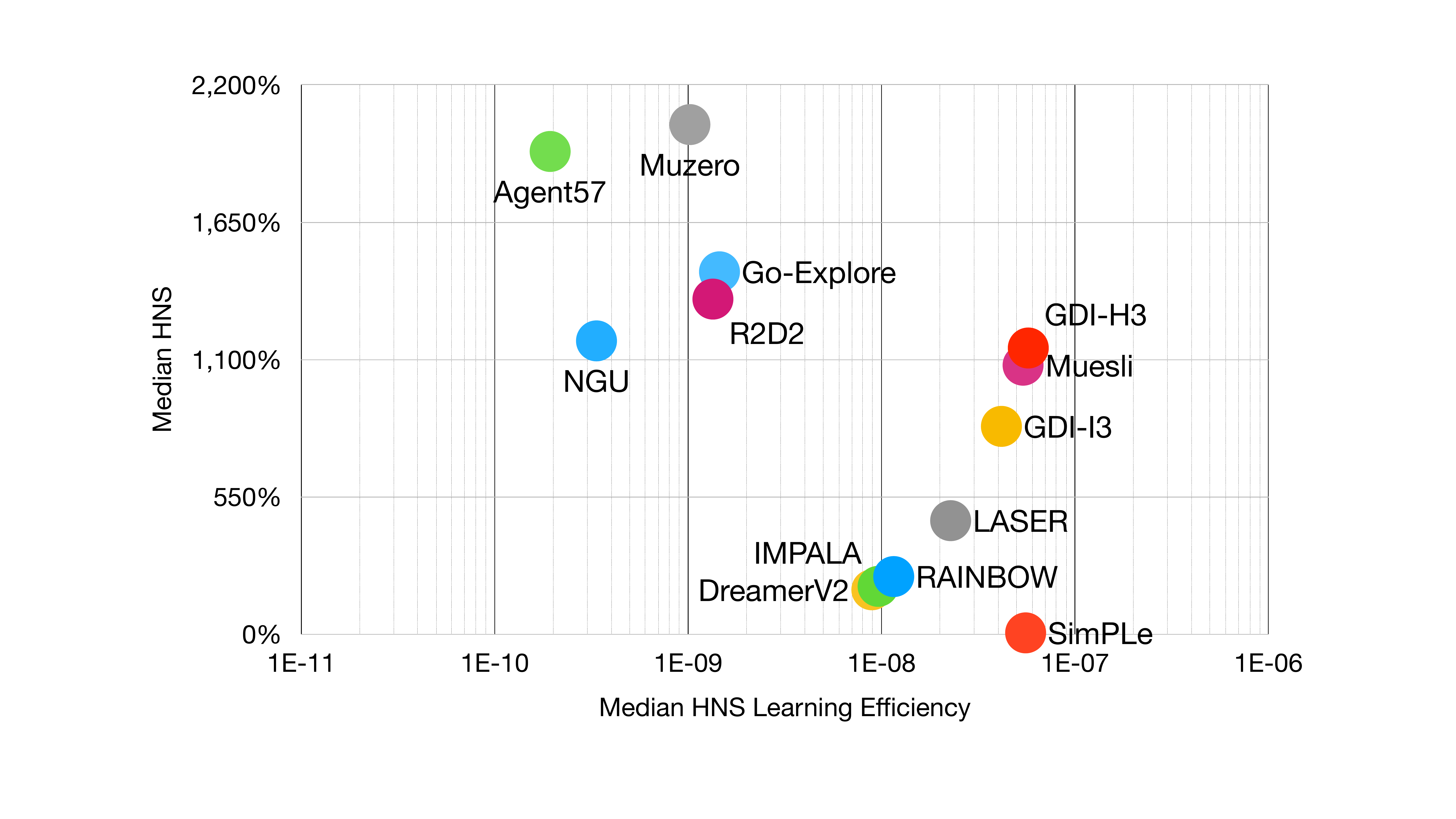}
	}
	\centering
	\caption{SOTA algorithms of Atari 57 games on mean and median HNS (\%) and corresponding learning efficiency calculated by $\frac{\text{MEAN HNS/MEDIAN HNS}}{\text{TRAINING FRAMES}}$.}
	\label{fig: efficiency mean HNS time}
\end{figure*}

\begin{figure*}[!t]
    \centering
	\subfigure{
		\includegraphics[width=0.45\textwidth]{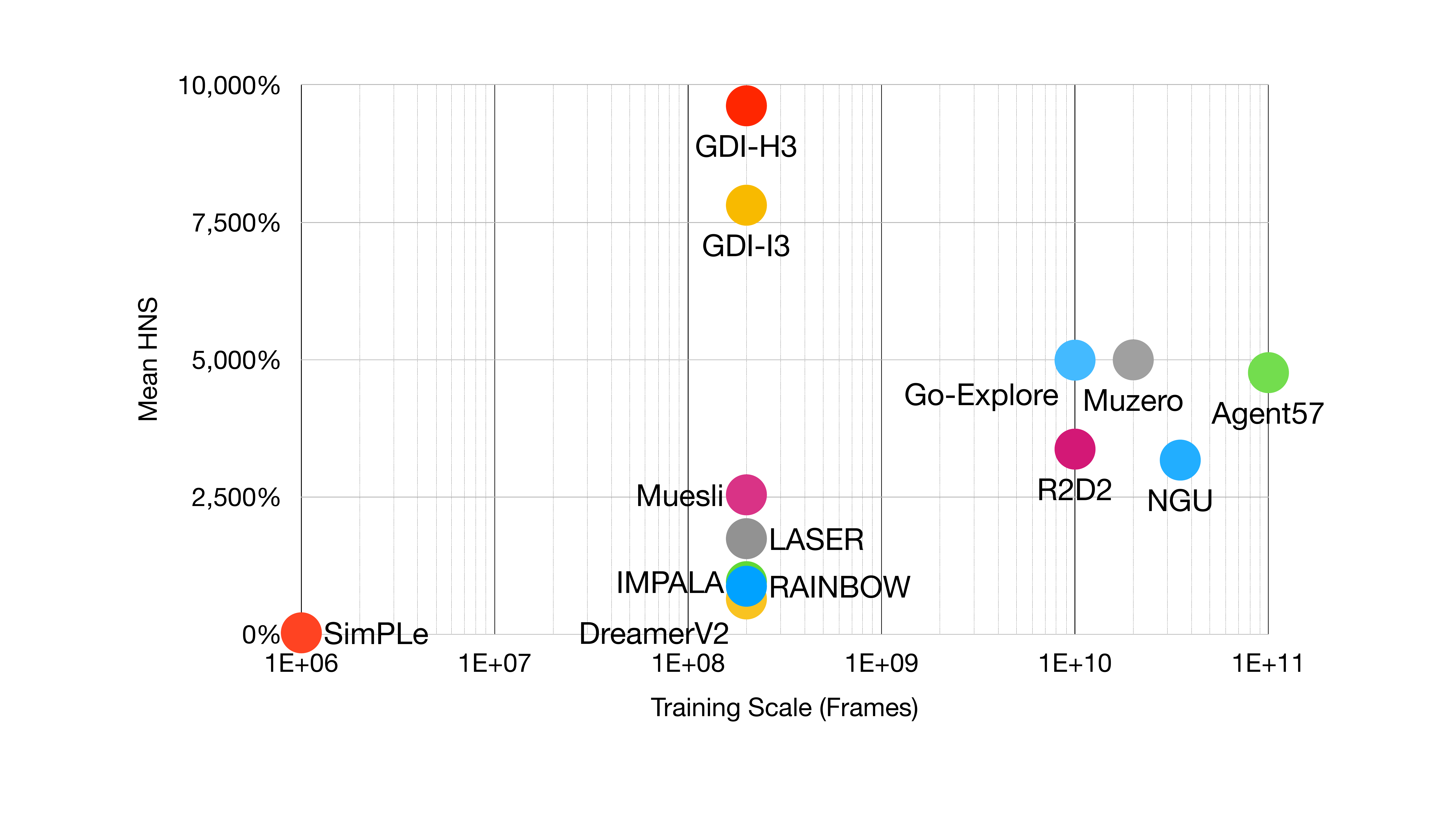}
	}
	\subfigure{
		\includegraphics[width=0.45\textwidth]{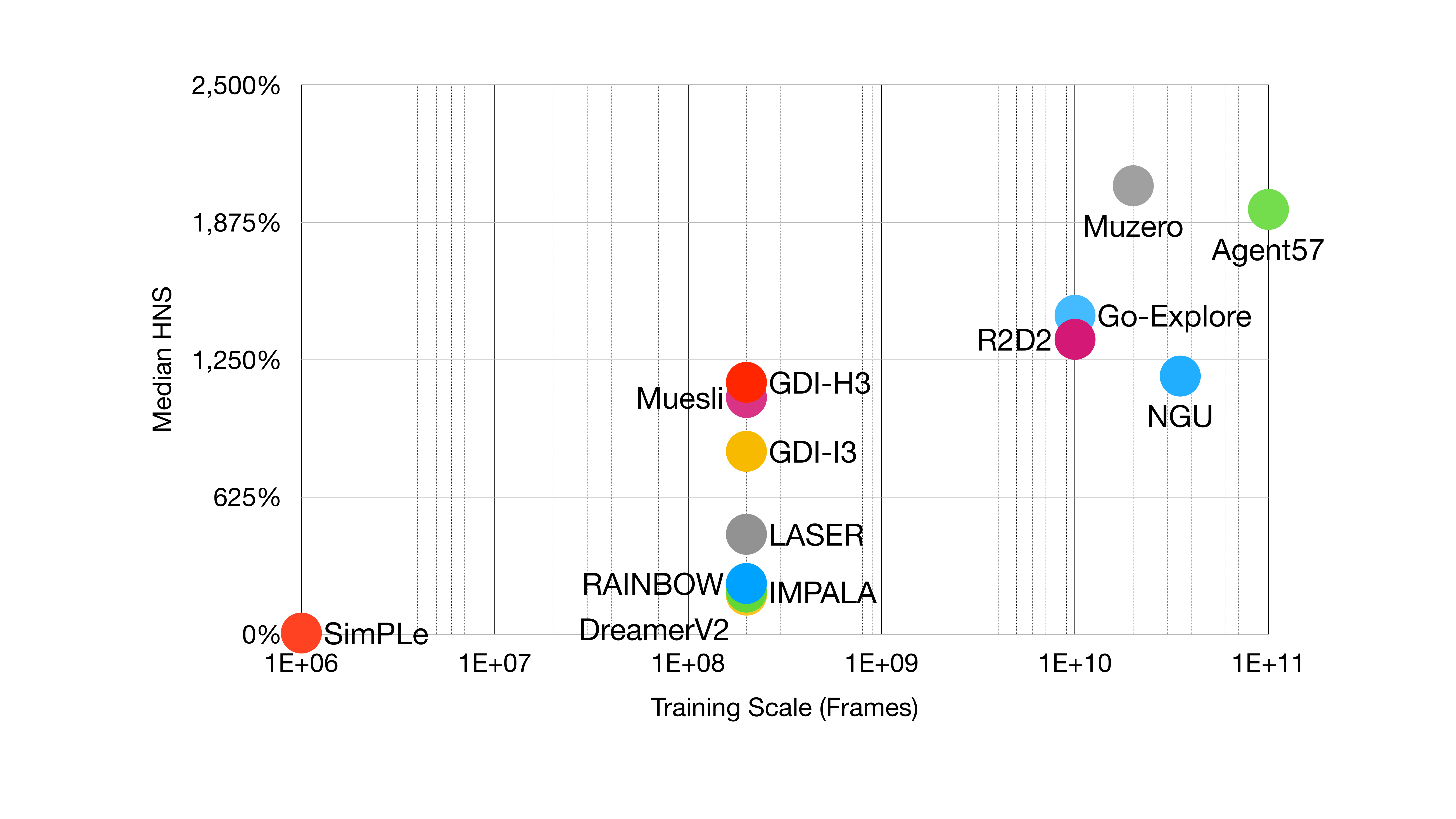}
	}
	\centering
	\caption{SOTA algorithms of Atari 57 games on mean and median HNS (\%) and corresponding training scale.}
	\label{fig: scale mean HNS time}
\end{figure*}

\subsection{RL Benchmarks on HWRNS}
\label{app: RL Benchmarks on HWRNS}

We report several milestones of Atari benchmarks on Human World Records Normalized Score (HWRNS), including DQN \citep{dqn}, RAINBOW \citep{rainbow}, IMPALA \citep{impala}, LASER \citep{laser}, R2D2 \citep{r2d2}, NGU \citep{ngu}, Agent57 \citep{agent57}, Go-Explore \citep{goexplore}, MuZero \citep{muzero}, DreamerV2 \citep{dreamerv2}, SimPLe \citep{modelbasedatari} and Musile \citep{muesli}. We summary mean HWRNS and median HWRNS of these algorithms marked with their game time (year), learning efficiency and training scale in Fig \ref{fig: year mean HWRNS time},  \ref{fig: efficiency mean HWRNS time} and \ref{fig: scale mean HWRNS time}.

\begin{figure*}[!t]
    \centering
	\subfigure{
		\includegraphics[width=0.45\textwidth]{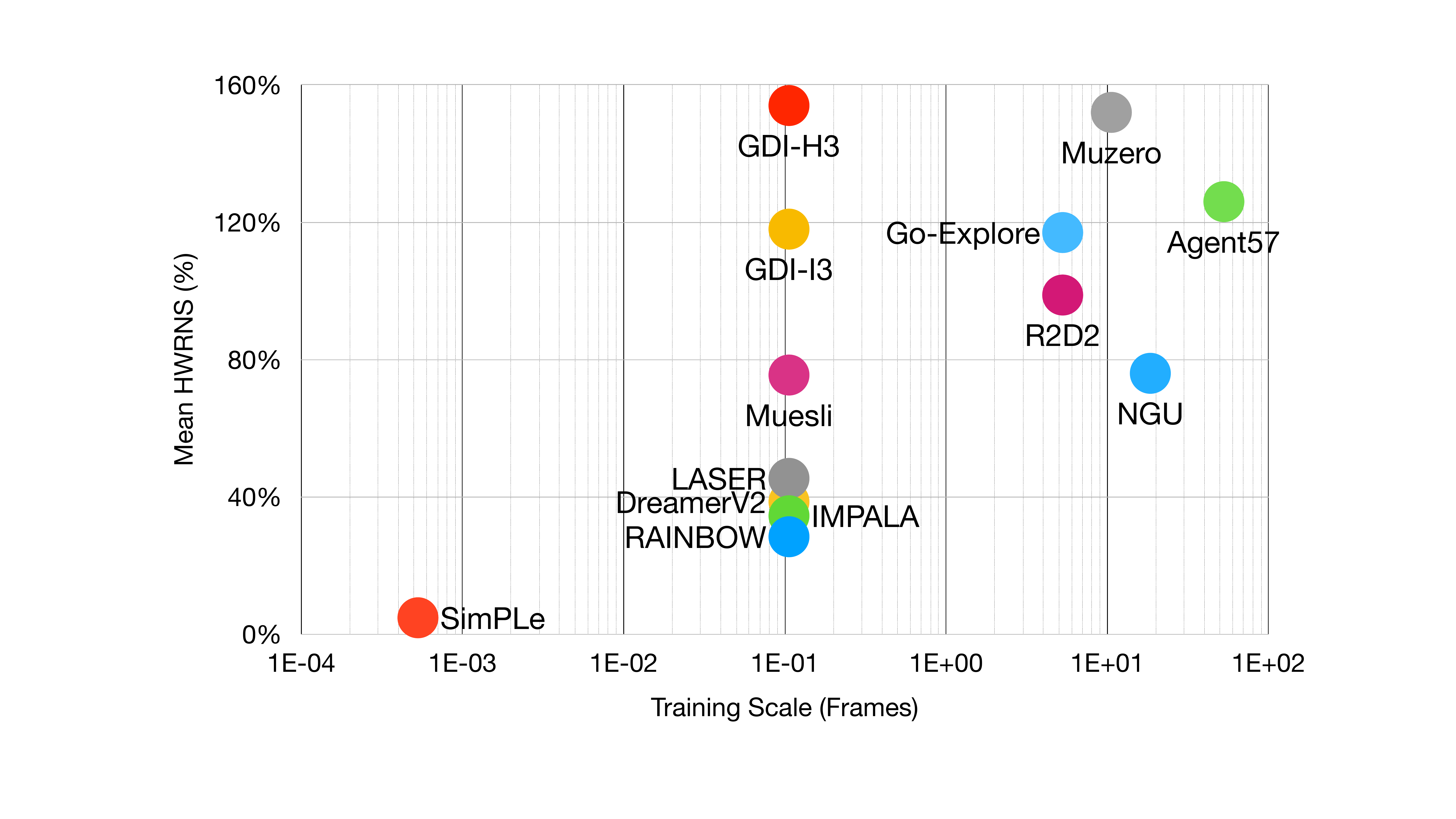}
	}
	\subfigure{
		\includegraphics[width=0.45\textwidth]{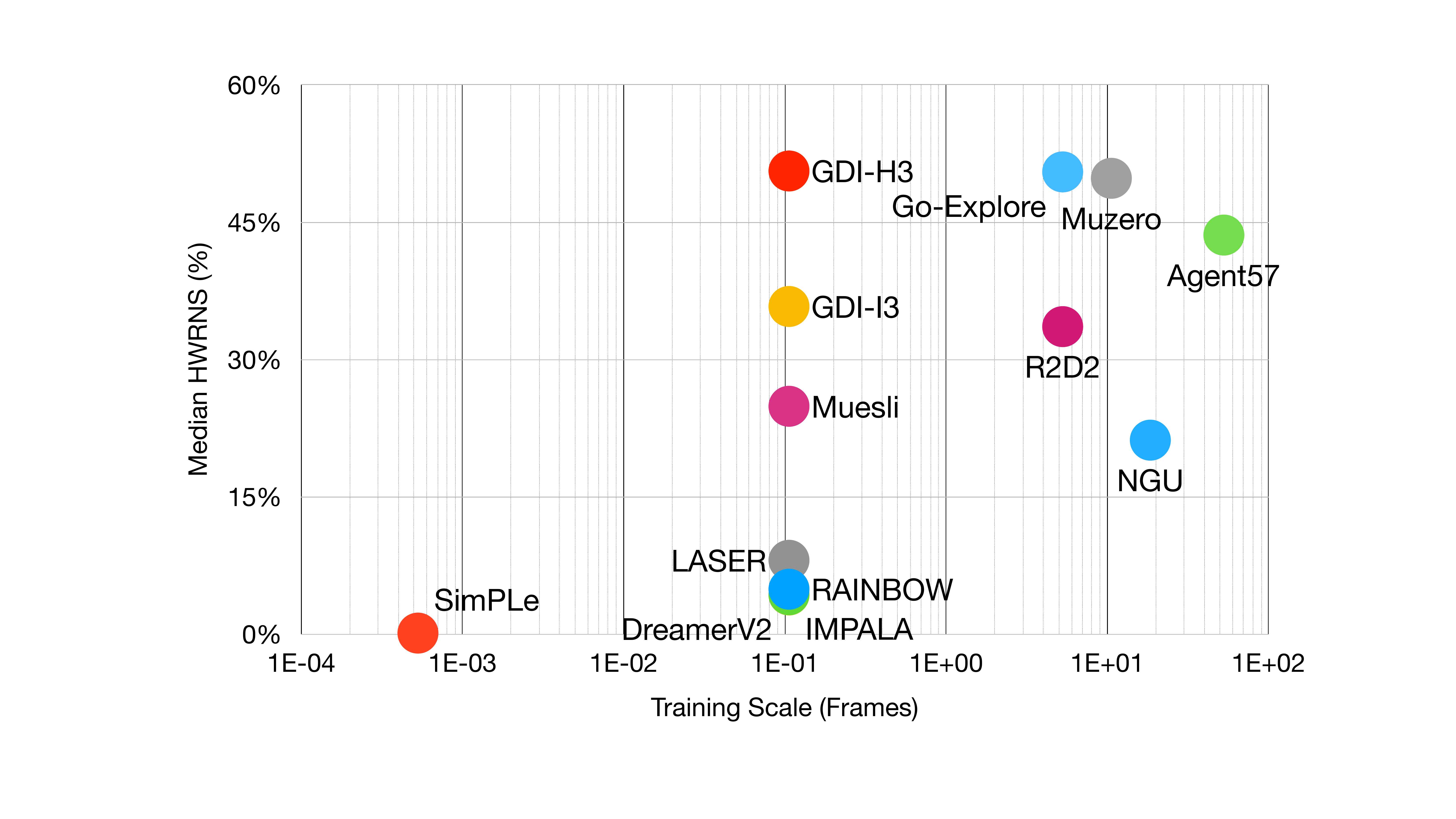}
	}
	\centering
	\caption{SOTA algorithms of Atari 57 games on mean and median HWRNS (\%) and corresponding game time (year).}
	\label{fig: year mean HWRNS time}
\end{figure*}

\begin{figure*}[!t]
    \centering
	\subfigure{
		\includegraphics[width=0.45\textwidth]{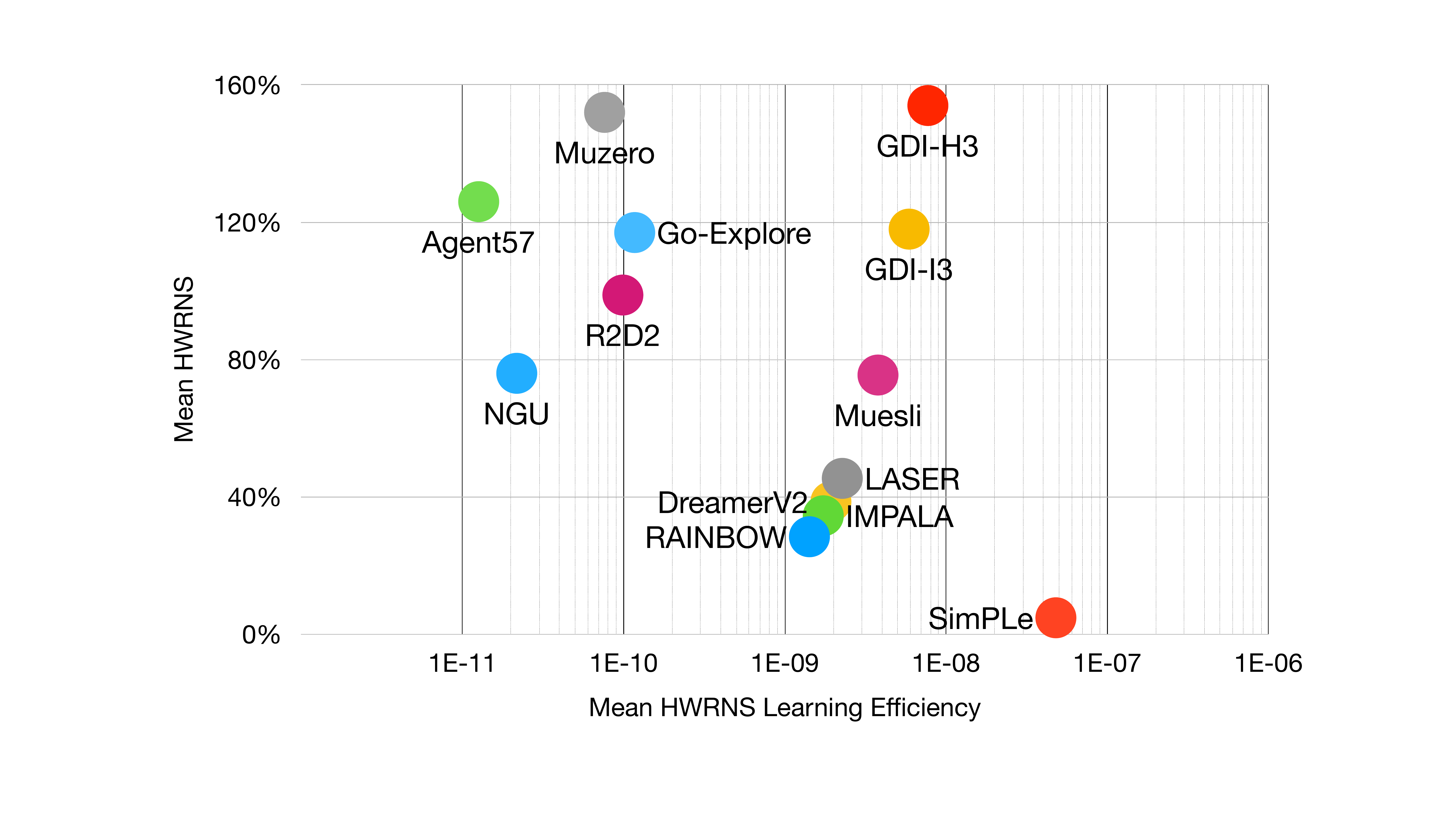}
	}
	\subfigure{
		\includegraphics[width=0.45\textwidth]{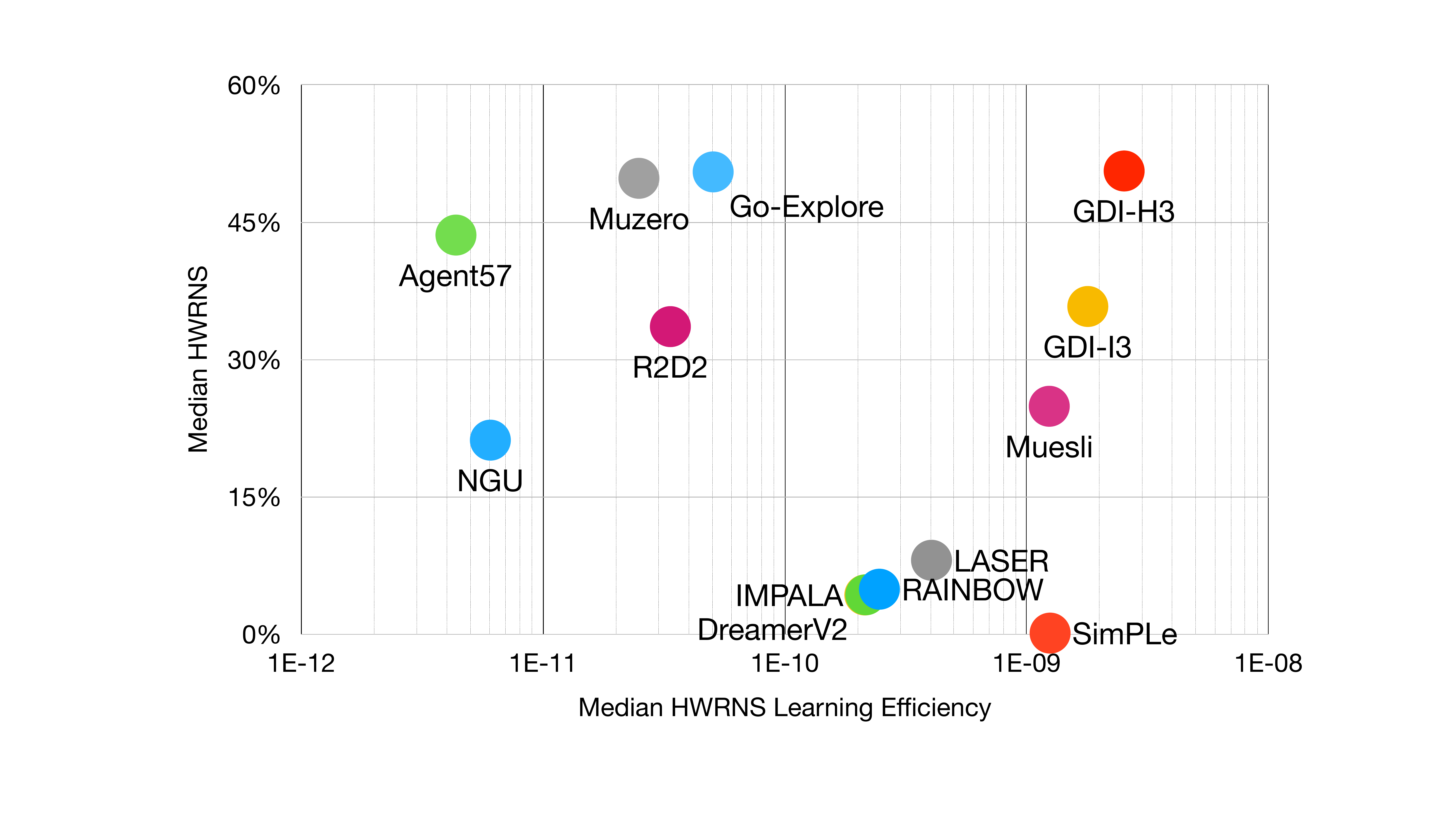}
	}
	\centering
	\caption{SOTA algorithms of Atari 57 games on mean and median HWRNS (\%) and corresponding learning efficiency calculated by $\frac{\text{MEAN HWRNS/MEDIAN HWRNS}}{\text{TRAINING FRAMES}}$.}
	\label{fig: efficiency mean HWRNS time}
\end{figure*}

\begin{figure*}[!t]
    \centering
	\subfigure{
		\includegraphics[width=0.45\textwidth]{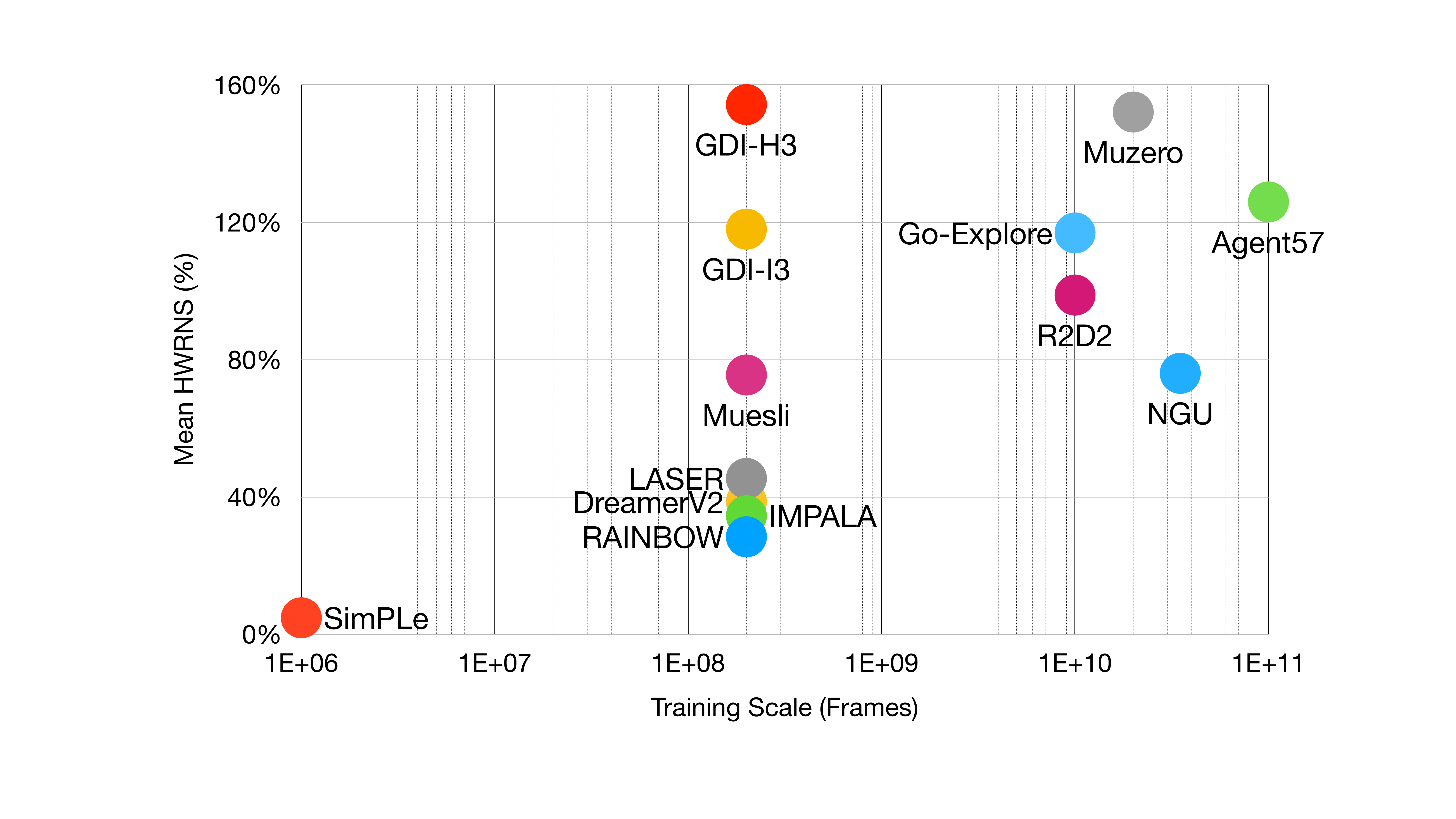}
	}
	\subfigure{
		\includegraphics[width=0.45\textwidth]{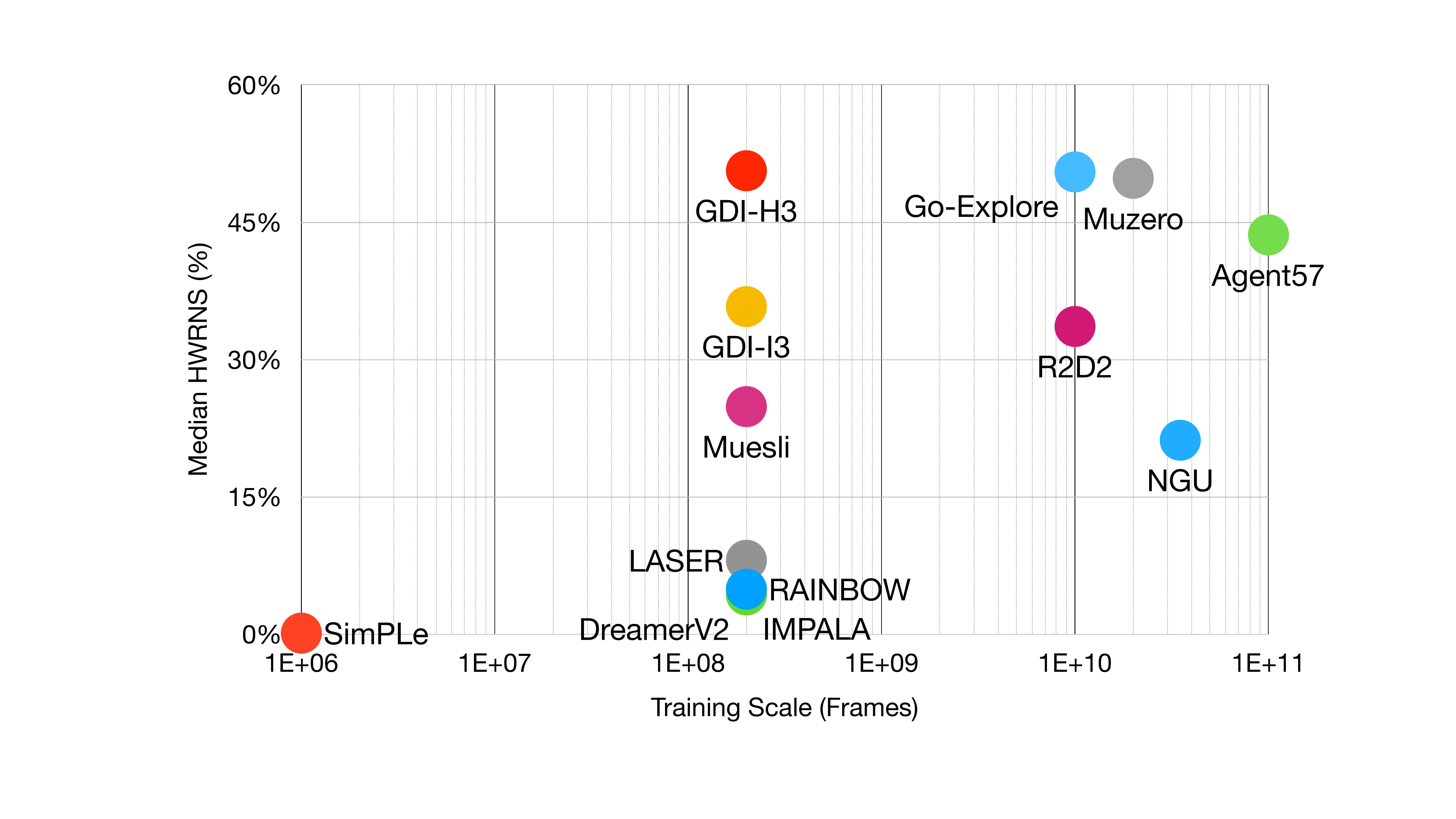}
	}
	\centering
	\caption{SOTA algorithms of Atari 57 games on mean and median HWRNS (\%) and corresponding training scale.}
	\label{fig: scale mean HWRNS time}
\end{figure*}

\subsection{RL Benchmarks on SABER}
\label{app: RL Benchmarks on SABER}

We report several milestones of Atari benchmarks on Standardized Atari BEnchmark for RL (SABER), including DQN \citep{dqn}, RAINBOW \citep{rainbow}, IMPALA \citep{impala}, LASER \citep{laser}, R2D2 \citep{r2d2}, NGU \citep{ngu}, Agent57 \citep{agent57}, Go-Explore \citep{goexplore}, MuZero \citep{muzero}, DreamerV2 \citep{dreamerv2}, SimPLe \citep{modelbasedatari} and Musile \citep{muesli}. We summary mean SABER and median SABER of these algorithms marked with their game time (year), learning efficiency and training scale in Fig \ref{fig: year mean SABER time},  \ref{fig: efficiency mean SABER time} and \ref{fig: scale mean SABER time}.

\begin{figure*}[!t]
    \centering
	\subfigure{
		\includegraphics[width=0.45\textwidth]{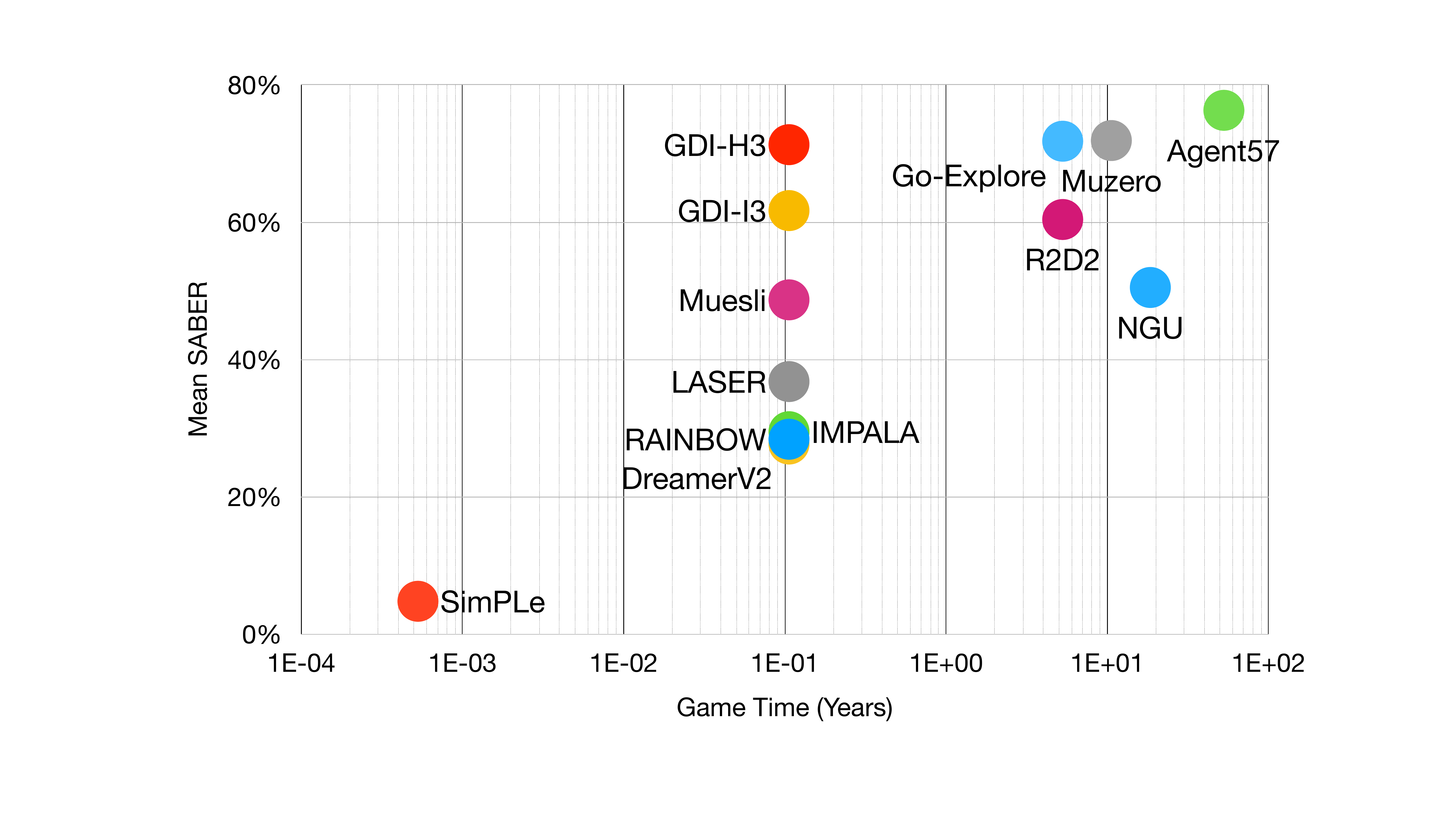}
	}
	\subfigure{
		\includegraphics[width=0.45\textwidth]{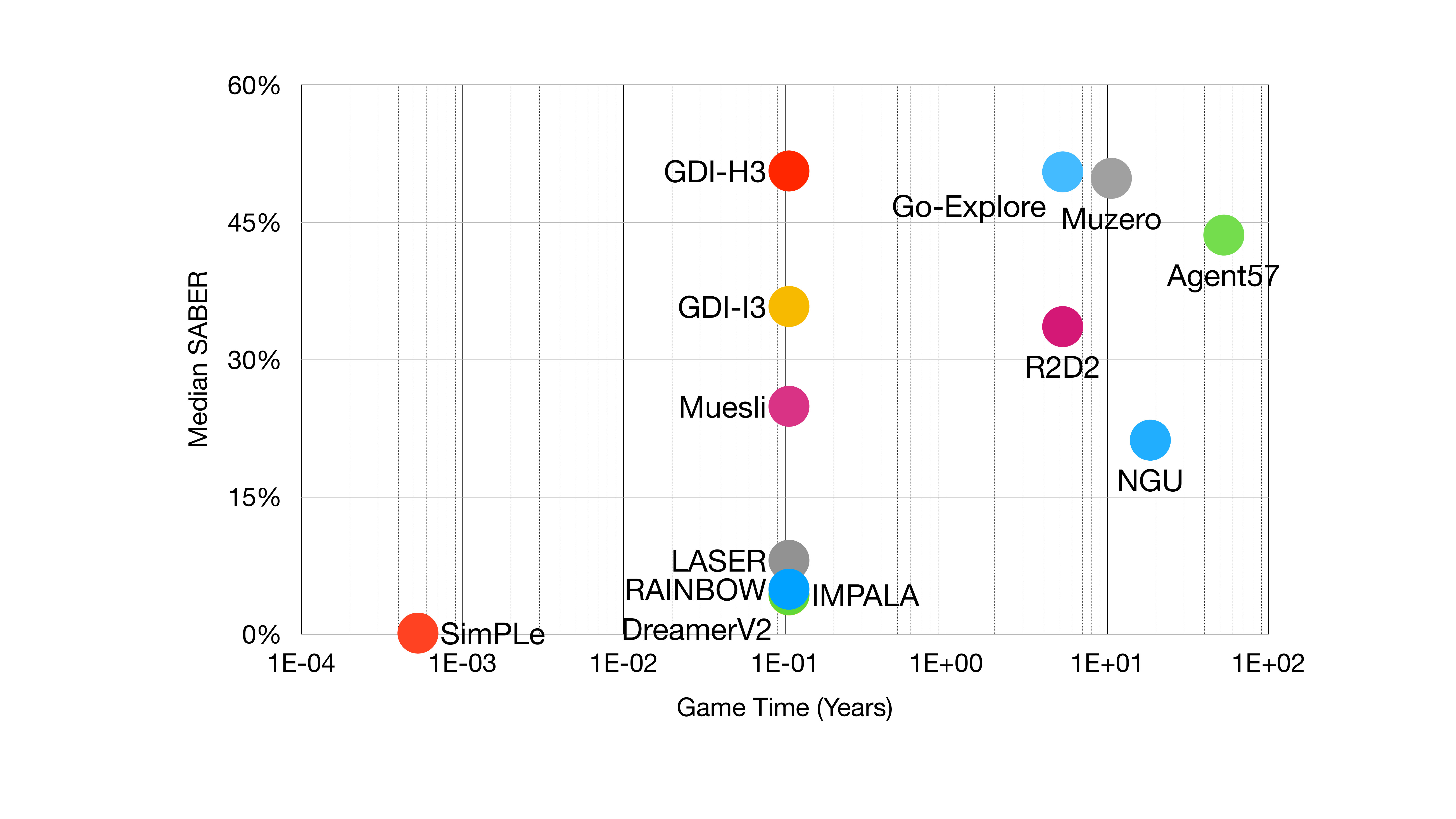}
	}
	\centering
	\caption{SOTA algorithms of Atari 57 games on mean and median SABER (\%) and corresponding game time (year).}
	\label{fig: year mean SABER time}
\end{figure*}

\begin{figure*}[!t]
    \centering
	\subfigure{
		\includegraphics[width=0.45\textwidth]{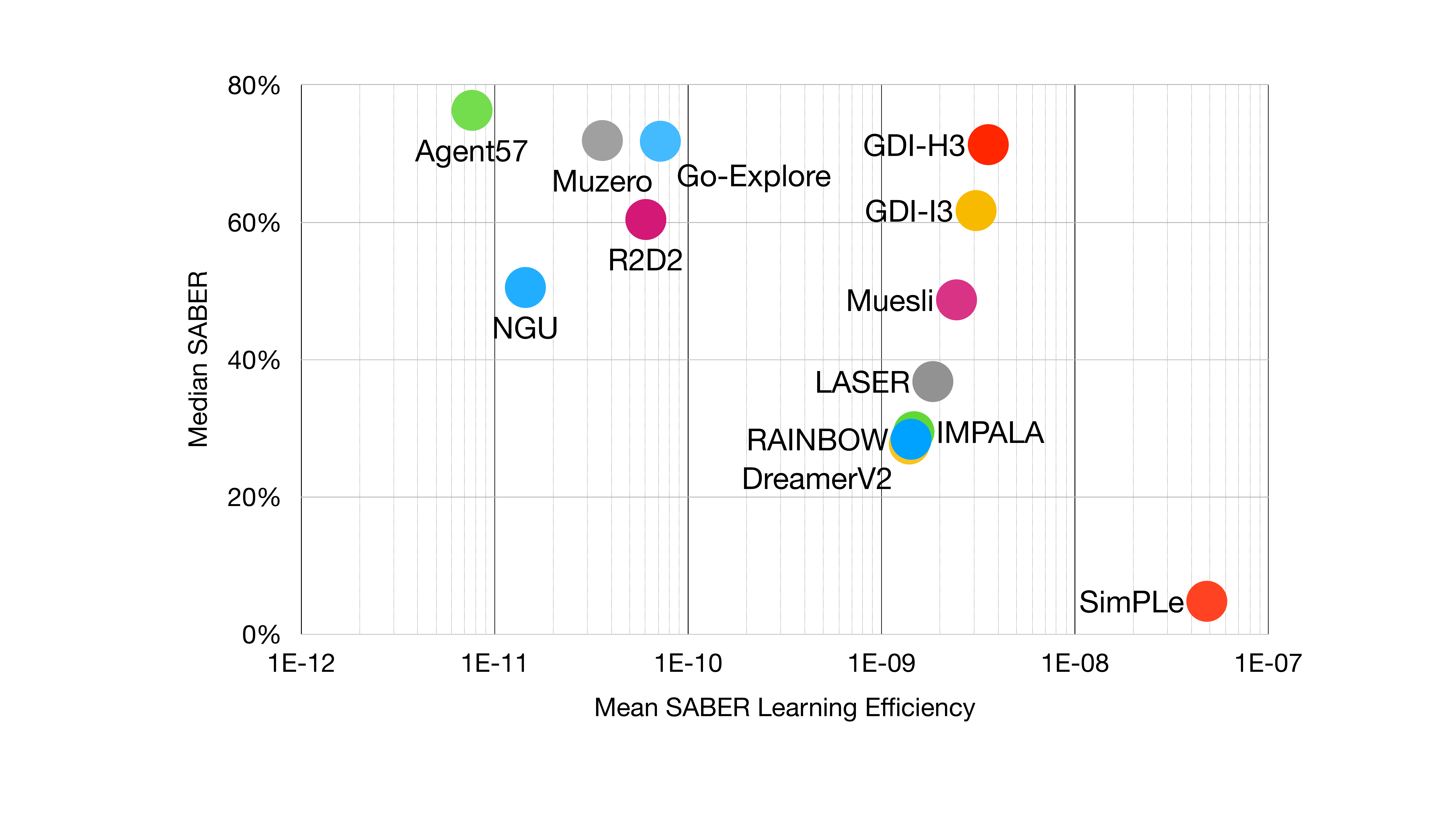}
	}
	\subfigure{
		\includegraphics[width=0.45\textwidth]{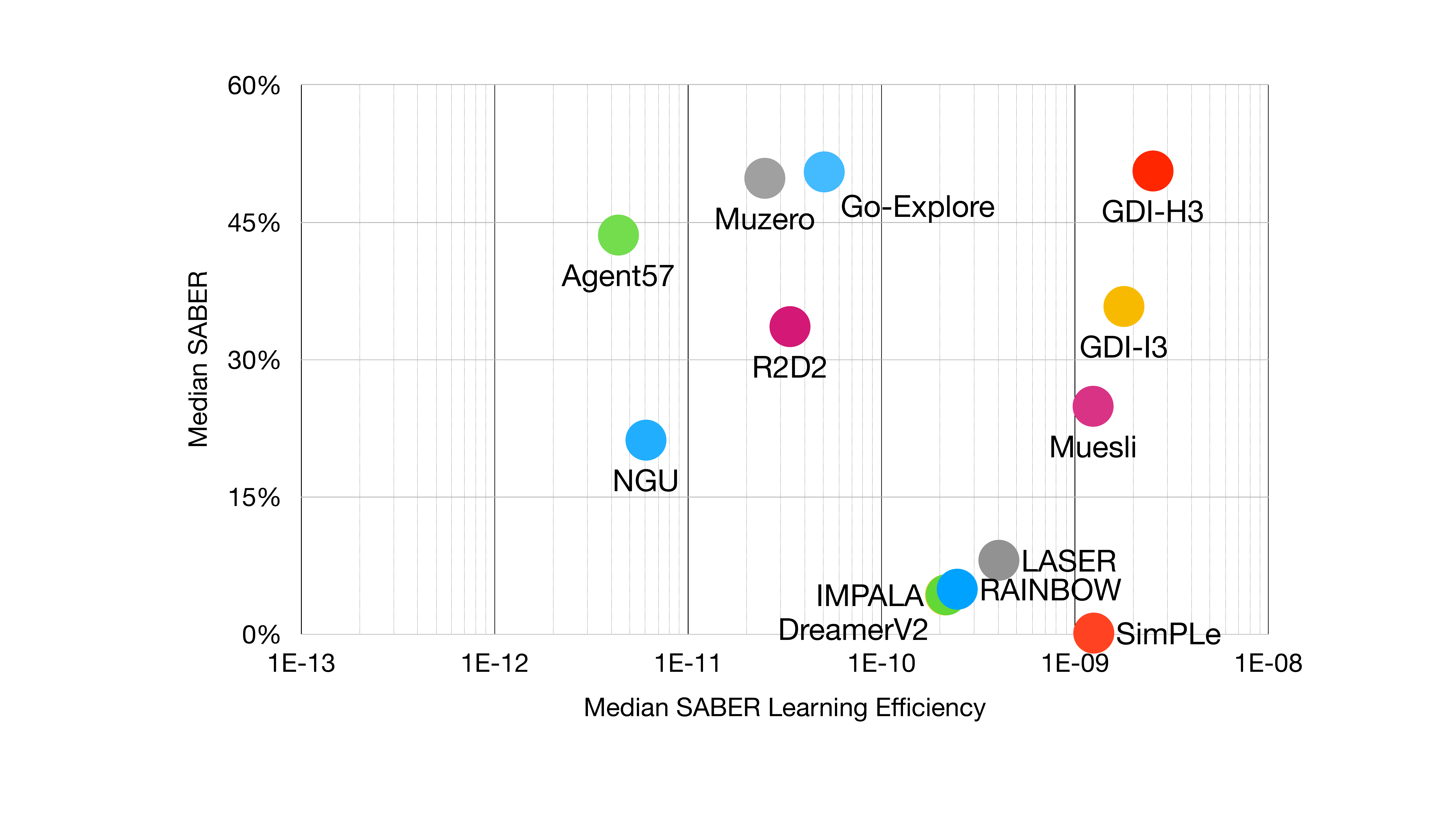}
	}
	\centering
	\caption{SOTA algorithms of Atari 57 games on mean and median SABER (\%) and corresponding learning efficiency calculated by $\frac{\text{MEAN SABER/MEDIAN SABER}}{\text{TRAINING FRAMES}}$.}
	\label{fig: efficiency mean SABER time}
\end{figure*}

\begin{figure*}[!t]
    \centering
	\subfigure{
		\includegraphics[width=0.45\textwidth]{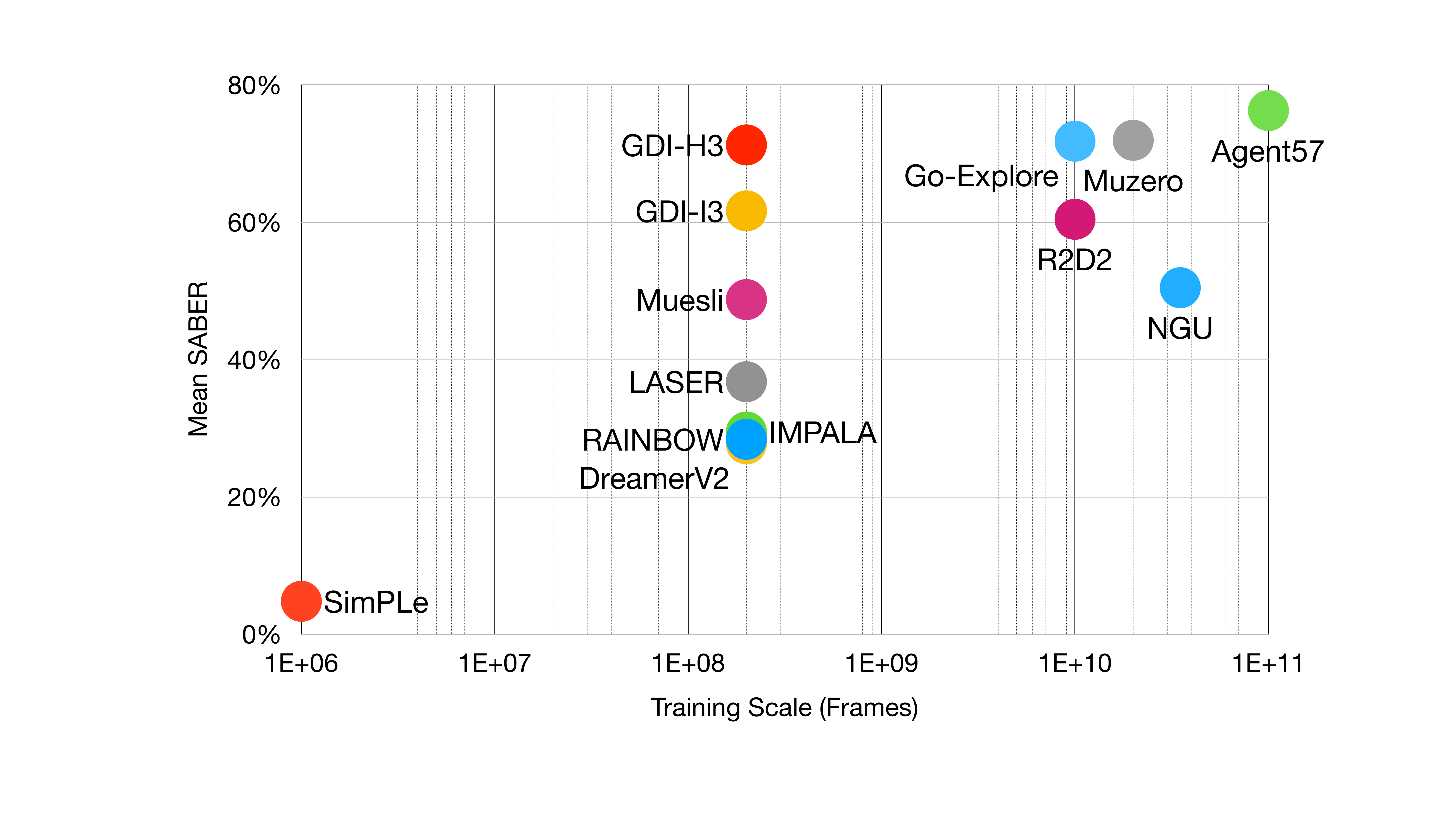}
	}
	\subfigure{
		\includegraphics[width=0.45\textwidth]{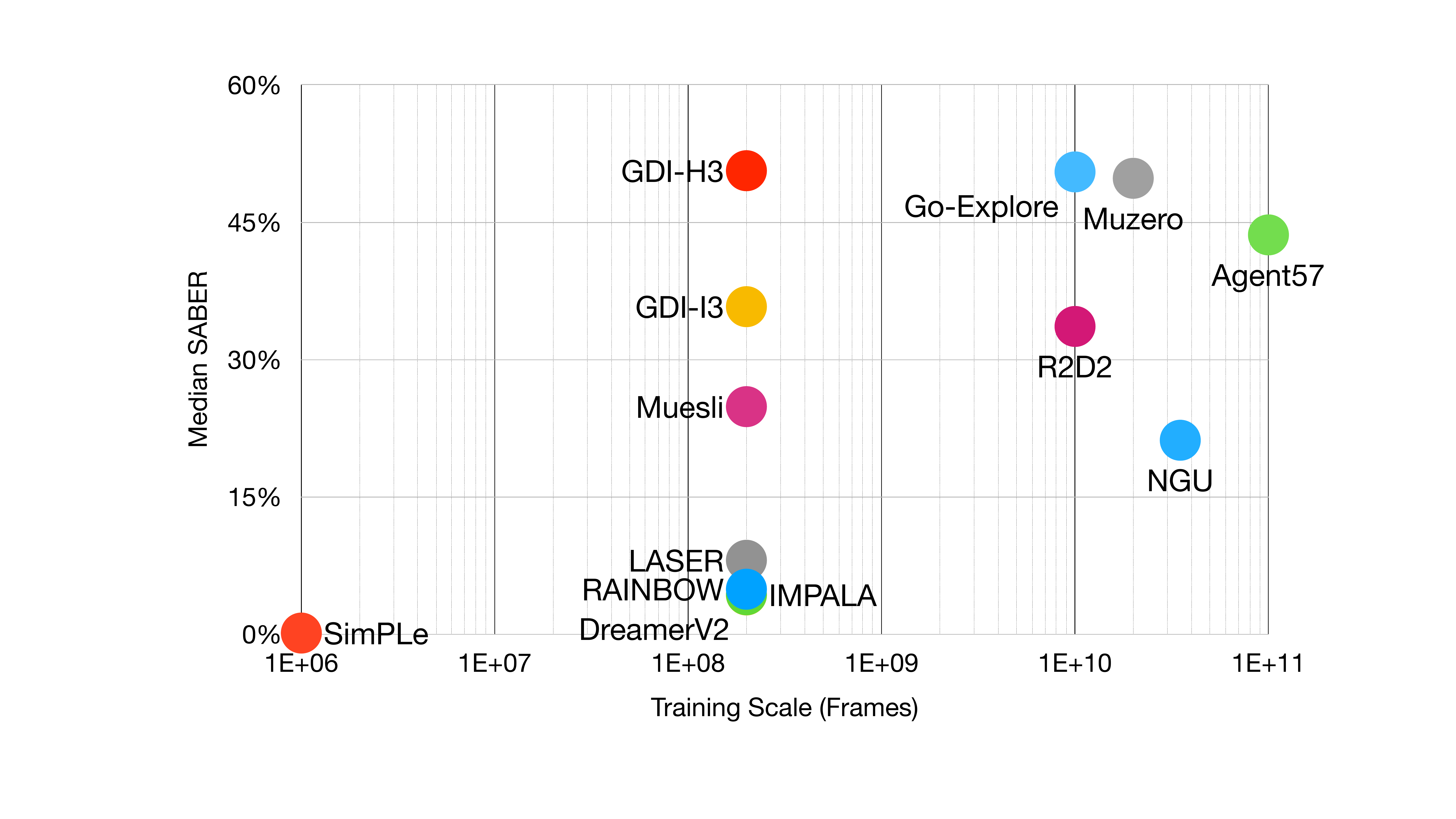}
	}
	\centering
	\caption{SOTA algorithms of Atari 57 games on mean and median SABER (\%) and corresponding training scale.}
	\label{fig: scale mean SABER time}
\end{figure*}

\subsection{RL Benchmarks on HWRB}
\label{app: RL Benchmarks on HWRB}

We report several milestones of Atari benchmarks on HWRB, including DQN \citep{dqn}, RAINBOW \citep{rainbow}, IMPALA \citep{impala}, LASER \citep{laser}, R2D2 \citep{r2d2}, NGU \citep{ngu}, Agent57 \citep{agent57}, Go-Explore \citep{goexplore}, MuZero \citep{muzero}, DreamerV2 \citep{dreamerv2}, SimPLe \citep{modelbasedatari} and Musile \citep{muesli}. We summary HWRB of these algorithms marked with their game time (year), learning efficiency and training scale in Fig \ref{fig: HWRB time}.

\begin{figure*}[!t]
    \centering
	\subfigure{
		\includegraphics[width=0.45\textwidth]{photo/tongjitu/game time year/HWRB.pdf}
	}
	\subfigure{
		\includegraphics[width=0.45\textwidth]{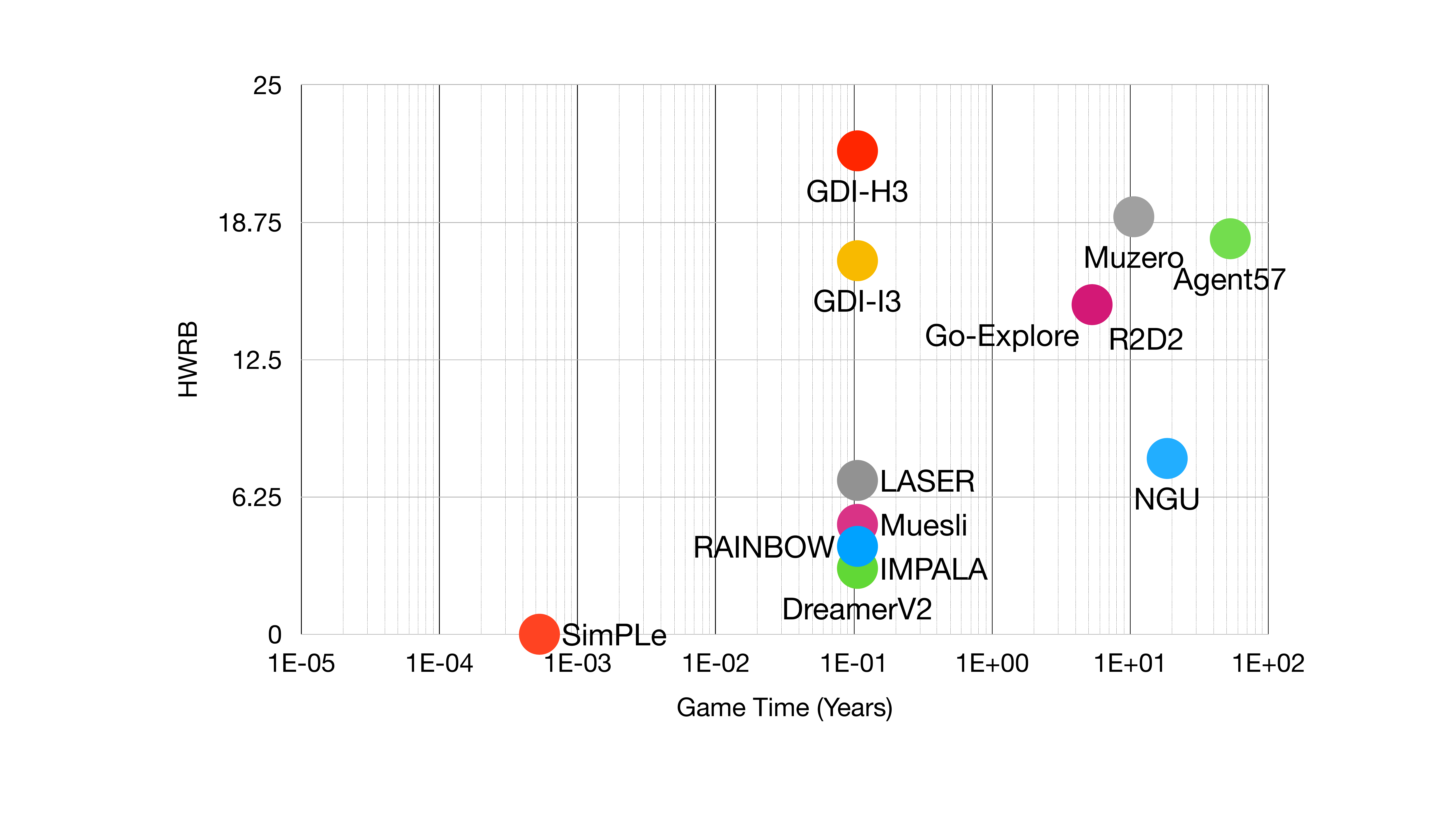}
	}
	\subfigure{
		\includegraphics[width=0.45\textwidth]{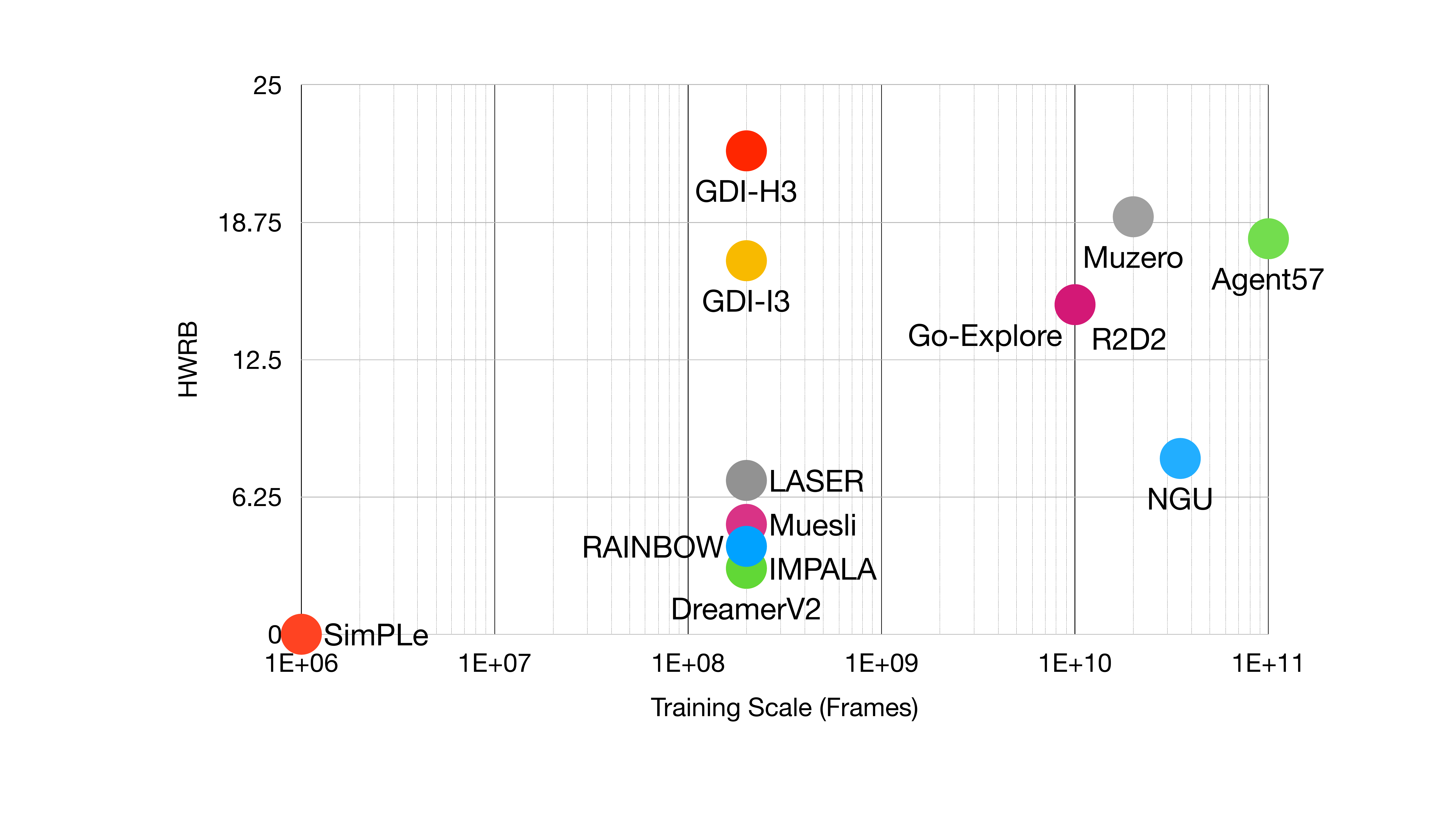}
	}
	\centering
	\caption{SOTA algorithms of Atari 57 games on HWRB.
	HWRB of SimPLe is 0, so it's not shown in the up-right figure.}
	\label{fig: HWRB time}
\end{figure*}

\clearpage

\section{Theoretical Proof}
\label{App: proof}

For a monotonic sequence of numbers which satisfies $a = x_0 < x_1 < \dots < x_n < b$,
we call it a split of interval $[a, b]$.

\begin{Lemma}[Discretized Upper Triangular Transport Inequality for Increasing Functions in $\mathbf{R}^1$]
Assume $\mu$ is a continuous probability measure supported on $[0, 1]$. 
Let $0 = x_0 < x_1 < \dots < x_n < 1$ to be any split of $[0, 1]$.
Define $\Tilde{\mu}(x_i) = \mu([x_i, x_{i+1}))$. 
Define 
$$\Tilde{\beta}(x_i) = \Tilde{\mu}(x_i) \exp(x_i) / Z,\  Z = \sum_{i} \Tilde{\mu}(x_i) \exp (x_i).$$ 
Then there exists a probability measure $\gamma: \{x_i\}_{i=0,\dots, n} \times \{x_i\}_{i=0,\dots, n} \rightarrow [0, 1]$, s.t. 
\begin{equation}
    \left\{
    \begin{aligned}
        &\sum_j \gamma (x_i, y_j) = \Tilde{\mu} (x_i),& &\ i = 0, \dots, n; \\
        &\sum_i \gamma (x_i, y_j) = \Tilde{\beta} (y_j),& &\ j = 0, \dots, n; \\
        &\gamma (x_i, y_j) = 0,& &\ i > j. \\
    \end{aligned}
    \right.
\label{eq:UTTC_R1}
\end{equation}
Then for any monotonic increasing function $f:\{x_i\}_{i=0,\dots, n} \rightarrow $ \textbf{R},
we have
$$
\textbf{E}_{\Tilde{\mu}} [f] \leq \textbf{E}_{\Tilde{\beta}} [f].
$$
\label{lemma:dct_inc_R1}
\end{Lemma}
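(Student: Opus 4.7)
The plan is to prove the lemma in three stages: first establish stochastic dominance of $\Tilde{\beta}$ over $\Tilde{\mu}$, then use dominance to construct the coupling $\gamma$ via a northwest-corner rule, and finally deduce the expectation inequality by a telescoping argument on the coupling.

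First I would verify the first-order stochastic dominance condition $\sum_{i \leq k} \Tilde{\beta}(x_i) \leq \sum_{i \leq k} \Tilde{\mu}(x_i)$ for every $k = 0, 1, \ldots, n$. The key structural fact is that the likelihood ratio $\Tilde{\beta}(x_i)/\Tilde{\mu}(x_i) = \exp(x_i)/Z$ is monotonically increasing in $i$, so the sequence of signed differences $\Tilde{\mu}(x_i) - \Tilde{\beta}(x_i) = \Tilde{\mu}(x_i)\bigl[1 - \exp(x_i)/Z\bigr]$ exhibits a single sign change, being non-negative for small $i$ and non-positive for large $i$. Since both $\Tilde{\mu}$ and $\Tilde{\beta}$ are probability measures on the same $n{+}1$ atoms, the full sum of differences is zero, and so all partial sums from the left are non-negative, which is exactly the dominance statement.

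Next I would construct $\gamma$ explicitly by the northwest-corner rule: iterating $i = 0, 1, \ldots, n$, I would distribute the row mass $\Tilde{\mu}(x_i)$ among the columns $j \geq i$, setting $\gamma(x_i, y_j)$ equal to the minimum of the remaining row mass and the remaining capacity $\Tilde{\beta}(y_j)$ minus what has already been placed in column $j$. The upper-triangular constraint $\gamma(x_i, y_j) = 0$ for $i > j$ is enforced by construction. To show the procedure never stalls, at the start of step $i$ the unplaced row mass equals $\sum_{i' \geq i} \Tilde{\mu}(x_{i'})$ and the unused column capacity restricted to $j \geq i$ is at least $\sum_{j \geq i} \Tilde{\beta}(y_j) - \bigl[\sum_{i' < i} \Tilde{\mu}(x_{i'}) - \sum_{j < i} \Tilde{\beta}(y_j)\bigr]$; the dominance inequality from Step 1 guarantees this is at least the remaining row mass, so a feasible assignment always exists and the prescribed row and column marginals are met.

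Finally, with $\gamma$ in hand, for any monotonically increasing $f$ the inequality follows immediately:
\begin{equation*}
\textbf{E}_{\Tilde{\beta}}[f] - \textbf{E}_{\Tilde{\mu}}[f] = \sum_{i,j} \gamma(x_i, y_j)\bigl[f(y_j) - f(x_i)\bigr] = \sum_{i \leq j} \gamma(x_i, y_j)\bigl[f(x_j) - f(x_i)\bigr] \geq 0,
\end{equation*}
since $y_j = x_j \geq x_i$ whenever $\gamma(x_i, y_j) > 0$. The main obstacle is Step 1, the dominance check; the single-crossing argument is the crux, and the rest of the proof is a routine combinatorial construction plus a telescoping sum. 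An alternative to the explicit construction in Step 2 would be to invoke Strassen's theorem (dominance implies a monotone coupling), but the northwest-corner construction is elementary and self-contained.
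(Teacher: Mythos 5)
Your proposal is correct in substance but follows a genuinely different route from the paper. The paper builds the upper-triangular coupling by induction on the partition: it defines coarsened measures $\Tilde{\mu}_m$ that lump all mass to the right of $x_m$ onto $x_m$, starts from the trivial coupling at $m=0$, and at each step rescales the existing coupling by $Z_m/Z_{m+1}$ and redistributes the freed mass onto the cells $(x_i, y_{m+1})$ and $(x_{m+1}, y_{m+1})$, with the two key inequalities being $Z_{m+1} > Z_m$ and the likelihood-ratio inequality $\Tilde{\beta}_{m+1}(x_{m+1})/\Tilde{\beta}_{m+1}(x_m) \geq \Tilde{\mu}_{m+1}(x_{m+1})/\Tilde{\mu}_{m+1}(x_m)$. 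You instead factor the argument through first-order stochastic dominance: the likelihood ratio $\exp(x_i)/Z$ is increasing, the single-crossing plus zero-total-sum argument gives $\sum_{i \leq k}\Tilde{\beta}(x_i) \leq \sum_{i \leq k}\Tilde{\mu}(x_i)$ for all $k$, and dominance on a totally ordered set yields a monotone (here, upper-triangular) coupling. Your route is shorter, isolates the one structural fact actually being used (monotone likelihood ratio $\Rightarrow$ stochastic dominance $\Rightarrow$ monotone coupling, i.e.\ the discrete Strassen theorem), and makes the final expectation inequality immediate; the paper's induction is more self-contained and is the template it then reuses almost verbatim for the co-monotonic and $\mathbf{R}^p$ generalizations, where your ordering-based shortcut would need the reordering by $g$ that the paper performs explicitly.

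One local imprecision in your Step 2: the capacity bound you write for columns $j \geq i$ has the inequality pointing the wrong way --- the quantity $\sum_{j \geq i}\Tilde{\beta}(y_j) - \bigl[\sum_{i' < i}\Tilde{\mu}(x_{i'}) - \sum_{j < i}\Tilde{\beta}(y_j)\bigr]$ is an \emph{upper} bound on the unused capacity and it collapses identically to the remaining row mass $\sum_{i' \geq i}\Tilde{\mu}(x_{i'})$, so as stated it proves nothing. The correct use of dominance in the northwest-corner construction is the staircase invariant: when the cursor reaches row $i$ at column $j$ with $j < i$, saturation of columns $0,\dots,j$ combined with $\sum_{j' \leq i-1}\Tilde{\beta} \leq \sum_{i' \leq i-1}\Tilde{\mu}$ forces column $j$ to already be exhausted, so the cursor advances rightward until $j \geq i$ and no positive mass is ever placed strictly below the diagonal. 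This is a repairable bookkeeping slip rather than a flaw in the strategy, and your fallback to Strassen's theorem would also close it.
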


\begin{proof}[Proof of Lemma \ref{lemma:dct_inc_R1}]

For any couple of measures $(\mu, \beta)$, we say the couple satisfies Upper Triangular Transport Condition (UTTC), if there exists $\gamma$ s.t. \eqref{eq:UTTC_R1} holds.

Given $0 = x_0 < x_1 < \dots < x_n < 1$, we prove the existence of $\gamma$ by induction.

Define 
\begin{equation*}
    \Tilde{\mu}_m(x_i) = 
\left\{
\begin{aligned}
    &\mu ([x_i, x_{i+1})),& &i < m, \\
    &\mu ([x_i, 1)),& &i = m, \\
    & 0,& &i > m.
\end{aligned}
\right.
\end{equation*}

Define
$$
\Tilde{\beta}_m (x_i) = \Tilde{\mu}_m (x_i) \exp(x_i) / Z_m,
\  Z_m = \sum_{i} \Tilde{\mu}_m (x_i) \exp (x_i).$$ 

Noting if we prove that $(\Tilde{\mu}_m, \Tilde{\beta}_m)$ satisfies UTTC for $m=n$, it's equivalent to prove the existence of $\gamma$ in \eqref{eq:UTTC_R1}.

To clarify the proof, we use $x_i$ to represent the point for $\Tilde{\mu}$-axis in coupling and $y_j$ to represent the point for $\Tilde{\beta}$-axis, but they are actually identical, i.e. $x_i = y_j$ when $i = j$. 

When $m = 0$, it's obvious that $(\Tilde{\mu}_0, \Tilde{\beta}_0)$ satisfies UTTC, as
$$
\gamma_0 (x_i, y_j) = \left\{
\begin{aligned}
    &1, \ i = 0, j = 0, \\
    &0, \ else.
\end{aligned}
\right.
$$

Assume UTTC holds for $m$, i.e. there exists $\gamma_m$ s.t. $(\Tilde{\mu}_m, \Tilde{\beta}_m)$ satisfies UTTC, we want to prove it also holds for $m+1$.

By definition of $\Tilde{\mu}_m$, we have
\begin{equation*}
    \left\{
    \begin{aligned}
        &\Tilde{\mu}_m (x_i) = \Tilde{\mu}_{m+1} (x_i),& &i < m, \\
        &\Tilde{\mu}_m (x_i) = \Tilde{\mu}_{m+1} (x_i) + \Tilde{\mu}_{m+1} (x_{i+1}),& &i=m, \\
        &\Tilde{\mu}_m (x_{m+1}) = \Tilde{\mu}_m (x_i) = \Tilde{\mu}_{m+1} (x_i) = 0,& &i>m+1. \\
    \end{aligned}
    \right.
\end{equation*}

By definition of $\Tilde{\beta}_m$, we have
\begin{equation*}
    \left\{
    \begin{aligned}
        &\Tilde{\beta}_m (x_i) = \Tilde{\beta}_{m+1} (x_i) \cdot \frac{Z_{m+1}}{Z_m},& &i < m, \\
        &\Tilde{\beta}_m (x_i) = \left( \Tilde{\beta}_{m+1} (x_i)  + \Tilde{\beta}_{m+1} (x_{i+1}) \exp(x_i - x_{i+1}) \right) \cdot \frac{Z_{m+1}}{Z_m},& &i=m, \\
        &\Tilde{\beta}_m (x_{m+1}) = \Tilde{\beta}_m (x_i) = \Tilde{\beta}_{m+1} (x_i) = 0,& &i>m+1. \\
    \end{aligned}
    \right.
\end{equation*}

Multiplying $\gamma_m$ by $\frac{Z_m}{Z_{m+1}}$, we get the following UTTC
\begin{equation*}
    \left\{
    \begin{aligned}
        &\sum_j \frac{Z_m}{Z_{m+1}} \gamma_m (x_i, y_j) = \frac{Z_m}{Z_{m+1}} \Tilde{\mu}_{m+1} (x_i),& &\ i < m; \\
        &\sum_j \frac{Z_m}{Z_{m+1}} \gamma_m (x_i, y_j) = \frac{Z_m}{Z_{m+1}} (\Tilde{\mu}_{m+1} (x_i) + \Tilde{\mu}_{m+1} (x_{i+1})),& &\ i = m; \\
        &\sum_j \frac{Z_m}{Z_{m+1}} \gamma_m (x_i, y_j) = 0,& &\ i = m+1; \\
        &\sum_j \frac{Z_m}{Z_{m+1}} \gamma_m (x_i, y_j) = \Tilde{\mu}_{m+1} (x_i) = 0,& &\ i > m+1; \\
        &\sum_i \frac{Z_m}{Z_{m+1}} \gamma_m (x_i, y_j) = \Tilde{\beta}_{m+1} (y_j),& &\ j < m; \\
        &\sum_i \frac{Z_m}{Z_{m+1}} \gamma_m (x_i, y_j) = \Tilde{\beta}_{m+1} (y_i)  + \Tilde{\beta}_{m+1} (y_{j+1}) \exp(y_j - y_{j+1}),& &\ j = m; \\
        &\sum_i \frac{Z_m}{Z_{m+1}} \gamma_m (x_i, y_j) = 0,& &\ j = m+1; \\
        &\sum_i \frac{Z_m}{Z_{m+1}} \gamma_m (x_i, y_j) = \Tilde{\beta}_{m+1} (y_j) = 0,& &\ j > m+1; \\
        &\frac{Z_m}{Z_{m+1}} \gamma_m (x_i, y_j) = 0,& &\ i > j. \\
    \end{aligned}
    \right.
\end{equation*}

By definition of $Z_m$,
\begin{equation}
    Z_{m+1} - Z_m = \Tilde{\mu}_{m+1} (x_{m+1}) (\exp(x_{m+1}) - \exp(x_m)) > 0,
\label{eq:Zm_inc}
\end{equation}
so we have $\frac{Z_m}{Z_{m+1}} \Tilde{\mu}_{m+1} (x_i) < \Tilde{\mu}_{m+1} (x_i)$.

Noticing that $\Tilde{\beta}_{m+1} (y_{i+1}) \exp(y_i - y_{i+1}) <  \Tilde{\beta}_{m+1} (y_{i+1})$ and $\frac{Z_m}{Z_{m+1}} \Tilde{\mu}_{m+1} (x_i) < \Tilde{\mu}_{m+1} (x_i)$, we decompose the measure of $\frac{Z_m}{Z_{m+1}} \gamma_m$ at $(x_i, y_m)$ to $(x_i, y_m), (x_i, y_{m+1})$ for $i = 0, \dots, m-1$, and complement a positive measure at $(x_i, y_{m+1})$ to make up the difference between $\frac{Z_m}{Z_{m+1}} \Tilde{\mu}_{m+1} (x_i)$ and $\Tilde{\mu}_{m+1} (x_i)$.
For $i=m$, we decompose the measure at $(x_m, y_m)$ to $(x_m, y_m), (x_m, y_{m+1}), (x_{m+1}, y_{m+1})$ and also complement a proper positive measure.

Now we define $\gamma_{m+1}$ by
\begin{equation*}
    \left\{
    \begin{aligned}
        &\gamma_{m+1} (x_i, y_j) = \frac{Z_m}{Z_{m+1}} \gamma_m (x_i, y_j),\qquad\qquad\qquad\qquad\quad i < m\ and\  j < m, \\
        &\gamma_{m+1} (x_i, y_j) = \left( \frac{Z_m}{Z_{m+1}} \gamma_m (x_i, y_j) + \frac{Z_{m+1} - Z_m}{Z_{m+1}} \Tilde{\mu}_{m+1} (x_i) \right) \\
        &\ \quad\qquad\qquad\quad \cdot \frac{\Tilde{\beta}_{m+1} (y_{j})}{\Tilde{\beta}_{m+1} (y_j)  + \Tilde{\beta}_{m+1} (y_{j+1})},  \qquad\qquad\quad i<m\ and\ j=m, \\
        &\gamma_{m+1} (x_i, y_j) = \left( \frac{Z_m}{Z_{m+1}} \gamma_m (x_i, y_j) + \frac{Z_{m+1} - Z_m}{Z_{m+1}} \Tilde{\mu}_{m+1} (x_i) \right) \\
        &\ \quad\qquad\qquad\quad \cdot \frac{\Tilde{\beta}_{m+1} (y_{j+1})}{\Tilde{\beta}_{m+1} (y_j)  + \Tilde{\beta}_{m+1} (y_{j+1})},\quad\qquad\  i<m\ and\ j=m+1, \\
        &\gamma_{m+1} (x_i, y_j) = 0,\quad\qquad\qquad\qquad\qquad i>j\ or\ i>m+1\ or\ j>m+1, \\
        &\gamma_{m+1} (x_m, y_m) = u, \\
        &\gamma_{m+1} (x_{m}, y_{m+1}) = v, \\
        &\gamma_{m+1} (x_{m+1}, y_{m+1}) = w, \\
    \end{aligned}
    \right.
\end{equation*}
where we assume $u, v, w$ to be the solution of the following equations
\begin{equation}
    \left\{
    \begin{aligned}
        &u + v + w = \Tilde{\mu}_{m+1} (x_m) + \Tilde{\mu}_{m+1} (x_{m+1}), \\
        &\frac{w}{u+v} = \frac{\Tilde{\mu}_{m+1} (x_{m+1})}{\Tilde{\mu}_{m+1} (x_m)}, \\
        &\frac{v+w}{u} = \frac{\Tilde{\beta}_{m+1} (x_{m+1})}{\Tilde{\beta}_{m+1} (x_m)}, \\
        &u, v, w \geq 0. \\
    \end{aligned}
    \right.
\label{eq:uvw_R1}
\end{equation}

It's obvious that 
\begin{equation*}
    \left\{
    \begin{aligned}
        &\sum_j \gamma_{m+1} (x_i, y_j) = \Tilde{\mu}_{m+1} (x_i) = 0,& &\ i>m+1, \\
        &\sum_i \gamma_{m+1} (x_i, y_j) = \Tilde{\beta}_{m+1} (y_j) = 0,& &\ j>m+1,\\
        &\gamma (x_i, y_j) = 0,& &\ i > j. \\
    \end{aligned}
    \right.
\end{equation*}

For $j < m$, since $\sum_i \frac{Z_m}{Z_{m+1}} \gamma_m (x_i, y_j) = \Tilde{\beta}_{m+1} (y_j)$, we have 
$$\sum_i \gamma_{m+1} (x_i, y_j) = \Tilde{\beta}_{m+1} (y_j), \ j < m.$$

For $i < m$, since $\sum_j \frac{Z_m}{Z_{m+1}} \gamma_m (x_i, y_j) = \frac{Z_m}{Z_{m+1}} \Tilde{\mu}_{m+1} (x_i) < \Tilde{\mu}_{m+1} (x_i)$, we add $\frac{Z_{m+1} - Z_m}{Z_{m+1}} \Tilde{\mu}_{m+1} (x_i) \frac{\Tilde{\beta}_{m+1} (y_{m})}{\Tilde{\beta}_{m+1} (y_m)  + \Tilde{\beta}_{m+1} (y_{m+1})}$, $\frac{Z_{m+1} - Z_m}{Z_{m+1}} \Tilde{\mu}_{m+1} (x_i) \frac{\Tilde{\beta}_{m+1} (y_{m+1})}{\Tilde{\beta}_{m+1} (y_m)  + \Tilde{\beta}_{m+1} (y_{m+1})}$ to $\gamma_{m+1} (x_i, y_{m})$, $\gamma_{m+1} (x_i, y_{m+1})$, respectively.
So we have 
$$\sum_j \gamma_{m+1} (x_i, y_j) = \Tilde{\mu}_{m+1} (x_i), \ i < m.$$

For $i = m, m+1$, since assumption \eqref{eq:uvw_R1} holds, we have $u + v + w = \Tilde{\mu}_{m+1} (x_m) + \Tilde{\mu}_{m+1} (x_{m+1}), \frac{w}{u+v} = \frac{\Tilde{\mu}_{m+1} (x_{m+1})}{\Tilde{\mu}_{m+1} (x_m)}$, it's obvious that $u + v = \Tilde{\mu}_{m+1} (x_m), w = \Tilde{\mu}_{m+1} (x_{m+1})$, which is 
$$\sum_j \gamma_{m+1} (x_i, y_j) = \Tilde{\mu}_{m+1} (x_i), \ i = m, m+1.$$

For $j = m, m+1$, we firstly have
\begin{equation*}
    \begin{aligned}
        \sum_{j=m, m+1} \sum_i \gamma_{m+1} (x_i, y_j)
        &= \sum_{j} \sum_i \gamma_{m+1} (x_i, y_j) - \sum_{j \neq m, m+1} \sum_i \gamma_{m+1} (x_i, y_j) \\
        &=  \sum_{i} \sum_j \gamma_{m+1} (x_i, y_j) - \sum_{j \neq m, m+1} \Tilde{\beta}_{m+1}(y_j) \\
        &= \sum_{i} \Tilde{\mu}_{m+1}(x_i) - \sum_{j \neq m, m+1} \Tilde{\beta}_{m+1}(y_j) \\
        &= 1 - (1 - \Tilde{\beta}_{m+1}(y_m) - \Tilde{\beta}_{m+1}(y_{m+1})) \\
        &= \Tilde{\beta}_{m+1}(y_m) + \Tilde{\beta}_{m+1}(y_{m+1}). \\
    \end{aligned}
\end{equation*}
By definition of $\gamma_{m+1}$, we know $\frac{\gamma_{m+1} (x_i, y_m)}{\gamma_{m} (x_i, y_m)} = \frac{\Tilde{\beta}_{m+1} (x_{m+1})}{\Tilde{\beta}_{m+1} (x_m)}$ for $i < m$.
By assumption \eqref{eq:uvw_R1}, we know $\frac{v+w}{u} = \frac{\Tilde{\beta}_{m+1} (x_{m+1})}{\Tilde{\beta}_{m+1} (x_m)}$.
Combining three equations above together, we have
$$\sum_i \gamma_{m+1} (x_i, y_j) = \Tilde{\beta}_{m+1} (y_j), \ j = m, m+1.$$

Now we only need to prove assumption \eqref{eq:uvw_R1} holds.
With linear algebra, we solve \eqref{eq:uvw_R1} and have
\begin{equation*}
    \left\{
    \begin{aligned}
        &u = w \frac{1 + \frac{\Tilde{\mu}_{m+1} (x_{m+1})}{\Tilde{\mu}_{m+1} (x_m)}}{\frac{\Tilde{\mu}_{m+1} (x_{m+1})}{\Tilde{\mu}_{m+1} (x_m)} \left(1 + \frac{\Tilde{\beta}_{m+1} (x_{m+1})}{\Tilde{\beta}_{m+1} (x_m)}\right)}, \\
        &v = w \frac{\frac{\Tilde{\beta}_{m+1} (x_{m+1})}{\Tilde{\beta}_{m+1} (x_m)} - \frac{\Tilde{\mu}_{m+1} (x_{m+1})}{\Tilde{\mu}_{m+1} (x_m)}}{\frac{\Tilde{\mu}_{m+1} (x_{m+1})}{\Tilde{\mu}_{m+1} (x_m)} \left(1 + \frac{\Tilde{\beta}_{m+1} (x_{m+1})}{\Tilde{\beta}_{m+1} (x_m)}\right)}, \\
        &w = \frac{\left(\Tilde{\mu}_{m+1} (x_m) + \Tilde{\mu}_{m+1} (x_{m+1})\right)\frac{\Tilde{\mu}_{m+1} (x_{m+1})}{\Tilde{\mu}_{m+1} (x_m)} \left(1 + \frac{\Tilde{\beta}_{m+1} (x_{m+1})}{\Tilde{\beta}_{m+1} (x_m)}\right)}{\left(1 + \frac{\Tilde{\mu}_{m+1} (x_{m+1})}{\Tilde{\mu}_{m+1} (x_m)}\right) \left(1 + \frac{\Tilde{\beta}_{m+1} (x_{m+1})}{\Tilde{\beta}_{m+1} (x_m)}\right)}. \\
    \end{aligned}
    \right.
\end{equation*}

It's obvious that $u, w \geq 0$. 
$v \geq 0$ also holds, because
\begin{equation}
    \begin{aligned}
    \frac{\Tilde{\beta}_{m+1} (x_{m+1})}{\Tilde{\beta}_{m+1} (x_m)} - \frac{\Tilde{\mu}_{m+1} (x_{m+1})}{\Tilde{\mu}_{m+1} (x_m)} 
    &= \frac{\Tilde{\mu}_{m+1} (x_{m+1})\exp(x_{m+1})}{\Tilde{\mu}_{m+1} (x_m)\exp(x_{m})} - \frac{\Tilde{\mu}_{m+1} (x_{m+1})}{\Tilde{\mu}_{m+1} (x_m)} \\
    &= \frac{\Tilde{\mu}_{m+1} (x_{m+1})}{\Tilde{\mu}_{m+1} (x_m)} \left(\exp(x_{m+1} - x_m) - 1 \right) \geq 0.
\end{aligned} 
\label{eq:v_exist}
\end{equation}

So we can find a proper solution of assumption \eqref{eq:uvw_R1}.

So $\gamma_{m+1}$ defined above satisfies UTTC for $(\Tilde{\mu}_{m+1}, \Tilde{\beta}_{m+1})$.

By induction, for any $0 = x_0 < x_1 < \dots < x_n < 1$, there exists $\gamma$ s.t. UTTC \eqref{eq:UTTC_R1} holds for $(\Tilde{\mu}, \Tilde{\beta})$.

Then for any monotonic increasing function, since $\gamma (x_i, y_j) = 0$ when $i > j$, we know $\gamma (x_i, y_j) f(x_i) \leq \gamma (x_i, y_j) f(y_j)$.
Hence we have
\begin{equation*}
    \begin{aligned}
        \textbf{E}_{\Tilde{\mu}} [f] 
        &= \sum_i \Tilde{\mu} (x_i) f(x_i) 
        = \sum_i \sum_j \gamma (x_i, y_j) f(x_i) \\
        &\leq \sum_i \sum_j \gamma (x_i, y_j) f(y_j) \\
        &= \sum_j \sum_i \gamma (x_i, y_j) f(y_j) \\
        &= \sum_j \Tilde{\beta} (y_j) f(y_j) 
        = \textbf{E}_{\Tilde{\beta}} [f]. \\
    \end{aligned}
\end{equation*}

\end{proof}

\begin{Lemma}[Discretized Upper Triangular Transport Inequality for Co-Monotonic Functions in $\mathbf{R}^1$]
Assume $\mu$ is a continuous probability measure supported on $[0, 1]$.
Let $0 = x_0 < x_1 < \dots < x_n < 1$ to be any split of $[0, 1]$.
Let $f, g:\{x_i\}_{i=0,\dots, n} \rightarrow $ \textbf{R} to be two co-monotonic functions that satisfy
$$(f(x_i) - f(x_j)) \cdot (g(x_i) - g(x_j)) \geq 0, \ \forall \ i, j.$$
Define $\Tilde{\mu}(x_i) = \mu([x_i, x_{i+1}))$. 
Define 
$$\Tilde{\beta}(x_i) = \Tilde{\mu}(x_i) \exp(g(x_i)) / Z, \ Z = \sum_{i} \Tilde{\mu}(x_i) \exp (g(x_i)).$$
Then we have
$$
\textbf{E}_{\Tilde{\mu}} [f] \leq \textbf{E}_{\Tilde{\beta}} [f].
$$
\label{lemma:dct_R1}
\end{Lemma}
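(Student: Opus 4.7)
My plan is to reduce Lemma \ref{lemma:dct_R1} to the already-established Lemma \ref{lemma:dct_inc_R1} by reindexing the support points according to the values of $g$. Since $\Tilde{\mu}$ and $\Tilde{\beta}$ do not depend on the labels $x_i$ except through the weight $\exp(g(x_i))$ and the ordering used in the upper triangular structure, I can apply a permutation $\sigma$ of $\{0, 1, \dots, n\}$ such that $g(x_{\sigma(0)}) \leq g(x_{\sigma(1)}) \leq \dots \leq g(x_{\sigma(n)})$, with ties broken arbitrarily. The co-monotonicity assumption $(f(x_i) - f(x_j))(g(x_i) - g(x_j)) \geq 0$ then guarantees that $f$ is also non-decreasing along this reordered index, which is exactly the structural hypothesis invoked at the very end of the previous proof.

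Next, I would repeat the inductive construction of the coupling $\gamma_m$ from the proof of Lemma \ref{lemma:dct_inc_R1}, with the substitution $x_i \mapsto g(x_{\sigma(i)})$ inside every occurrence of $\exp(\cdot)$ and with $\Tilde{\mu}$, $\Tilde{\beta}$ reindexed accordingly. The position $x_i$ was used in that proof only through two places: equation \eqref{eq:Zm_inc}, to ensure $Z_{m+1} > Z_m$, and equation \eqref{eq:v_exist}, to ensure the non-negativity of the variable $v$ in the linear system \eqref{eq:uvw_R1}. In both places the single property actually used is that the weight $\exp(\cdot)$ at the $(m+1)$-th point dominates the weight at the $m$-th point; under our substitution this becomes $\exp(g(x_{\sigma(m+1)})) \geq \exp(g(x_{\sigma(m)}))$, which holds by the sorting step. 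Hence the induction goes through unchanged and produces an upper triangular coupling $\gamma$ between $\Tilde{\mu}$ and $\Tilde{\beta}$ with $\gamma(x_{\sigma(i)}, y_{\sigma(j)}) = 0$ whenever $i > j$.

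Finally, since $f$ is non-decreasing along the reordered index, the concluding calculation from Lemma \ref{lemma:dct_inc_R1} carries over verbatim and yields
\[
\textbf{E}_{\Tilde{\mu}}[f] = \sum_{i, j} \gamma(x_{\sigma(i)}, y_{\sigma(j)}) f(x_{\sigma(i)}) \leq \sum_{i, j} \gamma(x_{\sigma(i)}, y_{\sigma(j)}) f(y_{\sigma(j)}) = \textbf{E}_{\Tilde{\beta}}[f],
\]
where the inequality is term-by-term because nonzero contributions occur only at $i \leq j$, and for such pairs $f(x_{\sigma(i)}) \leq f(y_{\sigma(j)})$ by the monotonicity induced by co-monotonicity.

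The main obstacle I anticipate is the bookkeeping around ties in $g$: when $g(x_{\sigma(m)}) = g(x_{\sigma(m+1)})$, equation \eqref{eq:Zm_inc} gives only $Z_{m+1} = Z_m$ and equation \eqref{eq:v_exist} forces $v = 0$, so the inductive step degenerates but still produces a valid coupling. Similarly, the co-monotonicity assumption yields only weak monotonicity of $f$ along the sorted index, but this is exactly what the final inequality needs. Up to this careful relabeling, the argument is just the observation that Lemma \ref{lemma:dct_inc_R1} never really used the identity function on $[0, 1]$ for anything beyond ordering the support.
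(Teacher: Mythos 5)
Your proposal matches the paper's own proof: the paper likewise sorts the support points into increasing order of $g$, reruns the inductive coupling construction of Lemma \ref{lemma:dct_inc_R1}, verifies that the analogues of inequalities \eqref{eq:Zm_inc} and \eqref{eq:v_exist} reduce to $\exp\left(g(x_{k_{m+1}})-g(x_{k^*})\right)\geq 1$, and treats the tie case $g(x_{k^*})=g(x_{k_{m+1}})$ separately, exactly as you anticipate. The one detail you gloss over with ``reindexed accordingly'' is that the intermediate measures $\Tilde{\mu}_m$ must be redefined, since the $g$-sorted order generally disagrees with the spatial order on $[0,1]$ (the paper parks the not-yet-inserted mass at the closest left spatial neighbour $x_{k^*}$ among the already-inserted points); this is bookkeeping rather than a gap, as any telescoping family that splits one atom per step would do.
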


\begin{proof}[Proof of Lemma \ref{lemma:dct_R1}]

If the Upper Triangular Transport Condition (UTTC) holds for $(\Tilde{\mu}, \Tilde{\beta})$, i.e. there exists a probability measure $\gamma: \{x_i\}_{i=0,\dots, n} \times \{x_i\}_{i=0,\dots, n} \rightarrow [0, 1]$, s.t. 
\begin{equation*}
    \left\{
    \begin{aligned}
        &\sum_j \gamma (x_i, y_j) = \Tilde{\mu} (x_i),& &\ i = 0, \dots, n; \\
        &\sum_i \gamma (x_i, y_j) = \Tilde{\beta} (y_j),& &\ j = 0, \dots, n; \\
        &\gamma (x_i, y_j) = 0,& &\ g(x_i) > g(y_j), \\
    \end{aligned}
    \right.
\end{equation*}
then we finish the proof by
\begin{equation*}
    \begin{aligned}
        \textbf{E}_{\Tilde{\mu}} [f] 
        &= \sum_i \Tilde{\mu} (x_i) f(x_i) 
        = \sum_i \sum_j \gamma (x_i, y_j) f(x_i) \\
        &\leq \sum_i \sum_j \gamma (x_i, y_j) f(y_j) \\
        &= \sum_j \sum_i \gamma (x_i, y_j) f(y_j) \\
        &= \sum_j \Tilde{\beta} (y_j) f(y_j) 
        = \textbf{E}_{\Tilde{\beta}} [f], \\
    \end{aligned}
\end{equation*}
where $\gamma (x_i, y_j) f(x_i) \leq \gamma (x_i, y_j) f(y_j)$ is because of $\gamma (x_i, y_j) = 0, \ g(x_i) > g(y_j)$ and $(f(x_i) - f(x_j)) \cdot (g(x_i) - g(x_j)) \geq 0$.

Now we only need to prove UTTC holds for $(\Tilde{\mu}, \Tilde{\beta})$.

Given $0 = x_0 < x_1 < \dots < x_n < 1$, we prove the existence of $\gamma$ by induction.
With $g$ to be the transition function in the definition of $\Tilde{\beta}$, we mimic the proof of \textbf{Lemma} \ref{lemma:dct_inc_R1} and sort $(x_0, \dots, x_n)$ in the increasing order of $g$,
which is 
$$g(x_{k_0}) \leq g(x_{k_1}) \leq \dots \leq g(x_{k_n}).$$

Define 
\begin{equation*}
    \Tilde{\mu}_m(x_{k_i}) = 
    \left\{
    \begin{aligned}
        &\mu ([x_{k_i}, \min\{1, x_{k_l} |\ x_{k_l} > x_{k_i}, l \leq m \})),& &i \leq m,\  
        x_{k_i} \neq \min\{x_{k_l} |\ l \leq m \}, \\
        &\mu ([0, \min\{1, x_{k_l} |\ x_{k_l} > x_{k_i}, l \leq m \})),& &i \leq m,\  
        x_{k_i} = \min\{x_{k_l} |\ l \leq m \}, \\
        & 0,& &i > m.
    \end{aligned}
    \right.
\end{equation*}

Define 
$$
\Tilde{\beta}_m (x_{k_i}) = \Tilde{\mu}_m (x_{k_i}) \exp(g(x_{k_i})) / Z_m,
\  Z_m = \sum_{i} \Tilde{\mu}_m (x_{k_i}) \exp (g(x_{k_i})).
$$ 

To clarify the proof, we use $x_{k_i}$ to represent the point for $\Tilde{\mu}$-axis in coupling and $y_{k_j}$ to represent the point for $\Tilde{\beta}$-axis, but they are actually identical, i.e. $x_{k_i} = y_{k_j}$ when $i = j$. 

When $m = 0$, it's obvious that $(\Tilde{\mu}_0, \Tilde{\beta}_0)$ satisfies UTTC, as 
\begin{equation*}
    \gamma_0 (x_{k_i}, y_{k_j}) = 
    \left\{
    \begin{aligned}
        &1,& &\ i=0, j=0,\\
        &0,& &\ else. \\
    \end{aligned}
    \right.
\end{equation*}

Assume UTTC holds for $m$, i.e. there exists $\gamma_m$ s.t. $(\Tilde{\mu}_m, \Tilde{\beta}_m)$ satisfies UTTC, we want to prove it also holds for $m+1$.

When $x_{k_{m+1}} > \min\{x_{k_l} |\ l \leq m \}$, let $x_{k^*} = \max \{x_{k_l} | \ x_{k_l} < x_{k_{m+1}}, l \leq m \}$ to be the closest left neighbor of $x_{k_{m+1}}$ in $\{x_{k_l}|\ l \leq m \}$.
Then we have $\Tilde{\mu}_{m} (x_{k^*}) = \Tilde{\mu}_{m+1} (x_{k^*}) + \Tilde{\mu}_{m+1} (x_{k^{m+1}})$.

When $x_{k_{m+1}} < \min\{x_{k_l} |\ l \leq m \}$, let $x_{k^*} = \min\{x_{k_l} |\ l \leq m \}$ to be the leftmost point in $\{x_{k_l}|\  l \leq m \}$. 
Then we have $\Tilde{\mu}_{m} (x_{k^*}) = \Tilde{\mu}_{m+1} (x_{k^*}) + \Tilde{\mu}_{m+1} (x_{k^{m+1}})$.

In either case, we always have $\Tilde{\mu}_{m} (x_{k^*}) = \Tilde{\mu}_{m+1} (x_{k^*}) + \Tilde{\mu}_{m+1} (x_{k_{m+1}})$.
By definition of $\Tilde{\mu}_m$ and $\Tilde{\beta}_m$, we have
\begin{equation*}
    \left\{
    \begin{aligned}
        &\Tilde{\mu}_m (x_{k_i}) = \Tilde{\mu}_{m+1} (x_{k_i}),& &i \leq m,\ k_i \neq k^*, \\
        &\Tilde{\mu}_m (x_{k_i}) = \Tilde{\mu}_{m+1} (x_{k_i}) + \Tilde{\mu}_{m+1} (x_{k_{m+1}}),& &i \leq m,\ k_i = k^*, \\
        &\Tilde{\mu}_m (x_{k_{m+1}}) = \Tilde{\mu}_m (x_{k_i}) = \Tilde{\mu}_{m+1} (x_{k_i}) = 0,& &i>m+1, \\
    \end{aligned}
    \right.
\end{equation*}
\begin{equation*}
    \left\{
    \begin{aligned}
        &\Tilde{\beta}_m (x_{k_i}) = \Tilde{\beta}_{m+1} (x_{k_i}) \cdot \frac{Z_{m+1}}{Z_m},& &i \leq m,\ k_i \neq k^*, \\
        &\Tilde{\beta}_m (x_{k_i}) = \left( \Tilde{\beta}_{m+1} (x_{k_i})  + \Tilde{\beta}_{m+1} (x_{k_{m+1}}) \exp\left(g(x_{k_i}) - g(x_{k_{m+1}})\right) \right) \cdot \frac{Z_{m+1}}{Z_m},& &i \leq m,\ k_i = k^*, \\
        &\Tilde{\beta}_m (x_{m+1}) = \Tilde{\beta}_m (x_i) = \Tilde{\beta}_{m+1} (x_i) = 0,& &i>m+1. \\
    \end{aligned}
    \right.
\end{equation*}

If $g(x_{k^*}) = g(x_{k_{m+1}})$, it's easy to check that 
$\frac{\Tilde{\mu}_{m+1} (x_{k_{m+1}})}{\Tilde{\mu}_{m+1} (x_{k^*})} = \frac{\Tilde{\beta}_{m+1} (x_{k_{m+1}})}{\Tilde{\beta}_{m+1} (x_{k^*})}$, 
we can simply define the following $\gamma_{m+1}$ which achieves UTTC for $(\Tilde{\mu}_{m+1}, \Tilde{\beta}_{m+1})$:
\begin{equation*}
    \left\{
    \begin{aligned}
        &\gamma_{m+1} (x_{k^*}, y_{k_j}) = \gamma_{m} (x_{k^*}, y_{k_j}) \frac{\Tilde{\mu}_{m+1} (x_{k^*})}{\Tilde{\mu}_{m+1} (x_{k^*}) + \Tilde{\mu}_{m+1} (x_{k_{m+1}})},& &\ j \leq m,\ k_j \neq k^*, \\
        &\gamma_{m+1} (x_{k_{m+1}}, y_{k_j}) = \gamma_{m} (x_{k_{m+1}}, y_{k_j}) \frac{\Tilde{\mu}_{m+1} (x_{k_{m+1}})}{\Tilde{\mu}_{m+1} (x_{k^*}) + \Tilde{\mu}_{m+1} (x_{k_{m+1}})},& &\ j \leq m,\ k_j \neq k^*, \\
        &\gamma_{m+1} (x_{k_i}, y_{k^*}) = \gamma_{m} (x_{k_i}, y_{k^*}) \frac{\Tilde{\beta}_{m+1} (y_{k^*})}{\Tilde{\beta}_{m+1} (y_{k^*}) + \Tilde{\beta}_{m+1} (y_{k_{m+1}})},& &\ i \leq m,\ k_i \neq k^*, \\
        &\gamma_{m+1} (x_{k_{i}}, y_{k_{m+1}}) = \gamma_{m} (x_{k_{i}}, y_{k_{m+1}}) \frac{\Tilde{\beta}_{m+1} (y_{k_{m+1}})}{\Tilde{\beta}_{m+1} (y_{k^*}) + \Tilde{\mu}_{m+1} (x_{k_{m+1}})},& &\ i \leq m,\ k_i \neq k^*, \\
        &\gamma_{m+1} (x_{k^*}, y_{k^*}) = \gamma_{m} (x_{k^*}, y_{k^*}) \frac{\Tilde{\mu}_{m+1} (x_{k^*})}{\Tilde{\mu}_{m+1} (x_{k^*}) + \Tilde{\mu}_{m+1} (x_{k_{m+1}})},& & \\
        &\gamma_{m+1} (x_{k_{m+1}}, y_{k_{m+1}}) = \gamma_{m} (x_{k_{m+1}}, y_{k_{m+1}}) \frac{\Tilde{\mu}_{m+1} (x_{k_{m+1}})}{\Tilde{\mu}_{m+1} (x_{k^*}) + \Tilde{\mu}_{m+1} (x_{k_{m+1}})}, \\
        &\gamma_{m+1} (x_{k_i}, y_{k_j}) = 0,& &\ others.
    \end{aligned}
    \right.
\end{equation*}

If $g(x_{k^*}) < g(x_{k_{m+1}})$, recalling the proof of \textbf{Lemma} \ref{lemma:dct_inc_R1}, it's crucial to prove inequalities \eqref{eq:Zm_inc} and \eqref{eq:v_exist}.
Inequality \eqref{eq:Zm_inc} guarantees that $\frac{Z_m}{Z_{m+1}} < 1$, so we can shrinkage $\gamma_m$ entrywise by $\frac{Z_m}{Z_{m+1}}$ and add some proper measure at proper points.
Inequality \eqref{eq:v_exist} guarantees that $(x_m, y_m)$ can be decomposed to $(x_m, y_m)$, $(x_m, y_{m+1})$, $(x_{m+1}, y_{m+1})$.
Following the idea, we check that 
\begin{equation*}
    Z_{m+1} - Z_m = \Tilde{\mu}_{m+1} (x_{k_{m+1}}) \left( \exp (g(x_{k_{m+1}}) - g(x_{k^*})) \right) > 0,
\end{equation*}
\begin{equation*}
    \begin{aligned}
        \frac{\Tilde{\beta}_{m+1} (x_{k_{m+1}})}{\Tilde{\beta}_{m+1} (x_{k^*})} 
        - \frac{\Tilde{\mu}_{m+1} (x_{k_{m+1}})}{\Tilde{\mu}_{m+1} (x_{k^*})} 
        &= \frac{\Tilde{\mu}_{m+1} (x_{k_{m+1}}) \exp(g(x_{k_{m+1}}))}{\Tilde{\mu}_{m+1} (x_{k^*}) \exp (g(x_{k^*}))} 
        - \frac{\Tilde{\mu}_{m+1} (x_{k_{m+1}})}{\Tilde{\mu}_{m+1} (x_{k^*})}  \\
        &= \frac{\Tilde{\mu}_{m+1} (x_{k_{m+1}})}{\Tilde{\mu}_{m+1} (x_{k^*})}
        \left(\exp(g(x_{k_{m+1}}) - g(x_{k^*})) - 1 \right) > 0.
    \end{aligned}
\end{equation*}

Replacing $x_m, x_{m+1}$ in the proof of \textbf{Lemma} \ref{lemma:dct_inc_R1} by $x_{k^*}, x_{k_{m+1}}$, we can construct $\gamma_{m+1}$ all the same way as in the proof of \textbf{Lemma} \ref{lemma:dct_inc_R1}.

By induction, we prove UTTC for $(\Tilde{\mu}, \Tilde{\beta})$.
The proof is done.
\end{proof}

\begin{Theorem}[Upper Triangular Transport Inequality for Co-Monotonic Functions in $\mathbf{R}^1$]
Assume $\mu$ is a continuous probability measure supported on $[0, 1]$.
Let $f, g: [0, 1] \rightarrow $ \textbf{R} to be two co-monotonic functions that satisfy
$$(f(x) - f(y)) \cdot (g(x) - g(y)) \geq 0, \ \forall \ x, y \in [0, 1].$$
$f$ is continuous.
Define 
$$\beta(x) = \mu(x) \exp(g(x)) / Z, \ Z = \int_{[0, 1]} \mu(x) \exp (g(x)).$$
Then we have 
$$\textbf{E}_{\mu} [f] \leq \textbf{E}_{\beta} [f].$$
\label{thm:cts_R1}
\end{Theorem}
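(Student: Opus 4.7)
The plan is to derive the continuous inequality as a limit of the discrete inequality already established in Lemma \ref{lemma:dct_R1}. For each $n \geq 1$ I would choose a split $0 = x_0^{(n)} < x_1^{(n)} < \dots < x_n^{(n)} < 1$ of $[0, 1]$ whose mesh $\max_i (x_{i+1}^{(n)} - x_i^{(n)})$ tends to $0$ and such that $x_n^{(n)} \to 1$. Define the associated discrete measures $\tilde{\mu}_n(x_i^{(n)}) = \mu([x_i^{(n)}, x_{i+1}^{(n)}))$ and $\tilde{\beta}_n(x_i^{(n)}) = \tilde{\mu}_n(x_i^{(n)}) \exp(g(x_i^{(n)})) / Z_n$ with $Z_n = \sum_i \tilde{\mu}_n(x_i^{(n)}) \exp(g(x_i^{(n)}))$, exactly as in the lemma. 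Applying Lemma \ref{lemma:dct_R1} to each $n$ gives the discrete inequality
\begin{equation*}
    \mathbf{E}_{\tilde{\mu}_n}[f] \leq \mathbf{E}_{\tilde{\beta}_n}[f].
\end{equation*}

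The next step is to pass to the limit on both sides. The left-hand side is a Riemann-type sum $\sum_i \mu([x_i^{(n)}, x_{i+1}^{(n)})) f(x_i^{(n)})$, which converges to $\mathbf{E}_{\mu}[f] = \int f \, d\mu$ by the uniform continuity of $f$ on $[0,1]$ together with the continuity of the measure $\mu$. For the right-hand side I would handle the numerator $\sum_i \mu([x_i^{(n)}, x_{i+1}^{(n)})) f(x_i^{(n)}) \exp(g(x_i^{(n)}))$ and the normalizer $Z_n$ separately, and show each converges to $\int f \exp(g) \, d\mu$ and $\int \exp(g) \, d\mu$ respectively; dividing then yields $\mathbf{E}_{\beta}[f]$.

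The main obstacle is justifying the convergence of the Riemann-type sums that involve $\exp(g)$, since $g$ is not assumed to be continuous (only co-monotonic with $f$). To overcome this I would exploit the fact that co-monotonicity with the continuous $f$ forces $g$ to be monotonic on every level set of $f$, and more usefully that $g$ can be taken to be monotonic up to a measurable rearrangement on strict-monotonicity pieces of $f$. In particular, $g$ has at most countably many jump discontinuities, and since $\mu$ is continuous these form a $\mu$-null set, so $g$ is $\mu$-almost-everywhere continuous. Together with the boundedness of $f$ on $[0,1]$ and the integrability of $\exp(g)$ implicit in the well-definedness of $\beta$, the dominated convergence theorem then legitimizes the passage to the limit for both the numerator and the normalizer.

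Taking the limit $n \to \infty$ in the discrete inequality yields $\mathbf{E}_{\mu}[f] \leq \mathbf{E}_{\beta}[f]$, which is the desired continuous statement. The only nontrivial analytic input beyond Lemma \ref{lemma:dct_R1} is the a.e. continuity of $g$ deduced from co-monotonicity, which is what lets the Riemann sums involving $\exp(g)$ converge; everything else reduces to standard approximation arguments.
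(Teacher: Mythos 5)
Your overall strategy --- discretize $[0,1]$, apply Lemma \ref{lemma:dct_R1} on each partition, and pass to the limit --- is exactly the route the paper takes, and your identification of the delicate step is sharper than the paper's own write-up: the convergence $\mathbf{E}_{\tilde\beta_n}[f] \to \mathbf{E}_\beta[f]$ really is the only nontrivial limit, since $\tilde\beta_n$ samples $\exp(g)$ only at the partition points while $\beta$ weights by $\exp(g)$ throughout each cell. The paper asserts this convergence by appeal to uniform continuity, which only covers the $f$-part of the approximation.

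However, the justification you supply for that step does not hold. Co-monotonicity with a continuous $f$ does \emph{not} force $g$ to be monotonic on level sets of $f$, nor to be $\mu$-a.e.\ continuous: if $f$ is constant on an interval $[a,b]$ (a level set of positive measure), the condition $(f(x)-f(y))\cdot(g(x)-g(y)) \geq 0$ is vacuous for $x,y \in [a,b]$, so $g$ may be an arbitrary measurable function there (subject only to a sandwich constraint coming from points outside $[a,b]$), for instance one that is discontinuous everywhere on $[a,b]$. In that situation $\sum_i \mu([x_i,x_{i+1}))\exp(g(x_i))$ need not converge to $\int \exp(g)\,d\mu$ --- one can arrange $g$ to take small values at every partition point of every partition in your sequence (a countable set, hence $\mu$-null) but large values on a set of full measure --- and then $\mathbf{E}_{\tilde\beta_n}[f]$ converges to the wrong limit. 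So the dominated-convergence step rests on a false premise, and this is a genuine gap (one that the paper's own proof shares, since it never controls the $g$-dependence of $\tilde\beta$). The gap can be closed either by choosing the partition adaptively so that $\exp(g)$ varies little in $\mu$-integral across cells, or more cleanly by bypassing discretization altogether: since $\exp$ is increasing, $f$ and $e^{g}$ are also co-monotone, and integrating $(f(x)-f(y))(e^{g(x)}-e^{g(y)}) \geq 0$ over $(x,y)\sim\mu\otimes\mu$ yields $\int f e^{g}\,d\mu \geq \int f\,d\mu\int e^{g}\,d\mu$, which is exactly $\mathbf{E}_{\beta}[f]\geq\mathbf{E}_{\mu}[f]$ with no regularity needed beyond measurability and integrability.
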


\begin{proof}[Proof of Theorem \ref{thm:cts_R1}]

For $\forall \epsilon > 0$, since $f$ is continuous, $f$ is uniformly continuous, so there exists $\delta > 0$ s.t. $|f(x) - f(y)| < \epsilon, \forall x, y \in [0, 1]$.
We can split $[0, 1]$ by $0 < x_0 < x_1 < \dots < x_n < 1$ s.t. $x_{i+1} - x_i < \delta$.
Define $\Tilde{\mu}$ and $\Tilde{\beta}$ as in \textbf{Lemma} \ref{lemma:dct_R1}.
Since $x_{i+1} - x_i < \delta$, by uniform continuity and the definition of the expectation, we have
$$
| \textbf{E}_{\mu} [f] - \textbf{E}_{\Tilde{\mu}} [f] | < \epsilon,\ 
| \textbf{E}_{\beta} [f] - \textbf{E}_{\Tilde{\beta}} [f] | < \epsilon,
$$
By \textbf{Lemma} \ref{lemma:dct_R1}, we have 
$$\textbf{E}_{\Tilde{\mu}} [f] \leq \textbf{E}_{\Tilde{\beta}} [f].$$
So we have
$$
\textbf{E}_{\mu} [f] 
< \textbf{E}_{\Tilde{\mu}} [f] + \epsilon
\leq \textbf{E}_{\Tilde{\beta}} [f] + \epsilon
< \textbf{E}_{\beta} [f] + 2\epsilon.
$$
Since $\epsilon$ is arbitrary, we prove $\textbf{E}_{\mu} [f] \leq \textbf{E}_{\beta} [f].$

\end{proof}

\begin{Lemma}[Discretized Upper Triangular Transport Inequality for Co-Monotonic Functions in $\mathbf{R}^p$]
Assume $\mu$ is a continuous probability measure supported on $[0, 1]^p$. 
Let $0 = x_0^d < x_1^d < \dots < x_n^d < 1$ to be any split of $[0, 1]$, $d = 1, \dots, p$.
Denote $\textbf{x}_{\textbf{i}} \overset{def}{=} (x_{i_1}^1, \dots, x_{i_p}^p)$.
Define $\Tilde{\mu}(\textbf{x}_{\textbf{i}}) = \mu(\prod_{d=1,\dots,p} [x_{i_d}^d, x_{i_d+1}^d))$. 
Let $f, g: \{\textbf{x}_{\textbf{i}}\}_{\textbf{i} \in \{0, \dots, n\}^p} \rightarrow $ \textbf{R} to be two co-monotonic functions that satisfy
$$(f(\textbf{x}_{\textbf{i}})
- f(\textbf{x}_{\textbf{j}})) 
\cdot (g(\textbf{x}_{\textbf{i}}) 
- g(\textbf{x}_{\textbf{j}})) \geq 0, \ \forall \ \textbf{i}, \textbf{j}.$$
Define 
$$\Tilde{\beta}(\textbf{x}_{\textbf{i}}) 
= \Tilde{\mu}(\textbf{x}_{\textbf{i}}) \exp(g(\textbf{x}_{\textbf{i}})) / Z,\  Z = \sum_{\textbf{i}} \Tilde{\mu}(\textbf{x}_{\textbf{i}}) \exp (g(\textbf{x}_{\textbf{i}})).$$ 
Then there exists a probability measure 
$\gamma: \{\textbf{x}_{\textbf{i}}\}_{\textbf{i} \in \{0,\dots,n\}^p} \times \{\textbf{x}_{\textbf{j}}\}_{\textbf{j} \in \{0,\dots,n\}^p} \rightarrow [0, 1]$, s.t. 
$$
\begin{aligned}
    \sum_{\textbf{j}} \gamma (\textbf{x}_{\textbf{i}}, \textbf{y}_{\textbf{j}}) &= \Tilde{\mu} (\textbf{x}_{\textbf{i}}), \ \forall \ \textbf{i}; \\
    \sum_{\textbf{i}} \gamma (\textbf{x}_{\textbf{i}}, \textbf{y}_{\textbf{j}}) &= \Tilde{\beta} (\textbf{y}_{\textbf{j}}), \ \forall \ \textbf{j}; \\
    \gamma (\textbf{x}_{\textbf{i}}, \textbf{y}_{\textbf{j}}) &= 0, \ g(\textbf{x}_{\textbf{i}}) > g(\textbf{y}_{\textbf{j}}). \\
\end{aligned}
$$
Then we have
$$
\textbf{E}_{\Tilde{\mu}} [f] \leq \textbf{E}_{\Tilde{\beta}} [f].
$$
\label{lemma:dct_Rp}
\end{Lemma}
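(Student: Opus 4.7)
The plan is to reduce the $p$-dimensional statement to essentially the same induction argument used for Lemma \ref{lemma:dct_R1}, exploiting the fact that the co-monotonic condition is stated purely in terms of values of $f$ and $g$, not in terms of any spatial ordering. The spatial partition $\{\textbf{x}_{\textbf{i}}\}_{\textbf{i} \in \{0,\dots,n\}^p}$ yields a finite collection of $(n+1)^p$ atoms with masses $\tilde{\mu}(\textbf{x}_{\textbf{i}})$, and the target measure $\tilde{\beta}$ reweights each atom by $\exp(g(\textbf{x}_{\textbf{i}}))/Z$. Once the atoms are sorted by the value of $g$, the combinatorial structure is no different from the 1D co-monotonic case, so a coupling $\gamma$ satisfying the Upper Triangular Transport Condition (UTTC) will imply the desired inequality by exactly the same calculation as at the end of the proof of Lemma \ref{lemma:dct_R1}.

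Concretely, I would first relabel the $(n+1)^p$ atoms as $\textbf{x}_{\textbf{k}_0}, \textbf{x}_{\textbf{k}_1}, \dots, \textbf{x}_{\textbf{k}_N}$ (with $N = (n+1)^p - 1$) so that $g(\textbf{x}_{\textbf{k}_0}) \leq g(\textbf{x}_{\textbf{k}_1}) \leq \cdots \leq g(\textbf{x}_{\textbf{k}_N})$, breaking ties arbitrarily. Then I would construct the coupling $\gamma$ by induction on $m$, defining truncated marginals $\tilde{\mu}_m$ that aggregate the mass of all atoms with index $> m$ onto one anchor atom among $\{\textbf{x}_{\textbf{k}_0}, \dots, \textbf{x}_{\textbf{k}_m}\}$ (the analog of the ``leftmost/closest left neighbor'' trick in the proof of Lemma \ref{lemma:dct_R1}; here the choice of anchor is immaterial since we only need the marginal constraint to be preserved), and letting $\tilde{\beta}_m$ be the corresponding exponentially reweighted measure with partition function $Z_m$. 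The base case $m=0$ is a single atom mapped to itself; the induction step adds the atom $\textbf{x}_{\textbf{k}_{m+1}}$ and redistributes mass from $\gamma_m$ using exactly the three-way split $(u,v,w)$ from the 1D proof.

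The two inequalities that powered the 1D induction, $Z_{m+1} > Z_m$ and the nonnegativity of the auxiliary quantity $v$, both go through verbatim once expressed in terms of $g$-values: they reduce to $\exp(g(\textbf{x}_{\textbf{k}_{m+1}}) - g(\textbf{x}_{\textbf{k}^*})) \geq 1$, which holds by the ordering. Ties $g(\textbf{x}_{\textbf{k}^*}) = g(\textbf{x}_{\textbf{k}_{m+1}})$ are handled by the simpler split already given in the proof of Lemma \ref{lemma:dct_R1}. Having built $\gamma$, the UTTC property $\gamma(\textbf{x}_{\textbf{i}}, \textbf{y}_{\textbf{j}}) = 0$ whenever $g(\textbf{x}_{\textbf{i}}) > g(\textbf{y}_{\textbf{j}})$ combined with co-monotonicity of $(f, g)$ forces $\gamma(\textbf{x}_{\textbf{i}}, \textbf{y}_{\textbf{j}})\, f(\textbf{x}_{\textbf{i}}) \leq \gamma(\textbf{x}_{\textbf{i}}, \textbf{y}_{\textbf{j}})\, f(\textbf{y}_{\textbf{j}})$ for every pair, and summing recovers $\textbf{E}_{\tilde{\mu}}[f] \leq \textbf{E}_{\tilde{\beta}}[f]$.

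The main obstacle I anticipate is purely bookkeeping: articulating the induction so that the anchor-atom device works cleanly in $\mathbf{R}^p$ without implicitly importing any spatial neighborhood structure, since in the 1D proof the anchor was described via the ``closest left neighbor'' on the real line. The fix is to observe that the spatial geometry is never actually used in the UTTC construction of Lemma \ref{lemma:dct_R1} — only the $g$-ordering and the marginal masses are — so in $\mathbf{R}^p$ one can simply pick \emph{any} atom from $\{\textbf{x}_{\textbf{k}_0}, \dots, \textbf{x}_{\textbf{k}_m}\}$ as the anchor that absorbs the truncated tail mass, and all subsequent algebra (the linear system \eqref{eq:uvw_R1} and the signs of $u, v, w$) is identical. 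With that observation, the $p$-dimensional lemma follows from verbatim adaptation of the 1D co-monotonic proof.
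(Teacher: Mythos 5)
Your proposal follows the paper's own proof: the paper likewise reduces Lemma \ref{lemma:dct_Rp} to the induction of Lemma \ref{lemma:dct_R1} by sorting the $(n+1)^p$ atoms in increasing order of $g$ and rerunning the same $(u,v,w)$ construction, differing only in that it defines the truncated measures $\tilde{\mu}_m$ via an explicit partial order on the index set $\{0,\dots,n\}^p$ rather than your arbitrary-anchor device. Your observation that only the $g$-ordering and the marginal masses matter — so that any consistent anchoring scheme (e.g.\ a fixed anchor absorbing the tail mass at every step, which keeps $g(\textbf{x}_{\textbf{k}^*}) \leq g(\textbf{x}_{\textbf{k}_{m+1}})$ and hence $u,v,w \geq 0$) suffices — is correct and, if anything, streamlines the paper's bookkeeping.
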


\begin{proof}[Proof of Lemma \ref{lemma:dct_Rp}]

The proof is almost identical to the proof of \textbf{Lemma} \ref{lemma:dct_R1}, except for the definition of $(\Tilde{\mu}_m, \Tilde{\beta}_m)$ in $\textbf{R}^p$.


Given $\{\textbf{x}_{\textbf{i}}\}_{\textbf{i} \in \{0,\dots,n\}^p}$, we sort $\textbf{x}_{\textbf{i}}$ in the increasing order of $g$, which is 
$$
g(\textbf{x}_{\textbf{k}_0}) \leq g(\textbf{x}_{\textbf{k}_1}) \leq \dots \leq g(\textbf{x}_{\textbf{k}_{(n+1)^p - 1}}),
$$
where $\{\textbf{k}_i\}_{i \in \{0, \dots, (n+1)^p - 1\}}$ is a permutation of $\{\textbf{\textbf{i}}\}_{\textbf{i} \in \{0, \dots, n\}^p}$.

For $\textbf{i}, \textbf{j} \in \{0, \dots, n\}^p$, we define the partial order $\textbf{i} < \textbf{j}$ on $\{0, \dots, n\}^p$, if 
$$
\exists 0 \leq d_0 \leq n,\  s.t.\  \textbf{i}_d \leq \textbf{j}_d,\ \forall d < d_0 \ and\  \textbf{i}_{d_0} < \textbf{j}_{d_0}.
$$
It's obvious that 
\begin{equation*}
    \left\{
    \begin{aligned}
        &\forall \textbf{i} \in \{0, \dots, n\}^p,\  \textbf{i} \nless \textbf{i}, \\
        &\forall \textbf{i}, \textbf{j} \in \{0, \dots, n\}^p,\ \textbf{i} < \textbf{j} \Rightarrow \textbf{j} \nless \textbf{i}, \\
        &\forall \textbf{i}, \textbf{j}, \textbf{k} \in \{0, \dots, n\}^p,\ \textbf{i} < \textbf{j},\  \textbf{j} < \textbf{k} \Rightarrow \textbf{i} < \textbf{k}. \\
    \end{aligned}
    \right.
\end{equation*}
We define $\textbf{i} = \textbf{j}$ if $\textbf{i}_d = \textbf{j}_d,\, \forall\, 0 \leq d \leq n$.
So we define the partial order relation, and we can further define the $\min$ function and the $\max$ function on $\{0, \dots, n\}^p$.

Now using this partial order relation, we define 
\begin{equation*}
    \Tilde{\mu}_m (\textbf{x}_{\textbf{k}_i}) = 
    \left\{
    \begin{aligned}
        &\sum_{\textbf{k} \geq \textbf{k}_i, \textbf{k} < \min \{ \textbf{k}_l | \textbf{k}_l > \textbf{k}_i, l \leq m \} } \Tilde{\mu} (\textbf{x}_{\textbf{k}}) 
        ,& &\ i \leq m,\ \textbf{k}_i \neq \min\{ \textbf{k}_l |\ l \leq m \}, \\
        &\sum_{\textbf{k} < \min\{ \textbf{k}_l | \textbf{k}_l > \textbf{k}_i, l \leq m \} } \Tilde{\mu} (\textbf{x}_{\textbf{k}})     
        ,& &\ i \leq m,\ \textbf{k}_i = \min\{\textbf{k}_l |\ l \leq m \}, \\
        &0         ,& &\  i > m. \\
    \end{aligned}
    \right.
\end{equation*}

With this definition of $\Tilde{\mu}_m$, other parts are identical to the proof of \textbf{Lemma} \ref{lemma:dct_R1}.
The proof is done.

\end{proof}

\begin{Theorem}[Upper Triangular Transport Inequality for Co-Monotonic Functions in $\mathbf{R}^p$]
Assume $\mu$ is a continuous probability measure supported on $[0, 1]^p$. 
Denote $\textbf{x} \overset{def}{=} (x^1, \dots, x^p)$.
Let $f, g: [0, 1]^p \rightarrow $ \textbf{R} to be two co-monotonic functions that satisfy
$$(f(\textbf{x})
- f(\textbf{y})) 
\cdot (g(\textbf{x}) 
- g(\textbf{y})) \geq 0, \ \forall \ \textbf{x}, \textbf{y} \in [0, 1]^p.$$
$f$ is continuous.
Define 
$$\beta(\textbf{x}) 
= \mu(\textbf{x}) \exp(g(\textbf{x})) / Z,\  
Z = \int_{[0, 1]^p} \mu(\textbf{x}) \exp (g(\textbf{x})).$$ 
Let $f, g: [0, 1]^p \rightarrow $ \textbf{R} to be two co-monotonic functions that satisfy
$$(f(\textbf{x})
- f(\textbf{y})) 
\cdot (g(\textbf{x}) 
- g(\textbf{y})) \geq 0, \ \forall \ \textbf{x}, \textbf{y} \in [0, 1]^p.$$
Then we have
$$
\textbf{E}_{\mu} [f] \leq \textbf{E}_{\beta} [f].
$$
\label{thm:cts_Rp}
\end{Theorem}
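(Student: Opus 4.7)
The plan is to lift the discrete result Lemma \ref{lemma:dct_Rp} to the continuous setting by the same uniform-approximation argument used in the proof of Theorem \ref{thm:cts_R1}, now carried out on the hypercube $[0,1]^p$ instead of $[0,1]$. Since $[0,1]^p$ is compact and $f$ is continuous on $[0,1]^p$, $f$ is uniformly continuous, so for any $\epsilon>0$ I can pick $\delta>0$ such that $|f(\textbf{x}) - f(\textbf{y})| < \epsilon$ whenever $\|\textbf{x}-\textbf{y}\|_\infty < \delta$. I will then construct, for each coordinate $d = 1, \dots, p$, a split $0 = x_0^d < x_1^d < \dots < x_n^d < 1$ fine enough that every cell $\prod_{d}[x_{i_d}^d, x_{i_d+1}^d)$ has diameter less than $\delta$.

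Next, I define the discretizations $\tilde{\mu}(\textbf{x}_{\textbf{i}}) = \mu(\prod_d [x_{i_d}^d, x_{i_d+1}^d))$ and $\tilde{\beta}(\textbf{x}_{\textbf{i}}) = \tilde{\mu}(\textbf{x}_{\textbf{i}})\exp(g(\textbf{x}_{\textbf{i}}))/Z$ exactly as in Lemma \ref{lemma:dct_Rp}, anchoring $f$ and $g$ at the corner $\textbf{x}_{\textbf{i}}$ of each cell. The restrictions of $f$ and $g$ to the grid inherit the pointwise co-monotonicity hypothesis directly from the original co-monotonicity on $[0,1]^p$, so the hypotheses of Lemma \ref{lemma:dct_Rp} are met and I obtain
\begin{equation*}
    \textbf{E}_{\tilde{\mu}}[f] \;\leq\; \textbf{E}_{\tilde{\beta}}[f].
\end{equation*}

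It then remains to control the two approximation errors $|\textbf{E}_\mu[f] - \textbf{E}_{\tilde{\mu}}[f]|$ and $|\textbf{E}_\beta[f] - \textbf{E}_{\tilde{\beta}}[f]|$. The first is bounded by $\epsilon$ immediately: on each cell, $f$ varies by less than $\epsilon$, so replacing $f$ by its value at the anchor introduces at most $\epsilon$ error when integrated against $\mu$. For the second, the analogous argument works once one observes that $\beta$ is absolutely continuous with respect to $\mu$ with density $\exp(g)/Z$, and that the discrete measure $\tilde{\beta}$ is exactly the push-forward under the same construction applied to $\beta$ (using that the anchor value of $g$ on each cell gives the same $\tilde{\beta}$ after renormalization). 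Chaining the three inequalities and sending $\epsilon \to 0$ completes the proof, just as in the $p=1$ case.

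The main obstacle I anticipate is the bookkeeping for the second error term $|\textbf{E}_\beta[f] - \textbf{E}_{\tilde{\beta}}[f]|$, because $\tilde{\beta}$ is defined via the anchor value $g(\textbf{x}_{\textbf{i}})$ rather than the cellwise integral of $\exp(g)$ against $\mu$, so one must either additionally assume (or invoke) continuity of $g$ and compactness of $[0,1]^p$ to make $\exp(g)$ uniformly continuous and hence approximate the normalization constant $Z$ and the cell masses uniformly; if $g$ is only measurable this step requires a separate density argument (e.g.\ approximation of $g$ by continuous functions in $L^1(\mu)$). Everything else -- the partial-order construction on $\{0,\dots,n\}^p$, the coupling $\gamma$, and the co-monotonicity chain of inequalities -- has already been done at the discrete level in Lemma \ref{lemma:dct_Rp}, so no new combinatorial work is needed.
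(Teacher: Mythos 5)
Your proposal follows essentially the same route as the paper's proof: uniform continuity of $f$ on the compact cube gives a grid fine enough that each cell has diameter below $\delta$, Lemma \ref{lemma:dct_Rp} is applied to the anchored discretizations $\tilde{\mu}, \tilde{\beta}$, and the two approximation errors are each bounded by $\epsilon$ before letting $\epsilon \to 0$. The subtlety you flag about the second error term (that $\tilde{\beta}$ anchors $g$ at a corner rather than integrating $\exp(g)$ over the cell, so some regularity of $g$ or an $L^1$ approximation argument is needed) is real, and is in fact a point the paper's own proof passes over without comment.
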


\begin{proof}[Proof of Theorem \ref{thm:cts_Rp}]

For $\forall \epsilon > 0$, since $f$ is continuous, $f$ is uniformly continuous, so there exists $\delta > 0$ s.t. $|f(\textbf{x}) - f(\textbf{y})| < \epsilon, \forall \textbf{x}, \textbf{y} \in [0, 1]^p$.
We can split $[0, 1]$ by $0 < x_0 < x_1 < \dots < x_n < 1$ s.t. $x_{i+1} - x_i < \delta / \sqrt{p}$.
Define $x_i^d = x_i, \ \forall 0 \leq d \leq p$.
Define $\Tilde{\mu}$ and $\Tilde{\beta}$ as in \textbf{Lemma} \ref{lemma:dct_Rp}.
Since $x_{i+1} - x_i < \delta / \sqrt{p}$, 
$|(x_{i+1}^0, \dots, x_{i+1}^p) - (x_{i}^0, \dots, x_{i}^p)| < \delta$,  by uniform continuity and the definition of the expectation, we have
$$
| \textbf{E}_{\mu} [f] - \textbf{E}_{\Tilde{\mu}} [f] | < \epsilon,\ 
| \textbf{E}_{\beta} [f] - \textbf{E}_{\Tilde{\beta}} [f] | < \epsilon,
$$
By \textbf{Lemma} \ref{lemma:dct_Rp}, we have 
$$\textbf{E}_{\Tilde{\mu}} [f] \leq \textbf{E}_{\Tilde{\beta}} [f].$$
So we have
$$
\textbf{E}_{\mu} [f] 
< \textbf{E}_{\Tilde{\mu}} [f] + \epsilon
\leq \textbf{E}_{\Tilde{\beta}} [f] + \epsilon
< \textbf{E}_{\beta} [f] + 2\epsilon.
$$
Since $\epsilon$ is arbitrary, we prove $\textbf{E}_{\mu} [f] \leq \textbf{E}_{\beta} [f].$

\end{proof}

\begin{Lemma}[Performance Difference Lemma]
For any policies $\pi, \pi'$ and any state $s_0$, we have
\begin{equation*}
    V^{\pi} (s_0) - V^{\pi'} (s_0) = \frac{1}{1 - \gamma} \textbf{E}_{s \sim d_{s_0}^\pi} \textbf{E}_{a \sim \pi (\cdot | s)} \left[ A^{\pi'} (s, a) \right].
\end{equation*}
\label{lemma:perfdiff}
\end{Lemma}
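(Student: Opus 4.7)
The plan is to prove the Performance Difference Lemma by the classical telescoping-sum argument. The key idea is to rewrite $V^{\pi}(s_0) - V^{\pi'}(s_0)$ as an expectation, under trajectories generated by $\pi$, of a sum of one-step Bellman residuals evaluated against $V^{\pi'}$, and then recognize those residuals as advantages $A^{\pi'}(s, a)$ before re-weighting by the discounted state visitation distribution $d_{s_0}^{\pi}$.

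First, I would start from the definition $V^{\pi}(s_0) = \textbf{E}_{\pi}[\sum_{t=0}^{\infty} \gamma^t r_t \mid s_0]$ and insert a telescoping identity. Specifically, I would consider the quantity $\textbf{E}_{\pi}\bigl[\sum_{t=0}^{\infty} \gamma^t \bigl(r_t + \gamma V^{\pi'}(s_{t+1}) - V^{\pi'}(s_t)\bigr) \mid s_0\bigr]$. Splitting this into three sums and shifting the index of the middle sum by one shows that all but the $t=0$ term of the $V^{\pi'}$ contributions cancel, yielding exactly $V^{\pi}(s_0) - V^{\pi'}(s_0)$ (assuming we can exchange limit and expectation, which holds under standard boundedness-of-rewards or discount-factor assumptions that are implicit in the MDP formulation of the paper).

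Second, I would condition on $(s_t, a_t)$ in the inner expectation. By the tower property and the definition of $Q^{\pi'}$, we have $\textbf{E}_{\pi}[r_t + \gamma V^{\pi'}(s_{t+1}) \mid s_t, a_t] = Q^{\pi'}(s_t, a_t)$, so the telescoped expression collapses to $\textbf{E}_{\pi}[\sum_{t=0}^{\infty} \gamma^t (Q^{\pi'}(s_t, a_t) - V^{\pi'}(s_t))] = \textbf{E}_{\pi}[\sum_{t=0}^{\infty} \gamma^t A^{\pi'}(s_t, a_t)]$. The only subtlety here is tracking that $a_t$ is drawn from $\pi(\cdot \mid s_t)$, which is what makes the residual equal the advantage of the other policy $\pi'$ evaluated along $\pi$-trajectories.

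Finally, I would reorder the sum and expectation, writing
\begin{equation*}
\textbf{E}_{\pi}\Bigl[\sum_{t=0}^{\infty} \gamma^t A^{\pi'}(s_t, a_t)\Bigr]
= \sum_{s} \Bigl(\sum_{t=0}^{\infty} \gamma^t \textbf{P}(s_t = s \mid s_0)\Bigr) \textbf{E}_{a \sim \pi(\cdot \mid s)}[A^{\pi'}(s, a)],
\end{equation*}
and then invoke the definition $d_{s_0}^{\pi}(s) = (1-\gamma)\sum_{t=0}^{\infty} \gamma^t \textbf{P}(s_t = s \mid s_0)$ to introduce the factor $\tfrac{1}{1-\gamma}$ and conclude. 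The only real obstacle is the interchange of sum and expectation in the telescoping step; under the standard discounted-MDP assumptions with bounded rewards this is immediate by dominated convergence, so no new machinery beyond what is already used in the paper is required.
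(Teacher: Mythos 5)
Your proof is correct and is exactly the classical telescoping argument of Kakade and Langford (2002), which is the reference the paper defers to for this lemma (the paper itself gives no proof beyond the citation). The telescoping identity, the tower-property step identifying the one-step residual with $Q^{\pi'}(s_t,a_t)-V^{\pi'}(s_t)$, and the re-weighting by $d_{s_0}^{\pi}$ are all handled correctly, including the convergence caveat for exchanging sum and expectation.
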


\begin{proof}
    See \citep{kakade2002approximately}.
\end{proof}

\clearpage

\section{Algorithm Pseudocode}
\label{App: Algorithm Pseudocode}

For completeness, we provide the implementation pseudocode of GDI-I$^3$, which is shown in \textbf{Algorithm} \ref{alg:i3}.

\begin{equation}
\label{Equ: i3 casa equ}
    \left\{
    \begin{aligned}
        &A=A_{\theta}\left(s_{t}\right),& 
        &V=V_{\theta}\left(s_{t}\right), \\
        &\bar{A}=A-E_{\pi}[A],& 
        &Q=\bar{A}+V. \\
    \end{aligned}
    \right.
\end{equation}

\begin{equation}
\label{Equ: i3 soft entropy}
    \lambda = (\tau_1, \tau_2, \epsilon), \ 
    \pi_{\theta_{\lambda}}=\varepsilon \cdot \underbrace{\operatorname{Softmax}\left(\frac{A}{\tau_{1}}\right)}_{Exploration}+(1-\varepsilon) \cdot \underbrace{\operatorname{Softmax}\left(\frac{A}{\tau_{2}}\right)}_{Exploitation}
\end{equation}

\begin{figure}[ht]
  \centering
  \begin{minipage}{.7\linewidth}
    \begin{algorithm}[H]
      \caption{GDI-I$^3$ Algorithm.}  
          \begin{algorithmic}
            \STATE Initialize Parameter Server (PS) and Data Collector (DC).
            \STATE
            \STATE // LEARNER
            \STATE Initialize $d_{push}$.
            \STATE Initialize $\theta$ as Eq. \eqref{Equ: i3 casa equ} and \eqref{Equ: i3 soft entropy}.
            \STATE Initialize $count = 0$.
            \WHILE{$True$}
                \STATE Load data from DC.
                \STATE Estimate $qs$ and $vs$ by proper off-policy algorithms.
                \STATE \ \ \ \ (For instance, ReTrace \eqref{Equ: retrace} for $qs$ and V-Trace \eqref{Equ: vtrace} for $vs$.)
                \STATE Update $\theta$ via policy gradient and policy evaluation.
                \IF{$count$ mod $d_{push}$ = 0}
                    \STATE Push $\theta$ to PS.
                \ENDIF
                \STATE $count \leftarrow count + 1$.
            \ENDWHILE
            \STATE
            \STATE // ACTOR
            \STATE Initialize $d_{pull}$, $M$.
            \STATE Initialize $\theta$ as Eq. \eqref{Equ: i3 casa equ} and \eqref{Equ: i3 soft entropy}.
            \STATE Initialize $\{\mathcal{B}_m\}_{m=1,...,M}$ and sample $\lambda$ as in \textbf{Algorithm} \ref{alg:bva}.
            \STATE Initialize $count = 0$, $G = 0$.
            \WHILE{$True$}
                \STATE Calculate $\pi_{\theta_{\lambda}}(\cdot | s)$.
                \STATE Sample $a \sim \pi_{\theta_{\lambda}}(\cdot | s)$.
                \STATE $s, r, done \sim p(\cdot | s, a)$.
                \STATE $G \leftarrow G + r$.
                \IF{$done$}
                    \STATE Update $\{\mathcal{B}_m\}_{m=1,...,M}$ with $(\lambda, G)$ as in \textbf{Algorithm} \ref{alg:bva}.
                    \STATE Send data to DC and reset the environment.
                    \STATE $G \leftarrow 0$.
                    \STATE Sample $\lambda$ as in \textbf{Algorithm} \ref{alg:bva}
                \ENDIF
                \IF{$count \mod d_{pull}$ = 0}
                    \STATE Pull $\theta$ from PS and update $\theta$.
                \ENDIF
                \STATE $count \leftarrow count + 1$.
            \ENDWHILE
          \end{algorithmic}
        \label{alg:i3}
    \end{algorithm}
  \end{minipage}
\end{figure}

\clearpage

For completeness, we provide the implementation pseudocode of GDI-H$^3$, which is shown in \textbf{Algorithm} \ref{alg:h3}.

\begin{equation}
\label{Equ: h3 casa equ}
    \begin{aligned}
    \left\{
    \begin{aligned}
        &A_1=A_{\theta_1}\left(s_{t}\right),& 
        &V_1=V_{\theta_1}\left(s_{t}\right), \\
        &\bar{A}_1=A_1-E_{\pi}[A_1],& 
        &Q_1=\bar{A}_1+V_1. \\
    \end{aligned}
    \right.\\
    \left\{
    \begin{aligned}
        &A_2=A_{\theta_2}\left(s_{t}\right),& 
        &V_2=V_{\theta_2}\left(s_{t}\right), \\
        &\bar{A}_2=A_2-E_{\pi}[A_2],& 
        &Q_2=\bar{A}_2+V_2. \\
    \end{aligned}
    \right.
    \end{aligned}
\end{equation}

\begin{equation}
\label{Equ: h3 soft entropy}
    \lambda = (\tau_1, \tau_2, \epsilon), \ 
    \pi_{\theta_{\lambda}}=\varepsilon \cdot \operatorname{Softmax}\left(\frac{A_1}{\tau_{1}}\right)+(1-\varepsilon) \cdot \operatorname{Softmax}\left(\frac{A_2}{\tau_{2}}\right)
\end{equation}

\begin{figure}[ht]
  \centering
  \begin{minipage}{.7\linewidth}
    \begin{algorithm}[H]
      \caption{GDI-H$^3$ Algorithm.}  
          \begin{algorithmic}
            \STATE Initialize Parameter Server (PS) and Data Collector (DC).
            \STATE
            \STATE // LEARNER
            \STATE Initialize $d_{push}$.
            \STATE Initialize $\theta$  as Eq. \eqref{Equ: h3 casa equ} and \eqref{Equ: h3 soft entropy}.
            \STATE Initialize $count = 0$.
            \WHILE{$True$}
                \STATE Load data from DC.
                \STATE Estimate $qs_1, qs_2$ and $vs_1, vs_2$ by proper off-policy algorithms.
                \STATE \ \ \ \ (For instance, ReTrace \eqref{Equ: retrace} for $qs1, qs_2$ and V-Trace \eqref{Equ: vtrace} for $vs_1, vs_2$.)
                \STATE Update $\theta_1, \theta_2$ via policy gradient and policy evaluation, respectively.
                \IF{$count$ mod $d_{push}$ = 0}
                    \STATE Push $\theta_1, \theta_2$ to PS.
                \ENDIF
                \STATE $count \leftarrow count + 1$.
            \ENDWHILE
            \STATE
            \STATE // ACTOR
            \STATE Initialize $d_{pull}$, $M$.
            \STATE Initialize $\theta_1, \theta_2$ as Eq. \eqref{Equ: h3 casa equ} and \eqref{Equ: h3 soft entropy}.
            \STATE Initialize $\{\mathcal{B}_m\}_{m=1,...,M}$ and sample $\lambda$ as in \textbf{Algorithm} \ref{alg:bva}.
            \STATE Initialize $count = 0$, $G = 0$.
            \WHILE{$True$}
                \STATE Calculate $\pi_{\theta_{\lambda}}(\cdot | s)$.
                \STATE Sample $a \sim \pi_{\theta_{\lambda}}(\cdot | s)$.
                \STATE $s, r, done \sim p(\cdot | s, a)$.
                \STATE $G \leftarrow G + r$.
                \IF{$done$}
                    \STATE Update $\{\mathcal{B}_m\}_{m=1,...,M}$ with $(\lambda, G)$ as in \textbf{Algorithm} \ref{alg:bva}.
                    \STATE Send data to DC and reset the environment.
                    \STATE $G \leftarrow 0$.
                    \STATE Sample $\lambda$ as in \textbf{Algorithm} \ref{alg:bva}
                \ENDIF
                \IF{$count \mod d_{pull}$ = 0}
                    \STATE Pull $\theta$ from PS and update $\theta$.
                \ENDIF
                \STATE $count \leftarrow count + 1$.
            \ENDWHILE
          \end{algorithmic}
        \label{alg:h3}
    \end{algorithm}
  \end{minipage}
\end{figure}

\clearpage

\section{Adaptive Controller Formalism}
\label{Sec: appendix MAB}

In practice, we use a Bandits Controller (BC) to adaptively control the behavior sampling distribution. More details on Bandits can be found in \citep{garivier2008upper}. The whole algorithm is shown in \textbf{Algorithm} \ref{alg:bva}. As the behavior policy can be parameterized and thus sampling behaviors from the policy space is equivalent to sampling parameters $x$ from parameter space. 

Let's firstly define a bandit as $B = Bandit(mode, l, r, lr, d, acc, ta, to, \textbf{w}, \textbf{N})$.
\begin{itemize}
    \item $mode$ is the mode of sampling, with two choices, $argmax$ and $random$, wherein $argmax$ greedily chooses the behaviors with top estimated value from the policy space, and $random$ samples behaviors according to a distribution calculated by $Softmax(V)$.
    \item $l$ is the left boundary of the parameter space, and each $x$ is clipped to $x = \max \{x, l\}$.
    \item $r$ is the right boundary of the parameter space, and each $x$ is clipped to $x = \min \{x, r\}$.
    \item $acc$ is the accuracy of space to be optimized, where each $x$ is located in the $\lfloor (\min\{\max\{x, l\}, r\} - l) / acc \rfloor$th block.
    \item tile coding is a representation method of continuous space \citep{sutton}, and each kind of tile coding can be uniquely determined by $l$, $r$, $to$ and $ta$, wherein $to$ represents the tile offset and $ta$ represents the accuracy of the tile coding.
    \item $to$ is the offset of each tile coding, which represents the relative offset of the basic coordinate system (normally we select the space to be optimized as basic coordinate system).
    \item $ta$ is the accuracy of each tile coding, where each $x$ is located in the $\lfloor (\min\{\max\{x-to, l\}, r\} - l) / ta \rfloor$th tile.
    \item $M_{btt}$ represents block-to-tile, which is a mapping from the block of the original space to the tile coding space.
    \item $M_{ttb}$ represents tile-to-block, which is a mapping from the tile coding space to the block of the original space.
    \item $\textbf{w}$ is a vector in $\mathbf{R}^{\lfloor (r-l) / ta \rfloor}$, which represents the weight of each tile.
    \item $\textbf{N}$ is a vector in $\mathbf{R}^{\lfloor (r-l) / ta \rfloor}$, which counts the number of sampling of each tile.
    \item $lr$ is the learning rate.
    \item $d$ is an integer, which represents how many candidates is provided by each bandit when sampling.
\end{itemize}

During the evaluation process, we evaluate the value of the $i$th tile by
\begin{equation}
\label{eq:bandit_eval}
V_i = \frac{\sum_{k}^{M_{btt}(block_i)} \textbf{w}_k}{len(M_{btt}(block_i))}
\end{equation}

During the training process, for each sample $(x, g)$, where $g$ is the target value. Since $x$ locates in the $j$th tile of $k$th tile\_coding, we update $B$ by
\begin{equation}
\label{eq:bandit_update}
\left\{
\begin{aligned}
&j = \lfloor (\min\{\max\{x-to_{k}, l\}, r\} - l) / ta_{k} \rfloor, \\
&\textbf{w}_j 
\leftarrow \textbf{w}_j + lr * \left(g - V_i\right)\\
& \textbf{N}_j \leftarrow \textbf{N}_j + 1
\end{aligned}
\right.
\end{equation}

During the sampling process, we firstly evaluate $\mathcal{B}$ by \eqref{eq:bandit_eval} and get $(V_1, ..., V_{\lfloor (r-l) / acc \rfloor})$.
We calculate the score of $i$th tile by
\begin{equation}
\label{eq:bandit_score}
score_i = \frac{V_i - \mu(\{V_j\}_{j=1,...,\lfloor(r-l)/acc\rfloor})}{\sigma(\{V_j\}_{j=1,...,\lfloor(r-l)/acc\rfloor})} + c \cdot \sqrt{\frac{\log (1 + \sum_j \textbf{N}_j)}{1 + \textbf{N}_i}}.
\end{equation}
For different $mode$s, we sample the candidates by the following mechanism,
\begin{itemize}
    \item if $mode$ = $argmax$, find blocks with top-$d$ $score$s, then sample $d$ candidates from these blocks, one uniformly from a block;
    \item if $mode$ = $random$, sample $d$ blocks with $score$s as the logits without replacement, then sample $d$ candidates from these blocks, one uniformly from a block;
\end{itemize}

In practice, we define a set of bandits $\mathcal{B}_m = \{B_m\}_{m=1,...,M}$.
At each step, we sample $d$ candidates $\{c_{m, i}\}_{i=1,...,d}$ from each $B_m$, so we have a set of $m \times d$ candidates $\{c_{m, i}\}_{m=1,...,M; i=1,...,d}$.
Then we sample uniformly from these $m \times d$ candidates to get $x$. 
At last, we transform the selected $x$ to $\alpha=\{\tau_1,\tau_2,\epsilon\}$ by $\tau_{1,2} = \frac{1}{\exp (x_{1,2}) - 1}$ and $\epsilon = x_{3}$
When we receive $(\alpha, g)$,  we transform $\alpha$ to $x$ by $x_{1,2} = \log (1 + 1 / \tau_{1,2})$, and  $x_{3} = \epsilon$.
Then we update each $B_m$ by \eqref{eq:bandit_update}.

\begin{figure}[ht]
  \centering
  \begin{minipage}{.9\linewidth}
    \begin{algorithm}[H]
      \caption{Bandits Controller}  
          \begin{algorithmic}
            \FOR{$m=1,...,M$}
                \STATE Sample $mode \sim \{argmax, random\}$ and other initialization parameters
                \STATE Initialize $B_m = Bandit(mode, l, r, lr, d, acc, to, ta, \textbf{w}, \textbf{N})$
                \STATE Ensemble $B_m$ to constitute $\mathcal{B}_m$ 
            \ENDFOR
            \WHILE{$True$}  
                \FOR{$m=1,...,M$}
                    \STATE Evaluate $\mathcal{B}_m$ by \eqref{eq:bandit_eval}.
                    \STATE Sample candidates $c_{m, 1}, ..., c_{m, d}$  from $\mathcal{B}_m$ via \eqref{eq:bandit_score} following its $mode$.
                \ENDFOR
                \STATE Sample $x$ from $\{c_{m, i}\}_{m=1,...,M; i=1,...,d}$.
                \STATE Execute $x$ and receive the return $G$.
                \FOR{$m=1,...,M$}
                    \STATE Update $\mathcal{B}_m$ with $(x, G)$ by \eqref{eq:bandit_update}.
                \ENDFOR
            \ENDWHILE
          \end{algorithmic}  
        \label{alg:bva}
    \end{algorithm}
  \end{minipage}
\end{figure}

\clearpage

\section{Experiment Details}
\label{sec:app Experiment Details}

We evaluated all agents on 57 Atari 2600 games from the arcade learning environment  \citep[ALE]{ale} by recording the average score of the population of agents during training. We demonstrate our multivariate evaluation system in Tab. \ref{Tab: multivariate evaluation system}, and we will describe more details in the following. Besides, all the experiment is accomplished using a single CPU with 92 cores and a single Tesla-V100-SXM2-32GB GPU.

Noting that episodes will be truncated at 100K frames (or 30 minutes of simulated play) as other baseline algorithms \citep{rainbow,agent57,laser,ngu,r2d2}, and thus we calculate the mean game time over 57 games which is called Game Time. In addition to comparing the mean and median human normalized scores (HNS), we also report the performance based on human world records among these algorithms and the related learning efficiency to further emphasize the significance of our algorithm. Inspired by \citep{atarihuman}, human world records normalized score (HWRNS) and SABER are  better descriptors for evaluating algorithms on human top level on Atari games, which simultaneously give rise to more challenges and lead the related research into a new journey to train the superhuman agent instead of  just paying attention to  the human average level. The learning is the ratio of the related evaluation criterion (such as HWRNS, HNS or SABER) and training frames.

\begin{table}[!hb]
\renewcommand\arraystretch{2}
	\centering
	\caption{Multivariate evaluation system}
	\label{Tab: multivariate evaluation system}
	\begin{tabular}{c c}
		\toprule
		\textbf{Evaluation Criterion} &\textbf{Computing Formula}\\
		\midrule
		Game Time    & $\frac{\text{Num.Frames}}{\text{100000*2*24*365}}$\\
		
		HNS         & $\frac{G-G_{\text{random}}}{G_{\text{human}}-G_{\text{random}}}$\\
		
		Human World Record Breakthrough&$\sum_{i=1}^{57}(G_{i}>=G_{\text{Human World Records}})$\\
        
        Learning Efficiency & $\frac{\text{Related Evaluation Criterion}}{\text{Num.Frames}}$\\
        
		HWRNS          & $\frac{G-G_{\text{random}}}{G_{\text{Human World Records}}-G_{\text{random}}}$\\
		
		SABER          & $\min\{\frac{G-G_{\text{random}}}{G_{\text{Human World Records}}-G_{\text{random}}},200\%\}$\\
		
		\bottomrule
	\end{tabular} 
\end{table}

\clearpage
\section{Hyperparameters}
\label{Sec: appendix hyperparameters}
\subsection{Atari pre-processing hyperparameters}
\label{Sec: Atari pre-processing hyperparameters}

In this section we detail the hyperparameters we use to pre-process the environment frames received from the Arcade Learning Environment. The hyperparameters that we used in all experiments are almost the same as Agent57  \citep{agent57}, NGU \citep{ngu}, MuZero \citep{muzero} and R2D2 \citep{r2d2}.
In Tab. \ref{tab:ale_process} we detail these hyperparameters. 

\begin{table}[H]
\begin{center}
\begin{tabular}{l@{\hspace{.43cm}}l@{\hspace{.22cm}}}
\toprule
\textbf{Hyperparameter} & \textbf{Value}  \\
\midrule
Random modes and difficulties & No \\
Sticky action probability  & 0.0 \\
Life information & Not allowed \\
Image Size & (84, 84) \\
Num. Action Repeats & 4 \\
Num. Frame Stacks & 4 \\
Action Space & Full \\
Max episode length   & 100000 \\
Random noops range  & 30\\
Grayscaled/RGB      & Grayscaled\\
\bottomrule
\end{tabular}
\caption{Atari pre-processing hyperparameters.}
\label{tab:ale_process}
\end{center}
\end{table}

\clearpage

\subsection{Hyperparameters Used}
\label{app: Hyperparameters Used}
In this section we detail the hyperparameters we used , which is demonstrated in Tab. \ref{tab:fixed_model_hyperparameters_atari}. We also include the hyperparameters we use for the  UCB bandit.

\begin{table}[H]
\begin{center}
\begin{tabular}{l@{\hspace{.43cm}}l@{\hspace{.22cm}}}
\toprule
\textbf{Parameter} & \textbf{Value}  \\
\midrule
Num. Frames & 200M \\
Replay & 2 \\
Num. Environments & 160 \\
GDI-I$^3$ Reward Shape & $\log (abs (r) + 1.0) \cdot (2 \cdot 1_{\{r \geq 0\}} - 1_{\{r < 0\}})$ \\
GDI-H$^3$ Reward Shape 1 & $\log (abs (r) + 1.0) \cdot (2 \cdot 1_{\{r \geq 0\}} - 1_{\{r < 0\}})$ \\
GDI-H$^3$ Reward Shape 2 & $sign(r) \cdot ((abs (r) + 1.0)^{0.25} - 1.0) + 0.001 \cdot r$ \\
Reward Clip & No \\
Intrinsic Reward & No \\
Entropy Regularization & No \\
Burn-in & 40 \\
Seq-length & 80 \\
Burn-in Stored Recurrent State & Yes \\
Bootstrap & Yes \\
Batch size & 64 \\
Discount ($\gamma$) & 0.997 \\
$V$-loss Scaling ($\xi$) & 1.0 \\
$Q$-loss Scaling ($\alpha$) & 10.0 \\
$\pi$-loss Scaling ($\beta$) & 10.0 \\
Importance Sampling Clip $\Bar{c}$ & 1.05 \\
Importance Sampling Clip $\Bar{\rho}$ & 1.05 \\
Backbone & IMPALA,deep \\
LSTM Units & 256 \\
Optimizer & Adam Weight Decay \\
Weight Decay Rate & 0.01 \\
Weight Decay Schedule & Anneal linearly to 0 \\
Learning Rate & 5e-4 \\
Warmup Steps & 4000 \\
Learning Rate Schedule & Anneal linearly to 0 \\
AdamW $\beta_1$ & 0.9 \\
AdamW $\beta_2$ & 0.98 \\
AdamW $\epsilon$ & 1e-6 \\
AdamW Clip Norm & 50.0 \\
Auxiliary Forward Dynamic Task & Yes \\
Auxiliary Inverse Dynamic Task & Yes \\
Learner Push Model Every $N$ Steps & 25 \\
Actor Pull Model Every $N$ Steps & 64 \\
Num. Bandits & 7 \\
Bandit Learning Rate & Uniform([0.05, 0.1, 0.2]) \\
Bandit Tiling Width & Uniform([2, 3, 4]) \\
Num. Bandit Candidates & 3 \\
Offset of Tile coding & Uniform([0, 60]) \\
Accuracy of Tile coding & Uniform([2, 3, 4]) \\
Accuracy of Search Range for [$1/\tau_1$,$1/\tau_2$,$\epsilon$]& [1.0, 1.0, 0.1]\\
Fixed Selection for [$1/\tau_1$,$1/\tau_2$,$\epsilon$]&[1.0,0.0,1.0]\\
Bandit UCB Scaling & 1.0 \\
Bandit Search Range for $1/\tau_1$ & [0.0, 50.0] \\
Bandit Search Range for $1/\tau_2$ & [0.0, 50.0] \\
Bandit Search Range for $\epsilon$ & [0.0, 1.0] \\
\bottomrule
\end{tabular}
\caption{Hyperparameters for Atari experiments.}
\label{tab:fixed_model_hyperparameters_atari}
\end{center}
\end{table}
\clearpage

\section{Experimental Results}
\label{appendix: experiment results}

In this section, we report the performance of GDI-H$^3$, GDI-I$^3$ and many well-known SOTA algorithms including both the model-based and model-free methods. First of all, we summarize the performance of all the algorithms over all the evaluation criteria of our evaluation system in App. \ref{app: Full Performance Comparison} which is mentioned in App. \ref{sec:app Experiment Details}. In the next three parts, we visualize the performance of GDI-H$^3$, GDI-I$^3$ over HNS in App. \ref{app: Figure of HNS}, HWRNS in App. \ref{app: Figure of HWRNS}, SABER in App. \ref{app: Figure of SABER} via histogram. Furthermore, we details all the original scores of all the algorithms and provide raw data that calculates those  evaluation criteria, wherein we first provide all the human world records in 57 Atari games and calculate the HNS in App. \ref{app: Atari Games Table of Scores Based on Human Average Records}, HWRNS in App. \ref{app: Atari Games Table of Scores Based on Human World Records} and SABER in App. \ref{app: Atari Games Table of Scores Based on SABER} of all 57 Atari games. We further provide all the evaluation curve of GDI-H$^3$, GDI-I$^3$ over 57 Atari games in App. \ref{app: Atari Games Learning Curves}.

\subsection{Full Performance Comparison}
\label{app: Full Performance Comparison}
In this part, we summarize the performance of all mentioned algorithms over all the evaluation criteria in Tab. \ref{Tab: full performance comparison}. In the following sections we will detail the performance of each algorithm on all Atari games one by one. 

\begin{table}[H]
\scriptsize
\centering
\setlength{\tabcolsep}{1.0pt}
\begin{tabular}{ c c c c c c c c c c}
\toprule
Algorithms & Num. Frames & Game Time & HWRB & Mean HNS & Median HNS & Mean HWRNS & Median HWRNS & Mean SABER  & Median SABER\\
\midrule
GDI-I$^3$        &2E+8    & 0.114     & \textbf{\GDIIHWRB} & \textbf{\GDIImeanhns} & \textbf{\GDIImedianhns}      &\textbf{\GDIImeanHWRNS} & \textbf{\GDIImedianHWRNS}& \textbf{\GDIImeanSABER} & \textbf{\GDIImedianSABER}\\
Rainbow     &2E+8    & 0.114     & 4  & 873.97  & 230.99     &28.39  & 4.92 & 28.39 & 4.92\\
IMPALA      &2E+8    & 0.114     & 3  & 957.34  & 191.82     &34.52  & 4.31 & 29.45 & 4.31\\
LASER       &2E+8    & 0.114     & 7  & 1741.36 & 454.91     &45.39  & 8.08 & 36.78 & 8.08\\
\midrule
\midrule
GDI-I$^3$        &2E+8    & 0.114    & \GDIIHWRB & \textbf{\GDIImeanhns} & \GDIImedianhns     &\GDIImeanHWRNS &\GDIImedianHWRNS & \GDIImeanSABER & \GDIImedianSABER\\
R2D2        &1E+10   & 5.7     & 15 & 3374.31 & 1342.27    &98.78  & 33.62& 60.43 & 33.62\\
NGU         &3.5E+10 & 19.9    & 8  & 3169.9  & 1208.11    &76.00  & 21.19& 50.47 & 21.19\\
Agent57     &1E+11   & 57      & \textbf{18} & 4763.69 & \textbf{1933.49}    &\textbf{125.92} & \textbf{43.62}& \textbf{76.26} & \textbf{43.62}\\
\midrule
\midrule
GDI-I$^3$        &2E+8    & 0.114     & \GDIIHWRB & \textbf{\GDIImeanhns} & \GDIImedianhns     &\GDIImeanHWRNS &\GDIImedianHWRNS & \GDIImeanSABER & \GDIImedianSABER\\
SimPLe      &1E+6    & 0.0005  & 0  & 25.30   & 5.55       &4.67   & 0.13 & 4.67  & 0.13\\
DreamerV2   &2E+8    & 0.114     & 3  & 631.18  & 161.96     &37.90  & 4.22 & 27.22 & 4.22\\
MuZero      &2E+10   & 11.4     & \textbf{19} & 4996.20 & \textbf{2041.12}    &\textbf{152.10} & \textbf{49.80}& \textbf{71.94} & \textbf{49.80} \\
\midrule
\midrule
GDI-I$^3$        &2E+8    & 0.114    & \textbf{\GDIIHWRB} & \textbf{\GDIImeanhns} & \GDIImedianhns     &\GDIImeanHWRNS &\GDIImedianHWRNS & \GDIImeanSABER & \GDIImedianSABER\\
Muesli      &2E+8    & 0.114     & 5           &  2538.66         & 1077.47    & 75.52          & 24.86  & 48.74 & 24.86 \\
Go-Explore  &1E+10   & 5.7     & 15 & 4989.94 & \textbf{1451.55}    &116.89 & \textbf{50.50}& \textbf{71.80} & \textbf{50.50}\\
\midrule
\midrule
GDI-H$^3$   &2E+8    & 0.114     &\textbf{ \GDIHHWRB} & \textbf{\GDIHmeanhns} & \textbf{\GDIHmedianhns}     &\textbf{\GDIHmeanHWRNS} &\textbf{\GDIHmedianHWRNS} & \textbf{\GDIHmeanSABER} & \textbf{\GDIHmedianSABER}\\
Rainbow     &2E+8    & 0.114     & 4           & 873.97               & 230.99              &28.39                & 4.92             & 28.39          & 4.92\\
IMPALA      &2E+8    & 0.114     & 3           & 957.34               & 191.82              &34.52                & 4.31             & 29.45          & 4.31\\
LASER       &2E+8    & 0.114     & 7           & 1741.36              & 454.91              &45.39                & 8.08             & 36.78          & 8.08\\
\midrule
\midrule
GDI-H$^3$   &2E+8    & 0.114     & \textbf{\GDIHHWRB} & \textbf{\GDIHmeanhns} & \GDIHmedianhns     &\textbf{\GDIHmeanHWRNS} &\textbf{\GDIHmedianHWRNS} & \GDIHmeanSABER &\textbf{ \GDIHmedianSABER}\\
R2D2        &1E+10   & 5.7     & 15            & 3374.31              & 1342.27             &98.78                & 33.62          & 60.43             & 33.62\\
NGU         &3.5E+10 & 19.9    & 8             & 3169.9               & 1208.11             &76.00                & 21.19          & 50.47             & 21.19\\
Agent57     &1E+11   & 57      & 18            & 4763.69              & \textbf{1933.49}    &125.92               & 43.62 & \textbf{76.26}    & 43.62\\
\midrule
\midrule
GDI-H$^3$   &2E+8    & 0.114     & \textbf{\GDIHHWRB} & \textbf{\GDIHmeanhns} & \GDIHmedianhns     &\textbf{\GDIHmeanHWRNS} &\textbf{\GDIHmedianHWRNS} & \GDIHmeanSABER & \textbf{\GDIHmedianSABER}\\
SimPLe      &1E+6    & 0.0005  & 0             & 25.30                & 5.55                &4.67                  & 0.13              & 4.67              & 0.13\\
DreamerV2   &2E+8    & 0.114     & 3             & 631.18               & 161.96              &37.90                 & 4.22              & 27.22             & 4.22\\
MuZero      &2E+10   & 11.4     & 19   & 4996.20              & \textbf{2041.12}    &152.10       & 49.80    & \textbf{71.94}    & 49.80 \\
\midrule
\midrule
GDI-H$^3$   &2E+8    & 0.114     & \textbf{\GDIHHWRB} & \textbf{\GDIHmeanhns} & \GDIHmedianhns     &\textbf{\GDIHmeanHWRNS} &\textbf{\GDIHmedianHWRNS} & \GDIHmeanSABER & \textbf{\GDIHmedianSABER}\\
Muesli      &2E+8    & 0.114     & 5             &  2538.66              & 1077.47               & 75.52             & 24.86             & 48.74             & 24.86 \\
Go-Explore  &1E+10   & 5.7    & 15            & 4989.94               & \textbf{1451.55}      &116.89             & 50.50    & \textbf{71.80}    & 50.50\\
\bottomrule
\end{tabular}
\caption{Full performance comparison on Atari.}
\label{Tab: full performance comparison}
\end{table}

\normalsize
\clearpage

\subsection{Figure of HNS}
\label{app: Figure of HNS}
In this part, we begin to visualize the HNS using GDI-H$^3$ and  GDI-I$^3$ in all 57 games. The HNS histogram of GDI-I$^3$ is illustrated in Fig. \ref{fig: HNS of GDII}. The HNS histogram of GDI-H$^3$ is illustrated in Fig. \ref{fig: HNS of GDIH}. In addition, we mark the error bars in the histogram with respect to the random seed after running experiments multiple times.

\begin{figure*}[!ht]
	\subfigure{
		\includegraphics[width=\textwidth,height=0.5\textheight]{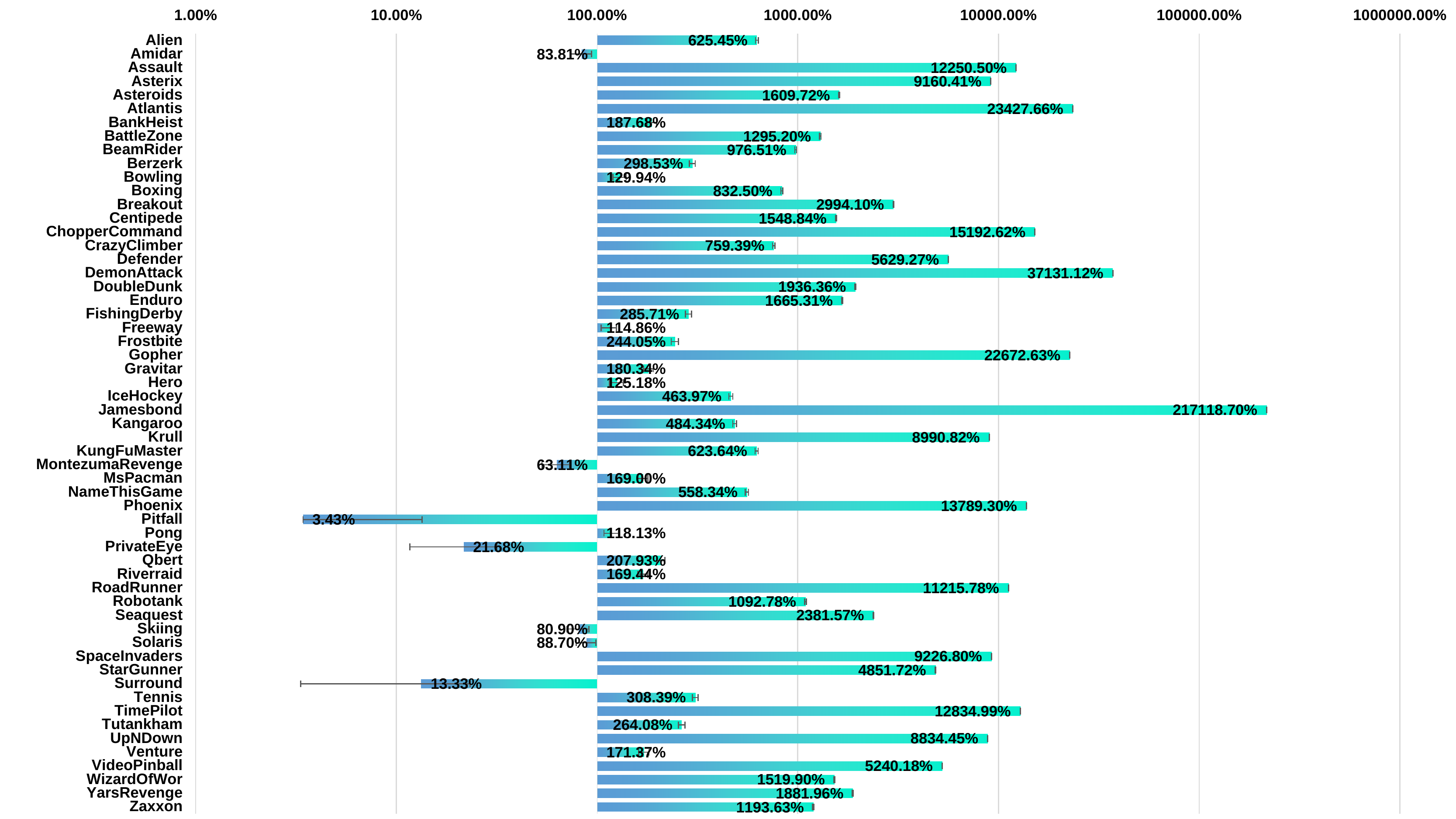}
	}
	\caption{HNS (\%) of Atari 57 games using GDI-I$^3$.}
	\label{fig: HNS of GDII}
\end{figure*}

\begin{figure*}[!ht]
	\subfigure{
		\includegraphics[width=\textwidth,height=0.5\textheight]{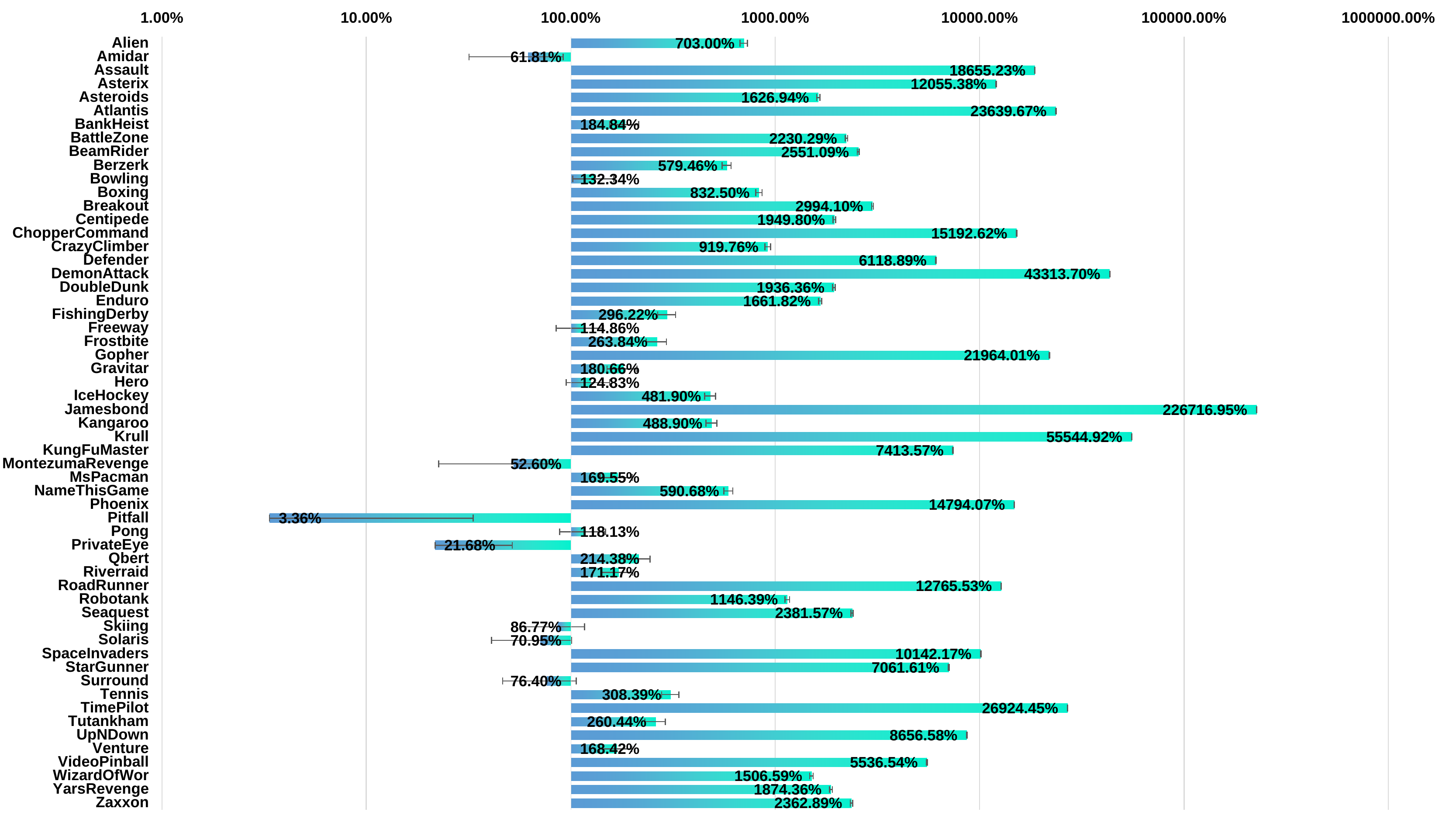}
	}
	\caption{HNS (\%) of Atari 57 games using GDI-H$^3$.}
	\label{fig: HNS of GDIH}
\end{figure*}

\clearpage

\subsection{Figure of HWRNS}
\label{app: Figure of HWRNS}
In this part, we begin to visualize the HWRNS \citep{dreamerv2,atarihuman} using GDI-H$^3$ and  GDI-I$^3$ in all 57 games. The HWRNS histogram of GDI-I$^3$ is illustrated in Fig. \ref{fig: HWRNS of GDII}. The HWRNS histogram of GDI-H$^3$ is illustrated in Fig. \ref{fig: HWRNS of GDIH}. In addition, we mark the error bars in the histogram with respect to the random seed after running experiments multiple times.

\begin{figure*}[!ht]
	\subfigure{
		\includegraphics[width=\textwidth,height=0.5\textheight]{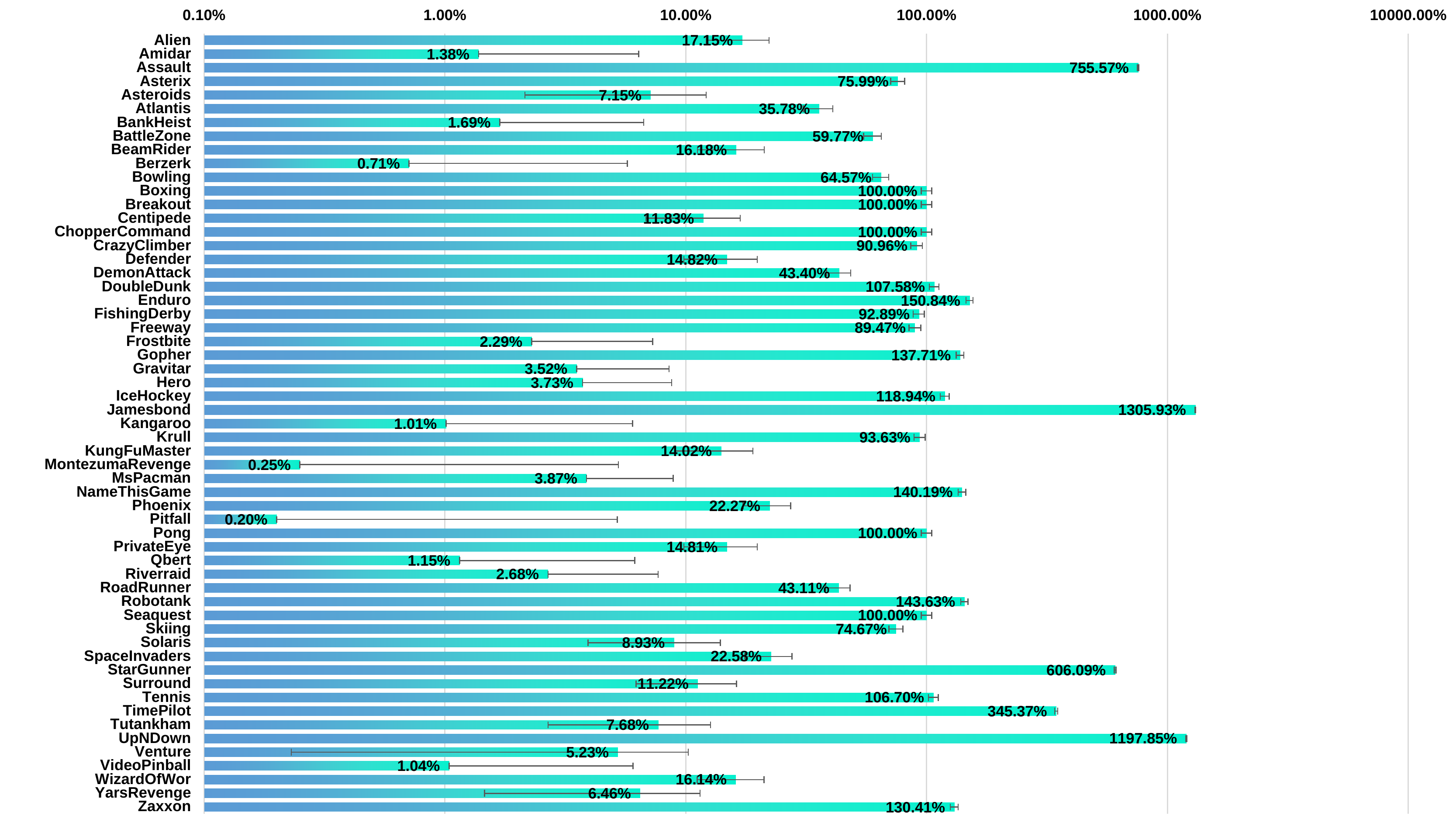}
	}
	\caption{HWRNS (\%) of Atari 57 games using GDI-I$^3$.}
	\label{fig: HWRNS of GDII}
\end{figure*}

\begin{figure*}[!ht]
	\subfigure{
		\includegraphics[width=\textwidth,height=0.5\textheight]{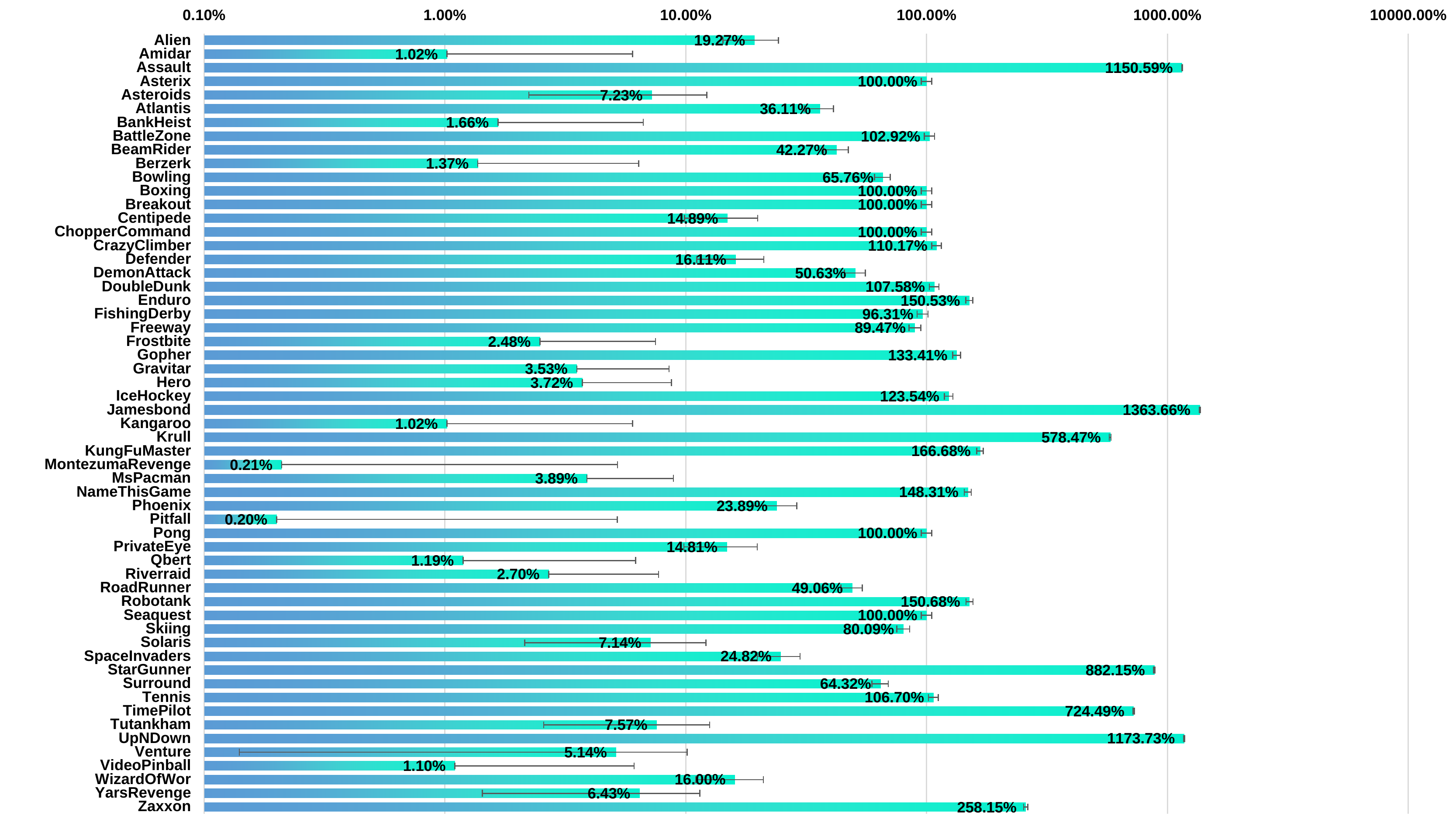}
	}
	\caption{HWRNS (\%) of Atari 57 games using GDI-H$^3$.}
	\label{fig: HWRNS of GDIH}
\end{figure*}

\clearpage

\subsection{Figure of SABER}
\label{app: Figure of SABER}

In this part, we begin to visualize the HWRNS \citep{dreamerv2,atarihuman} using GDI-H$^3$ and  GDI-I$^3$ in all 57 games. The HWRNS histogram of GDI-I$^3$ is illustrated in Fig. \ref{fig: SABER of GDII}. The HWRNS histogram of GDI-H$^3$ is illustrated in Fig. \ref{fig: SABER of GDIH}. In addition, we mark the error bars in the histogram with respect to the random seed after running experiments multiple times.

\begin{figure*}[!ht]
	\subfigure{
		\includegraphics[width=\textwidth,height=0.5\textheight]{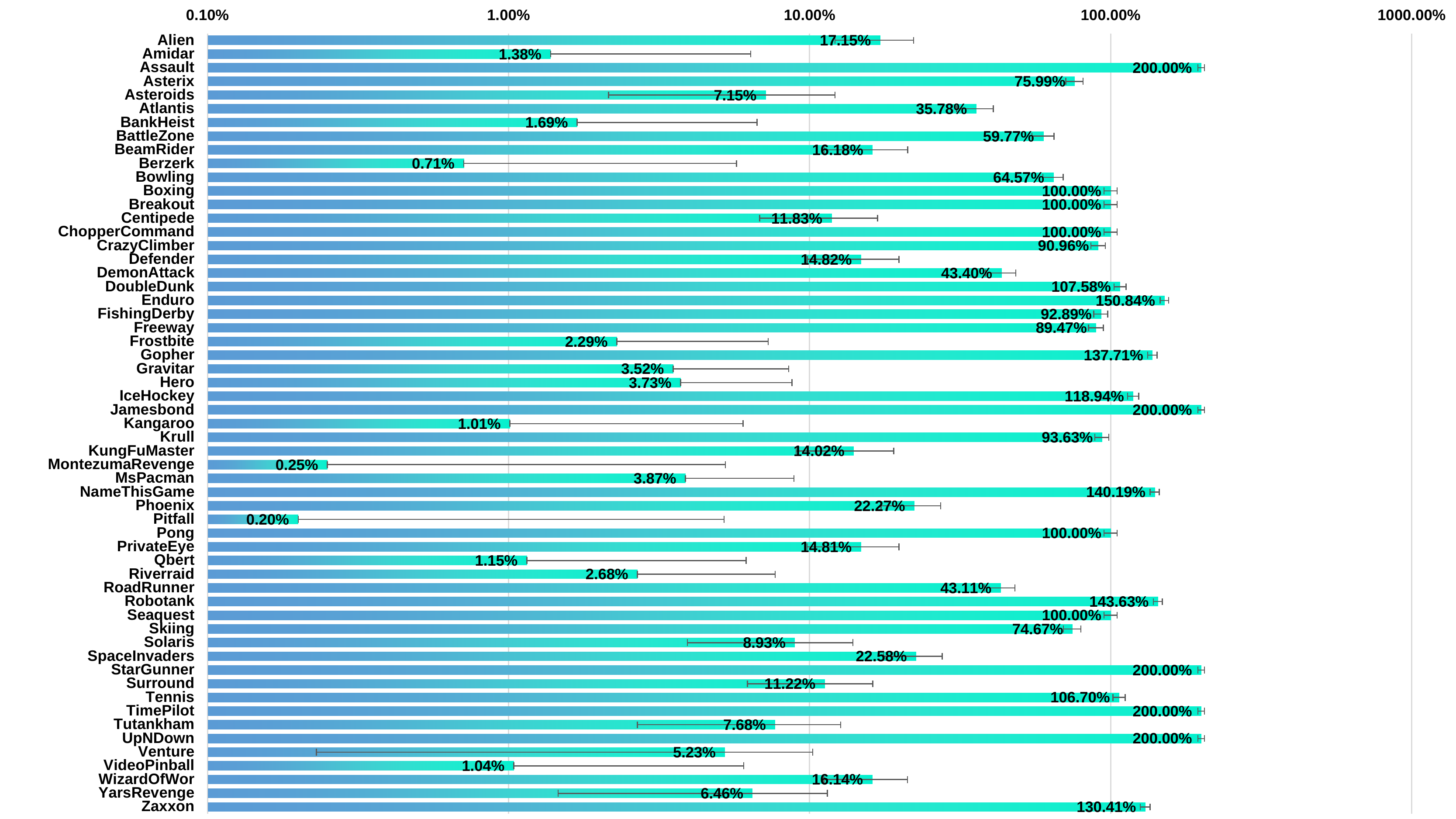}
	}
	\caption{SABER (\%) of Atari 57 games using GDI-I$^3$.}
	\label{fig: SABER of GDII}
\end{figure*}

\begin{figure*}[!ht]
	\subfigure{
		\includegraphics[width=\textwidth,height=0.5\textheight]{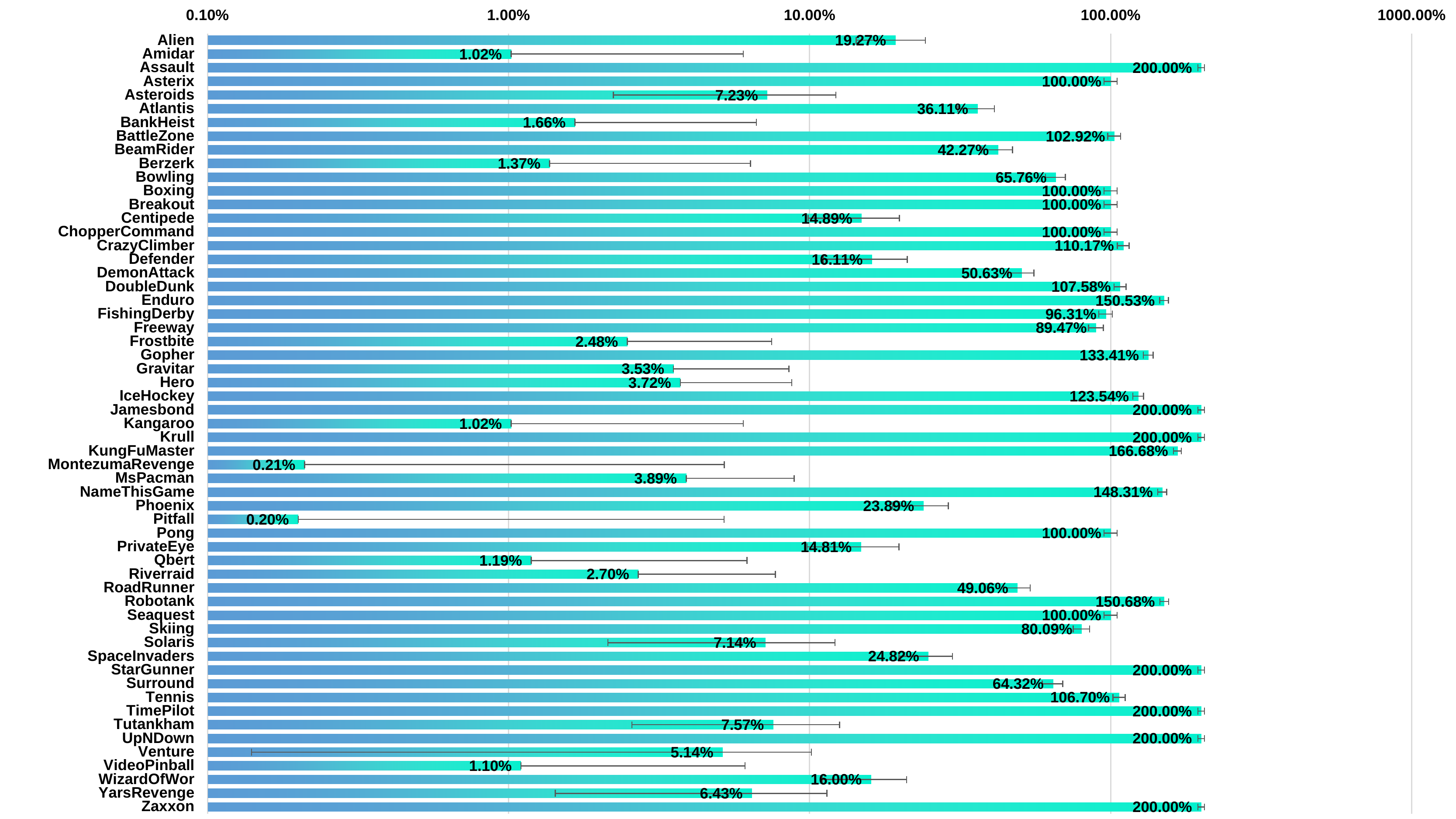}
	}
	\caption{SABER (\%) of Atari 57 games using GDI-H$^3$. }
	\label{fig: SABER of GDIH}
\end{figure*}
\clearpage

\subsection{Atari Games Table of Scores Based on Human Average Records}
\label{app: Atari Games Table of Scores Based on Human Average Records}
In this part, we detail the raw score of several representative SOTA algorithms including the SOTA 200M model-free algorithms, SOTA 10B+ model-free algorithms, SOTA model-based algorithms and other SOTA algorithms.\footnote{200M and 10B+ represent the training scale.} Additionally, we calculate the Human Normalized Score (HNS) of each game with each algorithms. First of all, we demonstrate the sources of the scores that we used.
Random scores and average human's scores are from \citep{agent57}.
Rainbow's scores are from \citep{rainbow}.
IMPALA's scores are from \citep{impala}.
LASER's scores are from \citep{laser}, no sweep at 200M.
As there are many versions of R2D2 and NGU, we use original papers'.
R2D2's scores are from \citep{r2d2}.
NGU's scores are from \citep{ngu}.
Agent57's scores are from \citep{agent57}.
MuZero's scores are from \citep{muzero}.
DreamerV2's scores are from \citep{dreamerv2}.
SimPLe's scores are form \citep{modelbasedatari}.
Go-Explore's scores are form \citep{goexplore}.
Muesli's scores are form \citep{muesli}.
In the following we detail the raw scores and HNS of each algorithms on 57 Atari games.
\clearpage

\subsubsection{Comparison with SOTA 200M Model-Free Algorithms on HNS}

\begin{table}[!hb]
\scriptsize
\begin{center}
\setlength{\tabcolsep}{1.0pt}
\begin{tabular}{ c c c c c c c c c c c c c}
\toprule
Games & RND & HUMAN & RAINBOW & HNS(\%) & IMPALA & HNS(\%) & LASER & HNS(\%) & GDI-I$^3$ & HNS(\%) & GDI-H$^3$ & HNS(\%)\\
\midrule
Scale  &     &       & 200M   &       &  200M    &        & 200M   &         &  200M   &  &  200M   &\\
\midrule
 alien  & 227.8 & 7127.8            & 9491.7 & 134.26 & 15962.1  & 228.03 & 35565.9 & 512.15                        &43384             &625.45         &\textbf{48735}             &\textbf{703.00}      \\
 amidar & 5.8   & 1719.5            & \textbf{5131.2} & \textbf{299.08} & 1554.79  & 90.39  & 1829.2  & 106.4       &1442              &83.81          &1065                       &61.81       \\
 assault & 222.4 & 742              & 14198.5 & 2689.78 & 19148.47 & 3642.43  & 21560.4 & 4106.62                   &63876             &12250.50       &\textbf{97155}             &\textbf{18655.23}    \\
 asterix & 210   & 8503.3           & 428200 & 5160.67 & 300732   & 3623.67  & 240090  & 2892.46                    &759910            &9160.41        &\textbf{999999}            &\textbf{12055.38}    \\
 asteroids & 719 & 47388.7          & 2712.8 & 4.27   & 108590.05 & 231.14  & 213025  &  454.91                     &751970            &1609.72        &\textbf{760005 }           &\textbf{1626.94}     \\
 atlantis & 12850 & 29028.1         & 826660 & 5030.32 & 849967.5 & 5174.39 & 841200 & 5120.19                      &3803000           &23427.66       &\textbf{3837300}           &\textbf{23639.67}     \\
 bank heist & 14.2 & 753.1          & 1358   & 181.86  & 1223.15  & 163.61  & 569.4  & 75.14                        &\best{1401}       &\best{187.68}  &1380                       &184.84       \\
 battle zone & 236 & 37187.5        & 62010 & 167.18  & 20885    & 55.88  & 64953.3 & 175.14                        &478830            &1295.20        &\textbf{824360}            &\textbf{2230.29}      \\
 beam rider & 363.9 & 16926.5       & 16850.2 & 99.54 & 32463.47 & 193.81 & 90881.6 & 546.52                        &162100            &976.51         &\textbf{422890}            &\textbf{2551.09}      \\
 berzerk & 123.7 & 2630.4           & 2545.6   & 96.62  & 1852.7   & 68.98  & \textbf{25579.5} & \textbf{1015.51}   &7607              &298.53         &14649             &579.46       \\
 bowling & 23.1 & 160.7             & 30   & 5.01        & 59.92    & 26.76  & 48.3    & 18.31                      &201.9             &129.94         &\textbf{205.2}             &\textbf{132.34}       \\
 boxing  & 0.1  & 12.1              & 99.6 & 829.17      & 99.96    & 832.17 & \textbf{100} & \textbf{832.5}        &\best{100}        &\best{832.50}  &\textbf{100}               &\textbf{832.50}       \\
 breakout & 1.7 & 30.5              & 417.5 & 1443.75    & 787.34   & 2727.92 & 747.9 & 2590.97                     &\best{864}        &\best{2994.10} &\textbf{864}               &\textbf{2994.10}      \\
 centipede & 2090.9 & 12017         & 8167.3 & 61.22   & 11049.75 & 90.26   & \textbf{292792} & \textbf{2928.65}    &155830            &1548.84        &195630                     &1949.80      \\
 chopper command & 811 & 7387.8     & 16654 & 240.89 & 28255  & 417.29  & 761699 & 11569.27                         &\best{999999}     &\best{15192.62}&\textbf{999999}            &\textbf{15192.62}     \\
 crazy climber & 10780.5 & 36829.4  & 168788.5 & 630.80 & 136950 & 503.69 & 167820  & 626.93                        &201000            &759.39         &\textbf{241170}            &\textbf{919.76}  \\
 defender & 2874.5 & 18688.9        & 55105 & 330.27 & 185203 & 1152.93 & 336953  & 2112.50                         &893110            &5629.27        &\textbf{970540}            &\textbf{6118.89}     \\
 demon attack & 152.1 & 1971        & 111185 & 6104.40 & 132826.98 & 7294.24 & 133530 & 7332.89                     &675530                 &37131.12   &\textbf{787985}                     &\textbf{43313.70}     \\
 double dunk & -18.6 & -16.4        & -0.3   & 831.82  & -0.33     & 830.45  & 14     & 1481.82                     &\best{24}         &\best{1936.36} &\textbf{24}                &\textbf{1936.36}     \\
 enduro      & 0   & 860.5          & 2125.9 & 247.05  & 0       & 0.00     & 0    & 0.00                           &\best{14330}      &\best{1665.31 }&14300                      &1661.82      \\
 fishing derby & -91.7 & -38.8      & 31.3 & 232.51  & 44.85   & 258.13    & 45.2   & 258.79                        &59         &285.71                &\textbf{65}                &\textbf{296.22}   \\
 freeway       & 0     & 29.6       & \textbf{34} & \textbf{114.86}  & 0     & 0.00       & 0    & 0.00             &\best{34}          &\best{114.86   }&\textbf{34}               &\textbf{114.86}   \\
 frostbite     & 65.2  & 4334.7     & 9590.5 & 223.10 & 317.75 & 5.92     & 5083.5 & 117.54                         &10485              &244.05          &\textbf{11330}            &\textbf{263.84}   \\
 gopher  & 257.6 & 2412.5           & 70354.6 & 3252.91    & 66782.3 & 3087.14 & 114820.7 & 5316.40                 &\best{488830}     &\best{22672.63} &473560           &21964.01    \\
 gravitar & 173 & 3351.4            & 1419.3  & 39.21   & 359.5      & 5.87    & 1106.2   & 29.36                   &5905       &180.34                 &\textbf{5915}             &\textbf{180.66}   \\
 hero   & 1027 & 30826.4            & \textbf{55887.4} & \textbf{184.10}   & 33730.55  & 109.75  & 31628.7 & 102.69 &38330             &125.18          &38225            &124.83    \\
 ice hockey & -11.2 & 0.9           & 1.1    & 101.65   & 3.48      & 121.32   & 17.4    & 236.36                   &44.94      &463.97&\textbf{47.11}           &\textbf{481.90}   \\
 jamesbond  & 29    & 302.8         & 19809 & 72.24   & 601.5     & 209.09   & 37999.8 & 13868.08                   &594500     &217118.70 &\textbf{620780}          &\textbf{226716.95}    \\
 kangaroo   & 52    & 3035          & \textbf{14637.5} & \textbf{488.05} & 1632    & 52.97    & 14308   & 477.91    &14500             &484.34           &14636           &488.00    \\
 krull     & 1598   & 2665.5        & 8741.5  & 669.18 & 8147.4  & 613.53   & 9387.5  &  729.70                     &97575      &8990.82   &\textbf{594540}          &\textbf{55544.92}     \\
 kung fu master & 258.5 & 22736.3   & 52181 & 230.99 & 43375.5 & 191.82 & 607443 & 2701.26        &140440            &623.64           &\textbf{1666665}          &\textbf{7413.57}         \\
 montezuma revenge&0&\textbf{4753.3}& 384   & 8.08   & 0       & 0.00   & 0.3    & 0.01                             &3000              &63.11            &2500            &52.60   \\
 ms pacman  & 307.3 & 6951.6        & 5380.4  & 76.35   & 7342.32 & 105.88 & 6565.5   & 94.19                       &11536      &169.00    &\textbf{11573}           &\textbf{169.55}    \\
 name this game & 2292.3 & 8049     & 13136 & 188.37   & 21537.2 & 334.30 & 26219.5 & 415.64                        &34434      &558.34    &\textbf{36296}           &\textbf{590.68}    \\
 phoenix & 761.5 & 7242.6  & 108529 & 1662.80   & 210996.45  & 3243.82 & 519304 & 8000.84                           &894460     &13789.30  &\textbf{959580}          &\textbf{14794.07}  \\
 pitfall & -229.4 & \textbf{6463.7} & 0      & 3.43      & -1.66      & 3.40    & -0.6   & 3.42                     &0                 &3.43             &-4.345            &3.36 \\
 pong    & -20.7  & 14.6   & 20.9   & 117.85    & 20.98      & 118.07  & \textbf{21}     &  \textbf{118.13}         &\best{21   }      &\best{118.13}    &\textbf{21}              &\textbf{118.13}    \\
 private eye & 24.9&\textbf{69571.3}& 4234 & 6.05     & 98.5       & 0.11    & 96.3   & 0.10                        &15100             &21.68            &15100           &21.68       \\
 qbert  & 163.9 & 13455.0 & 33817.5 & 253.20   & \textbf{351200.12}  & \textbf{2641.14} & 21449.6 & 160.15          &27800             &207.93           &28657           &214.38    \\
 riverraid & 1338.5 & 17118.0       & 22920.8 & 136.77 & 29608.05  & 179.15  & \textbf{40362.7} & \textbf{247.31}   &28075             &169.44           &28349           &171.17    \\
 road runner & 11.5 & 7845          & 62041   & 791.85 & 57121     & 729.04  & 45289   & 578.00                     &878600            &11215.78         &\textbf{999999} &\textbf{12765.53} \\
 robotank   & 2.2   & 11.9          & 61.4   & 610.31    & 12.96     & 110.93  & 62.1    & 617.53                   &108.2             &1092.78          &\textbf{113.4}           &\textbf{1146.39}  \\
 seaquest  & 68.4 & 42054.7         & 15898.9 & 37.70    & 1753.2    & 4.01    & 2890.3  & 6.72                     &943910	           &2247.98  &\textbf{1000000}          &\textbf{2381.57}\\
 skiing & -17098  & \textbf{-4336.9}& -12957.8 & 32.44  & -10180.38 & 54.21   & -29968.4 & -100.86                  &-6774             &80.90            &-6025	          &86.77   \\
 solaris & 1236.3 & \textbf{12326.7}& 3560.3  & 20.96  & 2365      & 10.18   & 2273.5   & 9.35                      &11074             &88.70            &9105            &70.95   \\
 space invaders & 148 & 1668.7      & 18789 & 1225.82 & 43595.78 & 2857.09 & 51037.4 & 3346.45                      &140460            &9226.80          &\textbf{154380} &\textbf{10142.17}   \\
 star gunner & 664 & 10250          & 127029    & 1318.22 & 200625   & 2085.97 & 321528  & 3347.21                  &465750            &4851.72          &\textbf{677590} &\textbf{7061.61}    \\
 surround    & -10 & 6.5            & \textbf{9.7}       & \textbf{119.39}  & 7.56     & 106.42  & 8.4     & 111.52 &-7.8              &13.33            &2.606           &76.40    \\
 tennis  & -23.8   & -8.3           & 0        & 153.55    & 0.55     & 157.10  & 12.2    & 232.26                  &\best{24       }  &\best{308.39   } &\textbf{24}              &\textbf{308.39}  \\
 time pilot & 3568 & 5229.2         & 12926 & 563.36     & 48481.5  & 2703.84 & 105316  & 6125.34                   &216770     &12834.99         &\textbf{450810}          &\textbf{26924.45}   \\
 tutankham  & 11.4 & 167.6          & 241   & 146.99     & 292.11   & 179.71  & 278.9   & 171.25                    &\best{423.9 }     &\best{264.08   } &418.2           &260.44  \\
 up n down  & 533.4 & 11693.2       & 125755 & 1122.08 & 332546.75 & 2975.08 & 345727 & 3093.19                     &\best{986440}     &\best{8834.45 }  &966590          &8656.58    \\
 venture    & 0     & 1187.5        & 5.5    & 0.46    & 0         & 0.00    & 0      & 0.00                        &\best{2035     }  &\best{171.37   } &2000            &168.42   \\
 video pinball & 0 & 17667.9        & 533936.5 & 3022.07 & 572898.27 & 3242.59 & 511835 & 2896.98                   &925830     &5240.18                 &\textbf{978190} &\textbf{5536.54}     \\
 wizard of wor & 563.5 & 4756.5     & 17862.5 & 412.57 & 9157.5    & 204.96  & 29059.3 & 679.60                     &\best{64239 }     &\best{1519.90 }  &63735           &1506.59     \\
 yars revenge & 3092.9 & 54576.9    & 102557 & 193.19 & 84231.14  & 157.60 & 166292.3  & 316.99                     &\textbf{972000}     &\textbf{1881.96}   &968090          &1874.36     \\
 zaxxon       & 32.5   & 9173.3     & 22209.5 & 242.62 & 32935.5   & 359.96 & 41118    & 449.47                     &109140     &1193.63   &\textbf{216020} &\textbf{2362.89}     \\
\hline
MEAN HNS(\%)        &     0.00 & 100.00   &         & 873.97 &         & 957.34  &        & 1741.36 &      & \GDIImeanhns       &      & \textbf{\GDIHmeanhns} \\
Learning Efficiency &     0.00 & N/A   &         & 4.37E-08 &         & 4.79E-08  &        & 8.71E-08 &      & 3.91E-07       &      & \textbf{4.70E-07} \\
\hline
MEDIAN HNS(\%)      & 0.00   & 100.00   &         & 230.99 &         & 191.82  &        & 454.91  &      & \GDIImedianhns        &      & \textbf{\GDIHmedianhns} \\
Learning Efficiency & 0.00   & N/A   &         & 1.15E-08 &         & 9.59E-09  &        & 2.27E-08 &      & 4.16E-08       &      & \textbf{5.73E-08} \\
\bottomrule
\end{tabular}
\caption{Score table of SOTA 200M model-free algorithms on HNS.}
\end{center}
\end{table}
\clearpage

\subsubsection{Comparison with SOTA 10B+ model-free algorithms on HNS}

\begin{table}[!hb]
\scriptsize
\begin{center}
\setlength{\tabcolsep}{1.0pt}
\begin{tabular}{ c c c c c c c c c c c}
\toprule
 Games & R2D2 & HNS(\%) & NGU & HNS(\%) & AGENT57 & HNS(\%) & GDI-I$^3$ & HNS(\%) & GDI-H$^3$ & HNS(\%) \\
\midrule
Scale  & 10B   &        & 35B &         & 100B     &        & 200M &     &  200M   &\\
\midrule
 alien  & 109038.4 & 1576.97 & 248100 & 3592.35 & \textbf{297638.17} & \textbf{4310.30}             &43384             &625.45                &48735             &703.00       \\
 amidar & 27751.24 & 1619.04 & 17800  & 1038.35 & \textbf{29660.08}  & \textbf{1730.42}             &1442              &83.81                 &1065              &61.81        \\
 assault &  90526.44 & 17379.53 & 34800 & 6654.66 & 67212.67 & 12892.66            &63876           &12250.50          &\textbf{97155}        &\textbf{18655.23}     \\
 asterix &  999080   & 12044.30 & 950700 & 11460.94 & 991384.42 & 11951.51         &759910          &9160.41           &\textbf{999999}       &\textbf{12055.38}     \\
 asteroids & 265861.2 & 568.12 & 230500 & 492.36   & 150854.61 & 321.70                             &751970            &1609.72               &\textbf{760005}            &\textbf{1626.94}      \\
 atlantis & 1576068   & 9662.56 & 1653600 & 10141.80 & 1528841.76 & 9370.64                         &3803000           &23427.66              &\textbf{3837300}           &\textbf{23639.67}     \\
 bank heist & \textbf{46285.6} & \textbf{6262.20} & 17400   & 2352.93  & 23071.5& 3120.49           &1401              &187.68                &1380              &184.84       \\
 battle zone & 513360 & 1388.64 & 691700  & 1871.27  & \textbf{934134.88}& \textbf{2527.36}         &478830            &1295.20               &824360            &2230.29      \\
 beam rider & 128236.08 & 772.05 & 63600  & 381.80   & 300509.8 & 1812.19         &162100            &976.51                &\textbf{422390}            &\textbf{2548.07}      \\
 berzerk & 34134.8      & 1356.81 & 36200 & 1439.19  & \textbf{61507.83} & \textbf{2448.80}         &7607              &298.53                &14649             &579.46       \\
 bowling & 196.36       & 125.92  & 211.9 & 137.21   & \textbf{251.18}   & \textbf{165.76}          &201.9             &129.94                &205.2             &132.34       \\
 boxing  & 99.16        & 825.50  & 99.7  & 830.00   & \textbf{100}      & \textbf{832.50}          &\best{100}        &\best{832.50}         &\textbf{100}               &\textbf{832.50}       \\
 breakout & 795.36      & 2755.76 & 559.2 & 1935.76  & 790.4 & 2738.54                  &\best{864}        &\best{2994.10}                    &\textbf{864}             &\textbf{2994.10}      \\
 centipede & 532921.84  & 5347.83 & \textbf{577800} & \textbf{5799.95} & 412847.86& 4138.15         &155830            &1548.84               &195630            &1949.80\\
 chopper command&960648&14594.29&999900&15191.11&999900&15191.11                                    &\best{999999}&\best{15192.62}            &\textbf{999999}            &\textbf{15192.62}\\
 crazy climber & 312768   & 1205.59  & 313400 & 1208.11&\textbf{565909.85}&\textbf{2216.18}         &201000            &759.39                &241170            &919.76\\
 defender & 562106        & 3536.22  & 664100 & 4181.16  & 677642.78 & 4266.80                      &893110     &5629.27                      &\textbf{970540}            &\textbf{6118.89}\\
 demon attack & 143664.6  & 7890.07  & 143500 & 7881.02  & 143161.44 & 7862.41                      &675530     &37131.12       &\textbf{787985}                     &\textbf{43313.70}\\
 double dunk & 23.12      & 1896.36  & -14.1  & 204.55   & 23.93& 1933.18         &\textbf{24}      &\textbf{1936.36}                         &\textbf{24}                &\textbf{1936.36}\\
 enduro      & 2376.68    & 276.20   & 2000   & 232.42   & 2367.71   & 275.16                       &\best{14330}      &\best{1665.31}        &14300             &1661.82\\
 fishing derby & 81.96    & 328.28   & 32     & 233.84   & \textbf{86.97}& \textbf{337.75}          &59                &285.71                &65               &296.22\\
 freeway       & \textbf{34}       & \textbf{114.86}   & 28.5   & 96.28    & 32.59& 110.10          &\best{34}         &\best{114.86}         &\textbf{34}               &\textbf{114.86}\\
 frostbite    & 11238.4  & 261.70   & 206400 & 4832.76&\textbf{541280.88}&\textbf{12676.32}         &10485             &244.05                &11330	           &263.84\\
 gopher  & 122196        & 5658.66  & 113400 & 5250.47  & 117777.08 & 5453.59                       &\best{488830}     &\best{22672.63}       &473560           &21964.01\\
 gravitar & 6750         & 206.93   & 14200  & 441/32  &\textbf{19213.96}&\textbf{599.07}           &5905              &180.34                &5915             &180.66\\
 hero   & 37030.4        & 120.82   & 69400  & 229.44&\textbf{114736.26}&\textbf{381.58}            &38330             &125.18                &38225	            &124.83\\
 ice hockey & \textbf{71.56}      & \textbf{683.97}   &-4.1   & 58.68    & 63.64& 618.51            &44.94             &463.97                &47.11           &481.90\\
 jamesbond  & 23266      & 8486.85  & 26600  & 9704.53  & 135784.96 & 49582.16                      &594500            &217118.70             &\textbf{620780	}          &\textbf{226716.95}\\
 kangaroo   & 14112      & 471.34   & \textbf{35100}  & \textbf{1174.92}&24034.16& 803.96           &14500             &484.34                &14636           &488.90\\
 krull     & 145284.8    & 13460.12 & 127400&11784.73& 251997.31&23456.61         &97575             &8990.82                                 &\textbf{594540}          &\textbf{55544.92}\\
 kung fu master & 200176 & 889.40   & 212100 & 942.45   & 206845.82 & 919.07      &140440            &623.64                &\textbf{1666665}	         &\textbf{7413.57}\\
 montezuma revenge & 2504 & 52.68   & \textbf{10400}  & \textbf{218.80} &9352.01& 196.75            &3000              &63.11                 &2500            &52.60\\
 ms pacman  & 29928.2     & 445.81  & 40800  & 609.44& \textbf{63994.44}&\textbf{958.52}            &11536             &169.00                &11573           &169.55\\
 name this game & 45214.8 & 745.61  & 23900  & 375.35&\textbf{54386.77}&\textbf{904.94}             &34434             &558.34                &36296           &590.68\\
 phoenix & 811621.6       & 125.11  & 959100 &14786.66 &908264.15&14002.29         &894460            &13789.30              &\textbf{959580}	          &\textbf{14794.07}\\
 pitfall & 0              & 3.43    & 7800   & 119.97&\textbf{18756.01}&\textbf{283.66}             &0                 &3.43                  &-4.3            &3.36\\
 pong    & \textbf{21}             & \textbf{118.13}  & 19.6   & 114.16   & 20.67& 117.20           &\best{21}         &\best{118.13}         &\textbf{21}     &\textbf{118.13}\\
 private eye & 300        & 0.40    & \textbf{100000} & \textbf{143.75}& 79716.46&114.59            &15100             &21.68                 &15100           &21.68\\
 qbert  & 161000          & 1210.10 & 451900 & 3398.79&\textbf{580328.14}&\textbf{4365.06}          &27800             &207.93                &28657           &214.38\\
 riverraid & 34076.4      & 207.47  & 36700  & 224.10 & \textbf{63318.67}&\textbf{392.79}           &28075             &169.44                &28349           &171.17\\
 road runner & 498660     & 6365.59 & 128600 & 1641.52  & 243025.8&3102.24                          &878600            &11215.78       &\textbf{999999} &\textbf{12765.53}\\
 robotank   & \textbf{132.4}       & \textbf{1342.27} & 9.1    & 71.13 &127.32 &1289.90             &108.2             &1092.78               &113.4           &1146.39\\
 seaquest  & 999991.84    & 2381.55 & \textbf{1000000} & \textbf{2381.57}&999997.63&2381.56         &943910	           &2247.98        &\textbf{1000000}          &\textbf{2381.57}\\
 skiing & -29970.32       & -100.87 & -22977.9 & -46.08 & \textbf{-4202.6}  &\textbf{101.05}        &-6774             &80.90                 &-6025	       &86.77\\
 solaris & 4198.4         & 26.71   & 4700     & 31.23  & \textbf{44199.93}& \textbf{387.39}        &11074             &88.70                 &9105            &70.95\\
 space invaders & 55889   & 3665.48 & 43400    & 2844.22 & 48680.86 & 3191.48                       &140460            &9226.80               &\textbf{154380} &\textbf{10142.17}\\
 star gunner & 521728     & 5435.68 & 414600   &4318.13&\textbf{839573.53}&\textbf{8751.40}         &465750            &4851.72               &677590	         &7061.61\\
 surround    & \textbf{9.96}       & \textbf{120.97}  & -9.6     & 2.42    & 9.5&118.18             &-7.8              &13.33                 &2.606           &76.40\\
 tennis  & \textbf{24}             & \textbf{308.39}  & 10.2     & 219.35 & 23.84& 307.35           &\best{24}         &\best{308.39}         &\textbf{24}     &\textbf{308.39}             \\
 time pilot & 348932    & 20791.28 & 344700 & 20536.51&405425.31&24192.24         &216770            &12834.99              &\textbf{450810}	          &\textbf{26924.45}\\
 tutankham  & 393.64    & 244.71   & 191.1   & 115.04   & \textbf{2354.91}&\textbf{1500.33}         &423.9             &264.08                &418.2           &260.44\\
 up n down  & 542918.8  & 4860.17  & 620100  & 5551.77  & 623805.73 & 5584.98                       &\best{986440}     &\best{8834.45}        &966590          &8656.58\\
 venture    & 1992      & 167.75   & 1700    & 143.16   &\textbf{2623.71}  &\textbf{220.94}         &2035              &171.37                &2000	            &168.42\\
 video pinball & 483569.72 & 2737.00 & 965300 & 5463.58 &\textbf{992340.74}&\textbf{5616.63}        &925830            &5240.18               &978190          &5536.54\\
 wizard of wor & 133264 & 3164.81  & 106200  & 2519.35  &\textbf{157306.41}&\textbf{3738.20}        &64293             &1519.90               &63735           &1506.59\\
 yars revenge & 918854.32 & 1778.73 & 986000 & 1909.15  &\textbf{998532.37}&\textbf{1933.49}        &972000            &1881.96               &968090          &1874.36\\
 zaxxon & 181372        & 1983.85  & 111100  & 1215.07  &\textbf{249808.9} &\textbf{2732.54}        &109140            &1193.63               &216020	         &2362.89\\
\hline
MEAN HNS(\%)            &               & 3374.31  &         &  3169.90 &           & 4763.69  &     & \GDIImeanhns &      & \textbf{\GDIHmeanhns} \\
Learning Efficiency     &               &3.37E-09  &         & 9.06E-10  &        &  4.76E-10 &      & 3.91E-07       &      & \textbf{\GDIHmeanhnsle} \\
\hline
MEDIAN HNS(\%) &            & 1342.27   &         & 1208.11  &           & \textbf{1933.49}  &     & \GDIImedianhns &      & \GDIHmedianhns \\
Learning Efficiency     &    & 1.34E-09 &         & 3.45E-10  &        & 1.93E-10&      & 4.16E-08       &      & \textbf{\GDIHmedianhnsle} \\
\bottomrule
\end{tabular}
\caption{Score table of SOTA  model-free algorithms on HNS.}
\end{center}
\end{table}
\clearpage

\subsubsection{Comparison with SOTA Model-Based Algorithms on HNS}
SimPLe \citep{modelbasedatari} and DreamerV2\citep{dreamerv2} haven't evaluated all 57 Atari Games in their paper. For fairness, we set the score on those games as N/A, which will not be considered when calculating the median and mean HNS.
\begin{table}[!hb]
\scriptsize
\begin{center}
\setlength{\tabcolsep}{1.0pt}
\begin{tabular}{c c c c c c c c c c c}
\toprule
 Games              & MuZero         & HNS(\%)      & DreamerV2 & HNS(\%)    & SimPLe             & HNS(\%)          & GDI-I$^3$     & HNS(\%) & GDI-H$^3$ & HNS(\%) \\
\midrule
Scale               & 20B            &              & 200M      &            & 1M               &                  & 200M     &      &  200M   &\\
\midrule    
 alien              & \textbf{741812.63}      & \textbf{10747.61}     &3483       & 47.18      &616.9     & 5.64    & 43384       & 625.45                    &48735	             &703.00      \\
 amidar             & \textbf{28634.39 }      & \textbf{1670.57    }  &2028       & 118.00     &74.3      & 4.00    & 1442        & 83.81                     &1065              &61.81 \\
 assault            & \textbf{143972.03}      & \textbf{27665.44}     &7679       & 1435.07    &527.2     & 58.66   & 63876       & 12250.50                  &97155	             &18655.23\\
 asterix            & 998425                  & 12036.40     &25669      & 306.98     &1128.3    & 11.07   & 759910      & 9160.41                   &\textbf{999999}   &\textbf{12055.38} \\
 asteroids          & 678558.64             & 1452.42      &3064       & 5.02       &793.6     & 0.16               &751970& 1609.72           &\textbf{760005}            &\textbf{1626.94}  \\
 atlantis           & 1674767.2             & 10272.64     &989207     & 6035.05    &20992.5   & 50.33              &3803000&23427.66          &\textbf{3837300}           &\textbf{23639.67}   \\
 bank heist         & 1278.98               & 171.17       &1043       & 139.23     &34.2      & 2.71               &\best{1401}  & \best{187.68  }           &1380              &184.84 \\
 battle zone        & \textbf{848623         }& \textbf{2295.95}      &31225      & 83.86      &4031.2    & 10.27   & 478830      & 1295.20                   &824360            &2230.29\\
 beam rider         & \textbf{454993.53}      & \textbf{2744.92}      &12413      & 72.75      &621.6     & 1.56    & 162100      & 976.51                    &422390            &2548.07\\
 berzerk            & \textbf{85932.6        }& \textbf{3423.18}      &751        & 25.02      &N/A       & N/A     & 7607        & 298.53                    &14649             &579.46\\
 bowling            & \textbf{260.13         }& \textbf{172.26 }      &48         & 18.10      &30        & 5.01    & 202         & 129.94                    &205.2             &132.34\\
 boxing             & \textbf{100}                   & \textbf{832.50}       &87         & 724.17     &7.8       & 64.17   & \best{100}  & \best{832.50  }    &\textbf{100}      &\textbf{832.50}  \\
 breakout           & \textbf{864}                   & \textbf{2994.10}      &350        & 1209.38    &16.4      & 51.04   & \best{864}  & \best{2994.10    } &\textbf{864}      &\textbf{2994.10}\\
 centipede          & \textbf{1159049.27}     & \textbf{11655.72}     &6601       & 45.44      &N/A       & N/A     & 155830      & 1548.84                   &195630            &1949.80\\
 chopper command    & 991039.7              & 15056.39     &2833       & 30.74      & 979.4    & 2.56    & \best{999999}& \best{15192.62}                     &\textbf{999999}   &\textbf{15192.62}\\
 crazy climber      & \textbf{458315.4}       & \textbf{1786.64    }  &141424     & 521.55     & 62583.6  & 206.81  & 201000      & 759.39                    &241170	            &919.76\\
 defender           & 839642.95             & 5291.18      & N/A       & N/A        & N/A      & N/A     & 893110      & 5629.27               &\textbf{970540}   &\textbf{6118.89}\\
 demon attack       & 143964.26             & 7906.55      & 2775     &144.20      & 208.1    & 3.08    & 675530      & 37131.12                &\textbf{787985}                     &\textbf{43313.70}\\
 double dunk        & 23.94          & 1933.64      & 22        &1845.45     & N/A      & N/A     & \textbf{24}          & \textbf{1936.36}                   &\textbf{24 }      &\textbf{1936.36}\\
 enduro             & 2382.44               & 276.87       & 2112     &245.44      & N/A      & N/A     & \best{14330}       & \best{1665.31}                 &14300             &1661.82\\
 fishing derby      & \textbf{91.16}          & \textbf{345.67     }  & 60        &286.77      &-90.7     & 1.89    & 59          & 285.71                    &65               &296.22\\
 freeway            & 33.03                 & 111.59       & \textbf{34}        &\textbf{114.86}      &16.7      & 56.42   & \best{34}      & \best{114.86 }  &\textbf{34}      &\textbf{114.86}\\
 frostbite          & \textbf{631378.53}      & \textbf{14786.59}     & 15622    &364.37      &236.9     & 4.02    & 10485       & 244.05                     &11330	            &263.84\\
 gopher             & 130345.58             & 6036.85      & 53853    &2487.14     &596.8     & 15.74   & \best{488830}      & \best{22672.6}                 &473560           &21964.01\\
 gravitar           & \textbf{6682.7     }    & \textbf{204.81     }  & 3554     &106.37      &173.4     & 0.01    & 5905        & 180.34                     &5915             &180.66\\
 hero               & \textbf{49244.11}       & \textbf{161.81     }  & 30287    &98.19       &2656.6    & 5.47    & 38330       & 125.18                     &38225	            &124.83\\
 ice hockey         & \textbf{67.04      }    & \textbf{646.61     }  & 29        &332.23      &-11.6     & -3.31   & 44.94       & 463.97                    &47.11           &481.90\\
 jamesbond          & 41063.25              & 14986.94     &  9269     &3374.73     &100.5     & 26.11   & 594500      & 217118.70              &\textbf{620780	}  &\textbf{226716.95}\\
 kangaroo           & \textbf{16763.6        }& \textbf{560.23     }  & 11819     &394.47      &51.2      & -0.03   & 14500       & 484.34                    &14636           &488.90\\
 krull              & 269358.27      & 25082.93     & 9687     &757.75      &2204.8    & 56.84   & 97575       & 8990.82                    &\textbf{594540}          &\textbf{55544.92}\\
 kung fu master     & 204824         & 910.08       & 66410    &294.30      &14862.5   & 64.97   & 140440      & 623.64                     &\textbf{1666665}	          &\textbf{7413.57}\\
 montezuma revenge  & 0                     & 0.00         & 1932     &40.65       &N/A       & N/A     & \best{3000}        & \best{63.11  }                 &2500            &52.60\\
 ms pacman          & \textbf{243401.1 }      & \textbf{3658.68    }  & 5651     &80.43       &1480      & 17.65   & 11536       & 169.00                     &11573           &169.55\\
 name this game     & \textbf{157177.85}      & \textbf{2690.53    }  & 14472    &211.57      &2420.7    & 2.23    & 34434       & 558.34                     &36296           &590.68\\
 phoenix            & 955137.84      & 14725.53     & 13342     &194.11      &N/A       & N/A     & 894460      & 13789.30                  &\textbf{959580 }         &	\textbf{14794.07}\\
 pitfall            &\textbf{ 0}                     & \textbf{3.43}         & -1        &3.41        &N/A       & N/A     & \best{0}       & \best{3.43  }   &-4.3            &3.36\\
 pong               & \textbf{21}                    & \textbf{118.13}       & 19        &112.46      & 12.8     & 94.90   & \best{21}      & \best{118.13}   &\textbf{21}     &\textbf{118.13}   \\
 private eye        & \textbf{15299.98 }      & \textbf{21.96  }      & 158       &0.19        & 35       & 0.01    & 15100       & 21.68                     &15100           &21.68\\
 qbert              & \textbf{72276          }& \textbf{542.56 }      & 162023    &1217.80     & 1288.8   & 8.46    & 27800       & 207.93                    &28657           &214.38\\
 riverraid          & \textbf{323417.18}      & \textbf{2041.12}      & 16249    &94.49       & 1957.8   & 3.92    & 28075       & 169.44                     &28349           &171.17\\
 road runner        & 613411.8              & 7830.48      & 88772    &1133.09     & 5640.6   & 71.86   & 878600      & 11215.78                              &\textbf{999999	} &\textbf{12765.53}\\
 robotank           & \textbf{131.13}         & \textbf{1329.18}      & 65        &647.42      & N/A      & N/A     & 108         & 1092.78                   &113.4           &1146.39\\
 seaquest           & 999976.52             & 2381.51      & 45898    &109.15      & 683.3    & 1.46    &943910	           &2247.98             &\textbf{1000000}          &\textbf{2381.57}\\
 skiing             & -29968.36      & -100.86     & -8187    &69.83       & N/A      & N/A     & -6774       & 80.90                       &\textbf{-6025}  &\textbf{86.77}\\
 solaris            & 56.62                 & -10.64       & 883       &-3.19       & N/A      & N/A     & \best{11074}       & \best{88.70   }               &9105            &70.95\\
 space invaders     & 74335.3               & 4878.50      & 2611      &161.96      & N/A      & N/A     & 140460     & 9226.80                 &\textbf{154380}          &\textbf{10142.17}\\
 star gunner        & 549271.7       & 5723.01      & 29219    &297.88      & N/A      & N/A     & 465750      & 4851.72                    &\textbf{677590}          &\textbf{7061.61}\\
 surround           & \textbf{9.99       }    & \textbf{121.15     }  & N/A       &N/A         & N/A      & N/A     & -7.8          & 13.33                   &2.606           &76.40 \\
 tennis             & 0       & 153.55  & 23        &301.94      & N/A      & N/A     & \textbf{24}          & \textbf{308.39}                                &\textbf{24}              &\textbf{308.39}  \\
 time pilot         & \textbf{476763.9}       & \textbf{28486.90}     & 32404    &1735.96     & N/A      & N/A     & 216770      & 12834.99                   &450810	          &26924.45 \\
 tutankham          & \textbf{491.48     }    & \textbf{307.35     }  & 238       &145.07      & N/A      & N/A     & 424         & 264.08                    &418.2           &260.44 \\
 up n down          & 715545.61             & 6407.03      & 648363   &5805.03     & 3350.3   & 25.24   & \best{986440}      & \best{8834.45}                 &966590          &8656.58\\
 venture            & 0.4                   & 0.03         & 0         &0.00        & N/A      & N/A     & \best{2035}        & \best{171.37 }                &2000	            &168.42\\
 video pinball      & \textbf{981791.88}      & \textbf{5556.92}      & 22218    &125.75      & N/A      & N/A     & 925830      & 5240.18                    &978190          &5536.54\\
 wizard of wor      & \textbf{197126         }& \textbf{4687.87}      & 14531    &333.11      & N/A      & N/A     & 64439       & 1523.38                    &63735           &1506.59\\
 yars revenge       & 553311.46             & 1068.72      & 20089    &33.01       & 5664.3   & 4.99    & \best{972000}      & \best{1881.96}                 &968090          &1874.36\\
 zaxxon             & \textbf{725853.9}       & \textbf{7940.46}      & 18295    &199.79      & N/A      & N/A     & 109140      & 1193.63                    &216020	          &2362.89\\
\hline    
MEAN HNS(\%)        &                & 4996.20      &           &  631.17   &           & 25.3    &             & \GDIImeanhns&      & \textbf{\GDIHmeanhns} \\
Learning Efficiency     &     &2.50E-09  &         & 3.16E-08  &        & 2.53E-07 &      & 3.91E-07       &      & \textbf{\GDIHmeanhnsle} \\
\hline
MEDIAN HNS(\%)      &                & \textbf{2041.12}      &           & 161.96    &           & 5.55    &             & \GDIImedianhns &      & \GDIHmedianhns \\
Learning Efficiency     &     & 1.02E-09 &         & 8.10E-09  &        & 5.55E-08&      & 4.16E-08       &      & \textbf{\GDIHmedianhnsle} \\
\bottomrule
\end{tabular}
\caption{Score table of SOTA model-based algorithms on HNS.}
\end{center}
\end{table}
\clearpage

\subsubsection{Comparison with Other SOTA algorithms on HNS}
In this section, we report the performance of our algorithm compared with other SOTA algorithms, Go-Explore \citep{goexplore} and Muesli \citep{muesli}.

\begin{table}[!hb]
\scriptsize
\begin{center}
\setlength{\tabcolsep}{1.0pt}
\begin{tabular}{c c c c c c c c c}            
\toprule
 Games        & Muesli & HNS(\%)      & Go-Explore              & HNS(\%)                     & GDI-I$^3$ & HNS(\%)               & GDI-H$^3$ & HNS(\%) \\
\midrule
Scale         &  200M   &            & 10B                     &                             & 200M              &                &  200M   &\\
\midrule    
 alien        &139409          &2017.12                   &\textbf{959312}       &\textbf{13899.77}              & 43384             &625.45            &48735	             &703.00              \\
 amidar       &\textbf{21653}  &\textbf{1263.18}          &19083                 &1113.22                        & 1442              &83.81             &1065              &61.81           \\
 assault      &36963           &7070.94                   &30773                 &5879.64                        & 63876      &12250.50   &\textbf{97155	}    &\textbf{18655.23}       \\
 asterix      &316210          &3810.30                   &999500       &12049.37              & 759910            &9160.41           &\textbf{999999}    &\textbf{12055.38}\\
 asteroids    &484609          &1036.84                   &112952                &240.48                         & 751970     &1609.72    &\textbf{760005}            &\textbf{1626.94}       \\
 atlantis     &1363427         &8348.18                   &286460                &1691.24                        & 3803000    &23427.66   &\textbf{3837300}           &\textbf{23639.67}       \\
 bank heist   &1213            &162.24                    &\textbf{3668}         &\textbf{494.49}                & 1401              &187.68            &1380              &184.84\\
 battle zone  &414107          &1120.04                   &\textbf{998800}       &\textbf{2702.36}               & 478830            &1295.20           &824360            &2230.29\\
 beam rider   &288870          &1741.91                   &371723       &2242.15               & 162100            &976.51            &\textbf{422390}   &\textbf{2548.07}\\
 berzerk      &44478           &1769.43                   &\textbf{131417}       &\textbf{5237.69}               & 7607              &298.53            &14649             &579.46\\
 bowling      &191             &122.02                    &\textbf{247}           &\textbf{162.72}                & 202               &129.94           &205.2             &132.34\\
 boxing       &99              &824.17                    &91                     &757.50                         & \best{100}        &\best{832.50}    &\textbf{100}      &\textbf{832.50}        \\
 breakout     &791             &2740.63                   &774                    &2681.60                        & \best{864}        &\best{2994.10}   &\textbf{864}      &\textbf{2994.10}        \\
 centipede    &\textbf{869751} &\textbf{8741.20}          &613815                &6162.78               & 155830            &1548.84                    &195630            &1949.80\\
 chopper command &101289       &1527.76            &996220                &15135.16                       & \best{999999}     &\best{15192.62}          &\textbf{999999}   &\textbf{15192.62}\\
 crazy climber   &175322       &656.88             &235600       &897.52                & 201000            &759.39                   &\textbf{241170}	            &\textbf{919.76}\\
 defender        &629482       &3962.26            &N/A                    &N/A                            & 893110     &5629.27                        &\textbf{970540}   &\textbf{6118.89}\\
 demon attack    &129544       &7113.74            &239895                 &13180.65                       & 675530     &37131.12         &\textbf{787985}                     &\textbf{43313.70}\\
 double dunk     &-3           &709.09             &\textbf{24}                     &\textbf{1936.36}                        & \best{24}         &\best{1936.36}          &\textbf{24}       &\textbf{1936.36}\\
 enduro          &2362         &274.49             &1031                   &119.81                         & \best{14330}      &\best{1665.31}          &14300             &1661.82\\
 fishing derby   &51           &269.75             &\textbf{67}            &\textbf{300.00}                & 59                &285.71                  &65               &296.22\\
 freeway         &33           &111.49             &\textbf{34}            &\textbf{114.86}                & \best{34}         &\best{114.86}           &\textbf{34}        &\textbf{114.86}\\
 frostbite       &301694       &7064.73            &\textbf{999990}       &\textbf{23420.19}              & 10485             &244.05                   &11330	            &263.84\\
 gopher          &104441       &4834.72            &134244                &6217.75                        & \best{488830}     &\best{22672.63}          &473560           &21964.01\\
 gravitar        &11660        &361.41             &\textbf{13385}        &\textbf{415.68}                & 5905              &180.34                   &5915             &180.66\\
 hero            &37161        &121.26            &37783                  &123.34                         & \textbf{38330}      &\textbf{125.18}            &38225	   &124.83\\
 ice hockey      &25           &299.17             &33                     &365.29                         & 44.94         &463.97        &\textbf{47.11}           &\textbf{481.90}    \\
 jamesbond       &19319        &7045.29            &200810                &73331.26                       & 594500     &217118.70         &\textbf{620780	}          &\textbf{226716.95}\\
 kangaroo        &14096        &470.80             &\textbf{24300}        &\textbf{812.87}                & 14500             &484.34                   &14636           &488.90\\
 krull           &34221        &3056.02            &63149                 &5765.90                        & 97575      &8990.82           &\textbf{594540}          &\textbf{55544.92}\\
 kung fu master  &134689       &598.06             &24320                 &107.05                         & 140440     &623.64            &\textbf{1666665	}          &\textbf{7413.57}\\
 montezuma revenge  &2359      &49.63               &\textbf{24758}        &\textbf{520.86}                & 3000              &63.11                   &2500            &52.60\\
 ms pacman          &65278     &977.84              &\textbf{456123}       &\textbf{6860.25}               & 11536             &169.00                  &11573           &169.55\\
 name this game     &105043    &1784.89              &\textbf{212824}       &\textbf{3657.16}               & 34434             &558.34                 &36296           &590.68\\
 phoenix        &805305        &12413.69                    &19200                 &284.50                  & 894460     &13789.30 &\textbf{959580	}          &\textbf{14794.07}   \\
 pitfall        &0             &3.43                    &\textbf{7875}          &\textbf{121.09}                & 0                 &3.43               &-4.3            &3.36\\
 pong           &20            &115.30                  &\textbf{21}            &\textbf{118.13}                & \best{21}         &\best{118.13}      &\textbf{21}              &\textbf{118.13}      \\
 private eye    &10323         &14.81                   &\textbf{69976}        &\textbf{100.58}                & 15100             &21.68               &15100           &21.68\\
 qbert          &157353        &1182.66                 &\textbf{999975}       &\textbf{7522.41}               & 27800             &207.93              &28657           &214.38\\
 riverraid      &\textbf{47323}&\textbf{291.42}         &35588                 &217.05                & 28075             &169.44                       &28349           &171.17\\
 road runner    &327025        &4174.55                 &999900        &12764.26              & 878600            &11215.78           &\textbf{999999}	          &\textbf{12765.53}\\
 robotank       &59            &585.57                  &\textbf{143}           &\textbf{1451.55}               & 108               &1092.78            &113.4           &1146.39\\
 seaquest       &815970        &1943.26                 &539456                &1284.68               &943910	           &2247.98               &\textbf{1000000}          &\textbf{2381.57}\\
 skiing         &-18407        &-10.26                  &\textbf{-4185}        &\textbf{101.19}                & -6774             &80.90               &-6025	         &86.77\\
 solaris        &3031          &16.18                   &\textbf{20306}        &\textbf{171.95}                & 11074             &88.70               &9105            &70.95\\
 space invaders &59602         &3909.65                &93147                 &6115.54                        & 140460     &9226.80       &\textbf{154380}          &\textbf{10142.17}\\
 star gunner    &214383        &2229.49                &609580       &6352.14               & 465750     &4851.72                     &\textbf{677590}	          &\textbf{7061.61}\\
 surround       &\textbf{9}    &\textbf{115.15}                 &N/A                    &N/A                  & -8        &13.33                        &2.606           &76.40\\
 tennis         &12            &230.97                 &\best{24}              &\best{308.39}                  & \best{24}         &\best{308.39}       &\textbf{24}    &\textbf{308.39}     \\
 time pilot     &\textbf{359105} &\textbf{21403.71}                   &183620                &10839.32       & 216770     &12834.99                     &450810	          &26924.45\\
 tutankham      &252           &154.03                 &\textbf{528}           &\textbf{330.73}                & 424               &264.08              &418.2           &260.44\\
 up n down      &649190        &5812.44                &553718                &4956.94                        & \best{986440}     &\best{8834.45}       &966590          &8656.58    \\
 venture        &2104          &177.18                 &\textbf{3074}         &\textbf{258.86}                & 2035              &171.37               &2000	            &168.42\\
 video pinball  &685436        &3879.56                &\textbf{999999}       &\textbf{5659.98}               & 925830            &5240.18              &978190          &5536.54\\
 wizard of wor  &93291         &2211.48                &\textbf{199900}       &\textbf{4754.03}               & 64293             &1519.90              &63735           &1506.59\\
 yars revenge   &557818        &1077.47                &\textbf{999998}       &\textbf{1936.34}               & 972000            &1881.96              &968090          &1874.36\\
 zaxxon         &65325         &714.30                 &18340                 &200.28                         & 109140     &1193.63       &\textbf{216020	}          &\textbf{2362.89}    \\
\hline    
MEAN HNS(\%)        & & 2538.66           &                       & 4989.94                       &            &  \GDIImeanhns &      & \textbf{\GDIHmeanhns} \\
Learning Efficiency & & 1.27E-07          &                       & 4.99E-09                      &      & 3.91E-07       &      & \textbf{\GDIHmeanhnsle} \\
\hline
MEDIAN HNS(\%)      & & 1077.47           &                       & \textbf{1451.55}              &            & \GDIImedianhns   &      & \GDIHmedianhns \\
Learning Efficiency & & 5.39E-08          &                       & 1.45E-09                      &      & 4.16E-08       &      & \textbf{\GDIHmedianhnsle} \\
\bottomrule
\end{tabular}
\caption{Score table of other SOTA algorithms on HNS.}
\end{center}
\end{table}

\clearpage

\subsection{Atari Games Table of Scores Based on Human World Records}
\label{app: Atari Games Table of Scores Based on Human World Records}
In this part, we detail the raw score of several representative SOTA algorithms including the SOTA 200M model-free algorithms, SOTA 10B+ model-free algorithms, SOTA model-based algorithms and other SOTA algorithms.\footnote{200M and 10B+ represent the training scale.} Additionally, we calculate the human world records normalized world score (HWRNS) of each game with each algorithms. First of all, we demonstrate the sources of the scores that we used.
Random scores  are from \citep{agent57}.
Human world records (HWR) are form \citep{dreamerv2,atarihuman}.
Rainbow's scores are from \citep{rainbow}.
IMPALA's scores are from \citep{impala}.
LASER's scores are from \citep{laser}, no sweep at 200M.
As there are many versions of R2D2 and NGU, we use original papers'.
R2D2's scores are from \citep{r2d2}.
NGU's scores are from \citep{ngu}.
Agent57's scores are from \citep{agent57}.
MuZero's scores are from \citep{muzero}.
DreamerV2's scores are from \citep{dreamerv2}.
SimPLe's scores are form \citep{modelbasedatari}.
Go-Explore's scores are form \citep{goexplore}.
Muesli's scores are form \citep{muesli}.
In the following we detail the raw scores and HWRNS of each algorithms on 57 Atari games.
\clearpage

\subsubsection{Comparison with SOTA 200M Algorithms on HWRNS}

\begin{table}[!hb]
\scriptsize
\begin{center}
\setlength{\tabcolsep}{1.0pt}
\begin{tabular}{c c c c c c c c c c c c c}
\toprule
Games               & RND       & HWR       & RAINBOW  & HWRNS(\%) & IMPALA  & HWRNS(\%)  & LASER  & HWRNS(\%) & GDI-I$^3$ & HWRNS(\%)  & GDI-H$^3$ & HWRNS(\%) \\
\midrule
Scale               &           &           & 200M     &           &  200M   &            & 200M    &           &  200M    &            &    200M   &\\
\midrule
 alien              & 227.8     & \textbf{251916}    & 9491.7   &3.68    & 15962.1    & 6.25       & 976.51  & 14.04     &43384             &17.15  &48735             &19.27   \\
 amidar             & 5.8       & \textbf{104159}    & 5131.2   &4.92    & 1554.79    & 1.49       & 1829.2  & 1.75      &1442              &1.38   &1065              &1.02          \\
 assault            & 222.4     & 8647             & 14198.5  &165.90  & 19148.47   & 224.65     & 21560.4 & 253.28    &63876        &755.57  &\textbf{97155}             &\textbf{1150.59} \\
 asterix            & 210       & \textbf{1000000}   & 428200   &42.81   & 300732     & 30.06      & 240090  & 23.99     &759910            &75.99  &999999            &100.00 \\
 asteroids          & 719       & \textbf{10506650}  & 2712.8   &0.02    & 108590.05  & 1.03       & 213025  & 2.02      &751970            &7.15   &760005            &7.23\\
 atlantis           & 12850     & \textbf{10604840}  & 826660   &7.68    & 849967.5   & 7.90       & 841200  & 7.82      &3803000           &35.78  &3837300           &36.11\\
 bank heist         & 14.2      & \textbf{82058}     & 1358     &1.64    & 1223.15    & 1.47       & 569.4   & 0.68      &1401              &1.69   &1380              &1.66  \\
 battle zone        & 236       & 801000    & 62010    &7.71    & 20885      & 2.58       & 64953.3 & 8.08      &478830            &59.77  &\textbf{824360}            &102.92 \\
 beam rider         & 363.9     & \textbf{999999}    & 16850.2  &1.65    & 32463.47   & 3.21       & 90881.6 & 9.06      &162100            &16.18  &422390            &42.22   \\
 berzerk            & 123.7     & \textbf{1057940}   & 2545.6   &0.23    & 1852.7     & 0.16       & 25579.5 & 2.41      &7607                       &0.71   &14649             &1.37          \\
 bowling            & 23.1      & \textbf{300}       & 30       &2.49    & 59.92      & 13.30      & 48.3    & 9.10      &201.9             &64.57  &205.2             &65.76   \\
 boxing             & 0.1       & \textbf{100}       & 99.6     &99.60   & 99.96      & 99.96      & \textbf{100}     & \textbf{100.00}    &\best{100}        &\best{100.00 } &\textbf{100}               &\textbf{100.00}    \\
 breakout           & 1.7       & \textbf{864}       & 417.5    &48.22   & 787.34     & 91.11      & 747.9   & 86.54     &\best{864}        &\best{100.00 } &\textbf{864}             &\textbf{100.00 }  \\
 centipede          & 2090.9    & \textbf{1301709}   & 8167.3   &0.47    & 11049.75   & 0.69       & 292792  & 22.37     &155830            &11.83          &195630            &14.89 \\
 chopper command    & 811       & \textbf{999999}    & 16654    &1.59    & 28255      & 2.75       & 761699  & 76.15     &\best{999999}     &\best{100.00 } &\textbf{999999}            &\textbf{100.00}\\
 crazy climber      & 10780.5   & 219900    & 168788.5 &75.56   & 136950     & 60.33      & 167820  & 75.10     &201000            &90.96          &\textbf{241170}            &\textbf{110.17}\\
 defender           & 2874.5    & \textbf{6010500}   & 55105    &0.87    & 185203     & 3.03       & 336953  & 5.56      &893110            &14.82          &970540            &16.11\\
 demon attack       & 152.1     & \textbf{1556345}   & 111185   &7.13    & 132826.98  & 8.53       & 133530  & 8.57      &675530            &43.40          &\textbf{787985}   &\textbf{50.63}\\
 double dunk        & -18.6     & 21        & -0.3     &46.21   & -0.33      & 46.14      & 14      & 82.32     &\best{24}                  &\best{107.58 } &\textbf{24}                &\textbf{107.58}  \\
 enduro             & 0         & 9500      & 2125.9   &22.38   & 0          & 0.00       & 0       & 0.00      &\best{14330}               &\best{150.84  }&14300             &150.53  \\
 fishing derby      & -91.7     & \textbf{71}        & 31.3     &75.60   & 44.85      & 83.93      & 45.2    & 84.14     &59                &92.89          &65               &96.31\\
 freeway            & 0         & \textbf{38}        & 34       &89.47   & 0          & 0.00       & 0       & 0.00      &34                &89.47          &34               &89.47\\
 frostbite          & 65.2      & \textbf{454830}    & 9590.5   &2.09    & 317.75     & 0.06       & 5083.5  & 1.10      &10485             &2.29           &11330            &2.48 \\
 gopher             & 257.6     & 355040    & 70354.6  &19.76   & 66782.3    & 18.75      & 114820.7& 32.29     &\best{488830}              &\best{137.71 } &473560           &133.41 \\
 gravitar           & 173       & \textbf{162850}    & 1419.3   &0.77    & 359.5      & 0.11       & 1106.2  & 0.57      &5905              &3.52           &5915             &3.53\\
 hero               & 1027      & \textbf{1000000}   & 55887.4  &5.49    & 33730.55   & 3.27       & 31628.7 & 3.06      &38330                      &3.73           &38225            &3.72  \\
 ice hockey         & -11.2     & 36        & 1.1      &26.06   & 3.48       & 31.10      & 17.4    & 60.59     &44.92              &118.94      &\textbf{47.11}           &\textbf{123.54} \\
 jamesbond          & 29        & 45550     & 19809    &43.45   & 601.5      & 1.26       & 37999.8 & 83.41     &594500              &1305.93 &\textbf{620780}          &\textbf{1363.66}\\
 kangaroo           & 52        & \textbf{1424600}   & 14637.5  &1.02    & 1632       & 0.11       & 14308   & 1.00      &14500                      &1.01           &14636           &1.02  \\
 krull              & 1598      & 104100    & 8741.5   &6.97    & 8147.4     & 6.39       & 9387.5  & 7.60      &97575             &93.63          &\textbf{594540}          &\textbf{578.47}\\
 kung fu master     & 258.5     & 1000000   & 52181    &5.19    & 43375.5    & 4.31       & 607443  & 60.73     &140440            &14.02          &\textbf{1666665}          &\textbf{166.68}\\
 montezuma revenge  &0          & \textbf{1219200}   & 384      &0.03    & 0          & 0.00       & 0.3     & 0.00      &3000              &0.25           &2500            &0.21\\
 ms pacman          & 307.3     & \textbf{290090}    & 5380.4   &1.75    & 7342.32    & 2.43       & 6565.5  & 2.16      &11536             &3.87           &11573           &3.89\\
 name this game     & 2292.3    & 25220     & 13136    &47.30   & 21537.2    & 83.94      & 26219.5 & 104.36    &34434               &140.19  &\textbf{36296}  &\textbf{148.31}    \\
 phoenix            & 761.5     & \textbf{4014440}   & 108529   &2.69    & 210996.45  & 5.24       & 519304  & 12.92     &894460            &22.27          &959580          &23.89\\
 pitfall            & -229.4    & \textbf{114000}    & 0        &0.20    & -1.66      & 0.20       & -0.6    & 0.20      &\best{0    }      &0.20           &-4.3            &0.20\\
 pong               & -20.7     & \textbf{21}        & 20.9     &99.76   & 20.98      & 99.95      & \textbf{21}      & \textbf{100.00}    &\best{21   }      &\best{100.00 } &\textbf{21}     &\textbf{100.00}    \\
 private eye        & 24.9      & \textbf{101800}    & 4234     &4.14    & 98.5       & 0.07       & 96.3    & 0.07      &15100             &14.81          &15100           &14.81\\
 qbert              & 163.9     & \textbf{2400000}   & 33817.5  &1.40    & 351200.12  & 14.63      & 21449.6 & 0.89      &27800             &1.15           &28657           &1.19\\
 riverraid          & 1338.5    & \textbf{1000000}   & 22920.8  &2.16    & 29608.05   & 2.83       & 40362.7 & 3.91      &28075             &2.68           &28349           &2.70\\
 road runner        & 11.5      & \textbf{2038100}   & 62041    &3.04    & 57121      & 2.80       & 45289   & 2.22      &878600            &43.11          &999999          &49.06\\
 robotank           & 2.2       & 76        & 61.4     &80.22   & 12.96      & 14.58      & 62.1    & 81.17     &108.2               &143.63  &\textbf{113.4}           &\textbf{150.68}\\
 seaquest           & 68.4      & 999999    & 15898.9  &1.58    & 1753.2     & 0.17       & 2890.3  & 0.28      &943910	             &94.39&\textbf{1000000}          &\textbf{100.00}\\
 skiing             & -17098    & \textbf{-3272}     & -12957.8 &29.95   & -10180.38  & 50.03      & -29968.4& -93.09    &-6774             &74.67          &-6025	          &86.77\\
 solaris            & 1236.3    & \textbf{111420}    & 3560.3   &2.11    & 2365       & 1.02       & 2273.5  & 0.94      &11074             &8.93           &9105            &7.14\\
 space invaders     & 148       & \textbf{621535 }   & 18789    &3.00    & 43595.78   & 6.99       & 51037.4 & 8.19      &140460            &22.58          &154380          &24.82\\
 star gunner        & 664       & 77400     & 127029   &164.67  & 200625     & 260.58     & 321528  & 418.14    &465750              &606.09  &\textbf{677590} &\textbf{882.15}\\
 surround           & -10       & 9.6       & \textbf{9.7}      &\textbf{100.51}  & 7.56       & 89.59      & 8.4     & 93.88  &-7.8        &11.22          &2.606           &64.32\\
 tennis             & -23.8     & 21        & 0        &53.13   & 0.55       & 54.35      & 12.2    & 80.36     &\best{24       }           &\best{106.70    } &\textbf{24}  &\textbf{106.70}\\
 time pilot         & 3568      & 65300     & 12926    &15.16   & 48481.5    & 72.76      & 105316  & 164.82    &216770              &345.37         &\textbf{450810} &\textbf{724.49}\\
 tutankham          & 11.4      & \textbf{5384}      & 241      &4.27    & 292.11     & 5.22       & 278.9   & 4.98      &423.9             &7.68           &418.2           &7.57\\
 up n down          & 533.4     & 82840     & 125755   &152.14  & 332546.75  & 403.39     & 345727  & 419.40    &\best{986440}              &\best{1197.85  } &966590          &1173.73  \\
 venture            & 0         & \textbf{38900}     & 5.5      &0.01    & 0          & 0.00       & 0       & 0.00      &2000              &5.23           &2000            &5.14\\
 video pinball      & 0         & \textbf{89218328}  & 533936.5 &0.60    & 572898.27  & 0.64       & 511835  & 0.57      &925830            &1.04           &978190          &1.10\\
 wizard of wor      & 563.5     & \textbf{395300}    & 17862.5  &4.38    & 9157.5     & 2.18       & 29059.3 & 7.22      &64439             &16.14          &63735           &16.00\\
 yars revenge       & 3092.9    & \textbf{15000105}  & 102557   &0.66    & 84231.14   & 0.54       & 166292.3& 1.09      &972000            &6.46           &968090          &6.43\\
 zaxxon             & 32.5      & 83700     & 22209.5  &26.51   & 32935.5    & 39.33      & 41118   & 49.11     &109140              &130.41  &\textbf{216020}          &\textbf{258.15}\\
\hline
MEAN HWRNS(\%)      & 0.00      & 100.00    &          & 28.39  &            & 34.52  &        & 45.39 &      & \GDIImeanHWRNS  &      & \textbf{\GDIHmeanHWRNS} \\
Learning Efficiency &     0.00 & N/A  &         & 1.42E-09 &         & 1.73E-09  &        & 2.27E-09 &      & 5.90E-09       &      & \textbf{\GDIHmeanHWRNSle} \\
\hline   
MEDIAN HWRNS(\%)    & 0.00      & \textbf{100.00}    &          & 4.92   &            & 4.31  &        & 8.08  &      &  \GDIImedianHWRNS  &      & \GDIHmedianHWRNS  \\
Learning Efficiency & 0.00   & N/A   &         & 2.46E-10 &         & 2.16E-10  &        & 4.04E-10 &      & 1.79E-09       &      & \textbf{\GDIHmedianHWRNSle} \\
\hline   
HWRB    & 0      &\textbf{57}    &          & 4   &            & 3  &        & 7  &      &  \GDIIHWRB  &      & \GDIHHWRB\\
\bottomrule
\end{tabular}
\caption{Score table of SOTA 200M model-free algorithms on HWRNS}
\end{center}
\end{table}
\clearpage

\subsubsection{Comparison with SOTA 10B+ Model-Free Algorithms on HWRNS}

\begin{table}[!hb]
\scriptsize
\begin{center}
\setlength{\tabcolsep}{1.0pt}
\begin{tabular}{c c c c c c c c c c c }
\toprule
 Games & R2D2 & HWRNS(\%) & NGU & HWRNS(\%) & AGENT57 & HWRNS(\%) & GDI-I$^3$ & HWRNS(\%) & GDI-H$^3$ & HWRNS(\%) \\
\midrule
Scale  & 10B   &        & 35B &         & 100B     &        & 200M &  &    200M   &\\
\midrule
 alien              & 109038.4          & 43.23       & 248100          & 98.48          & \textbf{297638.17}   &\textbf{118.17}         &43384             &17.15   &48735             &19.27    \\
 amidar             & 27751.24          & 26.64       & 17800           & 17.08          & \textbf{29660.08}    &\textbf{28.47}          &1442              &1.38    &1065              &1.02    \\
 assault            & 90526.44          & 1071.91     & 34800           & 410.44         & 67212.67             &795.17         &63876             &755.57  &\textbf{97155}             &\textbf{1150.59}    \\
 asterix            & 999080   & 99.91       & 950700          & 95.07          & 991384.42            &99.14          &759910            &75.99   &\textbf{999999}            &\textbf{100.00}    \\
 asteroids          & 265861.2          & 2.52        & 230500          & 2.19           & 150854.61            &1.43           &751970     &7.15    &\textbf{760005}            &\textbf{7.23}    \\
 atlantis           & 1576068           & 14.76       & 1653600         & 15.49          & 1528841.76           &14.31          &3803000    &35.78   &\textbf{3837300}           &\textbf{36.11}    \\
 bank heist         & \textbf{46285.6}  & \textbf{56.40}       & 17400           & 21.19          & 23071.5              &28.10          &1401              &1.69    &1380              &1.66    \\
 battle zone        & 513360            & 64.08       & 691700          & 86.35          & \textbf{934134.88}   &\textbf{116.63}         &478830            &59.77   &824360            &102.92    \\
 beam rider         & 128236.08         & 12.79       & 63600           & 6.33           & 300509.8    &30.03          &162100            &16.18   &\textbf{422390}            &\textbf{42.22}    \\
 berzerk            & 34134.8           & 3.22        & 36200           & 3.41           & \textbf{61507.83}    &\textbf{5.80 }          &7607              &0.71    &14649             &1.37    \\
 bowling            & 196.36            & 62.57       & 211.9           & 68.18          & \textbf{251.18}      &\textbf{82.37}          &201.9             &64.57   &205.2             &65.76    \\
 boxing             & 99.16             & 99.16       & 99.7            & 99.70          & \textbf{100}         &\textbf{100.00}         &\best{100}        &\textbf{100.00} &\textbf{100}       &\textbf{100.00}     \\
 breakout           & 795.36            & 92.04       & 559.2           & 64.65          & 790.4                &91.46          &\best{864}        &\textbf{100.00}  &\textbf{864}             &\textbf{100.00}    \\
 centipede          & 532921.84         & 40.85       & \textbf{577800} & \textbf{44.30}          & 412847.86            &31.61          &155830            &11.83   &195630            &14.89    \\
 chopper command    &960648             & 96.06       &999900           & 99.99          &999900                &99.99          &\best{999999}     &\textbf{100.00}  &\textbf{999999}            &\textbf{100.00}    \\
 crazy climber      & 312768            & 144.41      & 313400          & 144.71         &\textbf{565909.85}    &\textbf{265.46}         &201000            &90.96   &241170            &110.17    \\
 defender           & 562106            & 9.31        & 664100          & 11.01          & 677642.78            &11.23          &893110     &14.82   &\textbf{970540}            &\textbf{16.11}    \\
 demon attack       & 143664.6          & 9.22        & 143500          & 9.21           & 143161.44            &9.19           &675530     &43.40    &\textbf{787985}   &\textbf{50.63}   \\
 double dunk        & 23.12             & 105.35      & -14.1           & 11.36          & 23.93       &107.40         &\textbf{24}                &\textbf{107.58}  &\textbf{24}                &\textbf{107.58}    \\
 enduro             & 2376.68           & 25.02       & 2000            & 21.05          & 2367.71              &24.92          &\best{14330}      &\textbf{150.84}  &14300             &150.53    \\
 fishing derby      & 81.96             & 106.74      & 32              & 76.03          & \textbf{86.97}       &\textbf{109.82}         &59                &92.89   &65                &96.31    \\
 freeway            & \textbf{34}       & \textbf{89.47}       & 28.5            & 75.00          & 32.59                &85.76          &\best{34}         &\textbf{89.47} &\textbf{34}         &\textbf{89.47}       \\
 frostbite          & 11238.4           & 2.46        & 206400          & 45.37          &\textbf{541280.88}    &\textbf{119.01}         &10485             &2.29    &11330            &2.48    \\
 gopher             & 122196            & 34.37       & 113400          & 31.89          & 117777.08            &33.12          &\best{488830}     &\textbf{137.71}  &473560           &133.41    \\
 gravitar           & 6750              & 4.04        & 14200           & 8.62           &\textbf{19213.96}     &\textbf{11.70}          &5905              &3.52    &5915             &3.53    \\
 hero               & 37030.4           & 3.60        & 69400           & 6.84           &\textbf{114736.26}    &\textbf{11.38}          &38330             &3.73    &38225            &3.72    \\
 ice hockey         & \textbf{71.56}    & \textbf{175.34}      &-4.1             & 15.04          & 63.64                &158.56         &37.89             &118.94  &47.11           &123.54    \\
 jamesbond          & 23266             & 51.05       & 26600           & 58.37          & 135784.96            &298.23         &594500              &1305.93 &\textbf{620780}          &\textbf{1363.66}    \\
 kangaroo           & 14112             & 0.99        & \textbf{35100}  & \textbf{2.46}           &24034.16              &1.68           &14500             &1.01    &14636           &1.02    \\
 krull              & 145284.8          & 140.18      & 127400          & 122.73         & 251997.31   &244.29         &97575             &93.63   &\textbf{594540}          &\textbf{578.47}    \\
 kung fu master     & 200176            & 20.00       & 212100          & 21.19          & 206845.82            &20.66          &140440            &14.02            &\textbf{1666665}	          &\textbf{166.68}\\
 montezuma revenge  & 2504              & 0.21        & \textbf{10400}  & \textbf{0.85}           &9352.01               &0.77           &3000              &0.25    &2500            &0.21    \\
 ms pacman          & 29928.2           & 10.22       & 40800           & 13.97          & \textbf{63994.44}    &\textbf{21.98}          &11536             &3.87    &11573           &3.89    \\
 name this game     & 45214.8           & 187.21      & 23900           & 94.24          &\textbf{54386.77}     &\textbf{227.21}         &34434             &140.19  &36296           &148.31    \\
 phoenix            & 811621.6          & 20.20       & \textbf{959100} & \textbf{23.88}          &908264.15             &22.61          &894460            &22.27   &959580          &23.89    \\
 pitfall            & 0                 & 0.20        & 7800            & 7.03           &\textbf{18756.01}     &\textbf{16.62}          &0                 &0.20    &-4.3            &0.20    \\
 pong               & \textbf{21}       & \textbf{100.00}      & 19.6            & 96.64          & 20.67                &99.21          &\best{21}         &\textbf{100.00} &\textbf{21}      &\textbf{100.00}      \\
 private eye        & 300               & 0.27        & \textbf{100000} & \textbf{98.23}          & 79716.46             &78.30          &15100             &14.81   &15100           &14.81    \\
 qbert              & 161000            & 6.70        & 451900          & 18.82          &\textbf{580328.14}    &\textbf{24.18}          &27800             &1.15    &28657           &1.19   \\
 riverraid          & 34076.4           & 3.28        & 36700           & 3.54           & \textbf{63318.67}    &\textbf{6.21}           &28075             &2.68    &28349           &2.70    \\
 road runner        & 498660            & 24.47       & 128600          & 6.31           & 243025.8             &11.92          &878600     &43.11   &\textbf{999999}          &\textbf{49.06}    \\
 robotank           & \textbf{132.4}    & \textbf{176.42}      & 9.1             & 9.35           &127.32                &169.54         &108               &143.63  &113.4           &150.68    \\
 seaquest           & 999991.84         & 100.00      & \textbf{1000000}& \textbf{100.00}         &999997.63             &100.00         &943910	             &94.39 &\textbf{1000000}  &\textbf{100.00}    \\
 skiing             & -29970.32         & -93.10      & -22977.9        & -42.53         & \textbf{-4202.6}     &\textbf{93.27}          &-6774             &74.67           &-6025	          &86.77\\
 solaris            & 4198.4            & 2.69        & 4700            & 3.14           & \textbf{44199.93}    &\textbf{38.99}          &11074             &8.93            &9105            &7.14\\
 space invaders     & 55889             & 8.97        & 43400           & 6.96           & 48680.86             &7.81           &140460     &22.58           &\textbf{154380} &\textbf{24.82}\\
 star gunner        & 521728            & 679.03      & 414600          & 539.43         &\textbf{839573.53}    &\textbf{1093.24}        &465750            &606.09          &677590          &882.15\\
 surround           & \textbf{9.96}     & \textbf{101.84}      & -9.6            & 2.04           & 9.5                  &99.49          &-7.8              &11.22           &2.606           &64.32\\
 tennis             & \textbf{24}       & \textbf{106.70}      & 10.2            & 75.89          & 23.84                &106.34         &\textbf{24}                &\textbf{106.70}          &\textbf{24}              &\textbf{106.70}\\
 time pilot         & 348932            & 559.46      & 344700          & 552.60         &\textbf{405425.31}    &\textbf{650.97}         &216770            &345.37          &450810          &724.49\\
 tutankham          & 393.64            & 7.11        & 191.1           & 3.34           & \textbf{2354.91}     &\textbf{43.62}          &423.9             &7.68            &418.2           &7.57\\
 up n down          & 542918.8          & 658.98      & 620100          & 752.75         & 623805.73            &757.26         &\best{986440}     &\textbf{1197.85}         &966590          &1173.73\\
 venture            & 1992              & 5.12        & 1700            & 4.37           &\textbf{2623.71}      &\textbf{6.74}           &2000              &5.23            &2000            &5.14\\
 video pinball      & 483569.72         & 0.54        & 965300          & 1.08           &\textbf{992340.74}    &\textbf{1.11}           &925830            &1.04            &978190          &1.10\\
 wizard of wor      & 133264            & 33.62       & 106200          & 26.76          &\textbf{157306.41}    &\textbf{39.71}          &64439             &16.14           &63735           &16.00\\
 yars revenge       & 918854.32         & 6.11        & 986000          & 6.55           &\textbf{998532.37}    &\textbf{6.64}           &972000            &6.46            &968090          &6.43\\
 zaxxon             & 181372            & 216.74      & 111100          & 132.75         &\textbf{249808.9}     &\textbf{298.53}         &109140            &130.41          &216020          &258.15\\
\hline
MEAN HWRNS(\%)        &                   & 98.78       &                 &  76.00         &                      &125.92           &                  & \GDIImeanHWRNS        &         & \textbf{\GDIHmeanHWRNS} \\
Learning Efficiency &     & 9.88E-11 &         & 2.17E-11  &        & 1.26E-11 &      & 5.90E-09       &      & \textbf{\GDIHmeanHWRNSle} \\
\hline
MEDIAN HWRNS(\%)      &                   & 33.62       &                 &  21.19         &                      &43.62   &                  & \GDIImedianHWRNS          &         & \textbf{\GDIHmedianHWRNS} \\
Learning Efficiency & & 3.36E-11 &         & 6.05E-12  &        & 4.36E-12 &      & 1.79E-09       &      & \textbf{\GDIHmedianHWRNSle} \\
\hline
HWRB                  &                   & 15          &                 &  8             &                      &18               &                  & \GDIIHWRB             &       & \textbf{\GDIHHWRB} \\
\bottomrule
\end{tabular}
\caption{Score table of SOTA 10B+ model-free algorithms on HWRNS.}
\end{center}
\end{table}
\clearpage

\subsubsection{Comparison with SOTA Model-Based Algorithms on HWRNS}
SimPLe \citep{modelbasedatari} and DreamerV2\citep{dreamerv2} haven't evaluated all 57 Atari Games in their paper. For fairness, we set the score on those games as N/A, which will not be considered when calculating the median and mean HWRNS and human world record breakthrough (HWRB).  

\begin{table}[!hb]
\scriptsize
\begin{center}
\setlength{\tabcolsep}{1.0pt}
\begin{tabular}{ c c c c c c c c c c c }
\toprule
 Games              & MuZero         & HWRNS(\%)      & DreamerV2 & HWRNS(\%)    & SimPLe             & HWRNS(\%)          & GDI-I$^3$     & HWRNS(\%) & GDI-H$^3$ & HWRNS(\%) \\
\midrule
Scale               & 20B            &              & 200M      &            & 1M               &                  & 200M     & &    200M   &\\
\midrule
 alien              & \textbf{741812.63}      & \textbf{294.64  }     &3483       & 1.29     &616.9     & 0.15    & 43384       &  17.15            &48735             &19.27       \\
 amidar             & \textbf{28634.39 }      & \textbf{27.49   }  &2028       & 1.94     &74.3      & 0.07    & 1442        &  1.38                &1065              &1.02     \\
 assault            & \textbf{143972.03}      & \textbf{1706.31 }     &7679       & 88.51       &527.2     & 3.62       & 63876       &  755.57     &97155             &1150.59    \\
 asterix            & 998425         & 99.84        &25669      & 2.55     &1128.3    & 0.09       & 759910      &  75.99         &\textbf{999999}            &\textbf{100.00} \\
 asteroids          & 678558.64               & 6.45        &3064                & 0.02    &793.6              & 0.00    &751970         & 7.15   &\textbf{760005}            &\textbf{7.23}  \\
 atlantis           & 1674767.2               & 15.69           &989207              & 9.22       &20992.5            & 0.08       &3803000        & 35.78  &\textbf{3837300}           &\textbf{36.11}  \\
 bank heist         & 1278.98                 & 1.54        &1043                & 1.25    &34.2               & 0.02    &\best{1401}           & \best{1.69  }  &1380              &1.66 \\
 battle zone        & \textbf{848623         }& \textbf{105.95}      &31225      & 3.87     &4031.2    & 0.47       & 478830      &  59.77      &824360            &102.92     \\
 beam rider         & \textbf{454993.53}      & \textbf{45.48}      &12413      & 1.21     &621.6     & 0.03    & 162100      &  16.18          &422390            &42.22 \\
 berzerk            & \textbf{85932.6        }& \textbf{8.11}      &751        & 0.06     &N/A       & N/A     & 7607        &  0.71            &14649             &1.37 \\
 bowling            & \textbf{260.13         }& \textbf{85.60}      &48         & 8.99     &30        & 2.49    & 202         &  64.57          &205.2             &65.76   \\
 boxing             & \textbf{100}                     & \textbf{100.00}       &87                  & 86.99       &7.8         & 7.71       & \best{100}  & \best{100.00 } &\textbf{100}               &\textbf{100.00} \\
 breakout           & \textbf{864}                     & \textbf{100.00}          &350                 & 40.39       &16.4     & 1.70       & \best{864}  & \best{100.00}  &\textbf{864}             &100.00 \\
 centipede          & \textbf{1159049.27}     & \textbf{89.02}     &6601       & 0.35     &N/A       & N/A     & 155830      & 11.83                     &195630            &14.89\\
 chopper command    & 991039.7                & 99.10     &2833                & 0.20     & 979.4             & 0.02    & \best{999999}        & \best{100.00} &\textbf{999999}            &\textbf{100.00}  \\
 crazy climber      & \textbf{458315.4}       & \textbf{214.01    }  &141424     & 62.47       & 62583.6  & 24.77   & 201000      & 90.96                      &241170            &110.17 \\
 defender           & 839642.95               & 13.93      & N/A                & N/A        & N/A               & N/A      & 893110 & 14.82 &\textbf{970540}            &\textbf{16.11}   \\
 demon attack       & 143964.26               & 9.24      & 2775              &0.17      & 208.1             & 0.00    & 675530      & 43.40  &\textbf{787985}   &\textbf{50.63} \\
 double dunk        & 23.94          & 107.42  & 22        &102.53        & N/A      & N/A     & \textbf{24}          & \textbf{107.58}                    &\textbf{24}                &\textbf{107.58}    \\
 enduro             & 2382.44                 & 25.08       & 2112              &22.23         & N/A               & N/A     & \best{14330} & \best{150.84}&14300             &150.53   \\
 fishing derby      & \textbf{91.16}          & \textbf{112.39     }              &93.24          &286.77      &-90.7     & 0.61    & 59          & 92.89  &65               &96.31        \\
 freeway            & 33.03                   & 86.92       & \textbf{34}                 &\textbf{89.47}          &16.7               & 43.95   & \best{34}      & \best{89.47} &\textbf{34}               &\textbf{89.47}   \\
 frostbite          & \textbf{631378.53}      & \textbf{138.82}     & 15622    &3.42       &236.9     & 0.04    & 10485       & 2.29                           &11330            &2.48\\
 gopher             & 130345.58               & 36.67      & 53853             &15.11         &596.8              & 0.10       & \best{488830} & \best{137.71} &473560           &133.41  \\
 gravitar           & \textbf{6682.7     }    & \textbf{4.00    }  & 3554     &2.08      &173.4     & 0.00    & 5905        & 3.52     &5915             &3.53       \\
 hero               & \textbf{49244.11}       & \textbf{4.83    }  & 30287    &2.93      &2656.6    & 0.16        & 38330       & 3.73 &38225            &3.72           \\
 ice hockey         & \textbf{67.04      }    & \textbf{165.76  }  & 29        &85.17         &-11.6     & -0.85       & 38          & 118.94 &47.11           &123.54         \\
 jamesbond          & 41063.25                & 90.14     & 9269              &20.30         &100.5     & 0.16    &594500              &1305.93 &\textbf{620780}          &\textbf{ 1363.66}  \\
 kangaroo           & \textbf{16763.6        }& \textbf{1.17}  & 11819     &0.83      &51.2      & 0.00    & 14500       & 1.01          &14636           &1.02   \\
 krull              & 269358.27               & 261.22     & 9687     &7.89       &2204.8    & 0.59    & 97575       & 93.63    &\textbf{594540}          &\textbf{578.47}       \\
 kung fu master     & 204824         & 20.46  & 66410    &6.62       &14862.5   & 1.46    & 140440      & 14.02        &\textbf{1666665}          &\textbf{166.68 } \\
 montezuma revenge  & 0                       & 0.00         & 1932              &0.16       &N/A       & N/A     & \best{3000}          & \best{0.25  } &2500            &0.21  \\
 ms pacman          & \textbf{243401.1 }      & \textbf{83.89   }  & 5651     &1.84       &1480      & 0.40    & 11536       & 3.87       &11573           &3.89     \\
 name this game     & \textbf{157177.85}      & \textbf{675.54  }  & 14472    &53.12          &2420.7    & 0.56    & 34434       & 140.19 &36296           &148.31        \\
 phoenix            & \textbf{955137.84}      & \textbf{23.78   }  & 13342     &0.31       &N/A       & N/A     & 894460      & 22.27     &959580          &23.89     \\
 pitfall            & \textbf{0}                       & \textbf{0.20}               & -1                 &0.20       &N/A       & N/A     & \best{0} & \best{0.20} &-4.3            &0.20     \\
 pong               & \textbf{21}                      & 100.00             & 19                 &95.20          & 12.8     & 80.34   & \best{21} & \best{100.00} &\textbf{21}              &\textbf{100.00}   \\
 private eye        & \textbf{15299.98 }      & \textbf{15.01}     & 158       &0.13       & 35       & 0.01    & 15100       & 14.81             &15100           &14.81     \\
 qbert              & \textbf{72276          }& \textbf{3.00}      & 162023    &6.74       & 1288.8   & 0.05    & 27800       & 1.15              &28657           &1.19 \\
 riverraid          & \textbf{323417.18}      & \textbf{32.25}     & 16249    &1.49       & 1957.8   & 0.06    & 28075       & 2.68               &28349           &2.70\\
 road runner        & 613411.8                & 30.10              & 88772             &4.36       & 5640.6   & 0.28       & 878600 & 43.11   &\textbf{999999}          &\textbf{49.06}   \\
 robotank           & \textbf{131.13}         & \textbf{174.70}    & 65        &85.09          & N/A      & N/A     & 108         & 143.63        &113.4           &150.68 \\
 seaquest           & 999976.52               & 100.00             & 45898             &4.58       & 683.3             & 0.06    &943910	             &94.39 &\textbf{1000000}          &\textbf{100.00}\\
 skiing             & -29968.36      & -93.09      & -8187    &64.45          & N/A      & N/A     & -6774       & 74.67        &\textbf{-6025}	         &\textbf{86.77} \\
 solaris            & 56.62                   & -1.07              & 883                &-0.32          & N/A               & N/A     & \best{11074}         & \best{8.93 } &9105            &7.14\\
 space invaders     & 74335.3                 & 11.94                 & 2611               &0.40       & N/A               & N/A     & 140460        & 22.58  &\textbf{154380}          &\textbf{24.82}   \\
 star gunner        & 549271.7       & 714.93     & 29219    &37.21          & N/A      & N/A     & 465750      & 606.09      &\textbf{677590}          &\textbf{882.15}\\
 surround           & \textbf{9.99       }    & \textbf{101.99}     & N/A       &N/A         & N/A      & N/A     & -8          & 11.22         &2.606           &64.32 \\
 tennis             & 0                       & 53.13      & 23        &104.46        & N/A      & N/A     & \textbf{24}          & \textbf{106.70} &\textbf{24}              &\textbf{106.70}      \\
 time pilot         & \textbf{476763.9}       & \textbf{766.53}         & 32404    &46.71         & N/A      & N/A     & 216770      & 345.37       &450810          &724.49   \\
 tutankham          & \textbf{491.48     }    & \textbf{8.94}   & 238       &4.22         & N/A      & N/A     & 424         & 7.68                 &418.2           &7.57 \\
 up n down          & 715545.61               & 868.72            & 648363            &787.09        & 3350.3            & 3.42   & \best{986440}   & \best{1197.85} &966590          &1173.73    \\
 venture            & 0.4                     & 0.00        & 0                  &0.00       & N/A               & N/A     & \best{2030}          & \best{5.23}      &2000            &5.14  \\
 video pinball      & \textbf{981791.88}      & \textbf{1.10}      & 22218    &0.02     & N/A      & N/A     & 925830      & 1.04                                    &978190          &1.10\\
 wizard of wor      & \textbf{197126         }& \textbf{49.80}      & 14531    &3.54     & N/A      & N/A     & 64439       & 16.14                                  &63735           &16.00\\
 yars revenge       & 553311.46               & 3.67      & 20089             &0.11     & 5664.3            & 0.02    & \best{972000}        & \best{6.46}           &968090          &6.43\\
 zaxxon             & \textbf{725853.9}       & \textbf{867.51}      & 18295    &21.83          & N/A      & N/A     & 109140      & 130.41                          &216020          &258.15\\
\hline
MEAN HWRNS(\%) &               &152.1  &         &  37.9 &                                          & 4.67  &     & \GDIImeanHWRNS &                  & \textbf{\GDIHmeanHWRNS} \\
Learning Efficiency &      &7.61E-11 &         & 1.89E-09 &        & \textbf{4.67E-08} &      & 5.90E-09       &      & \GDIHmeanHWRNSle \\
\hline
MEDIAN HWRNS(\%) &            & 49.8  &         & 4.22  &                                          & 0.13  &     & \textbf{\GDIImedianHWRNS} &                  & \GDIHmedianHWRNS \\
Learning Efficiency &  & 2.49E-11 &         & 2.11E-10  &        & 1.25E-09 &      & 1.79E-09       &      & \textbf{\GDIHmedianHWRNSle} \\
\hline
HWRB &            & 19   &         & 3    &                                          & 0  &     & \GDIIHWRB & & \textbf{\GDIHHWRB} \\
\bottomrule
\end{tabular}
\caption{Score table of SOTA model-based algorithms on HWRNS.}
\end{center}
\end{table}
\clearpage

\subsubsection{Comparison with Other SOTA algorithms on HWRNS}
In this section, we report the performance of our algorithm compared with other SOTA algorithms, Go-Explore \citep{goexplore} and Muesli \citep{muesli}.

\begin{table}[!hb] 
\scriptsize
\begin{center}
\setlength{\tabcolsep}{1.0pt}
\begin{tabular}{ c c c c c c c c c }
\toprule
 Games      & Muesli              & HWRNS(\%)        & Go-Explore              & HWRNS(\%)                     & GDI-I$^3$ & HWRNS(\%) & GDI-H$^3$ & HWRNS(\%) \\
\midrule
Scale       &                     & 200M        & 10B                     &                             & 200M      &    &    200M   &\\
\midrule    
 alien        &139409          &55.30               &\textbf{959312}       &\textbf{381.06}                & 43384             &17.15     &48735             &19.27            \\
 amidar       &\textbf{21653}  &\textbf{20.78}      &19083        &18.32                & 1442              &1.38                         &1065              &1.02          \\
 assault      &36963           &436.11           &30773                 &362.64                         & 63876      &755.57&\textbf{97155}    &\textbf{1150.59}           \\
 asterix      &316210          &31.61            &999500       &99.95                 & 759910            &75.99        &\textbf{999999}            &\textbf{100.00}    \\
 asteroids    &484609          &4.61             &112952                &1.07                           & 751970     &7.15  &\textbf{760005}            &\textbf{7.23}        \\
 atlantis     &1363427         &12.75            &286460                &2.58                           & 3803000    &35.78 &\textbf{3837300}           &\textbf{36.11}         \\
 bank heist   &1213            &1.46          &\textbf{3668}         &\textbf{4.45  }                & 1401              &1.69            &1380              &1.66    \\
 battle zone  &414107          &51.68            &\textbf{998800}       &\textbf{124.70}                & 478830            &59.77        &824360            &102.92     \\
 beam rider   &288870          &28.86         &371723       &37.15                & 162100            &16.18           &\textbf{422390}            &\textbf{42.22}     \\
 berzerk      &44478           &4.19             &\textbf{131417}      &\textbf{12.41 }                & 7607              &0.71          &14649             &1.37       \\
 bowling      &191             &60.64        &\textbf{247}           &\textbf{80.86 }                & 202               &64.57            &205.2             &65.76 \\
 boxing       &99              &99.00        &91                     &90.99                          & \best{100}        &\best{100.00}    &\textbf{100}      &\textbf{100.00}        \\
 breakout     &791             &91.53          &774                    &89.56                          & \best{864}        &\best{100.00}  &\textbf{864}              &\textbf{100.00}       \\
 centipede    &\textbf{869751} &\textbf{66.76}          &613815      &47.07                 & 155830            &11.83                    &195630            &14.89  \\
 chopper command &101289       &10.06       &996220                &99.62                          & \best{999999}     &\best{100.00}     &\textbf{999999}            &\textbf{100.00}    \\
 crazy climber   &175322       &78.68     &235600       &107.51                & 201000            &90.96               &\textbf{241170}            &\textbf{110.17}\\
 defender        &629482       &10.43        &N/A                    &N/A                            & 893110     &14.82    &\textbf{970540}                &\textbf{16.11}      \\
 demon attack    &129544       &8.31        &239895                 &15.41                          & 675530     &43.40     &\textbf{787985}   &\textbf{50.63}    \\
 double dunk     &-3           &39.39       &\textbf{24}                     &\textbf{107.58}       & \best{24}       &\best{107.58}      &\textbf{24}      &\textbf{107.58}      \\
 enduro          &2362         &24.86                   &1031                   &10.85              & \best{14330}      &\best{150.84}    &14300             &150.53        \\
 fishing derby   &51           &87.71       &\textbf{67}            &\textbf{97.54 }                & 59                &92.89            &65               &96.31\\
 freeway         &33           &86.84                   &\textbf{34}            &\textbf{89.47 }    & \best{34}         &\best{89.47}     &\textbf{34}      &\textbf{89.47}      \\
 frostbite       &301694       &66.33            &\textbf{999990}       &\textbf{219.88}                & 10485             &2.29         &11330            &2.48   \\
 gopher          &104441       &29.37            &134244                &37.77                          & \best{488830}     &\best{137.71} &473560           &133.41       \\
 gravitar        &11660        &7.06           &\textbf{13385}        &\textbf{8.12}                  & 5905              &3.52           &5915             &3.53  \\
 hero            &37161        &3.62                     &37783                  &3.68                           & \best{38330}      &\best{3.73}   &38225           &3.72           \\
 ice hockey      &25           &76.69          &33                     &93.64                          & 45         &118.94           &\textbf{47.11}  &\textbf{123.54} \\
 jamesbond       &19319        &42.38            &200810                &441.07                         &594500              &1305.93         &\textbf{620780}       &\textbf{ 1363.66}\\
 kangaroo        &14096        &0.99                   &\textbf{24300}        &\textbf{1.70}                  & 14500             &1.01             &14636           &1.02\\
 krull           &34221        &31.83                     &63149                 &60.05                          & 97575      &93.63  &\textbf{594540}          &\textbf{578.47}        \\
 kung fu master  &134689       &13.45      &24320                 &2.41                           & 140440     &14.02                 &\textbf{1666665}          &\textbf{166.68}\\
 montezuma revenge  &2359      &0.19             &\textbf{24758}        &\textbf{2.03}                  & 3000              &0.25                   &2500            &0.21\\
 ms pacman          &65278     &22.42     &\textbf{456123}       &\textbf{157.30}                & 11536             &3.87                          &11573           &3.89\\
 name this game     &105043    &448.15     &\textbf{212824}       &\textbf{918.24}               & 34434             &140.19                        &36296           &148.31\\
 phoenix        &805305        &20.05                                &19200                 &0.46                           & 894460     &22.27 &\textbf{959580}          &\textbf{23.89}   \\
 pitfall        &0             &0.20                   &\textbf{7875}          &\textbf{7.09   }               & 0                 &0.2             &-4.3            &0.20 \\
 pong           &20            &97.60                       &\textbf{21}            &\textbf{100.00 }               & \best{21}         &\best{100} &\textbf{21}              &\textbf{100.00}          \\
 private eye    &10323         &10.12                &\textbf{69976}        &\textbf{68.73  }               & 15100             &14.81               &15100           &14.81 \\
 qbert          &157353        &6.55                            &\textbf{999975}       &\textbf{41.66  }               & 27800             &1.15     &28657           &1.19       \\
 riverraid      &\textbf{47323}&\textbf{4.60}               &35588        &3.43               & 28075             &2.68                              &28349           &2.70\\
 road runner    &327025        &16.05                        &999900        &49.06                 & 878600           &43.11       &\textbf{999999}          &\textbf{49.06}    \\
 robotank       &59            &76.96                             &\textbf{143}           &\textbf{190.79 }               & 108         &143.63      &113.4           &150.68          \\
 seaquest       &815970        &81.60                &539456       &53.94                 &943910	             &94.39       &\textbf{1000000}          &\textbf{100.00}    \\
 skiing         &-18407        &-9.47                      &\textbf{-4185}        &\textbf{93.40  }               & -6774             &74.67         &-6025	          &86.77      \\
 solaris        &3031          &1.63                         &\textbf{20306}        &\textbf{17.31  }               & 11074             &8.93        &9105            &7.14        \\
 space invaders &59602         &9.57                 &93147                 &14.97                          & 140460     &22.58        &\textbf{154380}          &\textbf{24.82} \\
 star gunner    &214383        &278.51   &609580                 &793.52                         &465750     &606.09                  &\textbf{677590}          &\textbf{882.15}\\
 surround       &\textbf{9}    &\textbf{96.94}             &N/A                    &N/A                            & -8         &11.22               &2.606           &64.32\\
 tennis         &12            &79.91                           &\best{24}              &\best{106.7}                   & \best{24}         &\best{106.70   }  &\textbf{24}              &\textbf{106.70}          \\
 time pilot     &359105 &575.94    &183620                &291.67                         & 216770     & 345.37                    &\textbf{450810}          &\textbf{724.49}\\
 tutankham      &252           &4.48         &\textbf{528}           &\textbf{9.62}                  & 424               &7.68                       &418.2           &7.57\\
 up n down      &649190        &788.10                &553718                &672.10                         & \best{986440}     &\best{1197.85}     &966590          &1173.73      \\
 venture        &2104          &5.41            &\textbf{3074}         &\textbf{7.90}                & 2035              &5.23                       &2000            &5.14\\
 video pinball  &685436        &0.77                 &\textbf{999999}       &\textbf{1.12}               & 925830            &1.04                   &978190          &1.10\\
 wizard of wor  &93291         &23.49                &\textbf{199900}       &\textbf{50.50}               & 64293             &16.14                 &63735           &16.00\\
 yars revenge   &557818        &3.70               &\textbf{999998}       &\textbf{6.65}               & 972000            &6.46                     &968090          &6.43\\
 zaxxon         &65325         &78.04                 &18340                 &21.88                        & 109140     &130.41        &\textbf{216020} &\textbf{258.15}  \\
\hline    
MEAN HWRNS(\%)  &    & 75.52 &                       & 116.89                       &            &  \GDIImeanHWRNS  &                  & \textbf{\GDIHmeanHWRNS} \\
Learning Efficiency &    & 3.78E-09 &                       & 1.17E-10                      &      & 5.90E-09       &      & \textbf{\GDIHmeanHWRNSle}\\
\hline
MEDIAN HWRNS(\%)&   & 24.68  &                       & 50.5              &            & \GDIImedianHWRNS   &                  & \textbf{\GDIHmedianHWRNS} \\
Learning Efficiency &    & 1.24E-09 &                       & 5.05E-11                       &      & 1.79E-09       &      & \textbf{\GDIHmedianHWRNSle} \\
\hline
HWRB    &       & 5 &               & 15              &            & \GDIIHWRB  & & \textbf{\GDIHHWRB} \\
\bottomrule
\end{tabular}
\caption{Score table of other SOTA  algorithms on HWRNS.}
\end{center}
\end{table}
\clearpage

\subsection{Atari Games Table of Scores Based on SABER}
\label{app: Atari Games Table of Scores Based on SABER}
In this part, we detail the raw score of several representative SOTA algorithms including the SOTA 200M model-free algorithms, SOTA 10B+ model-free algorithms, SOTA model-based algorithms and other SOTA algorithms.\footnote{200M and 10B+ represent the training scale.} Additionally, we calculate the capped human world records normalized world score (CHWRNS) or called SABER \citep{atarihuman} of each game with each algorithms. First of all, we demonstrate the sources of the scores that we used.
Random scores  are from \citep{agent57}.
Human world records (HWR) are form \citep{dreamerv2,atarihuman}.
Rainbow's scores are from \citep{rainbow}.
IMPALA's scores are from \citep{impala}.
LASER's scores are from \citep{laser}, no sweep at 200M.
As there are many versions of R2D2 and NGU, we use original papers'.
R2D2's scores are from \citep{r2d2}.
NGU's scores are from \citep{ngu}.
Agent57's scores are from \citep{agent57}.
MuZero's scores are from \citep{muzero}.
DreamerV2's scores are from \citep{dreamerv2}.
SimPLe's scores are form \citep{modelbasedatari}.
Go-Explore's scores are form \citep{goexplore}.
Muesli's scores are form \citep{muesli}.
In the following we detail the raw scores and SABER of each algorithms on 57 Atari games.
\clearpage
\subsubsection{Comparison with SOTA 200M Algorithms on SABER}

\begin{table}[!hb]
\scriptsize
\begin{center}
\setlength{\tabcolsep}{1.0pt}
\begin{tabular}{ c c c c c c c  c c c c c c }
\toprule
Games & RND & HWR & RAINBOW & SABER(\%) & IMPALA & SABER(\%) & LASER & SABER(\%) & GDI-I$^3$ & SABER(\%) & GDI-H$^3$ & SABER(\%)\\
\midrule
Scale  &     &       & 200M   &       &  200M    &        & 200M   & &  200M   & &  200M   & \\
\midrule
 alien              & 227.8     & \textbf{251916}    & 9491.7   &3.68    & 15962.1    & 6.25       & 976.51  & 14.04     &43384      &17.15   &48735             &19.27  \\
 amidar             & 5.8       & \textbf{104159}    & 5131.2   &4.92    & 1554.79    & 1.49       & 1829.2  & 1.75      &1442              &1.38  &1065              &1.02           \\
 assault            & 222.4     & 8647               & 14198.5  &165.90  & 19148.47   & 200.00     & 21560.4 & 200.00    &63876      &200.00  &\textbf{97155}     &\textbf{200.00} \\
 asterix            & 210       & \textbf{1000000}   & 428200   &42.81   & 300732     & 30.06      & 240090  & 23.99     &759910     &75.99   &999999            &100.00\\
 asteroids          & 719       & \textbf{10506650}  & 2712.8   &0.02    & 108590.05  & 1.03       & 213025  & 2.02      &751970     &7.15    &760005            &7.23\\
 atlantis           & 12850     & \textbf{10604840}  & 826660   &7.68    & 849967.5   & 7.90       & 841200  & 7.82      &3803000    &35.78   &3837300           &36.11\\
 bank heist         & 14.2      & \textbf{82058}     & 1358     &1.64    & 1223.15    & 1.47       & 569.4   & 0.68      &1401       &1.69    &1380              &1.66 \\
 battle zone        & 236       &801000    & 62010    &7.71    & 20885      & 2.58       & 64953.3 & 8.08      &478830     &59.77   &\textbf{824360}            &\textbf{102.92} \\
 beam rider         & 363.9     & \textbf{999999}    & 16850.2  &1.65    & 32463.47   & 3.21       & 90881.6 & 9.06      &162100     &16.18   &422390            &42.22 \\
 berzerk            & 123.7     & \textbf{1057940}            & 2545.6   &0.23    & 1852.7     & 0.16       & 25579.5 & 2.41      &7607              &0.71 &14649             &1.37             \\
 bowling            & 23.1      & \textbf{300}       & 30       &2.49    & 59.92      & 13.30      & 48.3    & 9.10      &201.9      &64.57   &205.2             &65.76\\
 boxing             & 0.1       & \textbf{100}                & 99.6     &99.60   & 99.96      & 99.96      & \textbf{100}     & \textbf{100.00}    &\best{100}        &\best{100.00 } &\textbf{100}       &\textbf{100.00}    \\
 breakout           & 1.7       & \textbf{864}                & 417.5    &48.22   & 787.34     & 91.11      & 747.9   & 86.54     &\best{864}        &\best{100.00 } &\textbf{864}     &\textbf{100.00}   \\
 centipede          & 2090.9    & \textbf{1301709}   & 8167.3   &0.47    & 11049.75   & 0.69       & 292792  & 22.37     &155830            &11.83          &195630    &14.89\\
 chopper command    & 811       & \textbf{999999}             & 16654    &1.59    & 28255      & 2.75       & 761699  & 76.15     &\best{999999}     &\best{100.00 } &\textbf{999999}    &\textbf{100.00}\\
 crazy climber      & 10780.5   & 219900    & 168788.5 &75.56   & 136950     & 60.33      & 167820  & 75.10     &201000     &90.96                 &\textbf{241170}    &\textbf{110.17}\\
 defender           & 2874.5    & \textbf{6010500}   & 55105    &0.87    & 185203     & 3.03       & 336953  & 5.56      &893110     &14.82                 &970540    &16.11\\
 demon attack       & 152.1     & \textbf{1556345}   & 111185   &7.13    & 132826.98  & 8.53       & 133530  & 8.57      &675530     &43.10                  &787985   &50.63\\
 double dunk        & -18.6     & 21                 & -0.3     &46.21   & -0.33      & 46.14      & 14      & 82.32     &\best{24}      &\best{107.58} &\textbf{24}        &\textbf{107.58}\\
 enduro             & 0         & 9500               & 2125.9   &22.38   & 0          & 0.00       & 0       & 0.00      &\best{14330}      &\best{150.84  }&14300     &150.53\\
 fishing derby      & -91.7     & \textbf{71}        & 31.3     &75.60   & 44.85      & 83.93      & 45.2    & 84.14     &59                &95.08          &65        &96.31\\
 freeway            & 0         & \textbf{38}        & 34       &89.47   & 0          & 0.00       & 0       & 0.00      &34                &89.47          &34        &89.47\\
 frostbite          & 65.2      & \textbf{454830}    & 9590.5   &2.09    & 317.75     & 0.06       & 5083.5  & 1.10      &10485      &2.29                  &11330     &2.48\\          
 gopher             & 257.6     & 355040             & 70354.6  &19.76   & 66782.3    & 18.75      & 114820.7& 32.29     &\best{488830}     &\best{137.71 } &473560    &133.41\\
 gravitar           & 173       & \textbf{162850}    & 1419.3   &0.77    & 359.5      & 0.11       & 1106.2  & 0.57      &5905       &3.52                  &5915      &3.53\\
 hero               & 1027      & \textbf{1000000}            & 55887.4  &5.49    & 33730.55   & 3.27       & 31628.7 & 3.06      &38330             &3.73           &38225     &3.72\\
 ice hockey         & -11.2     & 36                 & 1.1      &26.06   & 3.48       & 31.10      & 17.4    & 60.59     &44.92              &118.94       &\textbf{47.11}           &\textbf{123.54}\\
 jamesbond          & 29        & 45550              & 19809    &43.45   & 601.5      & 1.26       & 37999.8 & 83.41     &594500              &200.00  &\textbf{620780}          &\textbf{200.00}\\
 kangaroo           & 52        & \textbf{1424600}   & 14637.5  &1.02    & 1632       & 0.11       & 14308   & 1.00      &14500             &1.01           &14636           &1.02\\
 krull              & 1598      & 104100    & 8741.5   &6.97    & 8147.4     & 6.39       & 9387.5  & 7.60      &97575     &93.63                  &\textbf{594540}          &\textbf{200.00}\\
 kung fu master     & 258.5     & 1000000   & 52181    &5.19    & 43375.5    & 4.31       & 607443  & 60.73     &140440            &14.02          &\textbf{1666665}          &\textbf{166.68}\\
 montezuma revenge  &0          & \textbf{1219200}   & 384      &0.03    & 0          & 0.00       & 0.3     & 0.00      &3000              &0.25           &2500            &0.21\\
 ms pacman          & 307.3     & \textbf{290090}    & 5380.4   &1.75    & 7342.32    & 2.43       & 6565.5  & 2.16      &11536              &3.87          &11573           &3.89\\
 name this game     & 2292.3    & 25220              & 13136    &47.30   & 21537.2    & 83.94      & 26219.5 & 104.36    &34434      &140.19  &\textbf{36296}  &\textbf{148.31}\\
 phoenix            & 761.5     & \textbf{4014440}   & 108529   &2.69    & 210996.45  & 5.24       & 519304  & 12.92     &894460         &22.27             &959580          &23.89\\
 pitfall            & -229.4    & \textbf{114000}    & \textbf{0}        &\textbf{0.20}    & -1.66      & 0.20       & -0.6    & 0.20      &\best{0    }      &0.20           &-4.3            &0.20\\
 pong               & -20.7     & \textbf{21}                 & 20.9     &99.76   & 20.98      & 99.95      & \textbf{21}      & \textbf{100.00}    &\best{21   }      &\best{100.00 } &\textbf{21}     &\textbf{100.00}    \\
 private eye        & 24.9      & \textbf{101800}    & 4234     &4.14    & 98.5       & 0.07       & 96.3    & 0.07      &15100      &14.81                 &15100           &14.81\\
 qbert              & 163.9     & \textbf{2400000}   & 33817.5  &1.40    & 351200.12  & 14.63      & 21449.6 & 0.89      &27800             &1.03           &28657           &1.19  \\
 riverraid          & 1338.5    & \textbf{1000000}   & 22920.8  &2.16    & 29608.05   & 2.83       & 40362.7 & 3.91      &28075             &2.68           &28349           &2.70  \\
 road runner        & 11.5      & \textbf{2038100}   & 62041    &3.04    & 57121      & 2.80       & 45289   & 2.22      &878600    &43.11                  &999999          &49.06\\
 robotank           & 2.2       & 76                 & 61.4     &80.22   & 12.96      & 14.58      & 62.1    & 81.17     &108.2      &143.63  &\textbf{113.4}  &\textbf{150.68}  \\
 seaquest           & 68.4      & 999999             & 15898.9  &1.58    & 1753.2     & 0.17       & 2890.3  & 0.28      &943910	             &94.39&\textbf{1000000}          &\textbf{100.00}  \\
 skiing             & -17098    & \textbf{-3272}     & -12957.8 &29.95   & -10180.38  & 50.03      & -29968.4& -93.09    &-6774             &74.67          &-6025	         &86.77   \\
 solaris            & 1236.3    & \textbf{111420}    & 3560.3   &2.11    & 2365       & 1.02       & 2273.5  & 0.94      &11074             &8.93           &9105            &7.14  \\
 space invaders     & 148       & \textbf{621535 }   & 18789    &3.00    & 43595.78   & 6.99       & 51037.4 & 8.19      &140460    &22.58                  &154380          &24.82\\
 star gunner        & 664       & 77400              & 127029   &164.67  & 200625     & 200.00     & 321528  & 418.14    &465750     &200.00  &\textbf{677590} &\textbf{200.00}  \\
 surround           & -10       & 9.6                & \textbf{9.7}      &\textbf{100.51}  & 7.56       & 89.59      & 8.4     & 93.88     &-7.8    &11.22  &2.606           &64.32\\
 tennis             & -23.8     & 21                 & 0        &53.13   & 0.55       & 54.35      & 12.2    & 80.36     &\best{24       }  &\best{106.70    }  &\textbf{24} &\textbf{106.70}  \\
 time pilot         & 3568      & 65300              & 12926    &15.16   & 48481.5    & 72.76      & 105316  & 164.82    &216770     &200.00   &\textbf{450810}          &\textbf{200.00}\\
 tutankham          & 11.4      & \textbf{5384}      & 241      &4.27    & 292.11     & 5.22       & 278.9   & 4.98      &423.9     &7.68                    &418.2           &7.57\\
 up n down          & 533.4     & 82840              & 125755   &152.14  & 332546.75  & 200.00     & 345727  & 200.00    &\best{986440}     &\best{200.00  } &966590        &200.00\\
 venture            & 0         & \textbf{38900}     & 5.5      &0.01    & 0          & 0.00       & 0       & 0.00      &2000    &5.14                      &2000            &5.14\\
 video pinball      & 0         & \textbf{89218328}  & 533936.5 &0.60    & 572898.27  & 0.64       & 511835  & 0.57      &925830  &1.04                      &978190          &1.10\\\
 wizard of wor      & 563.5     & \textbf{395300}    & 17862.5  &4.38    & 9157.5     & 2.18       & 29059.3 & 7.22      &64439   &16.18                     &63735           &16.00\\
 yars revenge       & 3092.9    & \textbf{15000105}  & 102557   &0.66    & 84231.14   & 0.54       & 166292.3& 1.09      &972000  &6.46                      &968090          &6.43\\
 zaxxon             & 32.5      & 83700              & 22209.5  &26.51   & 32935.5    & 39.33      & 41118   & 49.11     &109140     &130.41   &\textbf{216020}          &\textbf{200.00}\\
\hline
MEAN SABER(\%) &     0.00 & \textbf{100.00}   &         & 28.39 &         & 29.45  &        & 36.78 &      & \GDIImeanSABER &      &\GDIHmeanSABER\\
Learning Efficiency &     0.00 & N/A   &         & 1.42E-09 &         & 1.47E-09  &        & 1.84E-09 &      & 3.08E-09       &      & \textbf{\GDIHmeanSABERle} \\
\hline
MEDIAN SABER(\%) & 0.00   & \textbf{100.00}   &         & 4.92 &         & 4.31  &        & 8.08  &      & \GDIImedianSABER &      & \GDIHmedianSABER  \\
Learning Efficiency &     0.00 & N/A   &         &  2.46E-10 &         & 2.16E-10  &        &  4.04E-10  &      &2.27E-09      &      & \textbf{\GDIHmedianSABERle} \\
\hline
HWRB & 0   & \textbf{57}   &         & 4 &         & 3  &        & 7  &      & \GDIIHWRB &      & \GDIHHWRB  \\
\bottomrule
\end{tabular}
\caption{Score table of SOTA 200M model-free algorithms on SABER.}
\end{center}
\end{table}
\clearpage

\subsubsection{Comparison with SOTA 10B+ Model-Free Algorithms on SABER}

\begin{table}[!hb]
\scriptsize
\begin{center}
\setlength{\tabcolsep}{1.0pt}
\begin{tabular}{ c c c c c c c c c c c }
\toprule
 Games & R2D2 & SABER(\%) & NGU & SABER(\%) & AGENT57 & SABER(\%) & GDI-I$^3$ & SABER(\%) & GDI-H$^3$ & SABER(\%)\\
\midrule
Scale  & 10B   &        & 35B &         & 100B     &        & 200M & &  200M   & \\
\midrule
 alien              & 109038.4          & 43.23             & 248100          & 98.48          & \textbf{297638.17}   &\textbf{118.17}           &43384             &17.15              &48735  &19.27        \\
 amidar             & 27751.24          & 26.64             & 17800           & 17.08          & \textbf{29660.08}    &\textbf{28.47}            &1442              &1.38               &1065              &1.02     \\
 assault            & 90526.44          & 200.00            & 34800           & 200.00         & 67212.67             &200.00                    &63876             &200.00             &\textbf{97155}     &\textbf{ 200.00}  \\
 asterix            & 999080            & 99.91             & 950700          & 95.07          & 991384.42            &99.14                     &759910            &75.99              &\textbf{999999}     &\textbf{100.00}\\
 asteroids          & 265861.2          & 2.52              & 230500          & 2.19           & 150854.61            &1.43                      &751970            &7.15               &\textbf{760005}     &\textbf{7.23}    \\
 atlantis           & 1576068           & 14.76             & 1653600         & 15.49          & 1528841.76           &14.31                     &3803000           &35.78              &\textbf{3837300}    &\textbf{36.11}            \\
 bank heist         & \textbf{46285.6}  & \textbf{56.40}    & 17400           & 21.19          & 23071.5              &28.10                     &1401              &1.69               &1380       &1.66\\
 battle zone        & 513360            & 64.08             & 691700          & 86.35          & \textbf{934134.88}   &\textbf{116.63}           &478830            &59.77              &824360     &102.92  \\
 beam rider         & 128236.08         & 12.79             & 63600           & 6.33           & 300509.8             &30.03                     &162100            &16.18              &\textbf{422390}     &\textbf{42.22}            \\
 berzerk            & 34134.8           & 3.22              & 36200           & 3.41           & \textbf{61507.83}    &\textbf{5.80}             &7607              &0.71               &14649      &1.37         \\
 bowling            & 196.36            & 62.57             & 211.9           & 68.18          & \textbf{251.18}      &\textbf{82.37}            &201.9             &64.57              &205.2      &65.76   \\
 boxing             & 99.16             & 99.16             & 99.7            & 99.70          & \textbf{100}         &\textbf{100.00}           &\best{100}        &\textbf{100.00}    &\textbf{100} &\textbf{100.00}      \\
 breakout           & 795.36            & 92.04             & 559.2           & 64.65          & 790.4                &91.46                     &\best{864}        &\textbf{100.00}    &\textbf{864}     &\textbf{100}          \\
 centipede          & 532921.84         & 40.85             & \textbf{577800} & 44.30          & 412847.86            &31.61                     &155830            &11.83              &195630    &14.89\\
 chopper command    &960648             & 96.06             &999900           & 99.99          &999900                &99.99                     &\best{999999}     &\textbf{100.00}    &\textbf{999999}  &\textbf{100.00} \\
 crazy climber      & 312768            & 144.41            & 313400          & 144.71         &\textbf{565909.85}    &\textbf{200.00}           &201000            &90.96              &241170    &110.17      \\
 defender           & 562106            & 9.31              & 664100          & 11.01          & 677642.78            &11.23                     &893110            &14.82              &\textbf{970540}&\textbf{16.11}          \\
 demon attack       & 143664.6          & 9.22              & 143500          & 9.21           & 143161.44            &9.19                      &675530            &43.10              &\textbf{787985}   &\textbf{50.63}           \\
 double dunk        & 23.12             & 105.35            & -14.1           & 11.36          & 23.93                &107.40                    &\textbf{24}       &\textbf{107.58}    &\textbf{24}&\textbf{107.58}     \\
 enduro             & 2376.68           & 25.02             & 2000            & 21.05          & 2367.71              &24.92                     &\best{14330}      &\textbf{150.84}    &14300     &150.53          \\
 fishing derby      & 81.96             & 106.74            & 32              & 76.03          & \textbf{86.97}       &\textbf{109.82}           &59                &95.08   &65        &96.31           \\
 freeway            & \textbf{34}       & \textbf{89.47}    & 28.5            & 75.00          & 32.59                &85.76                     &\best{34}         &\textbf{89.47}     &\textbf{34} &\textbf{89.47}           \\
 frostbite          & 11238.4           & 2.46              & 206400          & 45.37          &\textbf{541280.88}    &\textbf{119.01}           &10485             &2.29               &11330     &2.48            \\
 gopher             & 122196            & 34.37             & 113400          & 31.89          & 117777.08            &33.12                     &\best{488830}     &\textbf{137.71}    &473560    &133.41          \\
 gravitar           & 6750              & 4.04              & 14200           & 8.62           &\textbf{19213.96}     &\textbf{11.70}            &5905              &3.52               &5915      &3.53    \\
 hero               & 37030.4           & 3.60              & 69400           & 6.84           &\textbf{114736.26}    &\textbf{11.38}            &38330             &3.73               &38225     &3.72    \\
 ice hockey         & \textbf{71.56}    & \textbf{175.34}   &-4.1             & 15.04          & 63.64                &158.56                    &44.92             &118.94             &\textbf{47.11}  &\textbf{123.54}\\
 jamesbond          & 23266             & 51.05             & 26600           & 58.37          & 135784.96            &200.00                    &594500            &200.00             &\textbf{620780} &\textbf{200.00}   \\
 kangaroo           & 14112             & 0.99              & \textbf{35100}  & 2.46           &24034.16              &1.68                      &14500             &1.01               &14636           &1.02\\
 krull              & 145284.8          & 140.18            & 127400          & 122.73         & 251997.31            &200.00                    &97575             &93.63              &\textbf{594540} &\textbf{200.00}      \\
 kung fu master     & 200176            & 20.00             & 212100          & 21.19          & 206845.82            &20.66                     &140440            &14.02              &\textbf{1666665}&\textbf{166.68}\\
 montezuma revenge  & 2504              & 0.21              & \textbf{10400}  & 0.85           &9352.01               &0.77                      &3000              &0.25               &2500            &0.21   \\
 ms pacman          & 29928.2           & 10.22             & 40800           & 13.97          & \textbf{63994.44}    &\textbf{21.98}            &11536             &3.87               &11573           &3.89 \\
 name this game     & 45214.8           & 187.21            & 23900           & 94.24          &\textbf{54386.77}     &\textbf{200.00}           &34434             &140.19             &36296           &148.31\\
 phoenix            & 811621.6          & 20.20             & 959100          & 23.88          &908264.15             &22.61                     &894460            &22.27              &\textbf{959580} &\textbf{23.89}\\
 pitfall            & \textbf{0}        & \textbf{0.20}     & 7800            & 7.03           &\textbf{18756.01}     &\textbf{16.62}            &0                 &0.20               &-4.3            &0.20\\
 pong               & \textbf{21}       & \textbf{100.00}   & 19.6            & 96.64          & 20.67                &99.21                     &\best{21}         &\textbf{100.00}    &\textbf{21}     &\textbf{100.00}\\
 private eye        & 300               & 0.27              & \textbf{100000} & 98.23          & 79716.46             &78.30                     &15100             &14.81              &15100           &14.81\\
 qbert              & 161000            & 6.70              & 451900          & 18.82          &\textbf{580328.14}    &\textbf{24.18}            &27800             &1.03               &28657           &1.19\\
 riverraid          & 34076.4           & 3.28              & 36700           & 3.54           & \textbf{63318.67}    &\textbf{6.21}             &28075             &2.68               &28349           &2.70\\
 road runner        & 498660            & 24.47             & 128600          & 6.31           & 243025.8             &11.92                     &\best{878600}     &\textbf{43.11}     &999999          &49.06\\
 robotank           & \textbf{132.4}    & \textbf{176.42}   & 9.1             & 9.35           &127.32                &169.54                    &108               &143.63             &113.4           &150.68\\
 seaquest           & 999991.84         & 100.00            & \textbf{1000000}& 100.00         &999997.63             &100.00                    &943910	        &94.39              &\textbf{1000000}&\textbf{100.00}\\
 skiing             & -29970.32         & -93.10            & -22977.9        & -42.53         & \textbf{-4202.6}     &\textbf{93.27}            &-6774             &74.67              &-6025	         &86.77\\
 solaris            & 4198.4            & 2.69              & 4700            & 3.14           & \textbf{44199.93}    &\textbf{38.99}            &11074             &8.93               &9105            &7.14\\
 space invaders     & 55889             & 8.97              & 43400           & 6.96           & 48680.86             &7.81                      &140460            &22.58              &\textbf{154380} &\textbf{24.82}\\
 star gunner        & 521728            & 200.00            & 414600          & 200.00         &\textbf{839573.53}    &\textbf{200.00}           &465750            &200.00             &677590          &200.00\\
 surround           & \textbf{9.96}     & \textbf{101.84}   & -9.6            & 2.04           & 9.5                  &99.49                     &-7.8              &11.22              &2.606           &64.32\\
 tennis             & \textbf{24}       & \textbf{106.70}   & 10.2            & 75.89          & 23.84                &106.34                    &\textbf{24}       &\textbf{106.70}    &\textbf{24}     &\textbf{106.70}\\
 time pilot         & 348932            & 200.00            & 344700          & 200.00         &405425.31             &200.00                    &216770            &200.00             &\textbf{450810} &\textbf{200.00}\\
 tutankham          & 393.64            & 7.11              & 191.1           & 3.34           & \textbf{2354.91}     &\textbf{43.62}            &423.9             &7.68               &418.2           &7.57\\
 up n down          & 542918.8          & 200.00            & 620100          & 200.00         & 623805.73            &200.00                    &\best{986440}     &\textbf{200.00}    &966590          &200.00\\
 venture            & 1992              & 5.12              & 1700            & 4.37           &\textbf{2623.71}      &\textbf{6.74}             &2000              &5.14               &2000            &5.14\\
 video pinball      & 483569.72         & 0.54              & 965300          & 1.08           &\textbf{992340.74}    &\textbf{1.11}             &925830            &1.04               &978190          &1.10\\
 wizard of wor      & 133264            & 33.62             & 106200          & 26.76          &\textbf{157306.41}    &\textbf{39.71}            &64439             &16.18              &63735           &16.00\\
 yars revenge       & 918854.32         & 6.11              & 986000          & 6.55           &\textbf{998532.37}    &\textbf{6.64}             &972000            &6.46               &968090          &6.43\\
 zaxxon             & 181372            & 200.00            & 111100          & 132.75         &\textbf{249808.9}     &\textbf{200.00}           &109140            &130.41             &216020          &200.00\\
\hline
MEAN SABER(\%)        &                   & 60.43             &                 &  50.47         &                      & \textbf{76.26}  &                  & \GDIImeanSABER &      & \GDIHmeanSABER \\
Learning Efficiency &    & 6.04E-11 &         &  1.44E-11  &        & 7.63E-12 &      & 5.90E-09       &      & \textbf{\GDIHmeanSABERle} \\
\hline
MEDIAN SABER(\%)      &                   & 33.62             &                 & 21.19          &                      & 43.62  &                  & \GDIImedianSABER &      & \textbf{\GDIHmedianSABER}  \\
Learning Efficiency &    & 3.36E-11 &         & 6.05E-12  &        & 4.36E-12 &      &2.27E-09      &      & \textbf{\GDIHmedianSABERle}\\
\hline
HWRB                  &                   & 15       &                 & 9              &                      & 18     &                  & \GDIIHWRB &                  & \textbf{\GDIHHWRB}\\
\bottomrule
\end{tabular}
\caption{Score table of SOTA  10B+ model-free algorithms on SABER.}
\end{center}
\end{table}
\clearpage

\subsubsection{Comparison with SOTA Model-Based Algorithms on SABER}
SimPLe \citep{modelbasedatari} and DreamerV2 \citep{dreamerv2} haven't evaluated all 57 Atari Games in their paper. For fairness, we set the score on those games as N/A, which will not be considered when calculating the median and mean SABER.
\begin{table}[!hb]
\scriptsize
\begin{center}
\setlength{\tabcolsep}{1.0pt}
\begin{tabular}{ c c c c c c c c c c c }
\toprule
 Games              & MuZero         & SABER(\%)      & DreamerV2 & SABER(\%)    & SimPLe             & SABER(\%)          & GDI-I$^3$     & SABER(\%) & GDI-H$^3$ & SABER(\%)\\
\midrule
Scale               & 20B            &              & 200M      &            & 1M               &                  & 200M     & &  200M   & \\
\midrule
 alien              & \textbf{741812.63}      & \textbf{200.00  }     &3483       & 1.29     &616.9     & 0.15    & 43384                       &  17.15      &48735    &19.27   \\
 amidar             & \textbf{28634.39 }      & \textbf{27.49   }  &2028       & 1.94     &74.3      & 0.07    & 1442                           &  1.38       &1065     &1.02          \\
 assault            & \textbf{143972.03}      & \textbf{200.00 }     &7679       & 88.51       &527.2     & 3.62       & 63876                  &  200.00     &97155     &200.00    \\
 asterix            & 998425         & 99.84        &25669      & 2.55     &1128.3    & 0.09       & 759910                     &  75.99      &\textbf{999999}            &\textbf{100.00}    \\
 asteroids          & 678558.64               & 6.45        &3064                & 0.02    &793.6              & 0.00           &751970         & 7.15   &\textbf{760005}            &\textbf{7.23}   \\
 atlantis           & 1674767.2               & 15.69           &989207              & 9.22       &20992.5            & 0.08    &3803000    & 35.78  &\textbf{3837300}           &\textbf{36.11} \\
 bank heist         & 1278.98                 & 1.54        &1043                & 1.25    &34.2               & 0.02    &\best{1401}           & \best{1.69  } &1380              &1.66   \\
 battle zone        & \textbf{848623         }& \textbf{105.95}      &31225      & 3.87     &4031.2    & 0.47       & 478830                    &  59.77        &824360            &102.92  \\
 beam rider         & \textbf{454993.53}      & \textbf{45.48}      &12413      & 1.21     &621.6     & 0.03    & 162100                        &  16.18        &422390            &42.22  \\
 berzerk            & \textbf{85932.6        }& \textbf{8.11}      &751        & 0.06     &N/A       & N/A     & 7607                           &  0.71         &14649             &1.37  \\
 bowling            & \textbf{260.13         }& \textbf{85.60}      &48         & 8.99     &30        & 2.49    & 202                           &  64.57        &205.2             &65.76 \\
 boxing             & \textbf{100}                     & \textbf{100.00}       &87                  & 86.99       &7.8         & 7.71       & \best{100}          & \best{100.00 }&\textbf{100}       &\textbf{100.00}\\
 breakout           & \textbf{864}                     & \textbf{100.00}          &350                 & 40.39       &16.4     & 1.70       & \best{864}          & \best{100.00} &\textbf{864}     &100.00\\
 centipede          & \textbf{1159049.27}     & \textbf{89.02}     &6601       & 0.35     &N/A       & N/A     & 155830                         & 11.83         &195630    &14.89\\
 chopper command    & 991039.7                & 99.10     &2833                & 0.20     & 979.4             & 0.02    & \best{999999}         & \best{100.00} &\textbf{999999}    &\textbf{100.00}\\
 crazy climber      & \textbf{458315.4}       & \textbf{200.00    }  &141424     & 62.47       & 62583.6  & 24.77   & 201000                    & 90.96         &241170    &110.17\\
 defender           & 839642.95               & 13.93      & N/A                & N/A        & N/A               & N/A      & 893110     &14.82  &\textbf{970540}    &\textbf{16.11}\\
 demon attack       & 143964.26               & 9.24      & 2775              &0.17      & 208.1             & 0.00    & 675530         & 43.40  &\textbf{787985}   &\textbf{50.63}\\
 double dunk        & 23.94          & 107.42     & 22        &102.53        & N/A      & N/A                                               & \textbf{24}& \textbf{107.58} &\textbf{24}&\textbf{107.58} \\
 enduro             & 2382.44                 & 25.08       & 2112              &22.23         & N/A               & N/A     & \best{14330}    & \best{150.84}  &14300     &150.53\\
 fishing derby      & \textbf{91.16}          & \textbf{112.39     }              &93.24          &200.00      &-90.7     & 0.61    & 59        & 92.89         &65        &96.31\\
 freeway            & 33.03                   & 86.92       & \textbf{34}                 &\textbf{89.47}          &16.7               & 43.95   & \best{34}      & \best{89.47}  &\textbf{34}        &\textbf{89.47}\\
 frostbite          & \textbf{631378.53}      & \textbf{138.82}     & 15622    &3.42       &236.9     & 0.04    & 10485                        & 2.29           &11330     &2.48\\
 gopher             & 130345.58               & 36.67      & 53853             &15.11         &596.8              & 0.10       & \best{488830} & \best{137.71}  &473560    &133.41\\
 gravitar           & \textbf{6682.7     }    & \textbf{4.00    }  & 3554     &2.08      &173.4     & 0.00    & 5905                           & 3.52           &5915      &3.53\\
 hero               & \textbf{49244.11}       & \textbf{4.83    }  & 30287    &2.93      &2656.6    & 0.16        & 38330                      & 3.73           &38225     &3.72\\
 ice hockey         & \textbf{67.04      }    & \textbf{165.76  }  & 29        &85.17         &-11.6     & -0.85       &44.92              &118.94        &47.11           &123.54\\
 jamesbond          & 41063.25                & 90.14     & 9269              &20.30         &100.5     & 0.16    &594500              &200.00  &\textbf{620780}          &\textbf{200.00}\\
 kangaroo           & \textbf{16763.6        }& \textbf{1.17}  & 11819     &0.83      &51.2      & 0.00    & 14500                              & 1.01          &14636           &1.02\\
 krull              & 269358.27      & 200.00 & 9687     &7.89       &2204.8    & 0.59    & 97575                        & 93.63          &\textbf{594540}          &\textbf{200.00}\\
 kung fu master     & \textbf{204824         }& \textbf{20.46}  & 66410    &6.62       &14862.5   & 1.46    & 140440                           & 14.02          &1666665          &166.68\\
 montezuma revenge  & 0                       & 0.00         & 1932              &0.16       &N/A       & N/A     & \best{3000}                & \best{0.25  }  &2500            &0.21\\
 ms pacman          & \textbf{243401.1 }      & \textbf{83.89   }  & 5651     &1.84       &1480      & 0.40    & 11536                         & 3.87           &11573           &3.89\\
 name this game     & \textbf{157177.85}      & \textbf{200.00  }  & 14472    &53.12          &2420.7    & 0.56    & 34434                     & 140.19         &36296           &148.31\\
 phoenix            & \textbf{955137.84}      & \textbf{23.78   }  & 13342     &0.31       &N/A       & N/A     & 894460                        & 22.27         &959580          &23.89\\
 pitfall            & \textbf{0}              & \textbf{0.20}               & -1                 &0.20       &N/A       & N/A     & \best{0}             & \best{0.20}   &-4.3            &0.20\\
 pong               & \textbf{21}             & \textbf{100.00}             & 19                 &95.20          & 12.8     & 80.34   & \best{21}        & \best{100.00} &\textbf{21}   &\textbf{100.00}\\
 private eye        & \textbf{15299.98 }      & \textbf{15.01}     & 158       &0.13       & 35       & 0.01    & 15100                         & 14.81         &15100           &14.81\\
 qbert              & \textbf{72276          }& \textbf{3.00}      & 162023    &6.74       & 1288.8   & 0.05    & 27800                         & 1.15          &28657           &1.19\\
 riverraid          & \textbf{323417.18}      & \textbf{32.25}     & 16249    &1.49       & 1957.8   & 0.06    & 28075                         & 2.68           &28349           &2.70\\
 road runner        & 613411.8                & 30.10              & 88772             &4.36       & 5640.6   & 0.28       & 878600     & 43.11   &\textbf{999999}          &\textbf{49.06}\\
 robotank           & \textbf{131.13}         & \textbf{174.70}    & 65        &85.09          & N/A      & N/A     & 108                       & 143.63        &113.4           &150.68\\
 seaquest           & 999976.52               & 100.00             & 45898             &4.58       & 683.3             & 0.06 &943910	             &94.39&\textbf{1000000}          &\textbf{100.00}\\
 skiing             & \textbf{-29968.36}      & \textbf{-93.09}      & -8187    &64.45          & N/A      & N/A     & -6774                   & 74.67          &-6025	          &86.77\\
 solaris            & 56.62                   & -1.07              & 883                &-0.32          & N/A               & N/A   & \best{11074}  & \best{8.93 }&9105            &7.14\\
 space invaders     & 74335.3                 & 11.94                 & 2611               &0.40       & N/A               & N/A    & 140460 & 22.58&\textbf{154380}          &\textbf{24.82}    \\
 star gunner        &549271.7       & 200.00     & 29219    &37.21          & N/A      & N/A     & 465750      & 200.00                         &\textbf{677590}          &\textbf{200.00}\\
 surround           & \textbf{9.99       }    & \textbf{101.99}     & N/A       &N/A         & N/A      & N/A     & -8          & 11.22                            &2.606           &64.32\\
 tennis             & 0       & 53.13      & 23        &104.46        & N/A      & N/A     & \textbf{24}          & \textbf{106.70}       &\textbf{24}           &\textbf{106.70}\\
 time pilot         & \textbf{476763.9}       & \textbf{200.00}         & 32404    &46.71         & N/A      & N/A     & 216770      & 200.00    &450810          &200.00     \\
 tutankham          & \textbf{491.48     }    & \textbf{8.94}   & 238       &4.22         & N/A      & N/A     & 424         & 7.68              &418.2           &7.57\\
 up n down          & 715545.61               & 200.00            & 648363            &200.00        & 3350.3            & 3.42   & \best{986440}        & \best{200.00} &966590        &200.00    \\
 venture            & 0.4                     & 0.00        & 0                  &0.00       & N/A               & N/A     & \best{2000}          & \best{5.23}   &2000            &5.14    \\
 video pinball      & \textbf{981791.88}      & \textbf{1.10}      & 22218    &0.02     & N/A      & N/A     & 925830      & 1.04                                &978190          &1.10\\
 wizard of wor      & \textbf{197126         }& \textbf{49.80}      & 14531    &3.54     & N/A      & N/A     & 64439       & 16.14                             &63735           &16.00\\
 yars revenge       & 553311.46               & 3.67      & 20089             &0.11     & 5664.3            & 0.02    & \best{972000}        & \best{6.46}      &968090          &6.43\\
 zaxxon             & \textbf{725853.9}       & \textbf{200.00}      & 18295    &21.83          & N/A      & N/A     & 109140      & 130.41                     &216020          &200.00\\
\hline
MEAN SABER(\%) &               &\textbf{71.94}  &         &  27.22 &                                          & 4.67  &     & \GDIImeanSABER &      &\GDIHmeanSABER\\
Learning Efficiency &     &  3.60E-11 &         & 1.36E-09  &        &\textbf{ 4.67E-08}&      & 5.90E-09       &      & \GDIHmeanSABERle\\
\hline
MEDIAN SABER(\%) &            & 49.8   &         & 4.22  &                                          & 0.13  &     & \GDIImedianSABER &      & \textbf{\GDIHmedianSABER}  \\
Learning Efficiency &     & 2.49E-11 &         & 2.11E-10  &        & 1.60E-09&      &2.27E-09      &      & \textbf{\GDIHmedianSABERle}\\
\hline
HWRB &            & 19   &         & 3    &                                          & 0  &      & \GDIIHWRB &                  & \textbf{\GDIHHWRB}\\
\bottomrule
\end{tabular}
\caption{Score table of  SOTA  model-based algorithms on SABER.}
\end{center}
\end{table}
\clearpage

\subsubsection{Comparison with Other SOTA algorithms on SABER}
In this section, we report the performance of our algorithm compared with other SOTA algorithms, Go-Explore \citep{goexplore} and Muesli \citep{muesli}.

\begin{table}[!hb]
\scriptsize
\begin{center}
\setlength{\tabcolsep}{1.0pt}
\begin{tabular}{ c  c c   c c c c c c}
\toprule
 Games           & Muesli              & SABER(\%)          & Go-Explore              & SABER(\%)                     & GDI-I$^3$ & SABER(\%)        & GDI-H$^3$ & SABER(\%)         \\
\midrule
Scale            & 200M           &           & 10B                     &                             & 200M      &        & 200M      &           \\
\midrule    
 alien           &139409          &55.30               &\textbf{959312}       &\textbf{200.00}                & 43384             &17.15     &48735             &19.27         \\
 amidar          &\textbf{21653}  &\textbf{20.78}      &19083                 &18.32                          & 1442              &1.38      &1065              &1.02          \\
 assault         &36963           &200.00              &30773                 &200.00                         & 63876             &200.00  &\textbf{97155}&\textbf{200.00}        \\
 asterix         &316210          &31.61               &999500       &99.95                 & 759910            &75.99      &\textbf{999999}            &\textbf{100.00}     \\
 asteroids       &484609          &4.61                &112952                &1.07                           & 751970     &7.15&\textbf{760005}            &\textbf{7.23}           \\
 atlantis        &1363427         &12.75               &286460                &2.58                           & 3803000    &35.78&\textbf{3837300}           &\textbf{36.11}          \\
 bank heist      &1213            &1.46                &\textbf{3668}         &\textbf{4.45  }                & 1401              &1.69        &1380              &1.66      \\
 battle zone     &414107          &51.68               &\textbf{998800}       &\textbf{124.70}                & 478830            &59.77       &824360            &102.92      \\
 beam rider      &288870          &28.86               &371723       &37.15                 & 162100            &16.18       &\textbf{422390}            &\textbf{42.22}      \\
 berzerk         &44478           &4.19                &\textbf{131417}       &\textbf{12.41 }                & 7607              &0.71        &14649             &1.37      \\
 bowling         &191             &60.64               &\textbf{247}           &\textbf{80.86 }                & 202               &64.57      &205.2             &65.76       \\
 boxing          &99              &99.00               &91                     &90.99                          & \best{100}        &\best{100.00} &\textbf{100}       &\textbf{100.00}           \\
 breakout        &791             &91.53               &774                    &89.56                          & \best{864}        &\best{100.00} &\textbf{864}     &\textbf{100.00}          \\
 centipede       &\textbf{869751} &\textbf{66.76}      &613815                &47.07                          & 155830            &11.83        &195630    &14.89  \\
 chopper command &101289       &10.06               &996220                &99.62                          & \best{999999}     &\best{100.00}   &\textbf{999999}    &\textbf{100.00}       \\
 crazy climber   &175322       &78.68               &235600       &107.51                & 201000            &90.96           &\textbf{241170}    &\textbf{110.17} \\
 defender        &629482       &10.43               &N/A                    &N/A                            & 893110     &14.82   &\textbf{970540}    &\textbf{16.11}        \\
 demon attack    &129544       &8.31                &239895                 &15.41                          & 675530     &43.40   &\textbf{787985}   &\textbf{50.63}       \\
 double dunk     &-3           &39.39               &\textbf{24}            &\textbf{107.58}                         & \best{24}         &\best{107.58}  &\textbf{24}        &\textbf{107.58}         \\
 enduro          &2362         &24.86               &1031                   &10.85                          & \best{14330}      &\best{150.84}  &14300     &150.53         \\
 fishing derby   &51           &87.71               &\textbf{67}            &\textbf{97.54 }                & 59                &92.89          &65        &96.31  \\
 freeway         &33           &86.84               &\textbf{34}            &\textbf{89.47 }                & \best{34}         &\best{89.47}   &\textbf{34}        &\textbf{89.47}         \\
 frostbite       &301694       &66.33               &\textbf{999990}       &\textbf{200.00}                & 10485             &2.29            &11330     &2.48\\
 gopher          &104441       &29.37               &134244                &37.77                          & \best{488830}     &\best{137.71}   &473560    &133.41       \\
 gravitar        &11660        &7.06                &\textbf{13385}        &\textbf{8.12}                  & 5905              &3.52            &5915      &3.53\\
 hero            &37161        &3.62                &37783                  &3.68                           &38330      &3.73    &\textbf{38225}     &\textbf{3.72}       \\
 ice hockey      &25           &76.69               &33                     &93.64                          &44.92              &118.94       &\textbf{47.11}           &\textbf{123.54}          \\
 jamesbond       &19319        &42.38               &200810                &200.00                         &594500              &200.00   &\textbf{620780}          &\textbf{200.00}     \\
 kangaroo        &14096        &0.99                &\textbf{24300}        &\textbf{1.70}                  & 14500             &1.01            &14636              &1.02\\
 krull           &34221        &31.83               &63149                 &60.05                          & 97575      &93.63    &\textbf{594540}    &\textbf{200.00}       \\
 kung fu master  &134689       &13.45               &24320                 &2.41                           & 140440     &14.02    &\textbf{1666665}          &\textbf{166.68}        \\
 montezuma revenge  &2359      &0.19                &\textbf{24758}        &\textbf{2.03}                  & 3000              &0.25            &2500            &0.21\\
 ms pacman          &65278     &22.42               &\textbf{456123}       &\textbf{157.30}                & 11536             &3.87            &11573           &3.89\\
 name this game     &105043    &200.00              &\textbf{212824}       &\textbf{200.00}                & 34434             &140.19          &36296           &148.31\\
 phoenix        &805305        &20.05               &19200                 &0.46                           & 894460     &22.27    &\textbf{959580}          &\textbf{23.89}\\
 pitfall        &0             &0.20                &\textbf{7875}          &\textbf{7.09   }               & 0                 &0.2            &-4.3            &0.20 \\
 pong           &20            &97.60               &\textbf{21}            &\textbf{100.00 }               & \best{21}         &\best{100}     &\textbf{21}              &\textbf{100.00}       \\
 private eye    &10323         &10.12               &\textbf{69976}        &\textbf{68.73  }               & 15100             &14.81           &15100           &14.81  \\
 qbert          &157353        &6.55                &\textbf{999975}       &\textbf{41.66  }               & 27800             &1.15            &28657           &1.19\\
 riverraid      &\textbf{47323}&\textbf{4.60}       &35588                 &3.43                           & 28075             &2.68            &28349           &2.70\\
 road runner    &327025        &16.05               &999900        &49.06                 & 878600            &43.11          &\textbf{999999}          &\textbf{49.06}\\
 robotank       &59            &76.96               &\textbf{143}           &\textbf{190.79 }               & 108               &143.63         &113.4           &150.68\\
 seaquest       &815970        &81.60               &539456                 &53.94                 &943910	             &94.39   &\textbf{1000000}          &\textbf{100.00}\\
 skiing         &-18407        &-9.47               &\textbf{-4185}        &\textbf{93.40  }               & -6774             &74.67           &-6025	         &86.77\\
 solaris        &3031          &1.63                &\textbf{20306}        &\textbf{17.31  }               & 11074             &8.93            &9105            &7.14\\
 space invaders &59602         &9.57                &93147                 &14.97                          & 140460     &22.58    &\textbf{154380}          &\textbf{24.82}\\
 star gunner    &214383        &200.00              &609580      &200.00                         & 465750     &200.00&\textbf{677590}          &\textbf{200.00}     \\
 surround       &\textbf{9}    &\textbf{96.94}      &N/A                    &N/A                            & -8         &11.22                 &2.606           &64.32\\
 tennis         &12            &79.91               &\best{24}              &\best{106.7}                   & \best{24}         &\best{106.70   }&\textbf{24}           &\textbf{106.70}            \\
 time pilot     &359105 &200.00   &183620                &200.00                         & 216770     & 200.00               &\textbf{450810}          &\textbf{200.00}\\
 tutankham      &252           &4.48                &\textbf{528}           &\textbf{9.62}                  & 424               &7.68           &418.2           &7.57\\
 up n down      &649190        &200.00              &553718                &200.00                         & \best{986440}     &\best{11.9785}  &966590        &200.00         \\
 venture        &2104          &5.41                &\textbf{3074}         &\textbf{7.90}                  & 2035              &5.23            &2000            &5.14\\
 video pinball  &685436        &0.77                &\textbf{999999}       &\textbf{1.12}                  & 925830            &1.04            &978190          &1.10\\
 wizard of wor  &93291         &23.49               &\textbf{199900}       &\textbf{50.50}                 & 64293             &16.14           &63735           &16.00\\
 yars revenge   &557818        &3.70                &\textbf{999998}       &\textbf{6.65}                  & 972000            &6.46            &968090          &6.43\\
 zaxxon         &65325         &78.04               &18340                 &21.88                          & 109140         &130.41   &\textbf{216020} &\textbf{200.00}        \\
\hline    
MEAN SABER(\%)  &              & 48.74              &                       & \textbf{71.80}                &                   & \GDIImeanSABER &      &\GDIHmeanSABER\\
Learning Efficiency &    & 2.43E-09 &                       & 7.18E-11                      &      & 3.08E-09       &      & \textbf{\GDIHmeanSABERle}\\
\hline
MEDIAN SABER(\%)&              &  24.86             &                       &50.5                 &                   & \GDIImedianSABER &      & \textbf{\GDIHmedianSABER}  \\
Learning Efficiency &    & 1.24E-09 &                       & 5.05E-11                     &      &1.78E-09      &      & \textbf{\GDIHmedianSABERle}\\
\hline
HWRB           &                       & 5  &                       & 15              &            & \GDIIHWRB &                  & \textbf{\GDIHHWRB}\\
\bottomrule
\end{tabular}
\caption{Score table of other SOTA  algorithms on SABER.}
\end{center}
\end{table}

\clearpage

\subsection{Atari Games Learning Curves}
\label{app: Atari Games Learning Curves}

\subsubsection{Atari Games Learning Curves of GDI-I$^3$}
\renewcommand{\thesubfigure}{\arabic{subfigure}.}
\begin{figure}[!ht] 
    \subfigure[alien]{
    \includegraphics[width=0.3\textwidth]{./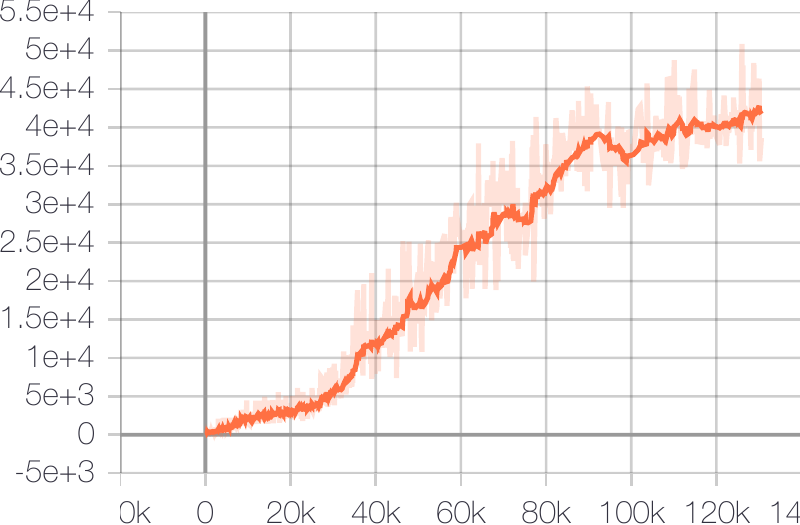}
    }
    \subfigure[amidar]{
    \includegraphics[width=0.3\textwidth]{./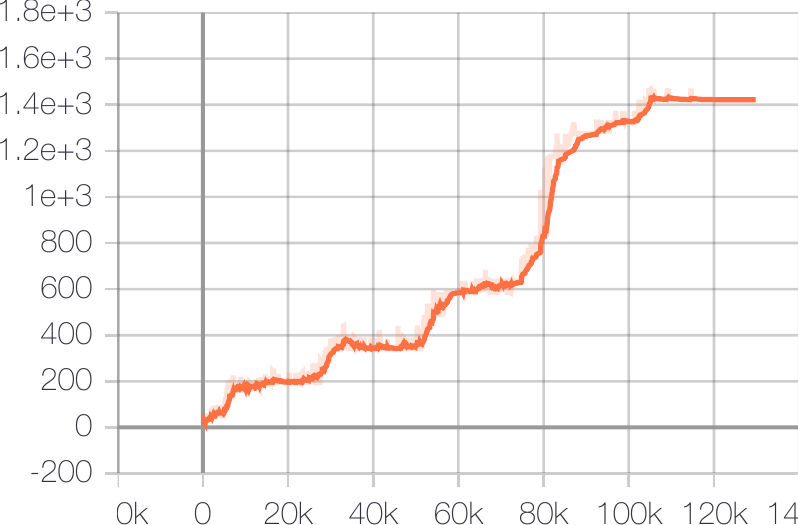}
    }
    \subfigure[assault]{
    \includegraphics[width=0.3\textwidth]{./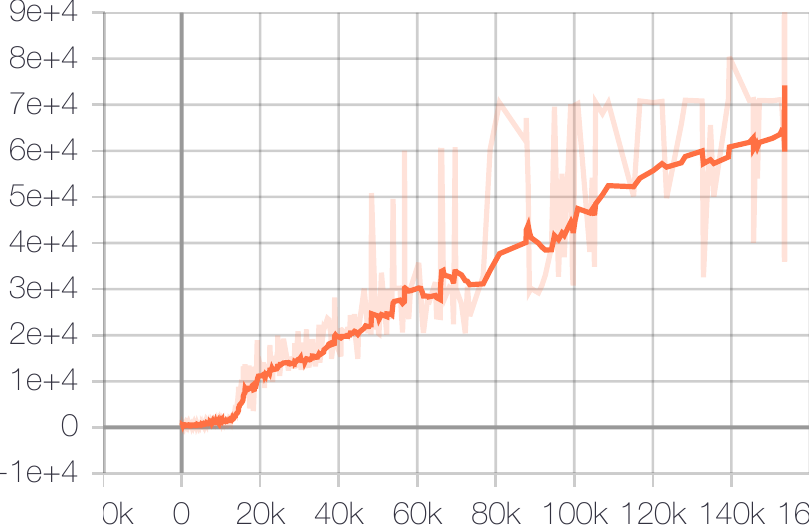}
    }
\end{figure}

\begin{figure}[!ht]
    \subfigure[asterix]{
    \includegraphics[width=0.3\textwidth]{./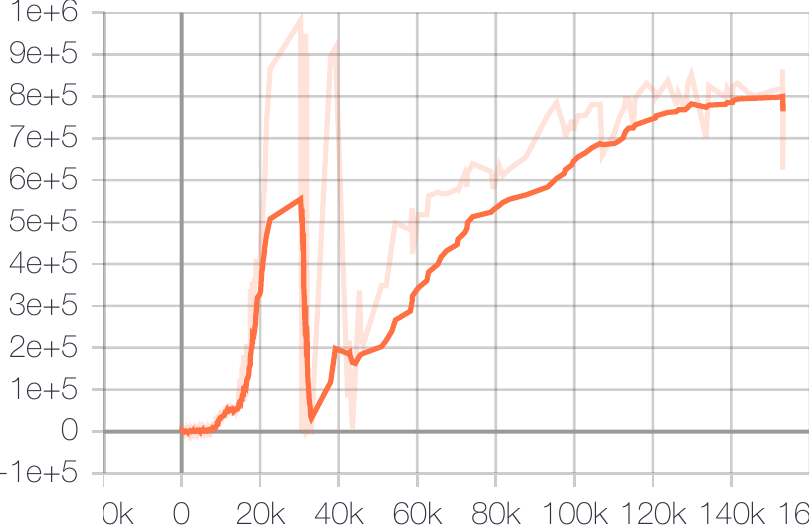}
    }
    \subfigure[asteroids]{
    \includegraphics[width=0.3\textwidth]{./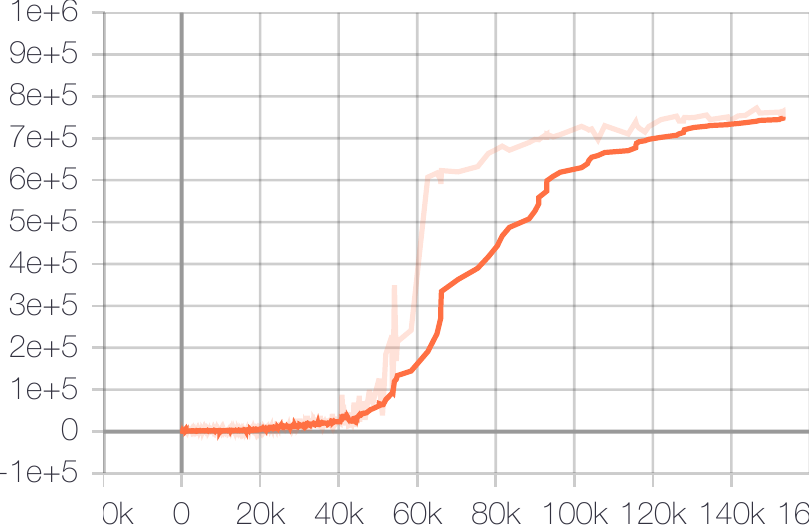}
    }
    \subfigure[atlantis]{
    \includegraphics[width=0.3\textwidth]{./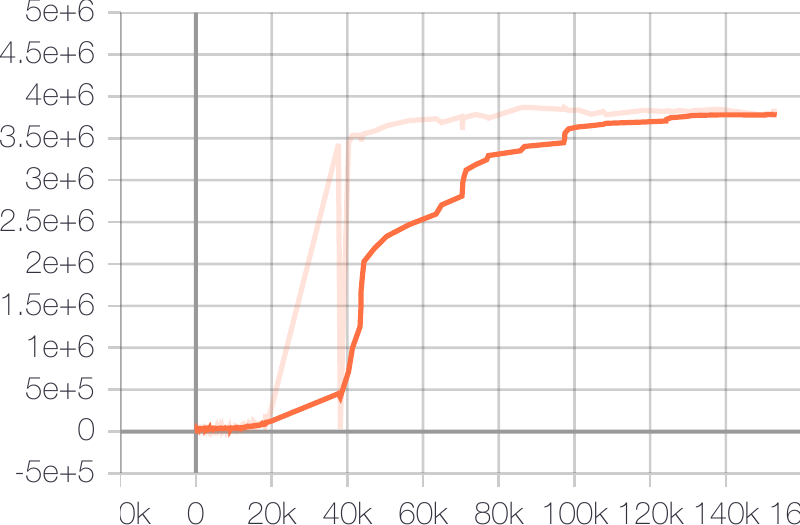}
    }
\end{figure}

\begin{figure}[!ht]
    \subfigure[bank\_heist]{
    \includegraphics[width=0.3\textwidth]{./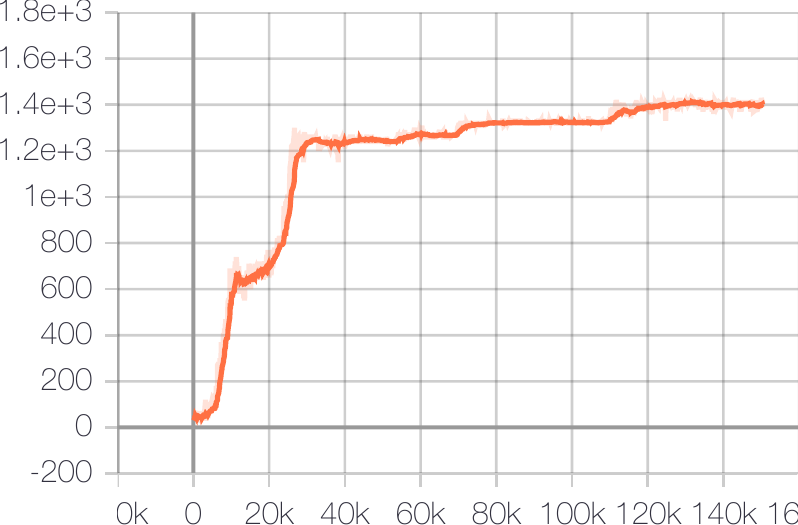}
    }
    \subfigure[battle\_zone]{
    \includegraphics[width=0.3\textwidth]{./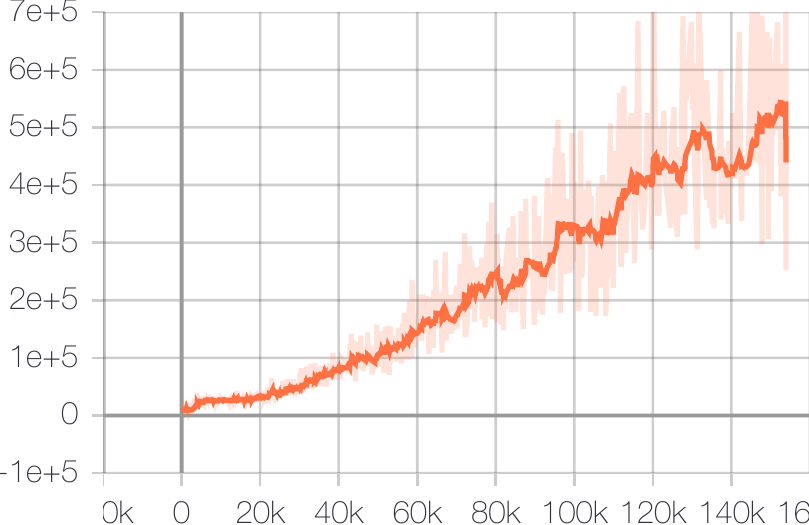}
    }
    \subfigure[beam\_rider]{
    \includegraphics[width=0.3\textwidth]{./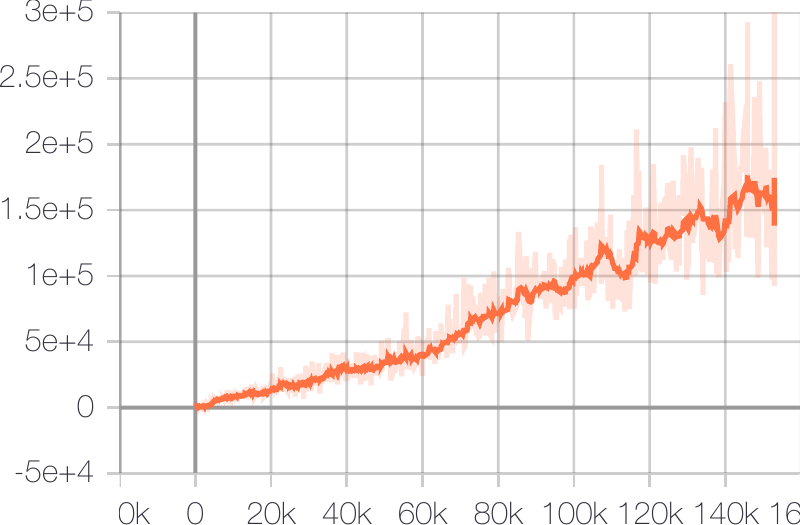}
    }
\end{figure}

\begin{figure}[!ht]
    \subfigure[berzerk]{
    \includegraphics[width=0.3\textwidth]{./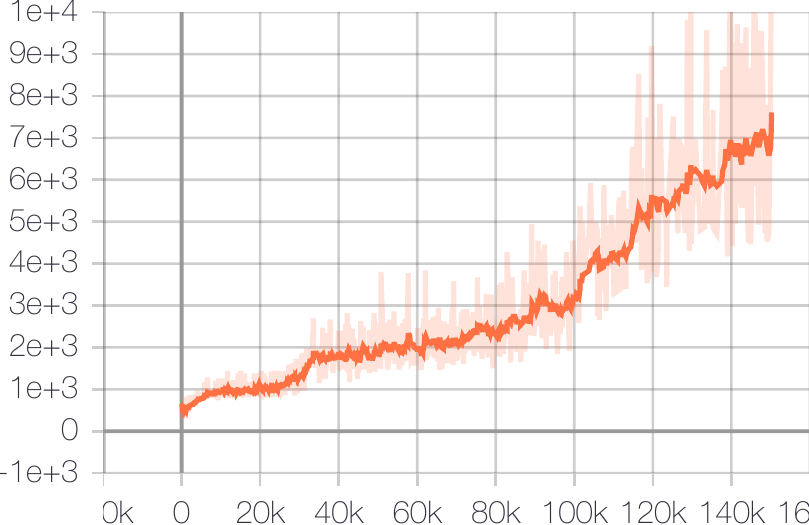}
    }
    \subfigure[bowling]{
    \includegraphics[width=0.3\textwidth]{./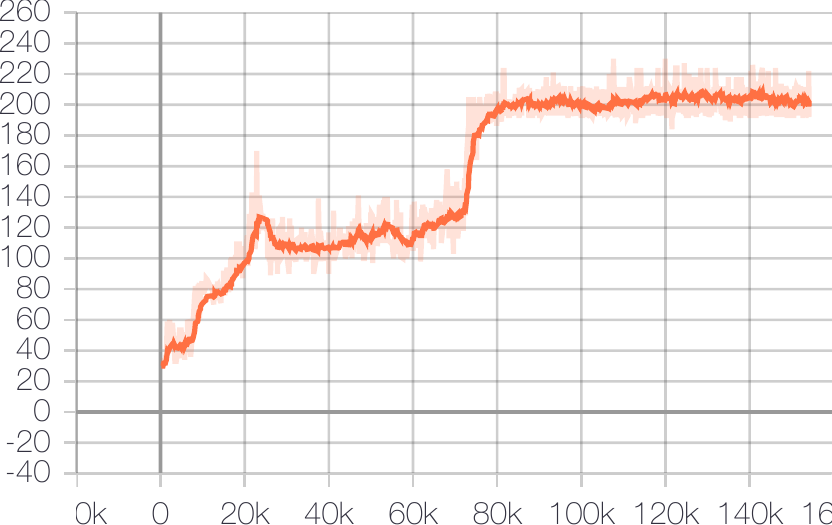}
    }
    \subfigure[boxing]{
    \includegraphics[width=0.3\textwidth]{./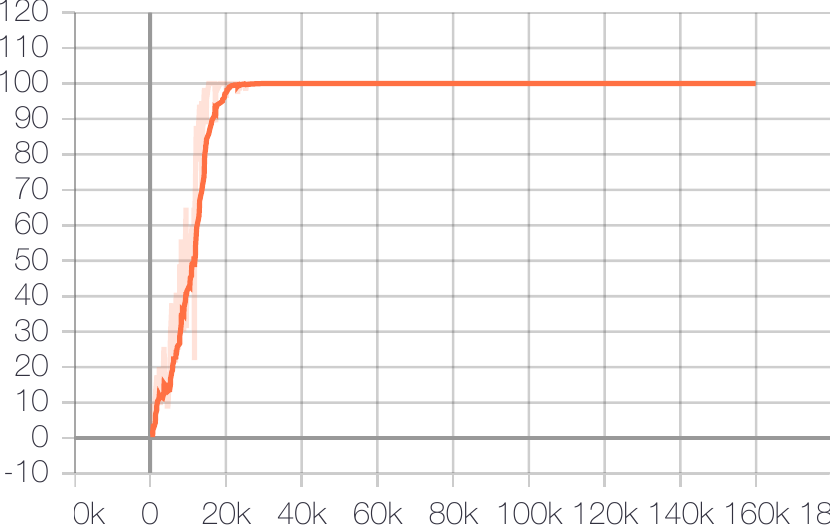}
    }
\end{figure}

\begin{figure}[!ht]
    \subfigure[breakout]{
    \includegraphics[width=0.3\textwidth]{./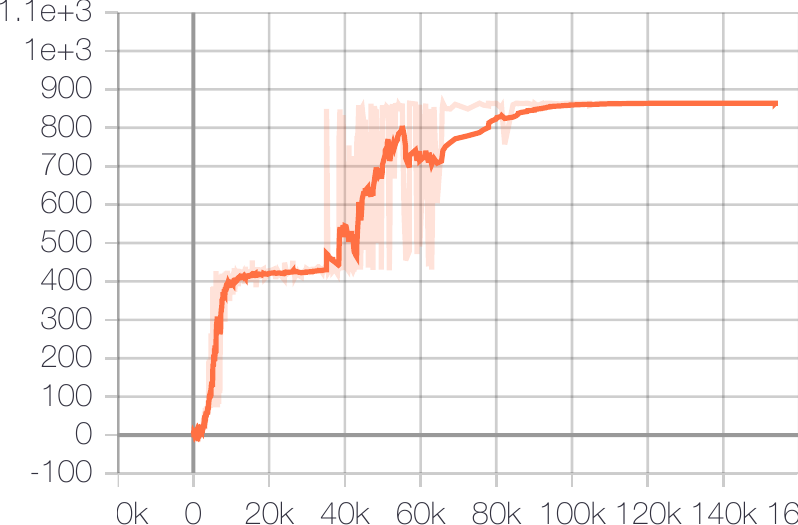}
    }
    \subfigure[centipede]{
    \includegraphics[width=0.3\textwidth]{./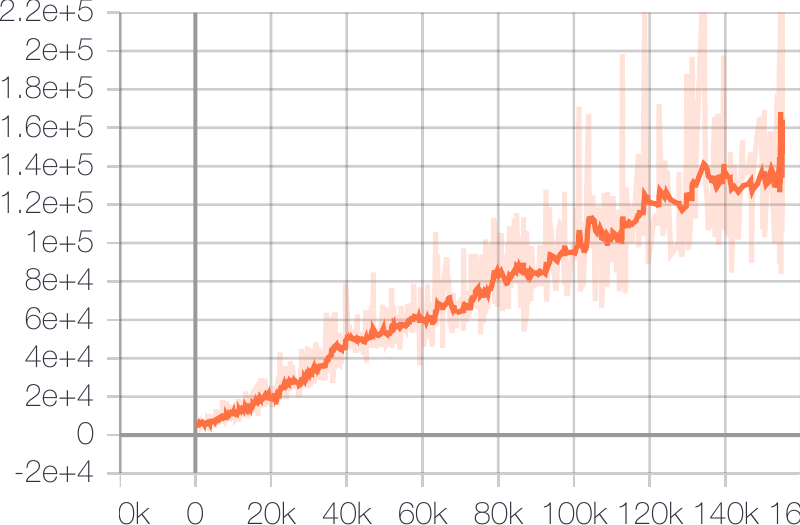}
    }
    \subfigure[chopper\_command]{
    \includegraphics[width=0.3\textwidth]{./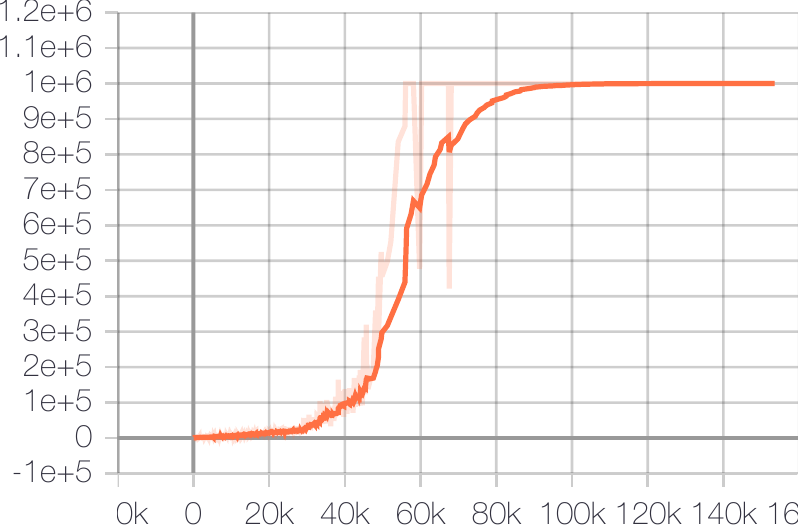}
    }
\end{figure}

\begin{figure}[!ht]
    \subfigure[crazy\_climber]{
    \includegraphics[width=0.3\textwidth]{./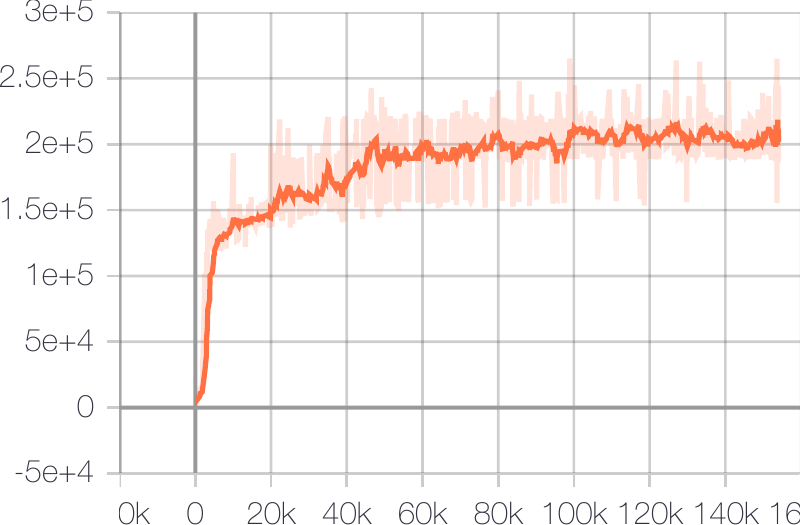}
    }
    \subfigure[defender]{
    \includegraphics[width=0.3\textwidth]{./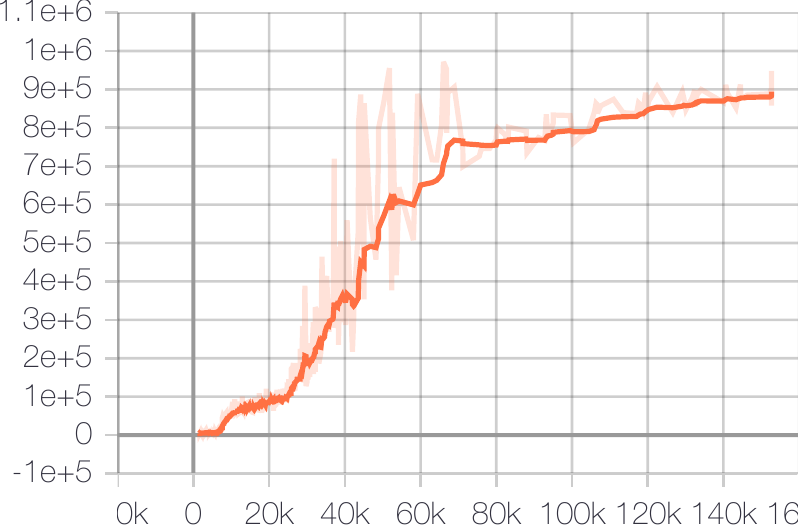}
    }
    \subfigure[demon\_attack]{
    \includegraphics[width=0.3\textwidth]{./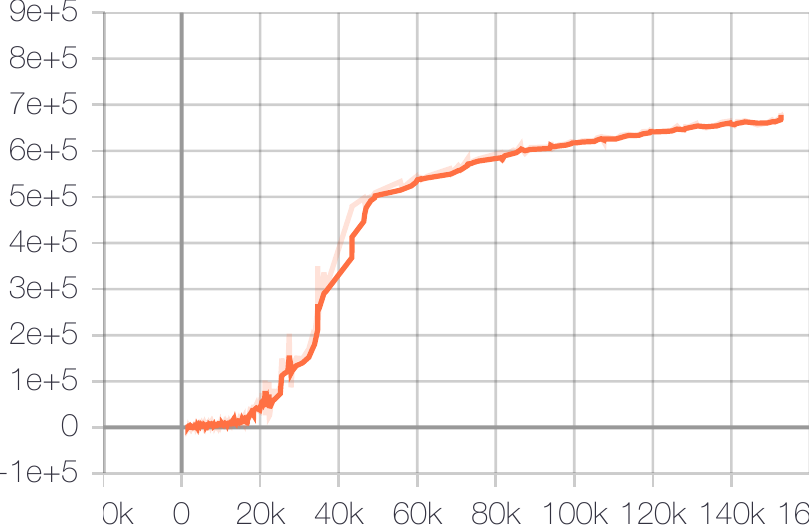}
    }
\end{figure}

\begin{figure}[!ht]
    \subfigure[double\_dunk]{
    \includegraphics[width=0.3\textwidth]{./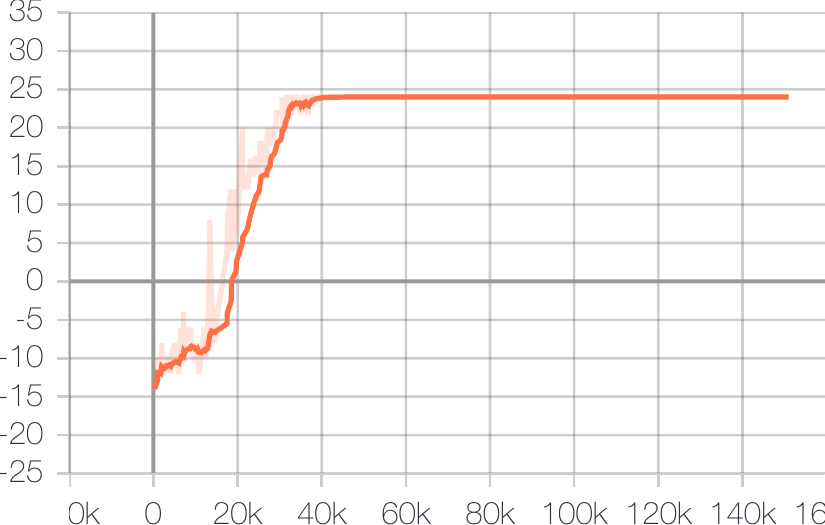}
    }
    \subfigure[enduro]{
     \includegraphics[width=0.3\textwidth]{./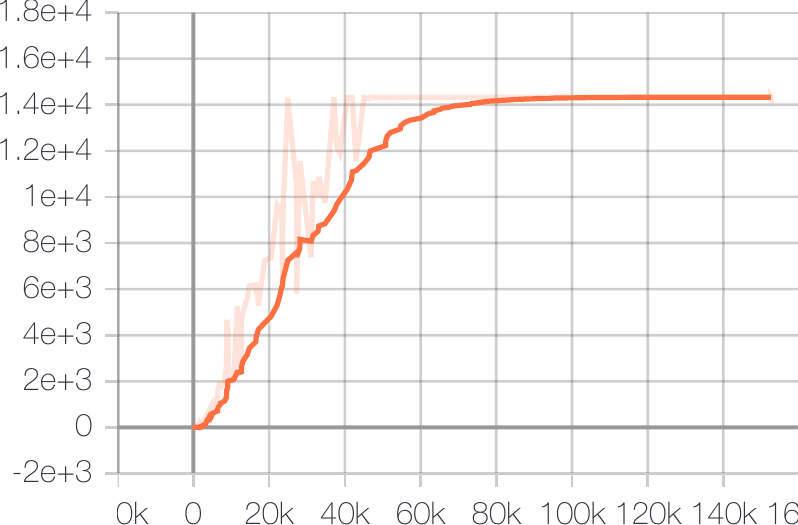}
    }
    \subfigure[fishing\_derby]{
    \includegraphics[width=0.3\textwidth]{./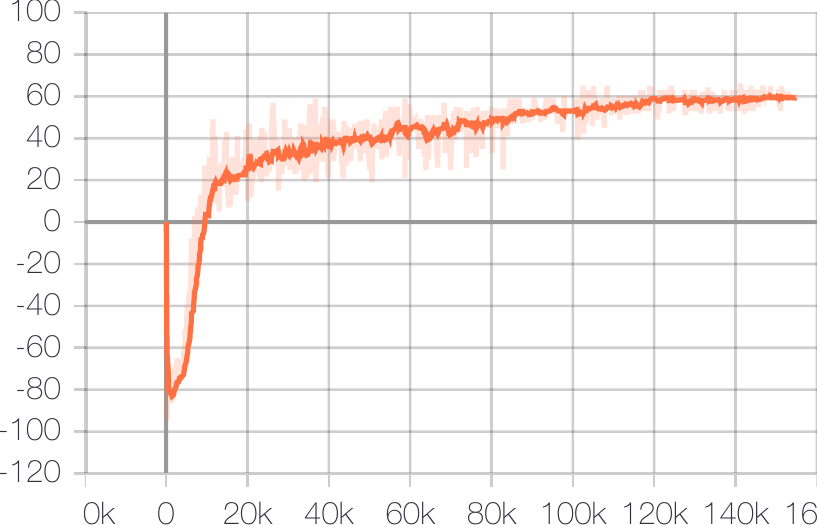}
    }
\end{figure}

\begin{figure}[!ht]
    \subfigure[freeway]{
    \includegraphics[width=0.3\textwidth]{./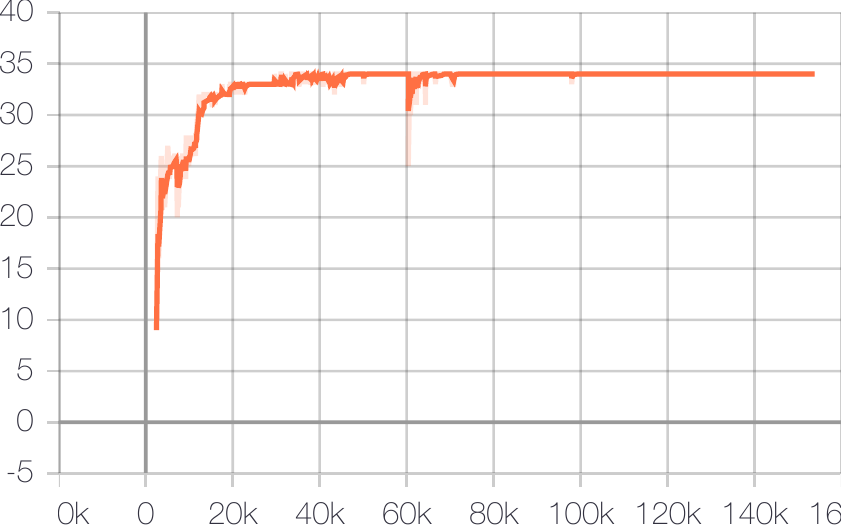}
    }
    \subfigure[frostbite]{
    \includegraphics[width=0.3\textwidth]{./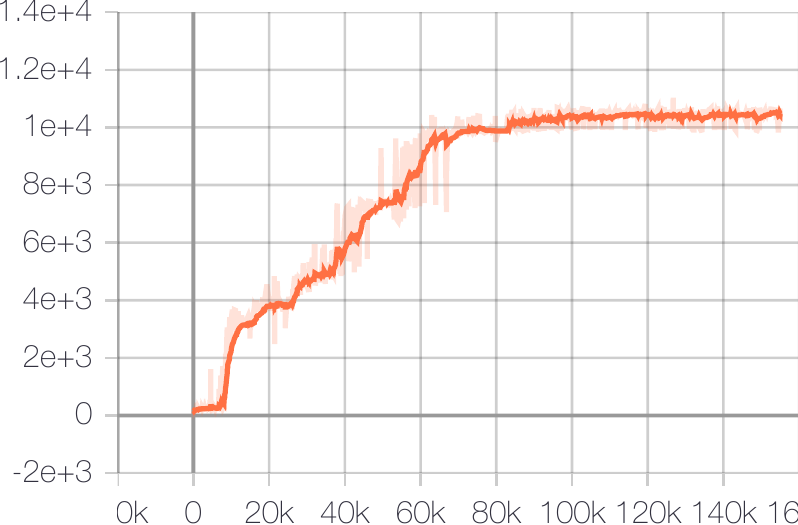}
    }
    \subfigure[gopher]{
    \includegraphics[width=0.3\textwidth]{./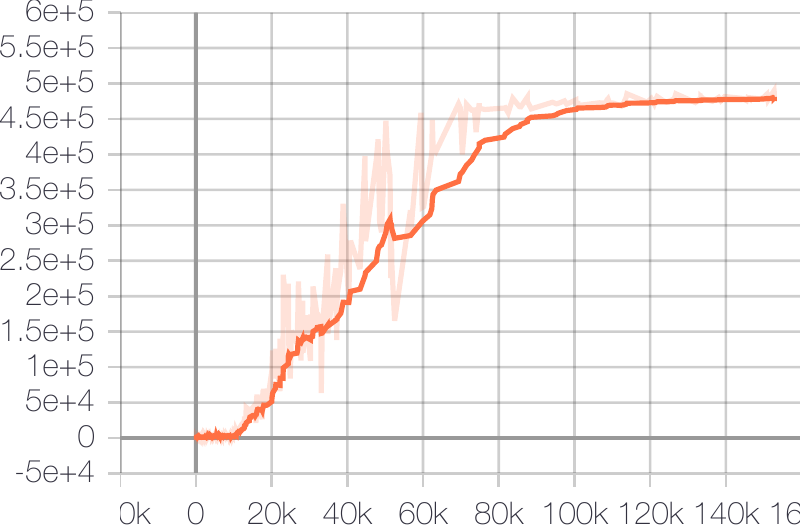}
    }
\end{figure}

\begin{figure}[!ht]
    \subfigure[gravitar]{
    \includegraphics[width=0.3\textwidth]{./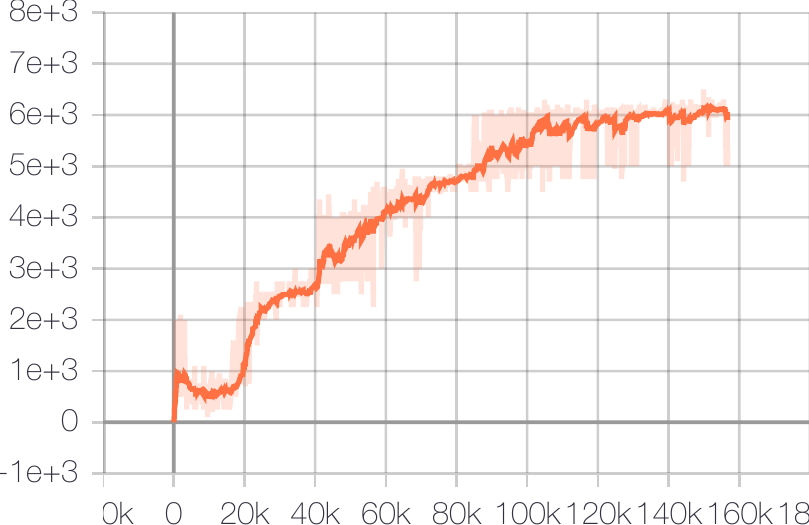}
    }
    \subfigure[hero]{
    \includegraphics[width=0.3\textwidth]{./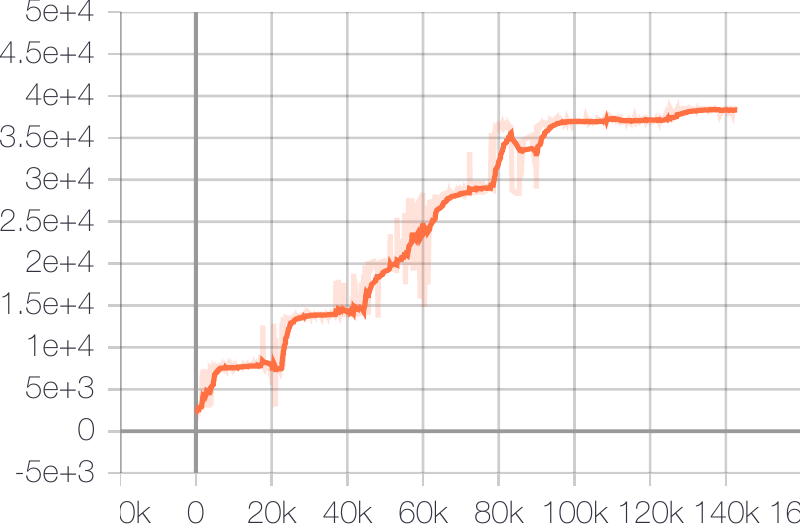}
    }
    \subfigure[ice\_hockey]{
    \includegraphics[width=0.3\textwidth]{./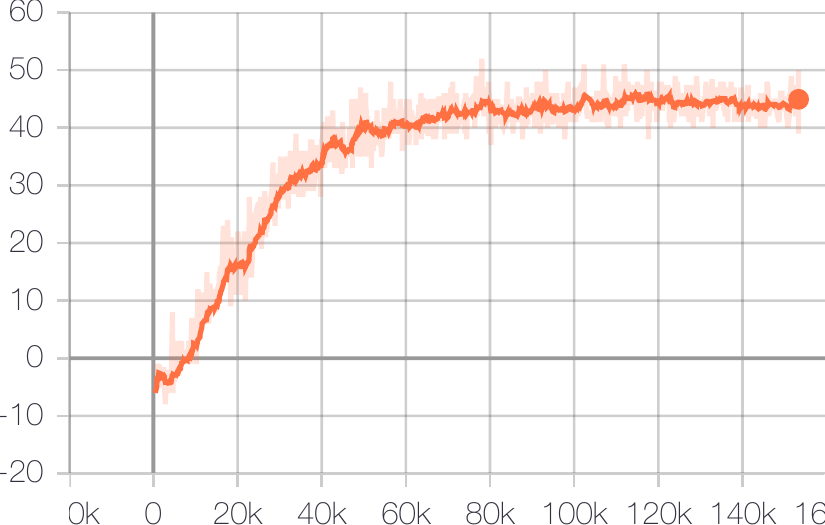}
    }
\end{figure}

\begin{figure}[!ht]
    \subfigure[jamesbond]{
    \includegraphics[width=0.3\textwidth]{./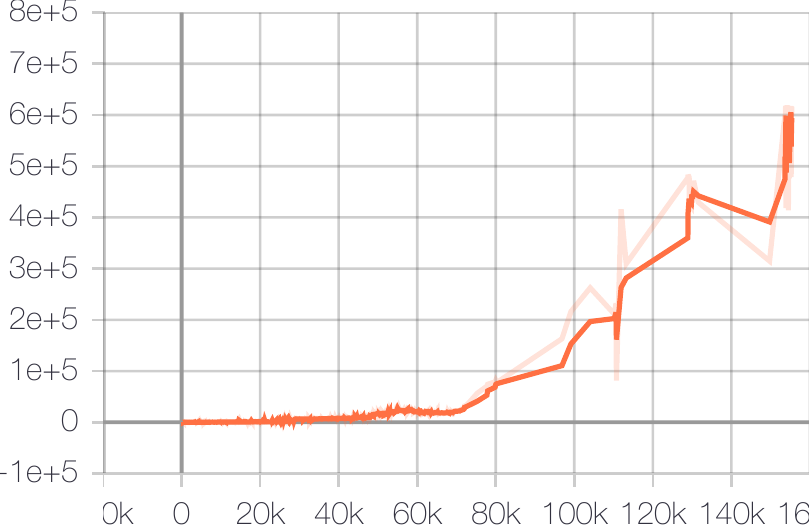}
    }
    \subfigure[kangaroo]{
    \includegraphics[width=0.3\textwidth]{./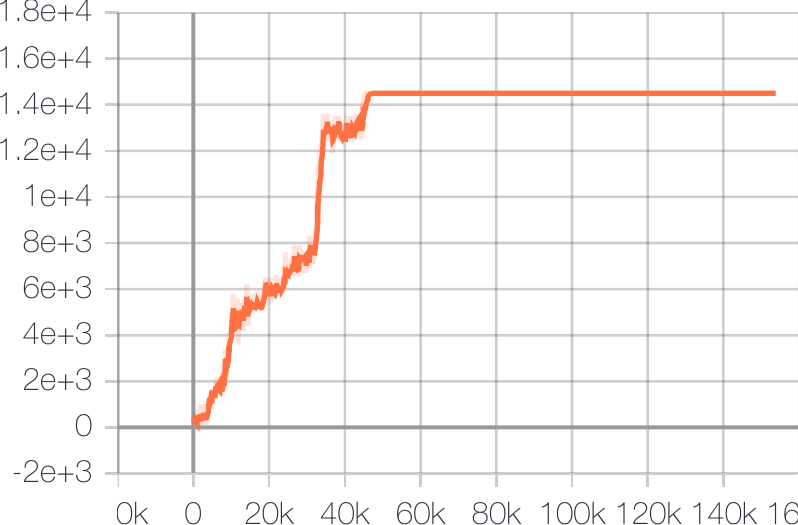}
    }
    \subfigure[krull]{
    \includegraphics[width=0.3\textwidth]{./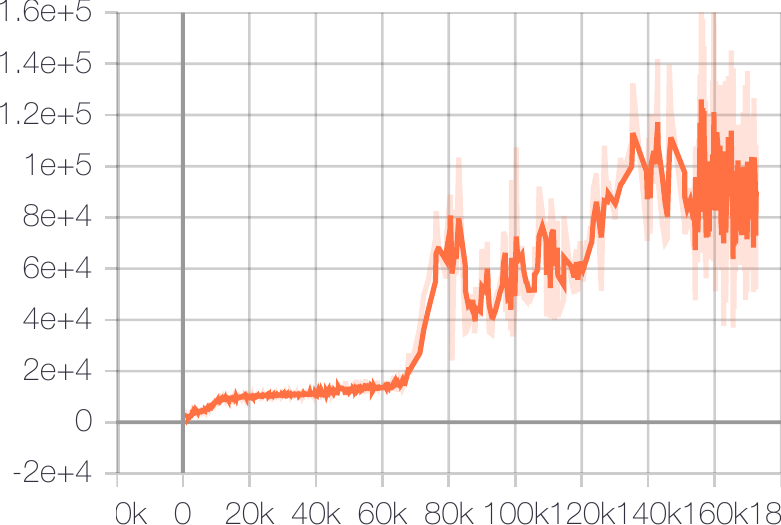}
    }
\end{figure}

\begin{figure}[!ht]
    \subfigure[kung\_fu\_master]{
    \includegraphics[width=0.3\textwidth]{./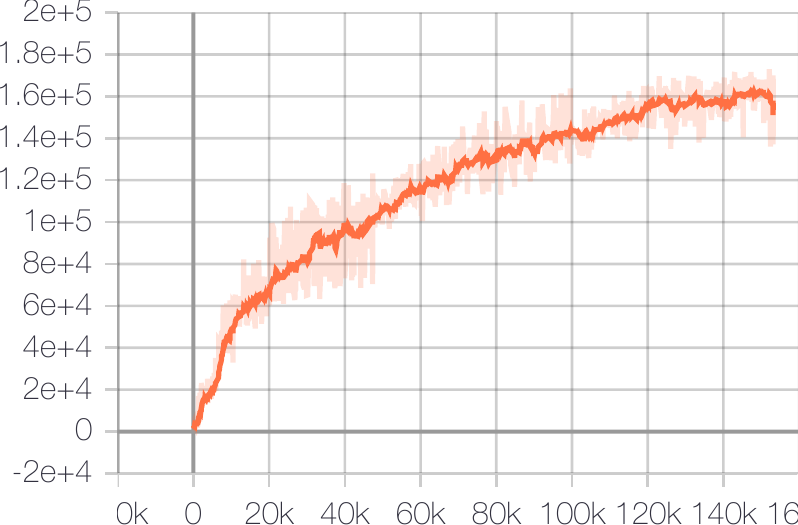}
    }
    \subfigure[montezuma\_revenge]{
    \includegraphics[width=0.3\textwidth]{./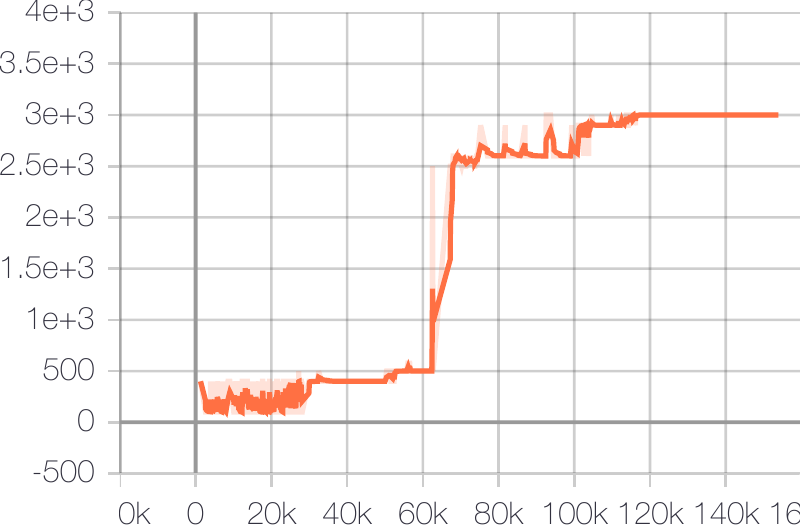}
    }
    \subfigure[ms\_pacman]{
    \includegraphics[width=0.3\textwidth]{./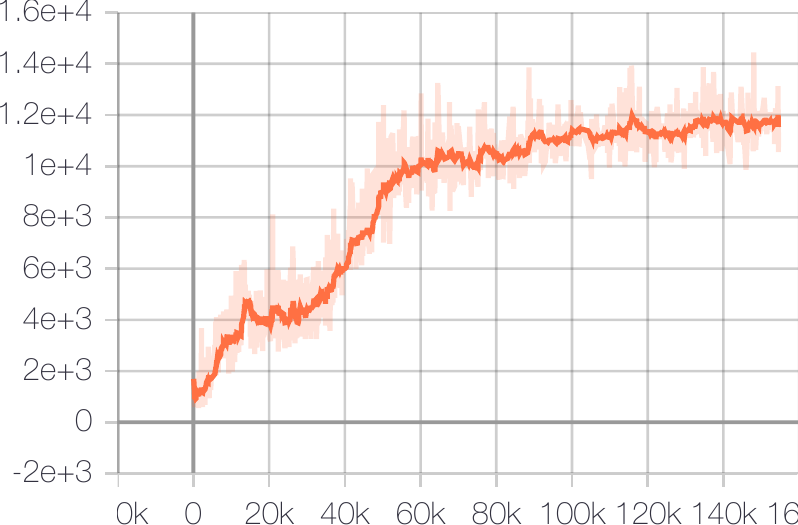}
    }
\end{figure}

\begin{figure}[!ht]
    \subfigure[name\_this\_game]{
    \includegraphics[width=0.3\textwidth]{./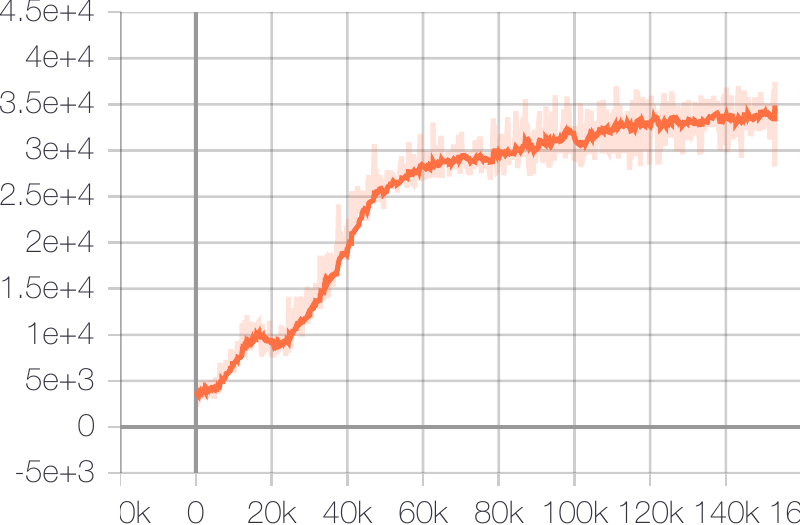}
    }
    \subfigure[phoenix]{
    \includegraphics[width=0.3\textwidth]{./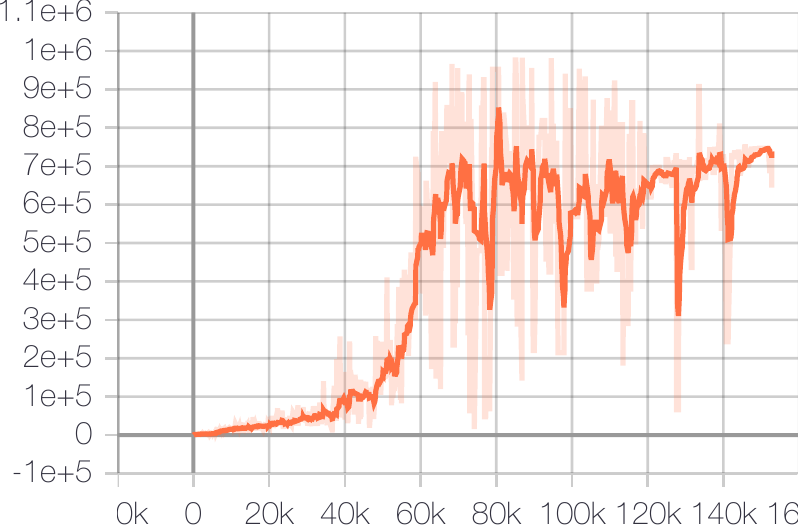}
    }
    \subfigure[pitfall]{
    \includegraphics[width=0.3\textwidth]{./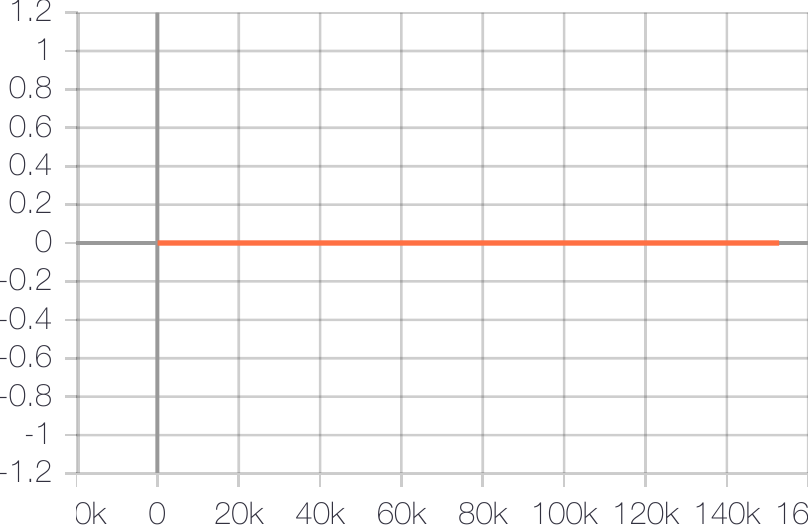}
    }
\end{figure}

\begin{figure}[!ht]
    \subfigure[pong]{
    \includegraphics[width=0.3\textwidth]{./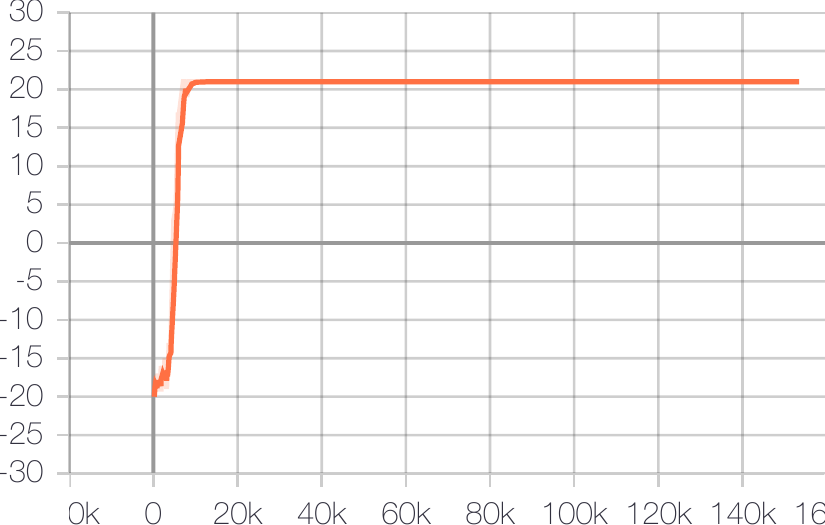}
    }
    \subfigure[private\_eye]{
    \includegraphics[width=0.3\textwidth]{./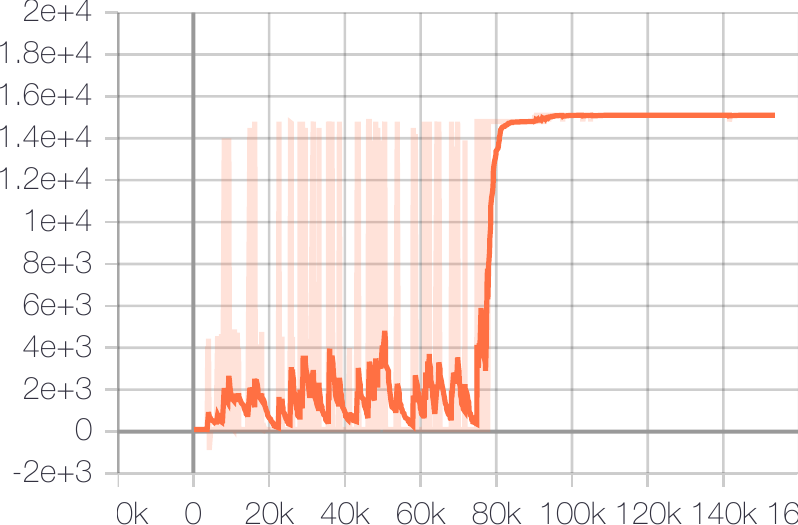}
    }
    \subfigure[qbert]{
    \includegraphics[width=0.3\textwidth]{./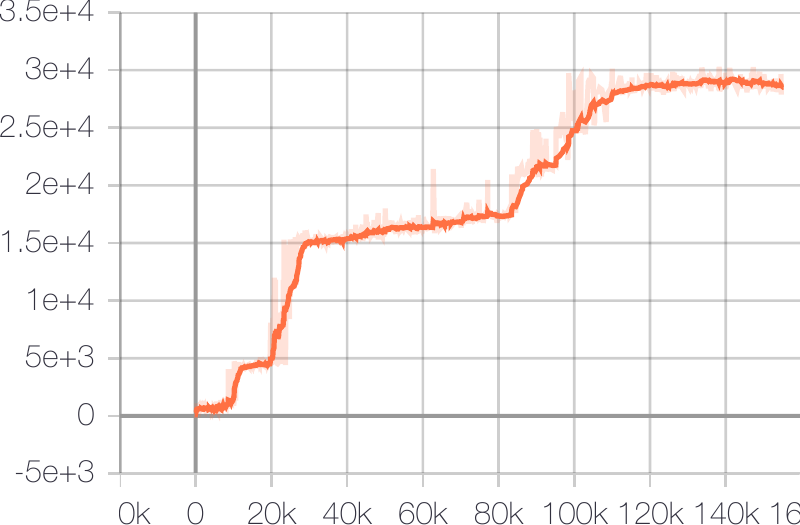}
    }
\end{figure}

\begin{figure}[!ht]
    \subfigure[riverraid]{
    \includegraphics[width=0.3\textwidth]{./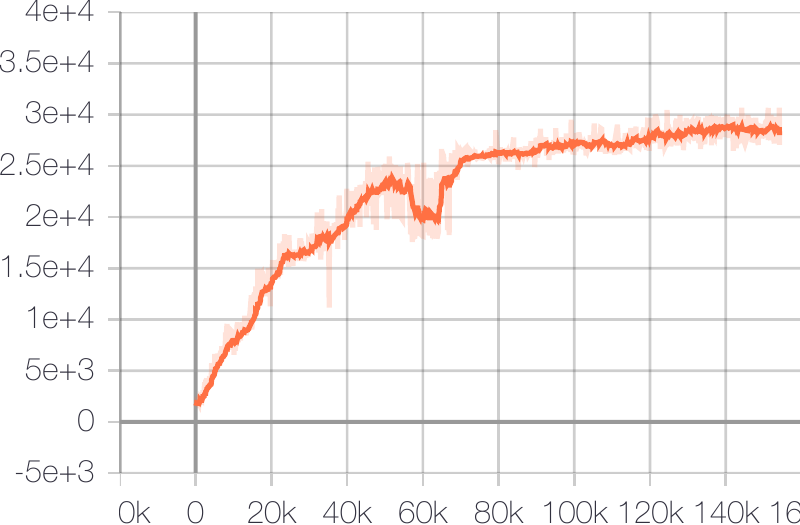}
    }
    \subfigure[road\_runner]{
    \includegraphics[width=0.3\textwidth]{./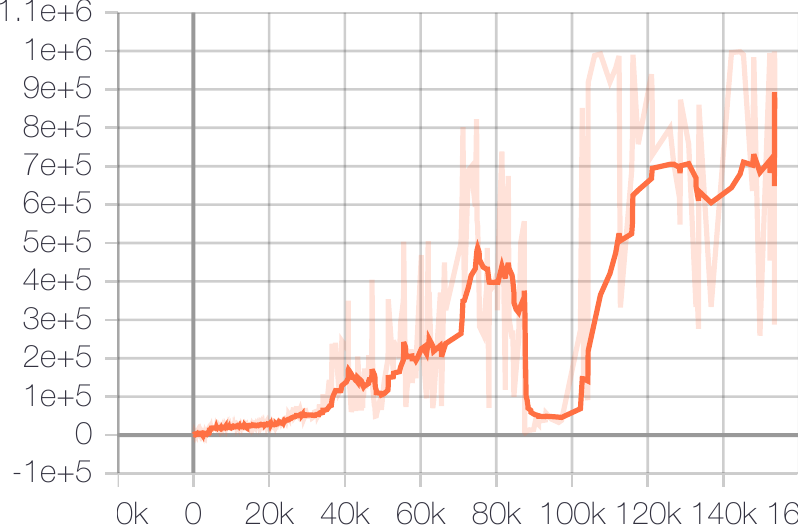}
    }
    \subfigure[robotank]{
    \includegraphics[width=0.3\textwidth]{./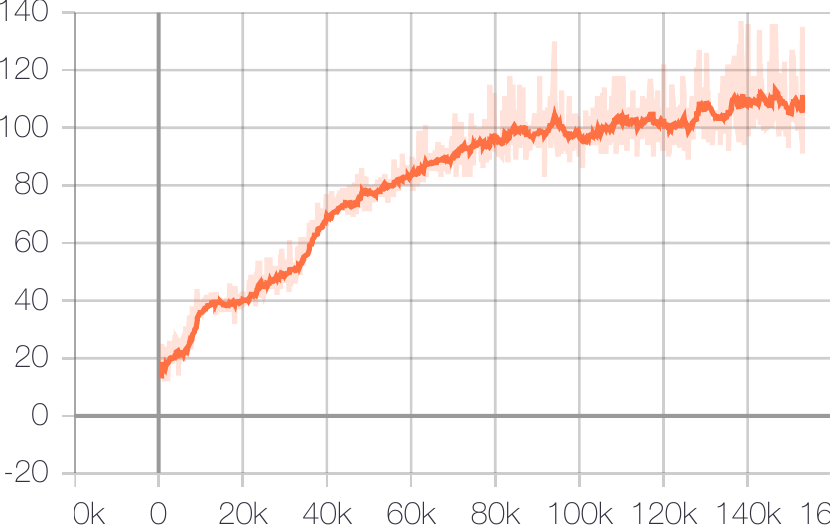}
    }
\end{figure}

\begin{figure}[!ht]
    \subfigure[seaquest]{
    \includegraphics[width=0.3\textwidth]{./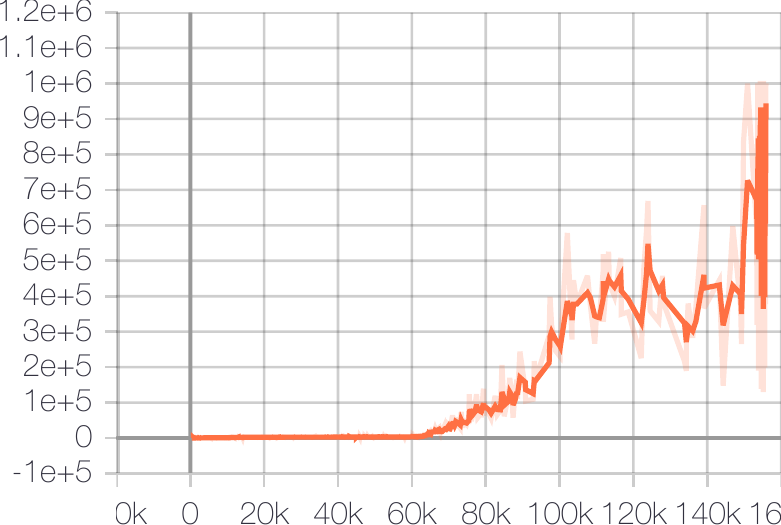}
    }
    \subfigure[skiing]{
    \includegraphics[width=0.3\textwidth]{./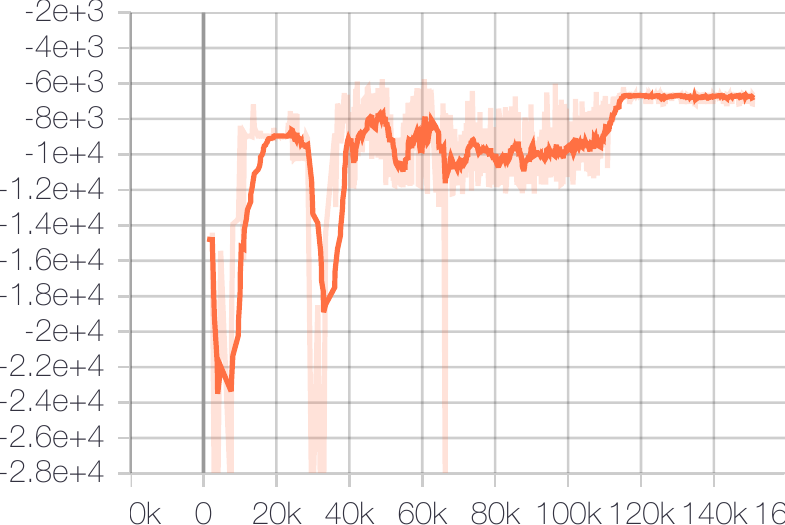}
    }
    \subfigure[solaris]{
    \includegraphics[width=0.3\textwidth]{./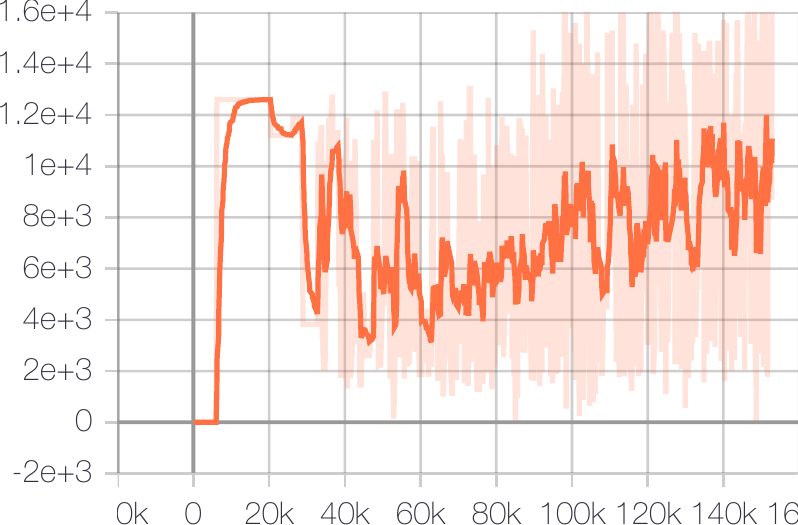}
    }
\end{figure}

\begin{figure}[!ht]
    \subfigure[space\_invaders]{
    \includegraphics[width=0.3\textwidth]{./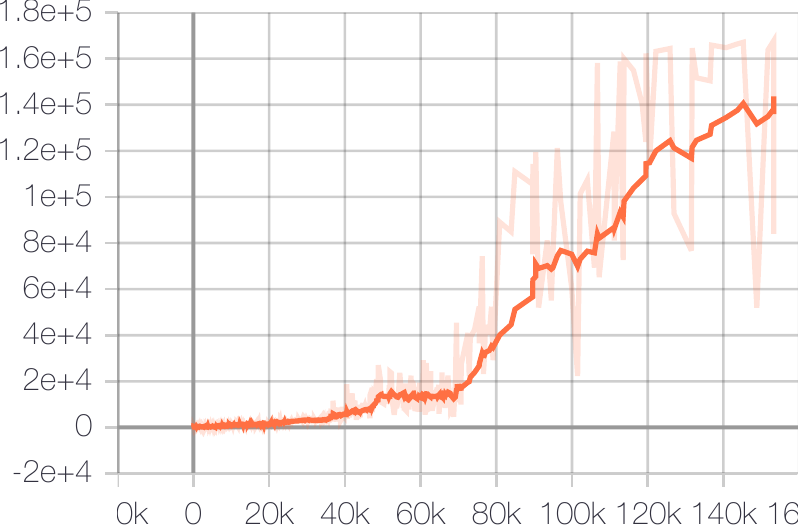}
    }
    \subfigure[star\_gunner]{
    \includegraphics[width=0.3\textwidth]{./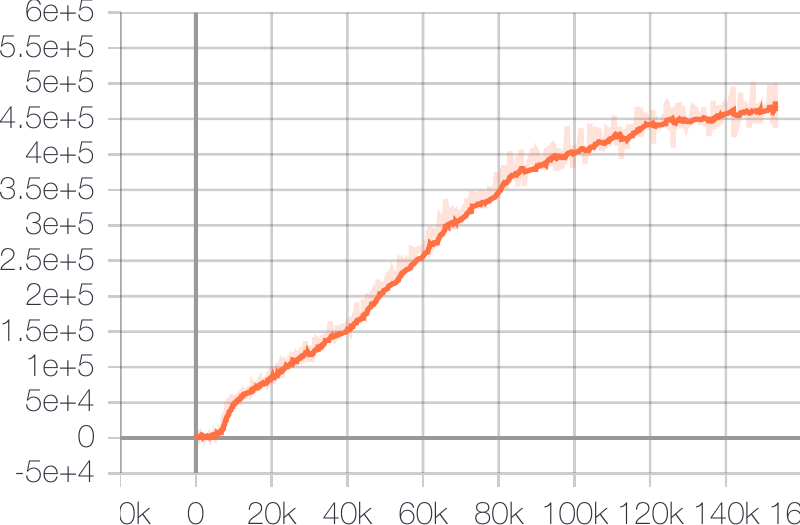}
    }
    \subfigure[surround]{
    \includegraphics[width=0.3\textwidth]{./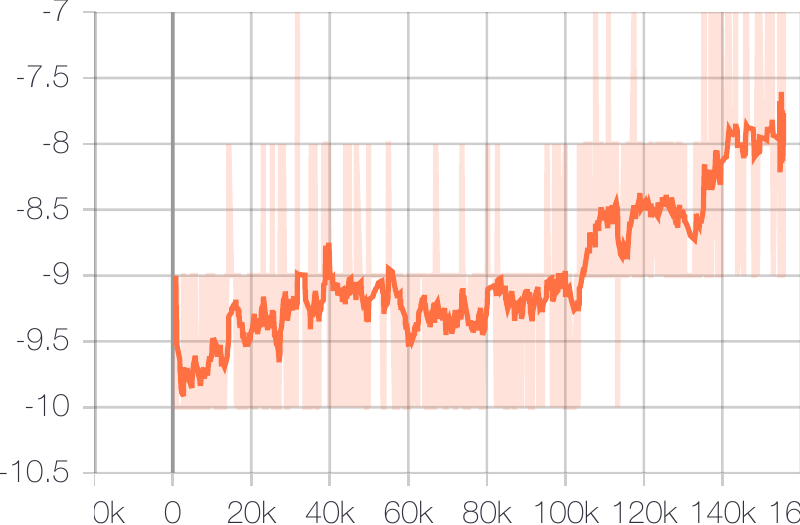}
    }
\end{figure}

\begin{figure}[!ht]
    \subfigure[tennis]{
    \includegraphics[width=0.3\textwidth]{./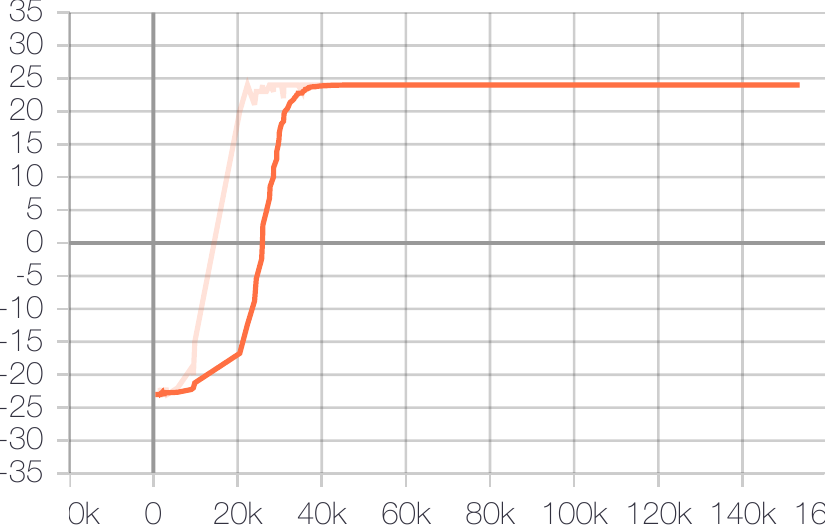}
    }
    \subfigure[time\_pilot]{
    \includegraphics[width=0.3\textwidth]{./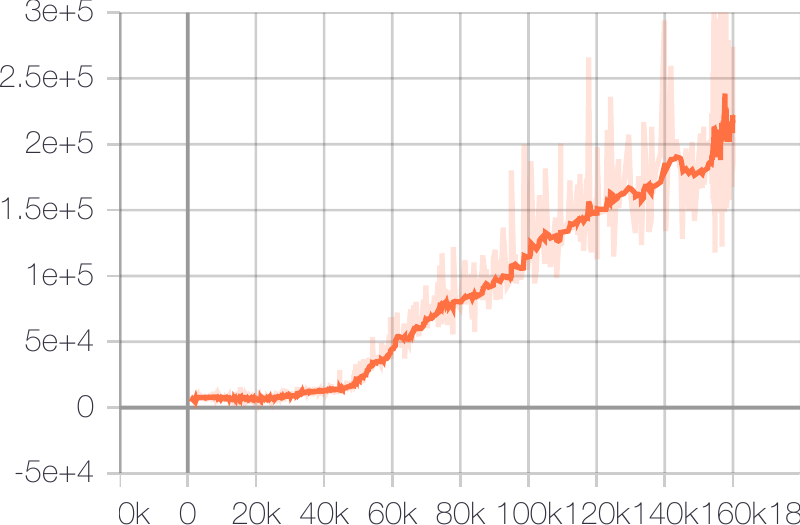}
    }
    \subfigure[tutankham]{
    \includegraphics[width=0.3\textwidth]{./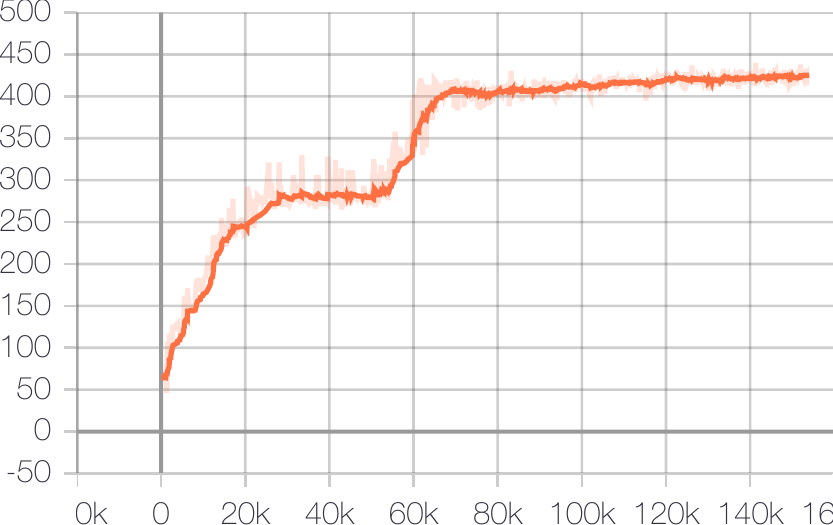}
    }
\end{figure}

\clearpage

\begin{figure}[!ht]
    \subfigure[up\_n\_down]{
    \includegraphics[width=0.3\textwidth]{./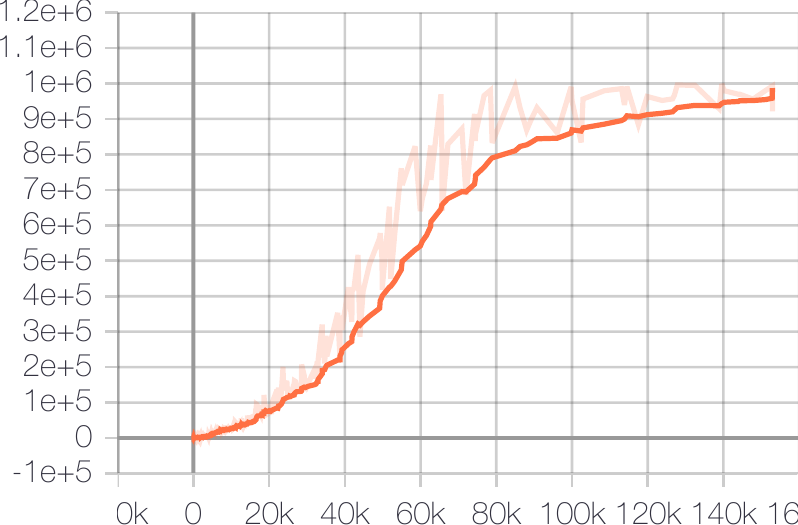}
    }
    \subfigure[venture]{
    \includegraphics[width=0.3\textwidth]{./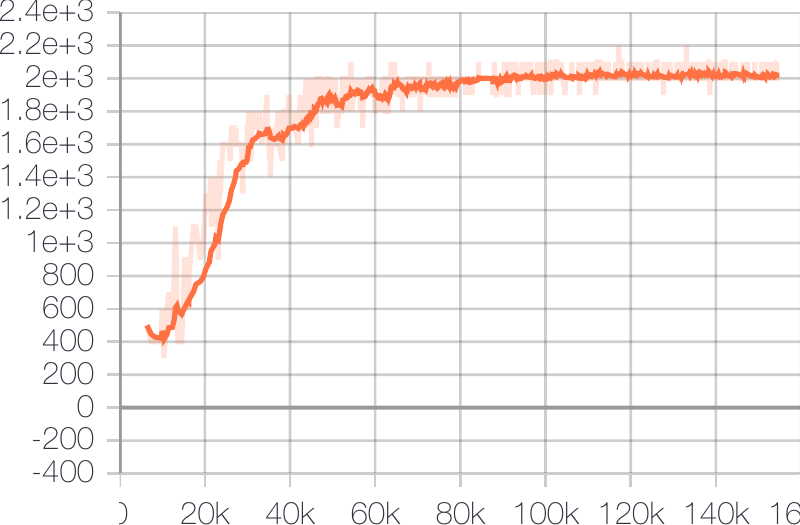}
    }
    \subfigure[video\_pinball]{
    \includegraphics[width=0.3\textwidth]{./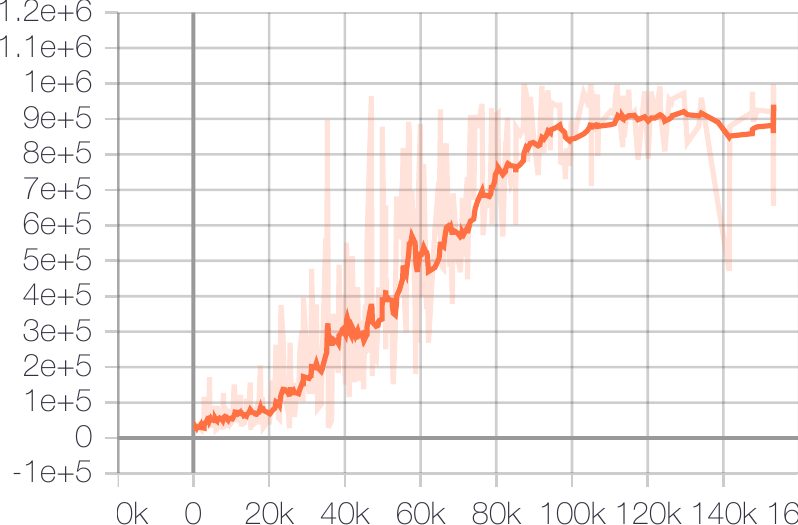}
    }
\end{figure}

\begin{figure}[!ht]
    \subfigure[wizard\_of\_wor]{
    \includegraphics[width=0.3\textwidth]{./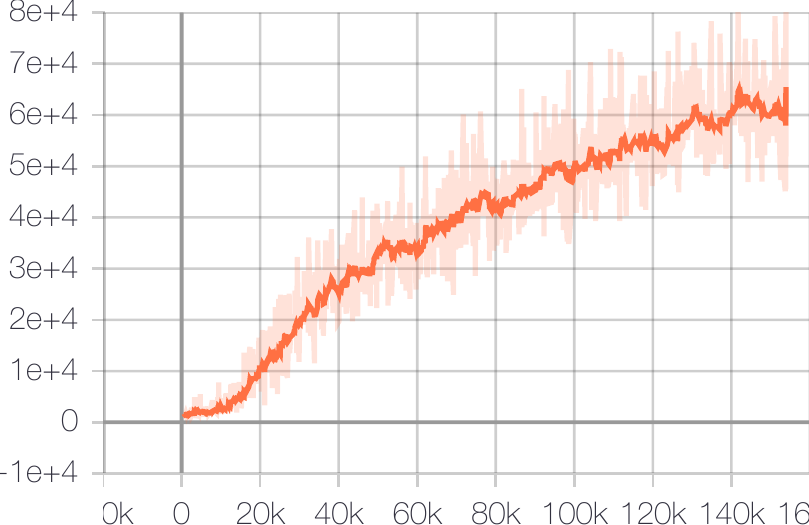}
    }
    \subfigure[yars\_revenge]{
    \includegraphics[width=0.3\textwidth]{./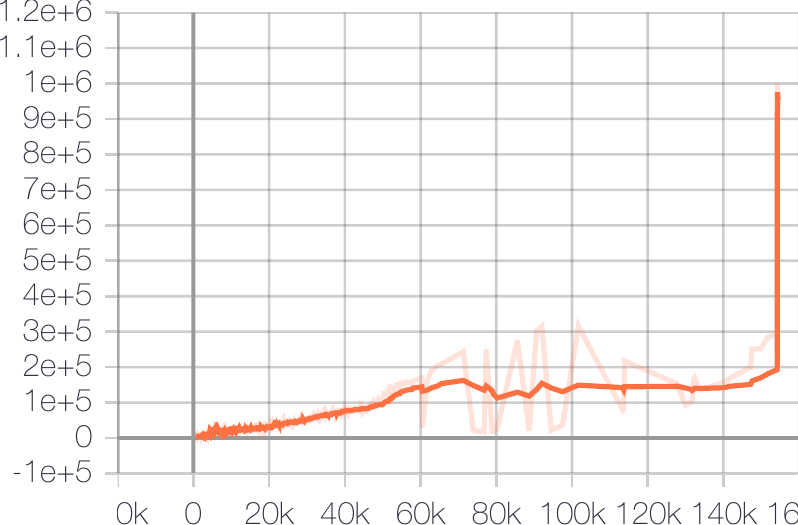}
    }
    \subfigure[zaxxon]{
    \includegraphics[width=0.3\textwidth]{./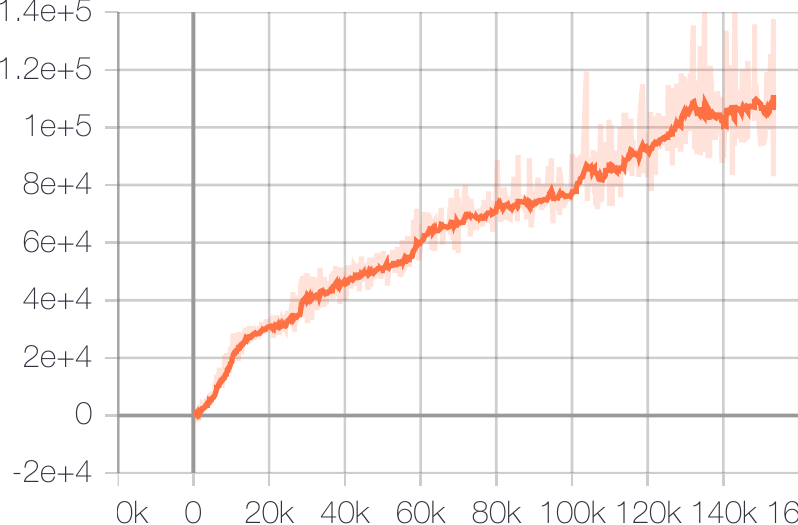}
    }
\end{figure}

\clearpage

\subsubsection{Atari Games Learning Curves of GDI-H$^3$}

\setcounter{subfigure}{0}

\begin{figure}[!ht] 
    \subfigure[alien]{
    \includegraphics[width=0.3\textwidth]{./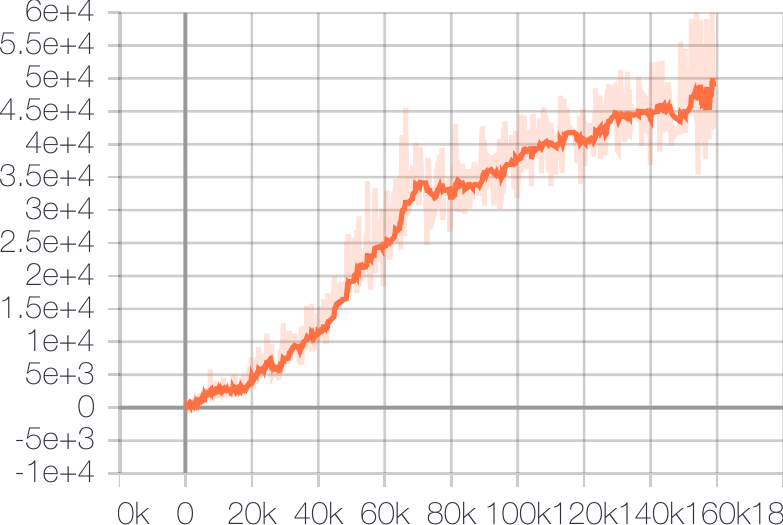}
    }
    \subfigure[amidar]{
    \includegraphics[width=0.3\textwidth]{./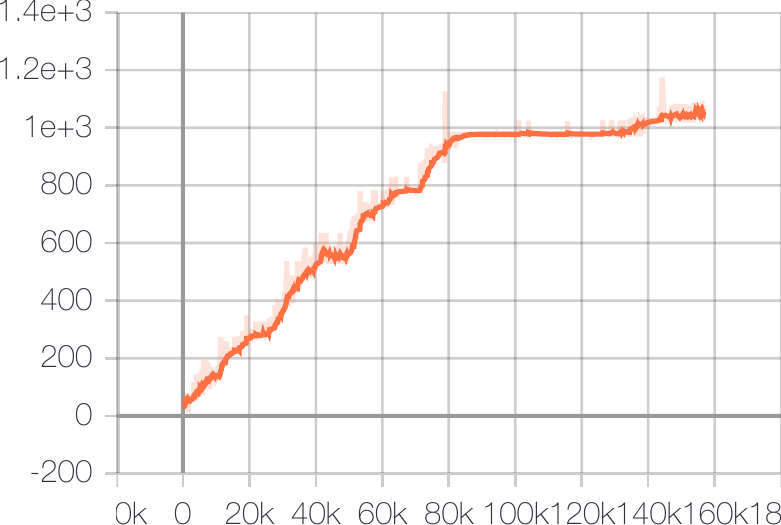}
    }
    \subfigure[assault]{
    \includegraphics[width=0.3\textwidth]{./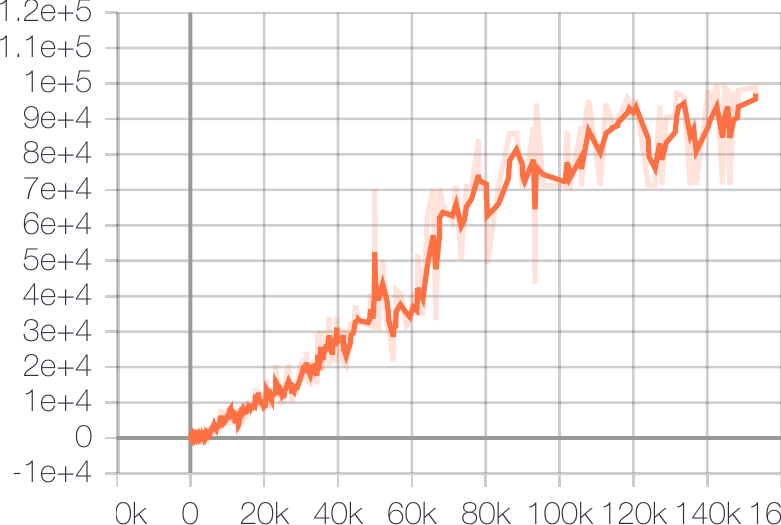}
    }
\end{figure}

\begin{figure}[!ht]
    \subfigure[asterix]{
    \includegraphics[width=0.3\textwidth]{./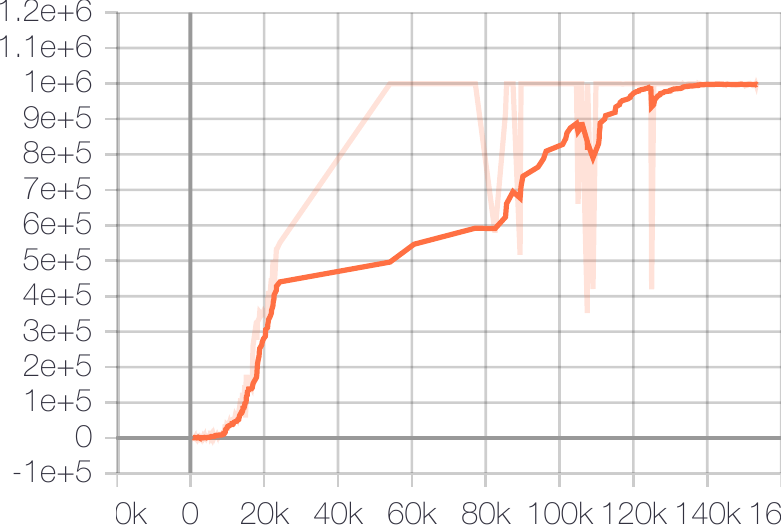}
    }
    \subfigure[asteroids]{
    \includegraphics[width=0.3\textwidth]{./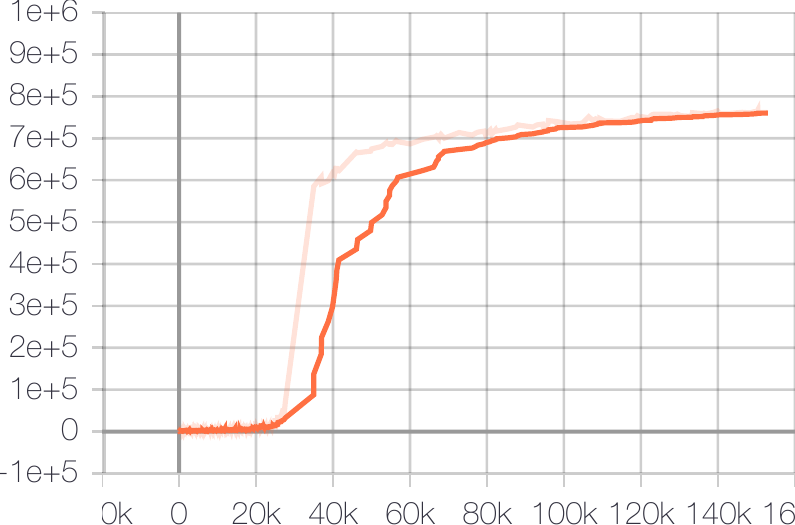}
    }
    \subfigure[atlantis]{
    \includegraphics[width=0.3\textwidth]{./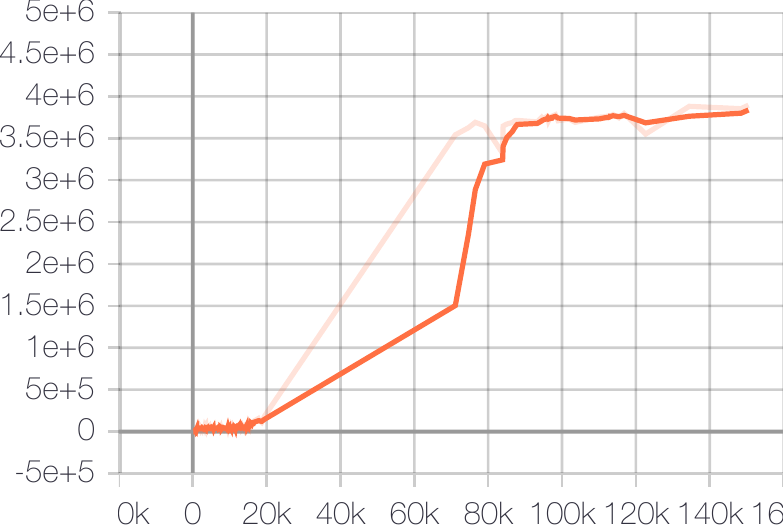}
    }
\end{figure}

\begin{figure}[!ht]
    \subfigure[bank\_heist]{
    \includegraphics[width=0.3\textwidth]{./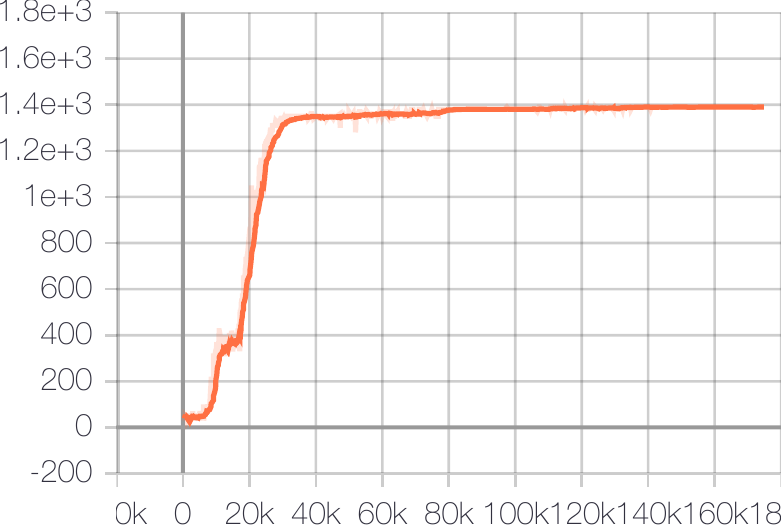}
    }
    \subfigure[battle\_zone]{
    \includegraphics[width=0.3\textwidth]{./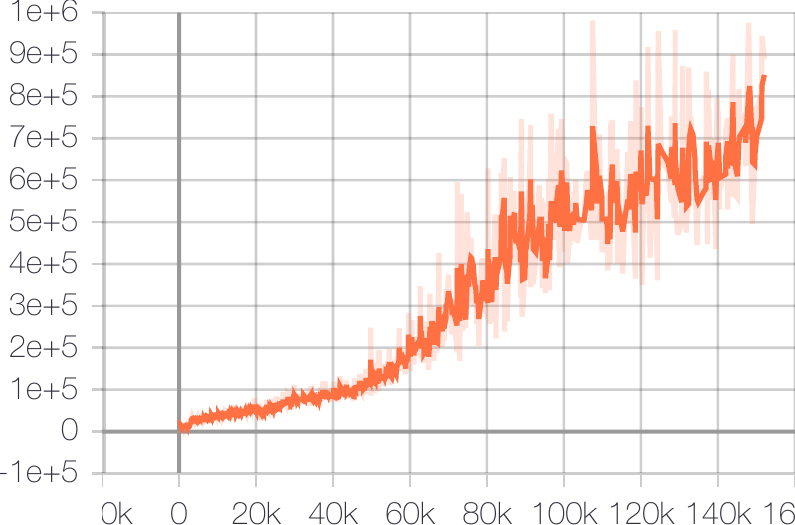}
    }
    \subfigure[beam\_rider]{
    \includegraphics[width=0.3\textwidth]{./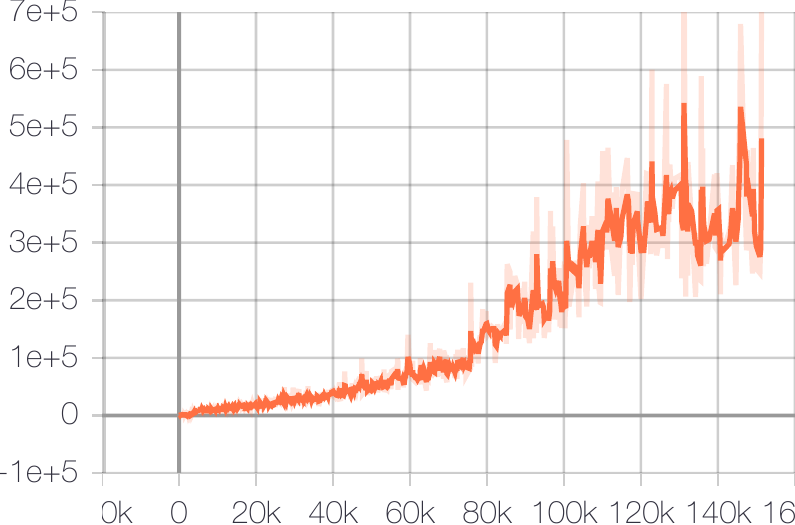}
    }
\end{figure}

\begin{figure}[!ht]
    \subfigure[berzerk]{
    \includegraphics[width=0.3\textwidth]{./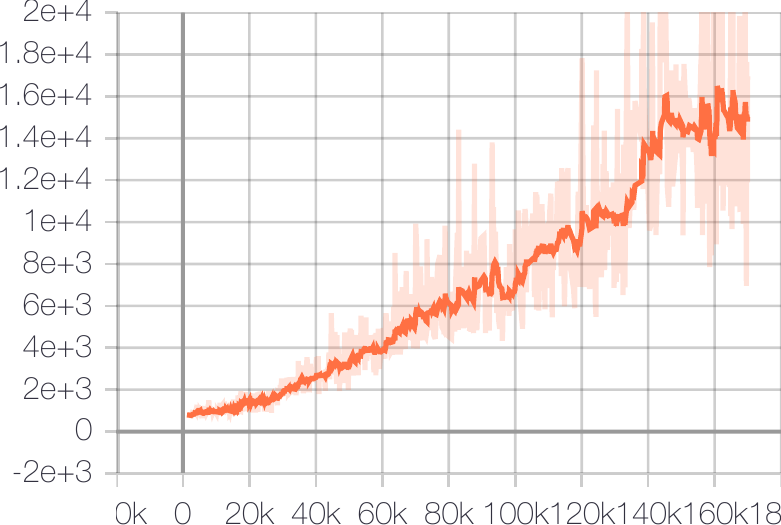}
    }
    \subfigure[bowling]{
    \includegraphics[width=0.3\textwidth]{./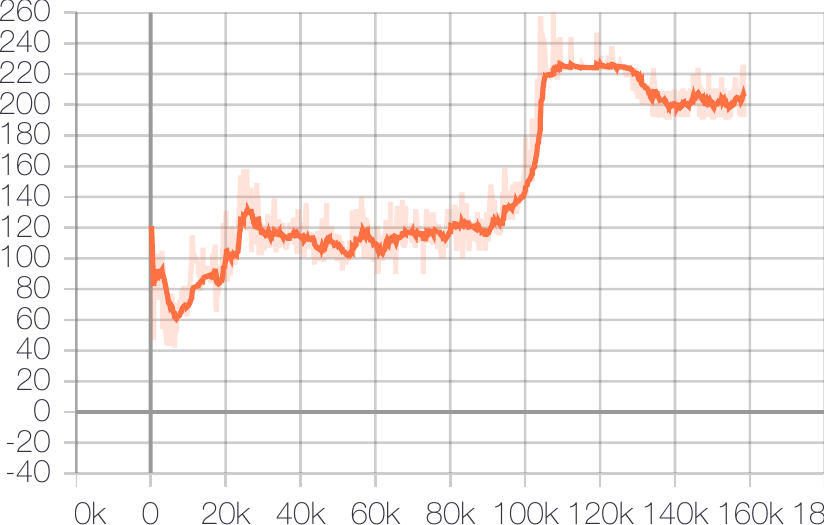}
    }
    \subfigure[boxing]{
    \includegraphics[width=0.3\textwidth]{./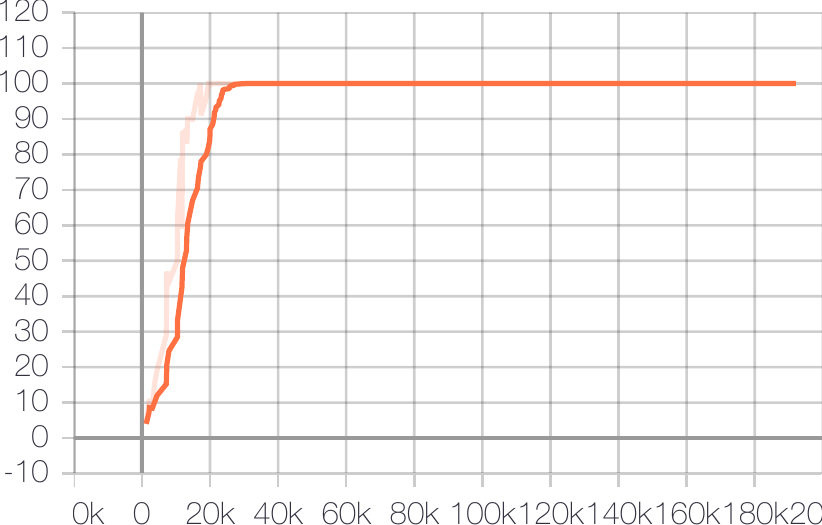}
    }
\end{figure}

\begin{figure}[!ht]
    \subfigure[breakout]{
    \includegraphics[width=0.3\textwidth]{./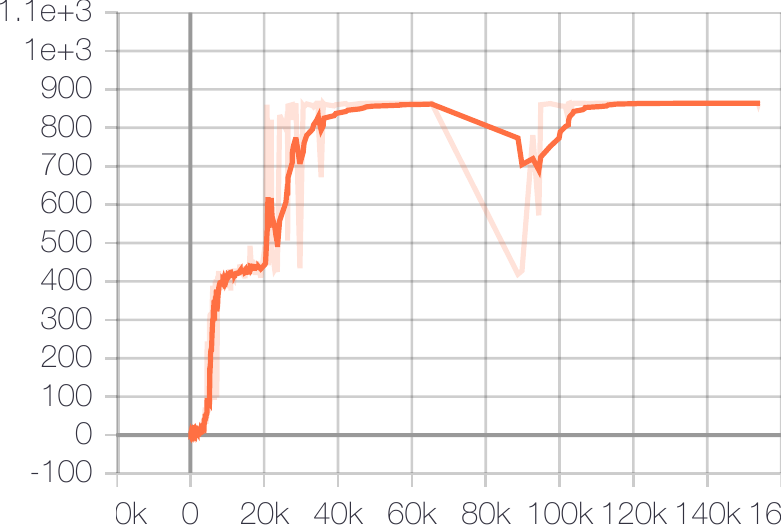}
    }
    \subfigure[centipede]{
    \includegraphics[width=0.3\textwidth]{./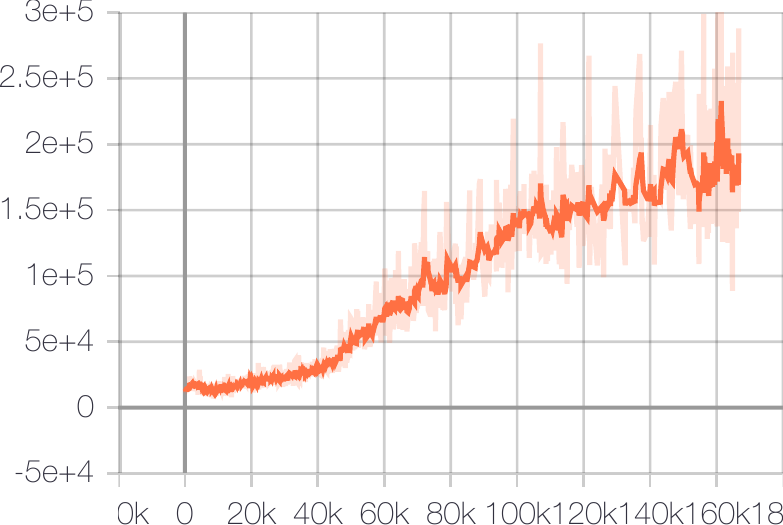}
    }
    \subfigure[chopper\_command]{
    \includegraphics[width=0.3\textwidth]{./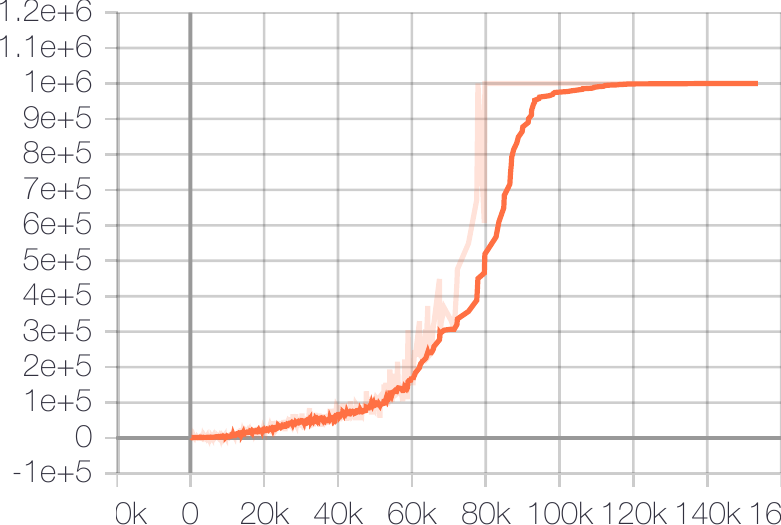}
    }
\end{figure}

\begin{figure}[!ht]
    \subfigure[crazy\_climber]{
    \includegraphics[width=0.3\textwidth]{./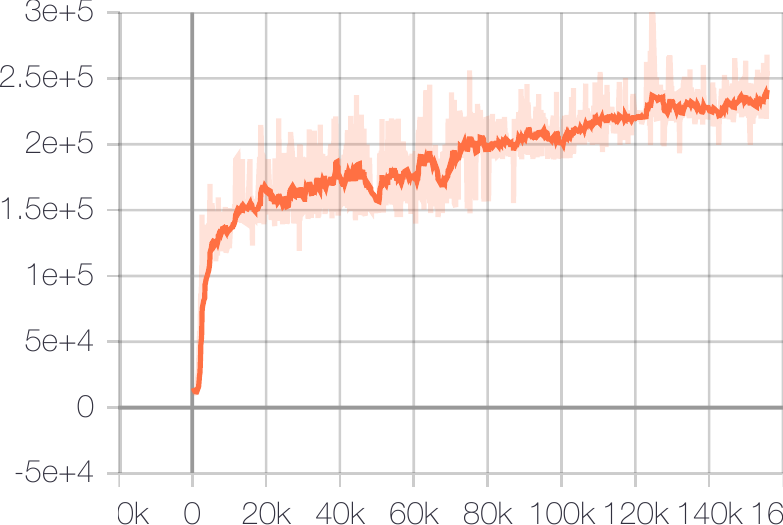}
    }
    \subfigure[defender]{
    \includegraphics[width=0.3\textwidth]{./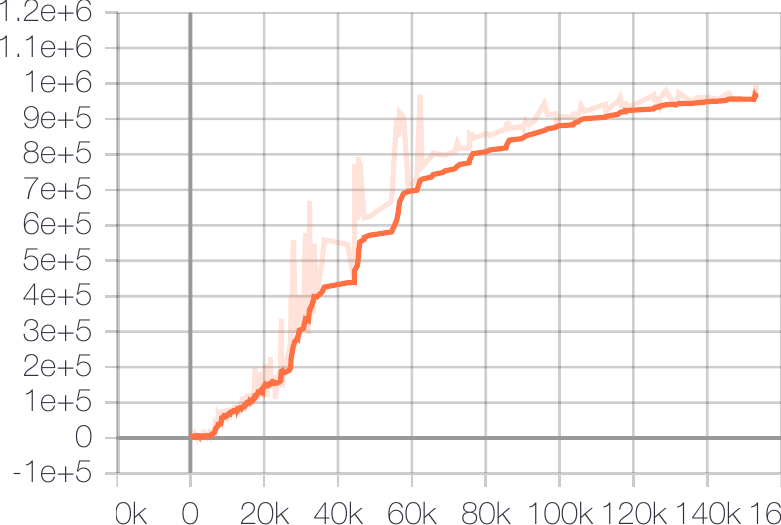}
    }
    \subfigure[demon\_attack]{
    \includegraphics[width=0.3\textwidth]{./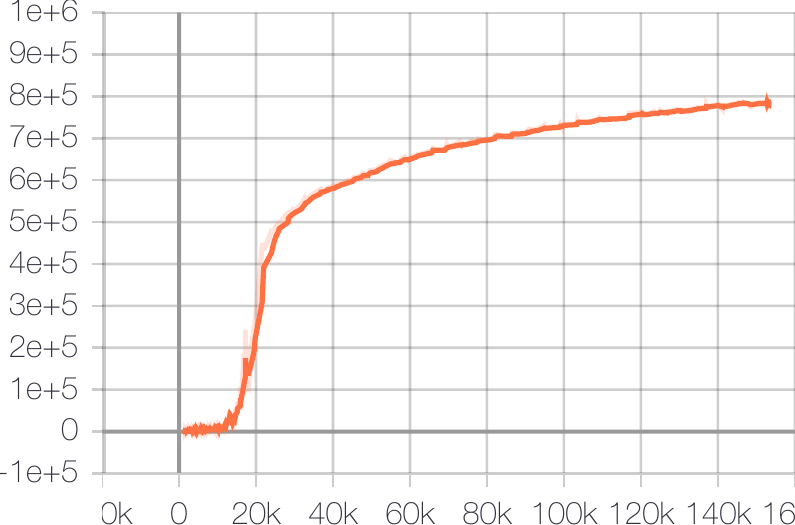}
    }
\end{figure}

\begin{figure}[!ht]
    \subfigure[double\_dunk]{
    \includegraphics[width=0.3\textwidth]{./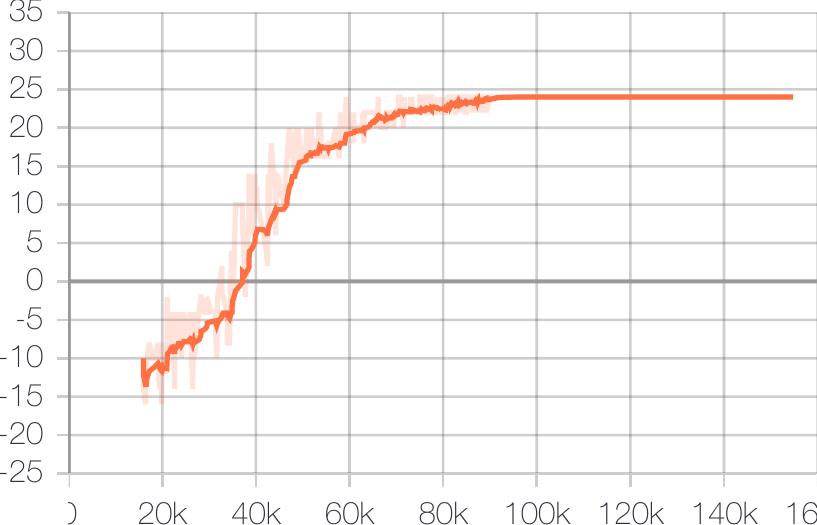}
    }
    \subfigure[enduro]{
    \includegraphics[width=0.3\textwidth]{./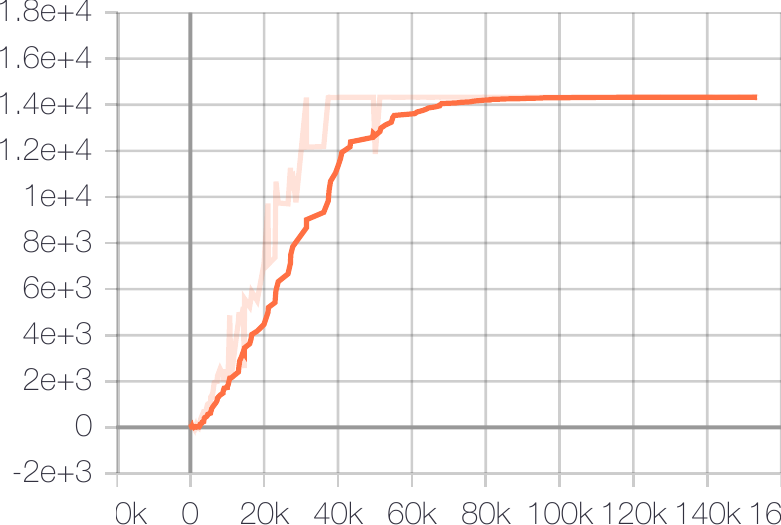}
    }
    \subfigure[fishing\_derby]{
    \includegraphics[width=0.3\textwidth]{./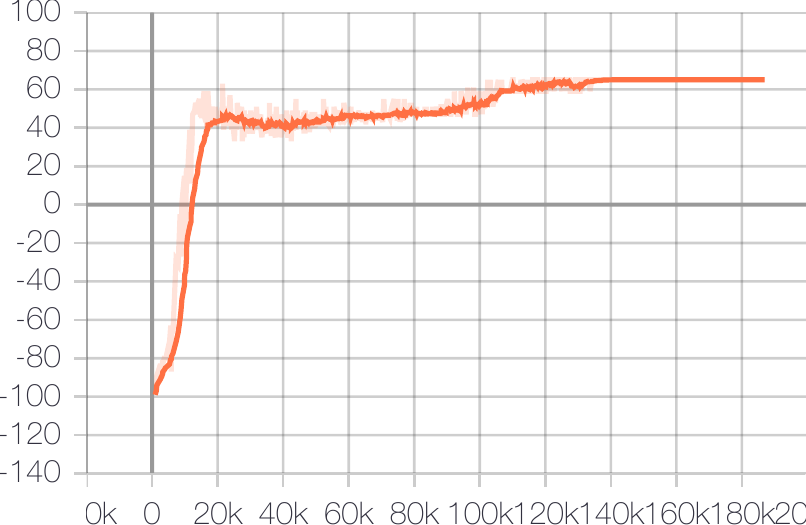}
    }
\end{figure}

\begin{figure}[!ht]
    \subfigure[freeway]{
    \includegraphics[width=0.3\textwidth]{./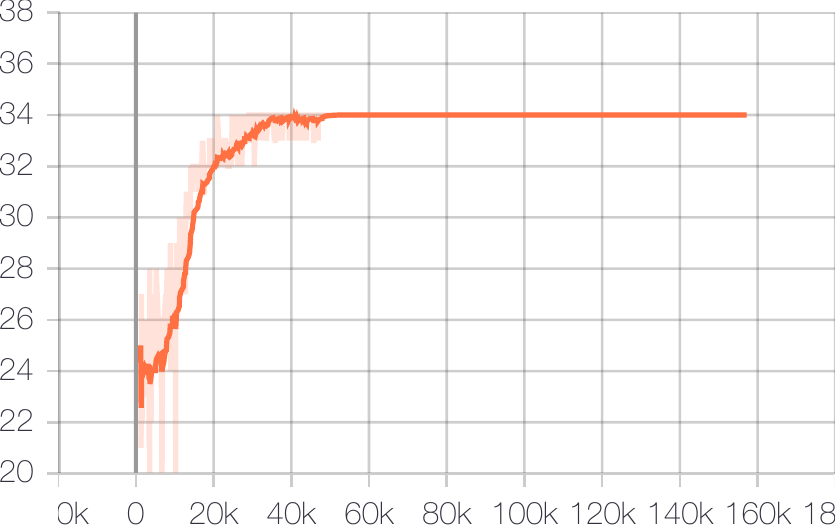}
    }
    \subfigure[frostbite]{
    \includegraphics[width=0.3\textwidth]{./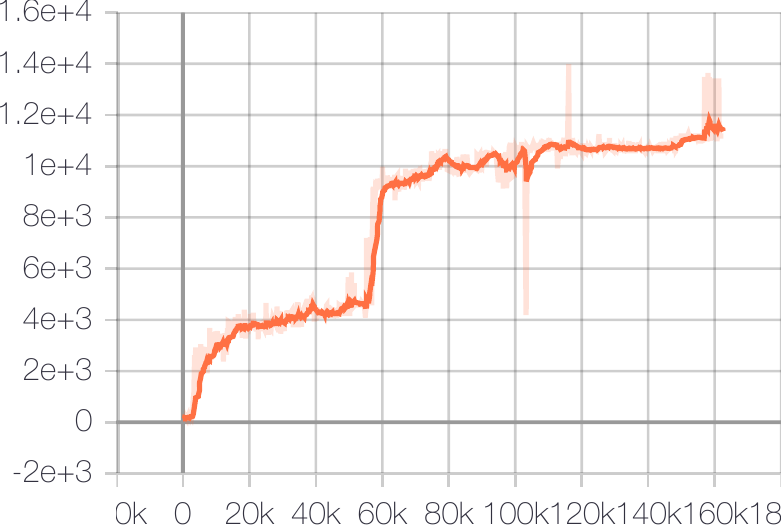}
    }
    \subfigure[gopher]{
    \includegraphics[width=0.3\textwidth]{./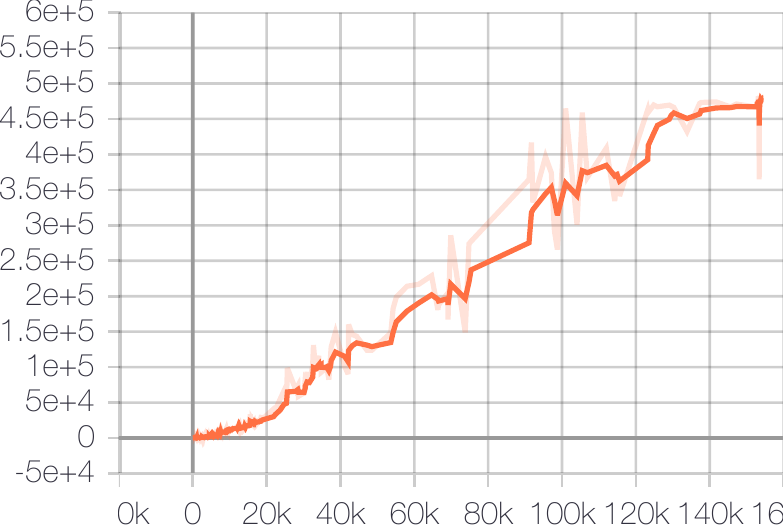}
    }
\end{figure}

\begin{figure}[!ht]
    \subfigure[gravitar]{
    \includegraphics[width=0.3\textwidth]{./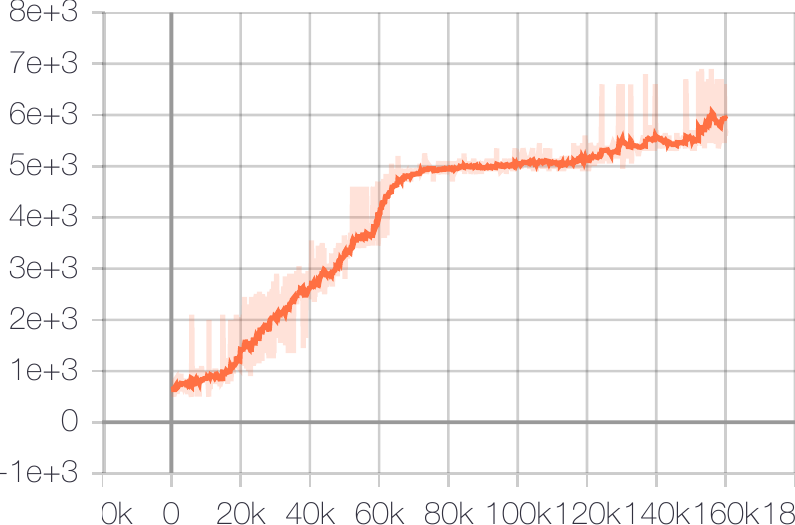}
    }
    \subfigure[hero]{
    \includegraphics[width=0.3\textwidth]{./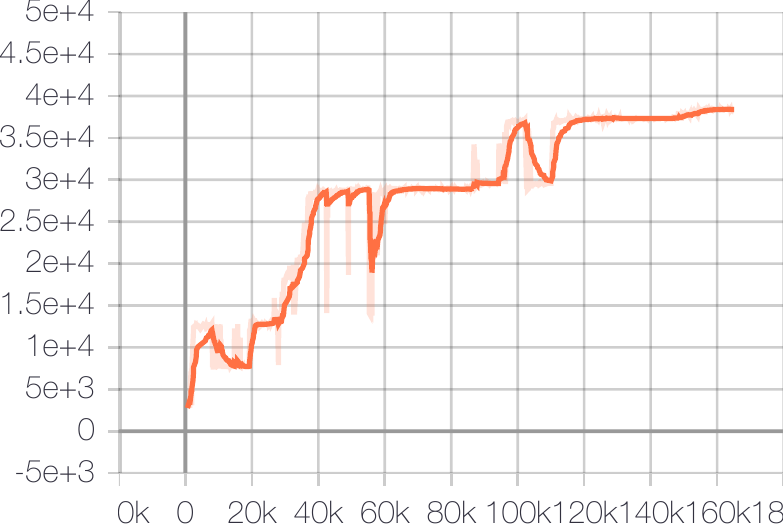}
    }
    \subfigure[ice\_hockey]{
    \includegraphics[width=0.3\textwidth]{./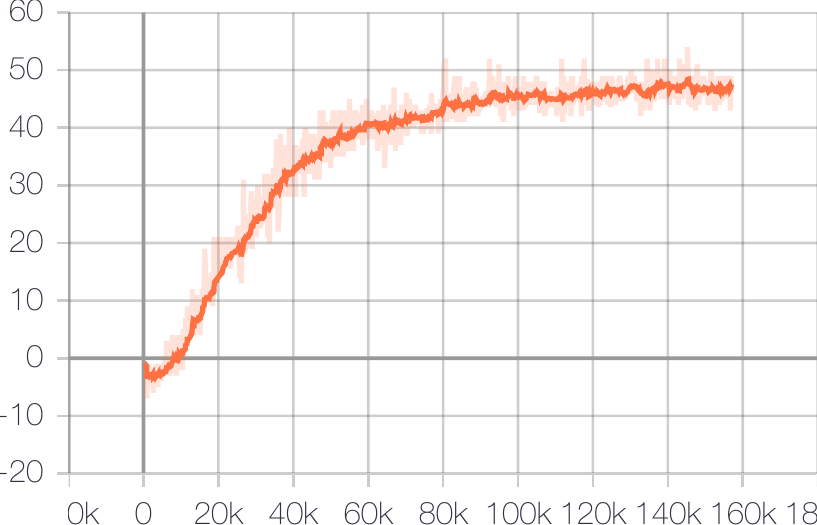}
    }
\end{figure}

\begin{figure}[!ht]
    \subfigure[jamesbond]{
    \includegraphics[width=0.3\textwidth]{./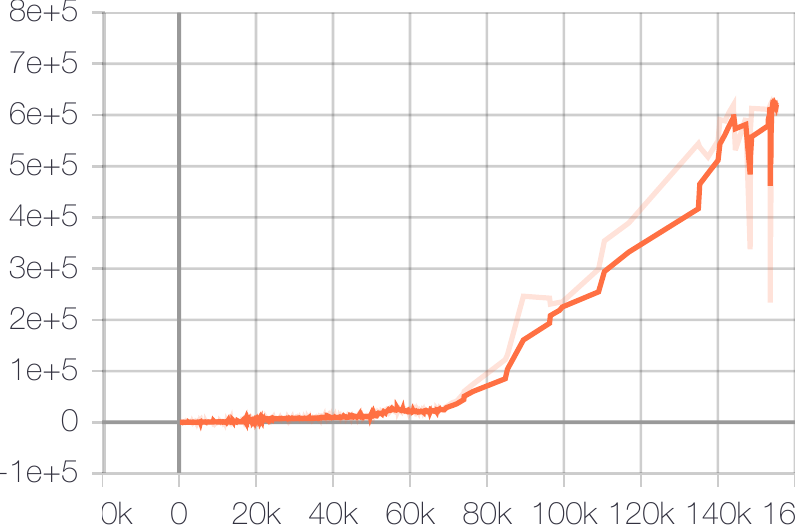}
    }
    \subfigure[kangaroo]{
    \includegraphics[width=0.3\textwidth]{./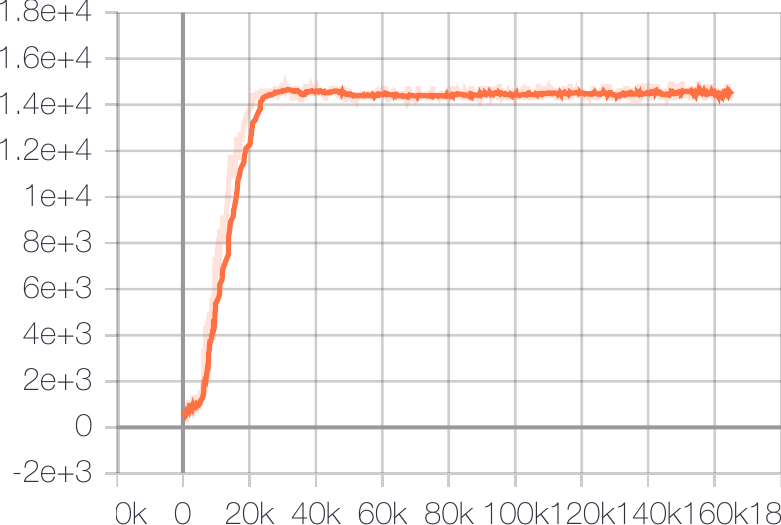}
    }
    \subfigure[krull]{
    \includegraphics[width=0.3\textwidth]{./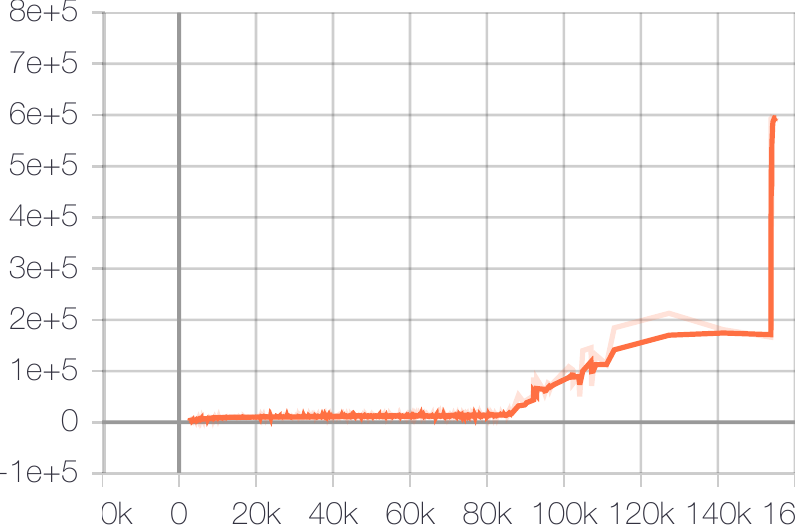}
    }
\end{figure}

\begin{figure}[!ht]
    \subfigure[kung\_fu\_master]{
    \includegraphics[width=0.3\textwidth]{./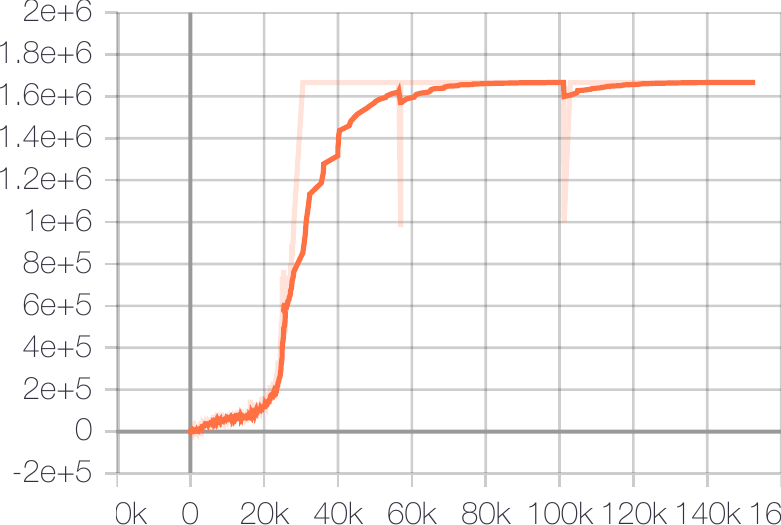}
    }
    \subfigure[montezuma\_revenge]{
     \includegraphics[width=0.3\textwidth]{./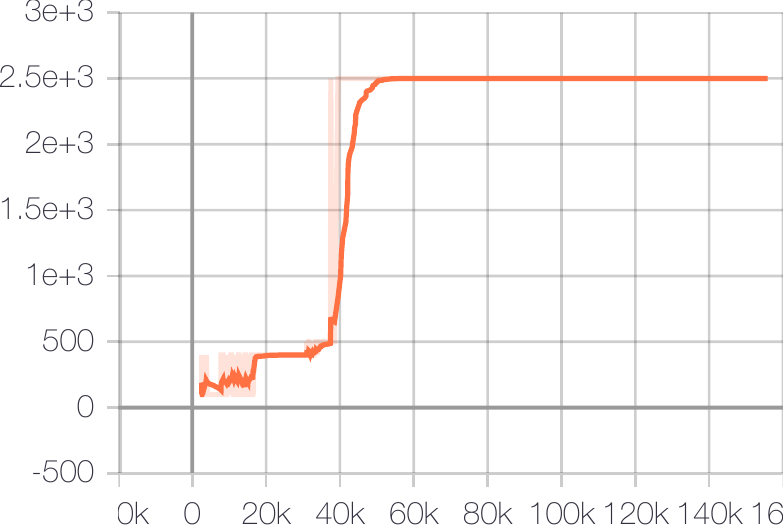}
    }
    \subfigure[ms\_pacman]{
    \includegraphics[width=0.3\textwidth]{./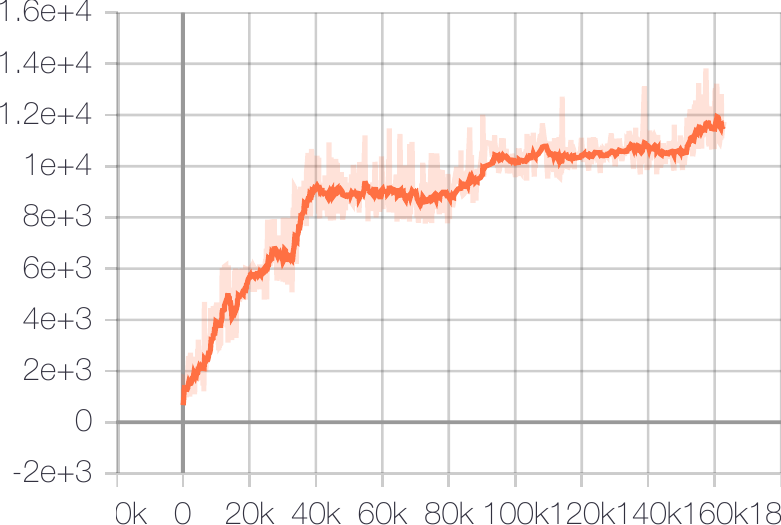}
    }
\end{figure}

\begin{figure}[!ht]
    \subfigure[name\_this\_game]{
    \includegraphics[width=0.3\textwidth]{./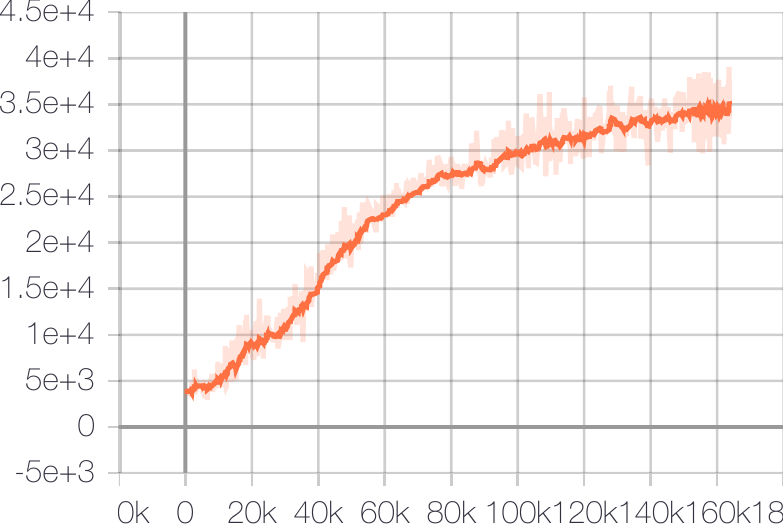}
    }
    \subfigure[phoenix]{
     \includegraphics[width=0.3\textwidth]{./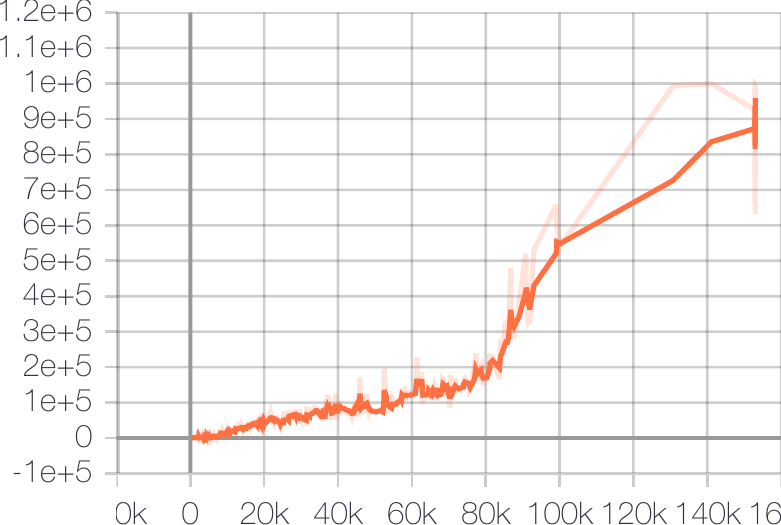}
    }
    \subfigure[pitfall]{
    \includegraphics[width=0.3\textwidth]{./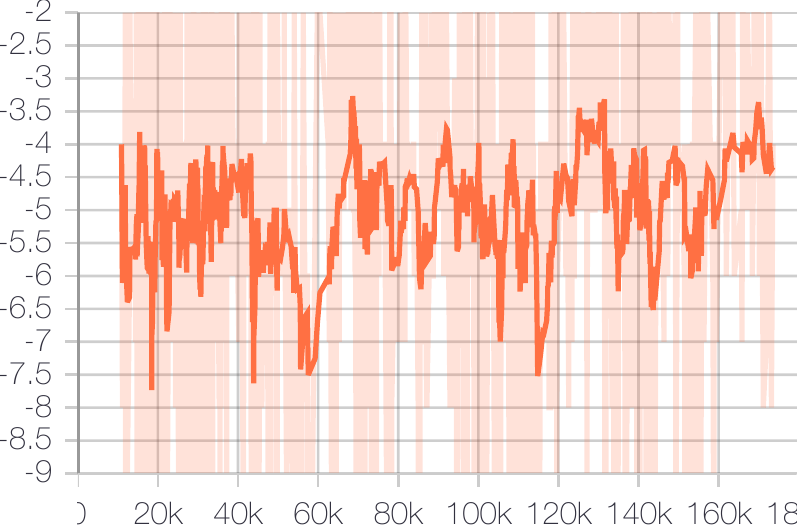}
    }
\end{figure}

\begin{figure}[!ht]
    \subfigure[pong]{
    \includegraphics[width=0.3\textwidth]{./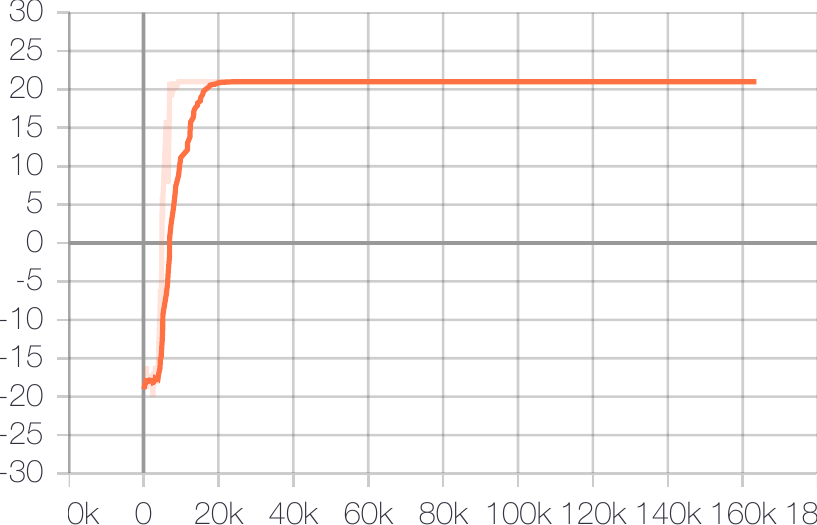}
    }
    \subfigure[private\_eye]{
    \includegraphics[width=0.3\textwidth]{./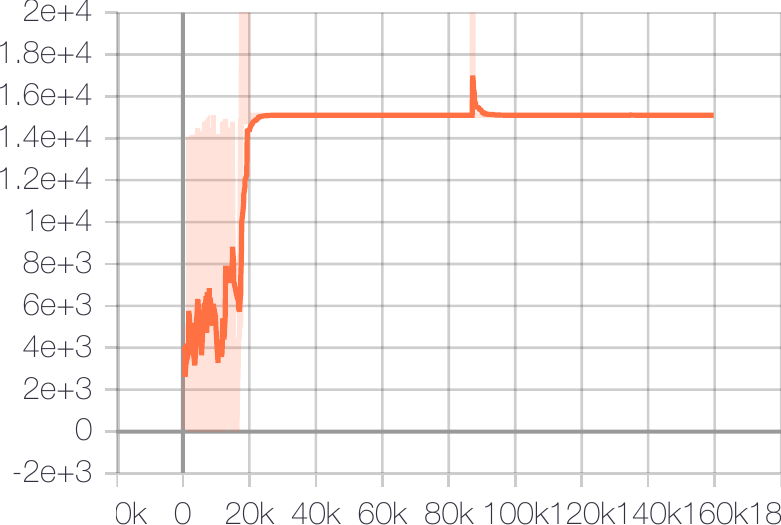}
    }
    \subfigure[qbert]{
     \includegraphics[width=0.3\textwidth]{./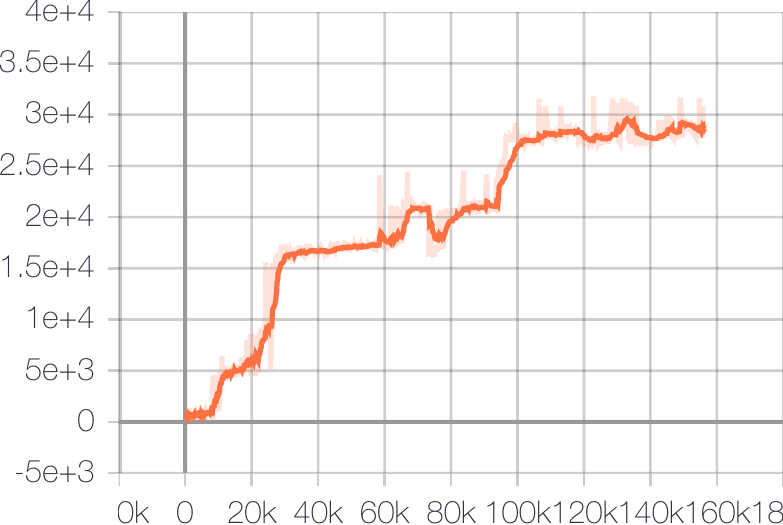}
    }
\end{figure}

\begin{figure}[!ht]
    \subfigure[riverraid]{
    \includegraphics[width=0.3\textwidth]{./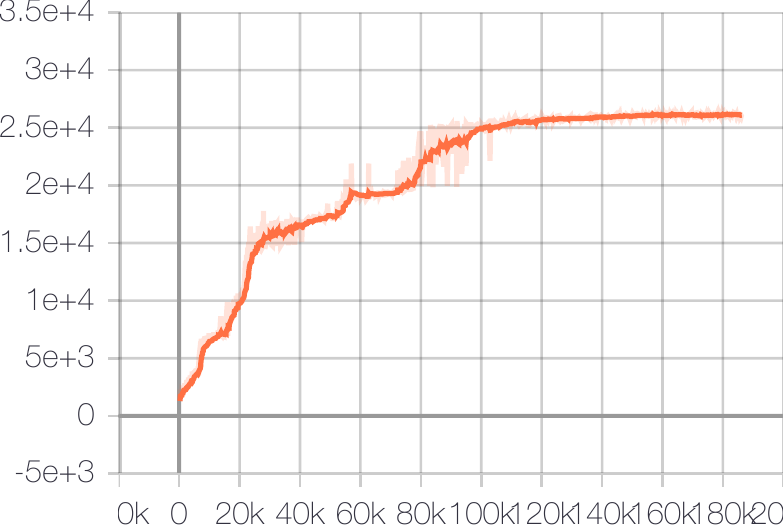}
    }
    \subfigure[road\_runner]{
    \includegraphics[width=0.3\textwidth]{./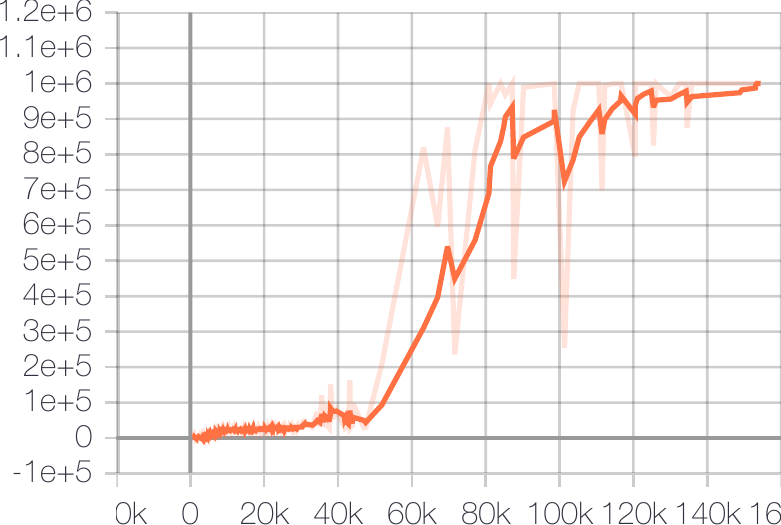}
    }
    \subfigure[robotank]{
    \includegraphics[width=0.3\textwidth]{./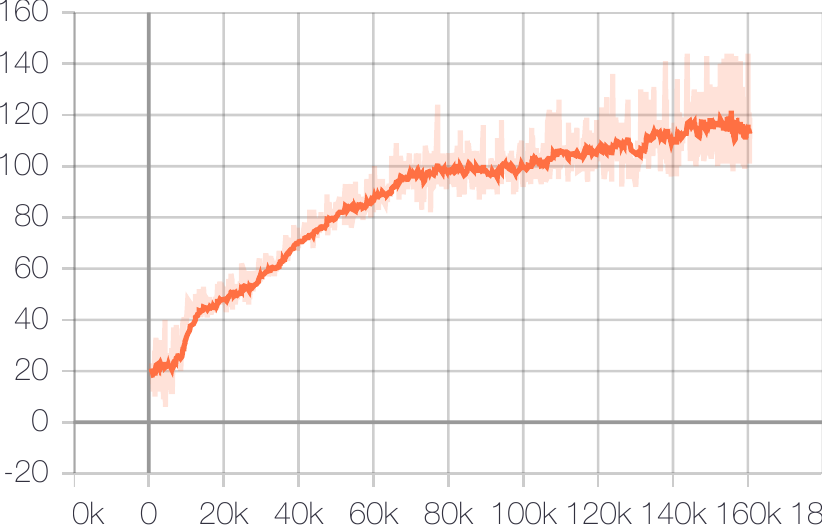}
    }
\end{figure}

\begin{figure}[!ht]
    \subfigure[seaquest]{
    \includegraphics[width=0.3\textwidth]{./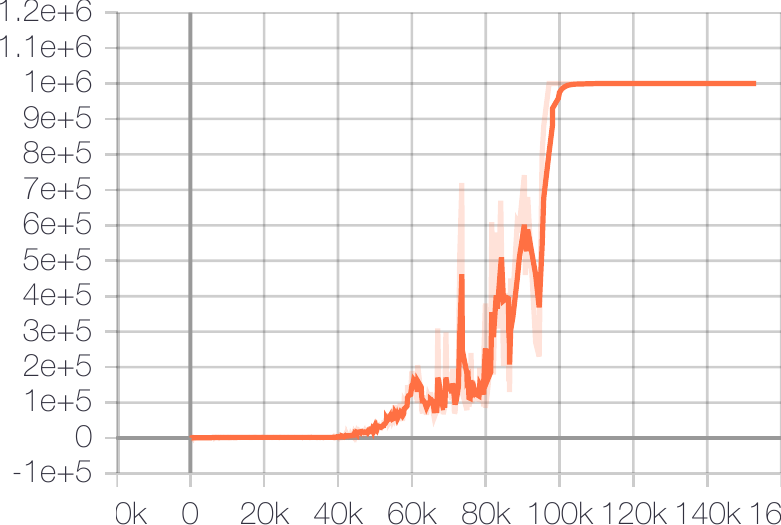}
    }
    \subfigure[skiing]{
    \includegraphics[width=0.3\textwidth]{./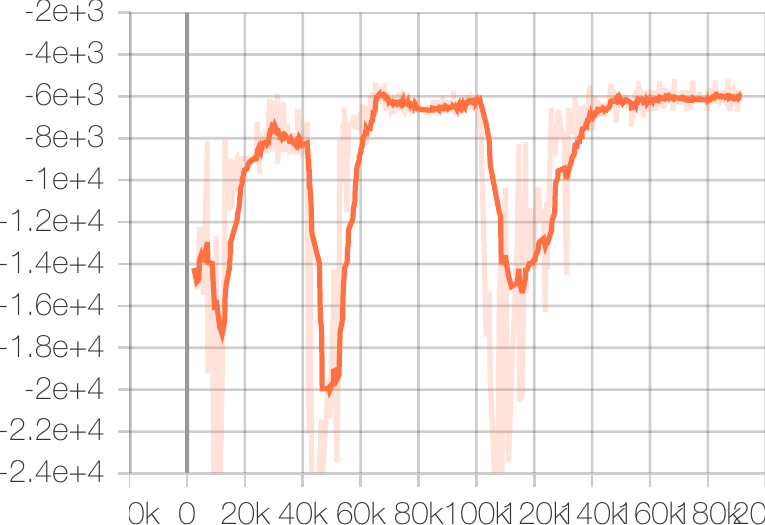}
    }
    \subfigure[solaris]{
    \includegraphics[width=0.3\textwidth]{./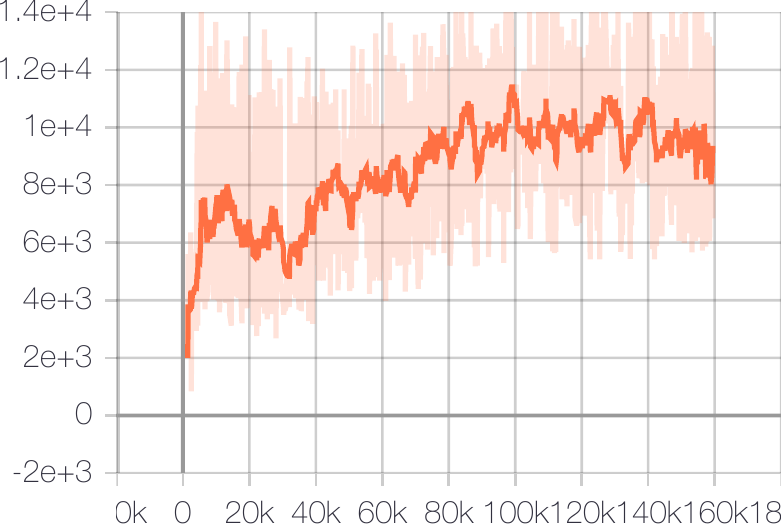}
    }
\end{figure}

\begin{figure}[!ht]
    \subfigure[space\_invaders]{
     \includegraphics[width=0.3\textwidth]{./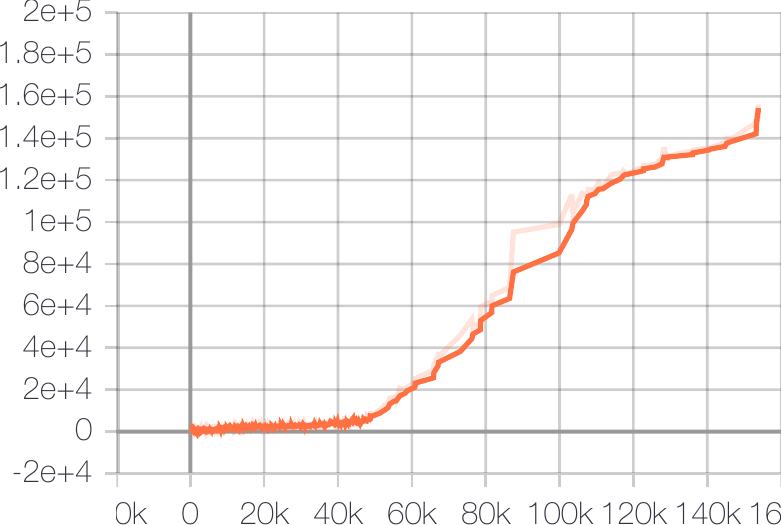}
    }
    \subfigure[star\_gunner]{
    \includegraphics[width=0.3\textwidth]{./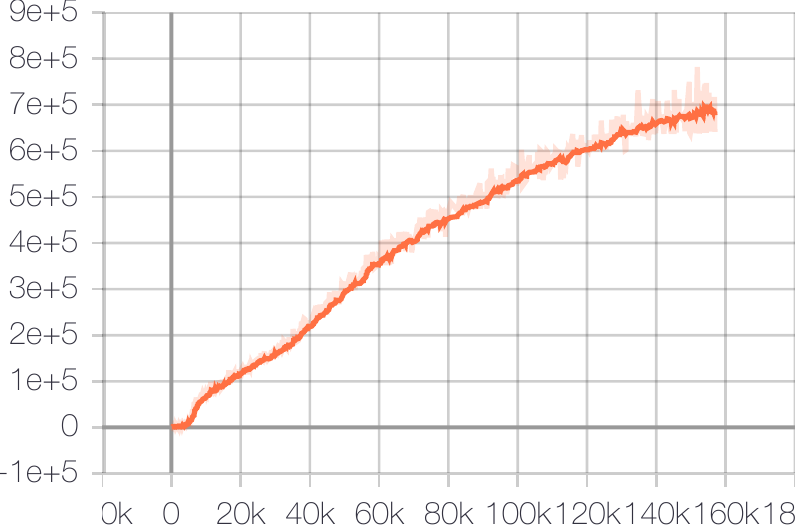}
    }
    \subfigure[surround]{
    \includegraphics[width=0.3\textwidth]{./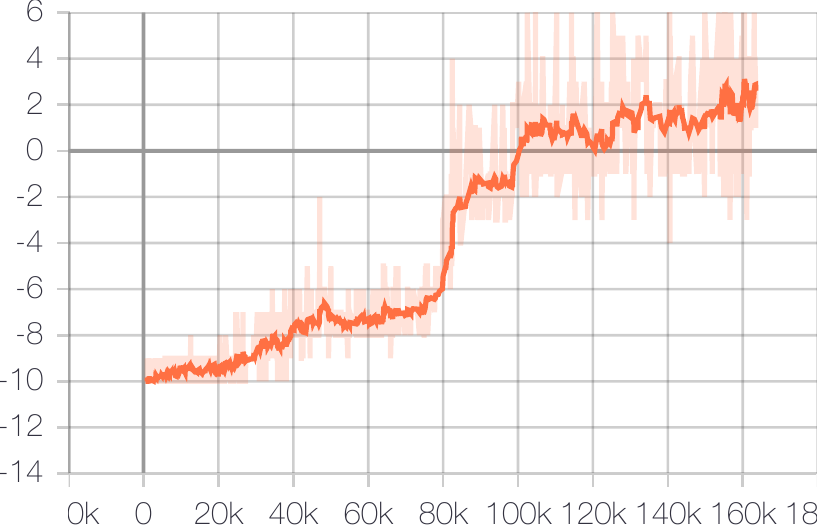}
    }
\end{figure}

\begin{figure}[!ht]
    \subfigure[tennis]{
    \includegraphics[width=0.3\textwidth]{./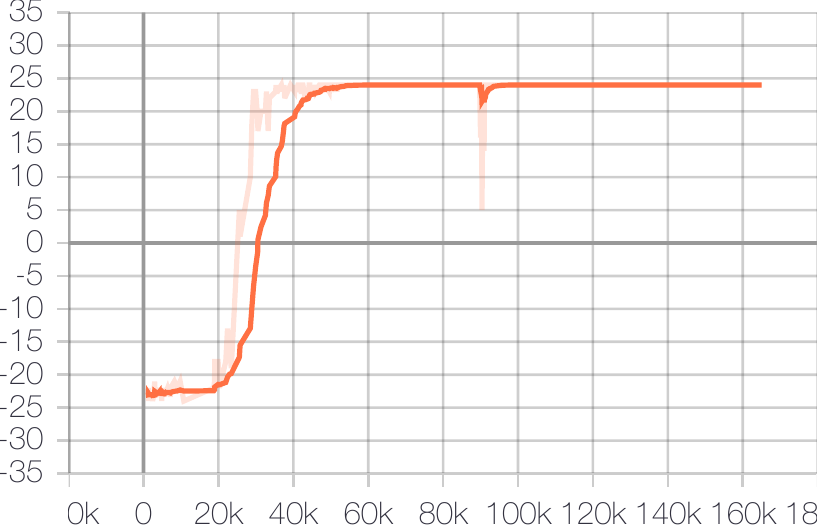}
    }
    \subfigure[time\_pilot]{
    \includegraphics[width=0.3\textwidth]{./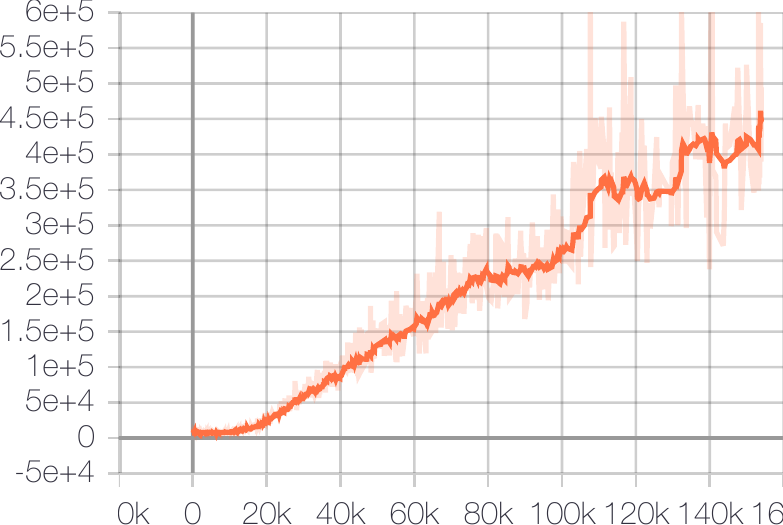}
    }
    \subfigure[tutankham]{
    \includegraphics[width=0.3\textwidth]{./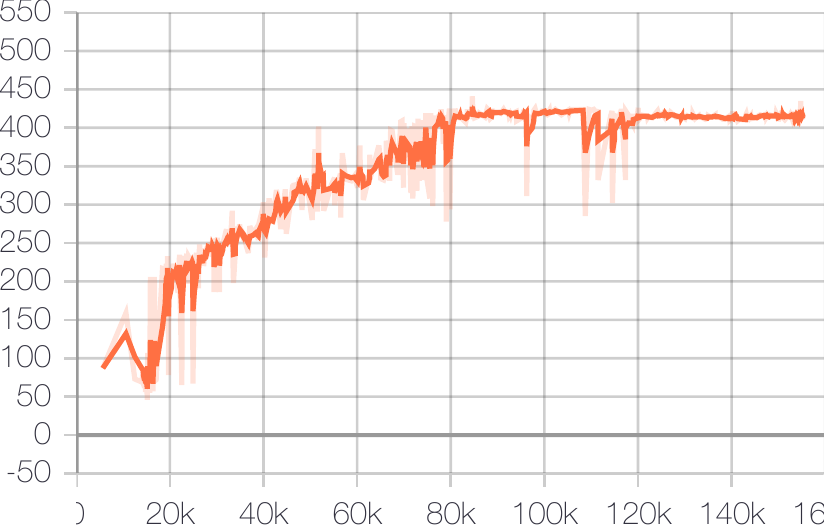}
    }
\end{figure}

\clearpage

\begin{figure}[!ht]
    \subfigure[up\_n\_down]{
    \includegraphics[width=0.3\textwidth]{./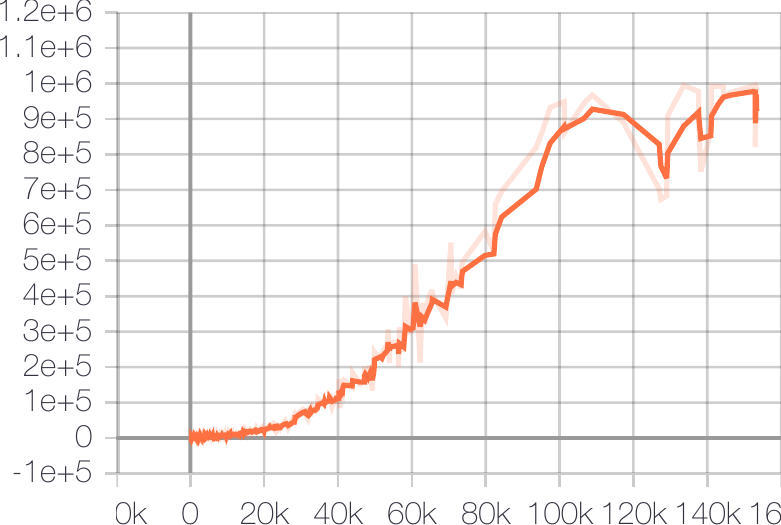}
    }
    \subfigure[venture]{
    \includegraphics[width=0.3\textwidth]{./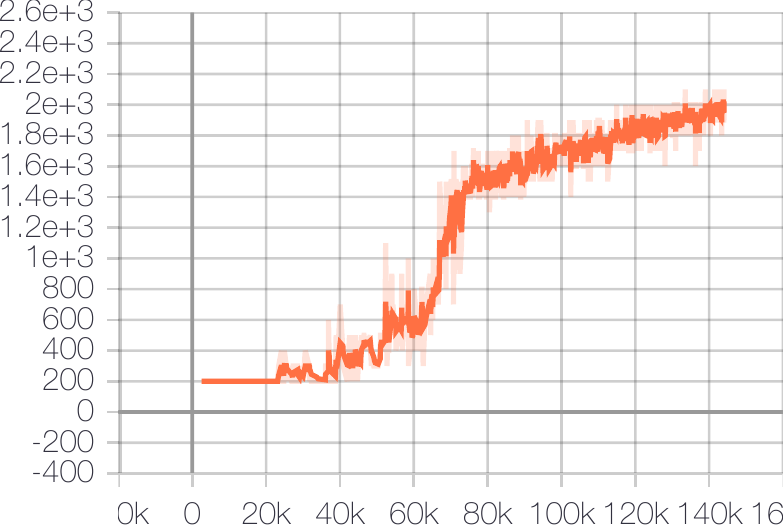}
    }
    \subfigure[video\_pinball]{
    \includegraphics[width=0.3\textwidth]{./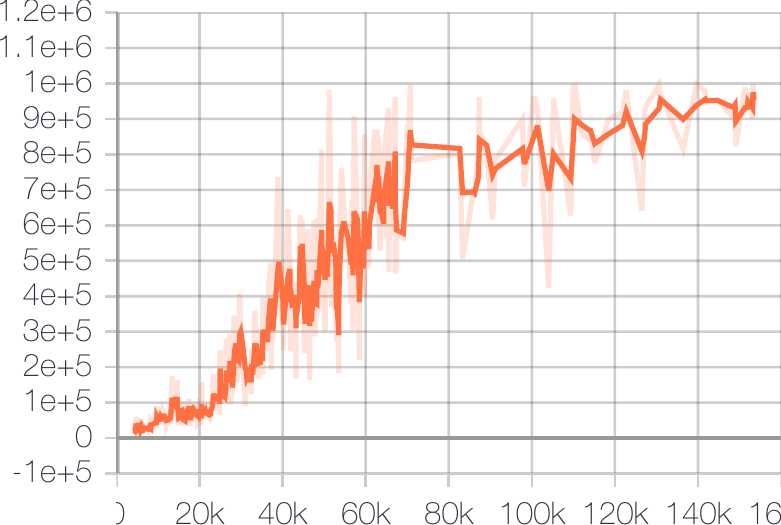}
    }
\end{figure}

\begin{figure}[!ht]
    \subfigure[wizard\_of\_wor]{
    \includegraphics[width=0.3\textwidth]{./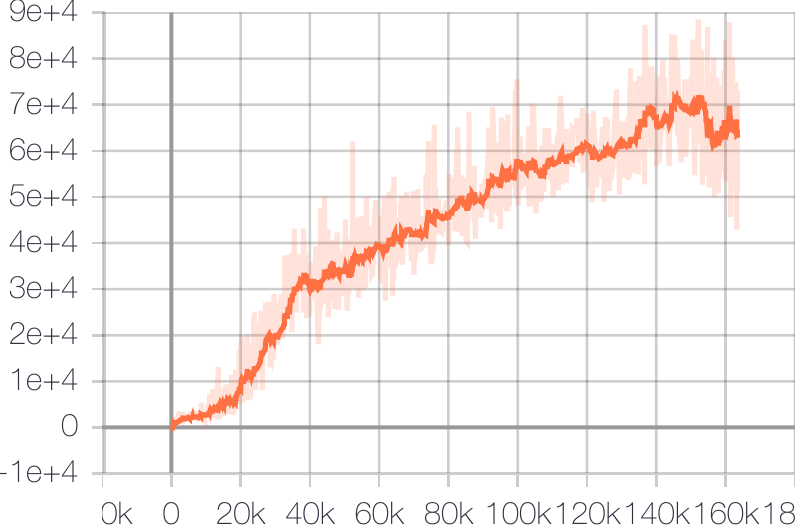}
    }
    \subfigure[yars\_revenge]{
    \includegraphics[width=0.3\textwidth]{./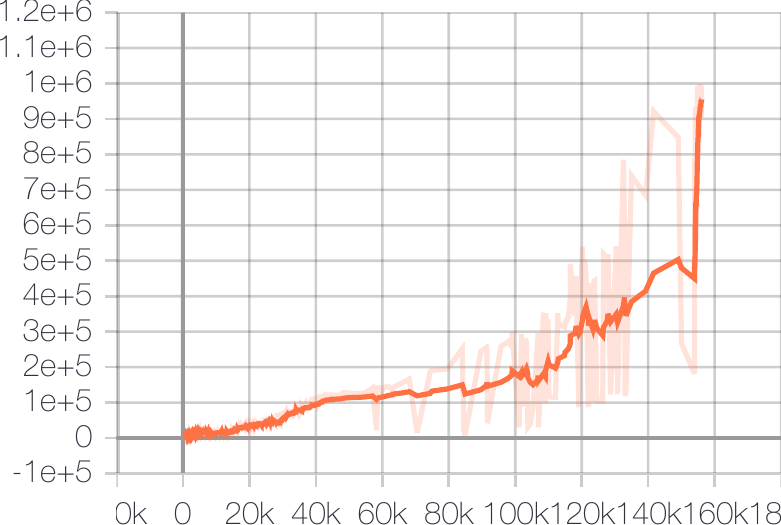}
    }
    \subfigure[zaxxon]{
    \includegraphics[width=0.3\textwidth]{./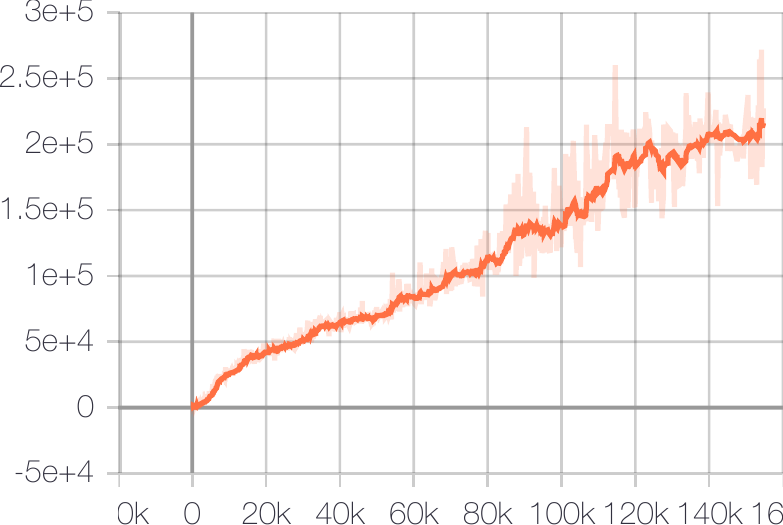}
    }
\end{figure}

\clearpage

\section{Ablation Study}
\label{Sec: appendix Ablation Study}

In this section, we firstly demonstrate the settings of our ablation studies. Then, we illustrate the performance graphs among all ablation cases in three representative Atari games. Lastly, we offer the t-SNE of three Atari games to further prove expansion of policy space in GDI brings more diverse data and more chance to obtain elite data.

\subsection{Ablation Study Details}
\label{app: appendix Ablation Study Details}
The details of ablation changes are listed in Tab. \ref{tab:ablation_setting}. All the ablation studies are carried out using the evaluation system and 200M training frames. The operator $\mathcal{T}$ is achieved with Vtrace, Retrace and policy gradient. Except for the differences listed in the table, other settings and the shared hyperparameters remain the same in all ablation cases. The hyperparameters can see App. \ref{app: Hyperparameters Used}.

\begin{table}[H]
\begin{center}
\setlength{\tabcolsep}{1.0pt}
\caption{Settings of ablation study.}
\begin{tabular}{c c c c c}
\toprule
\textbf{Name}               & \textbf{Category}                 & $\Lambda$                                                  & $P_{\Lambda}^{(0)}$ & $\mathcal{E}$ \\
\midrule
GDI-I$^{3}$                 &   GDI-I$^3$                       &$\Lambda = \{\lambda| \lambda =(\epsilon,\tau_1,\tau_2)\}$  & Uniform              & MAB\\
GDI-H$^{3}$                 &   GDI-H$^3$                       &$\Lambda = \{\lambda| \lambda =(\epsilon,\tau_1,\tau_2)\}$  & Uniform              & MAB\\
Fix Selection               &   GDI-I$^0$ w/o $\mathcal{E}$     &$\Lambda = \{\lambda| \lambda =(\epsilon,\tau_1,\tau_2)\}$  & Delta           & Identical Mapping\\
Random Selection            &   GDI-I$^3$ w/o $\mathcal{E}$     &$\Lambda = \{\lambda| \lambda =(\epsilon,\tau_1,\tau_2)\}$  & Uniform              & Identical Mapping\\
Boltzmann Selection         &   GDI-I$^1$                       &$\Lambda = \{\lambda| \lambda =(\tau)\}$                    & Uniform              & MAB\\
$\epsilon$-greedy Selection &   GDI-I$^1$                       &$\Lambda = \{\lambda| \lambda =(\epsilon)\}$                & Uniform              & MAB\\
\bottomrule
\end{tabular}
\end{center}
\label{tab:ablation_setting}
\end{table}

\subsection{Ablation Results}
\label{Sec: Appendix Ablation Study Results}

In this part, we show three representative experiments of different Atari games among all the ablation cases in Figs. \ref{fig:ablation_evl}.
\setcounter{subfigure}{0}
\begin{figure*}[ht]
\subfigure[Seaquest]{
\includegraphics[width=0.5\textwidth]{./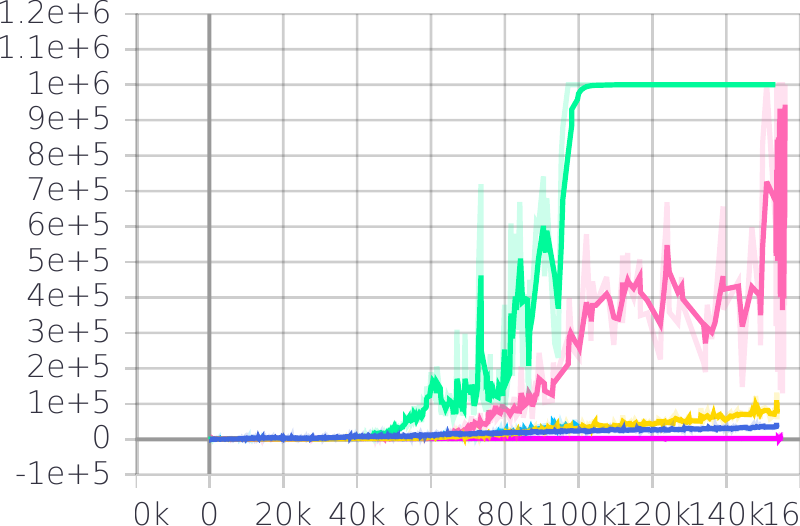}
}
\subfigure[ChopperCommand]{
\includegraphics[width=0.5\textwidth]{./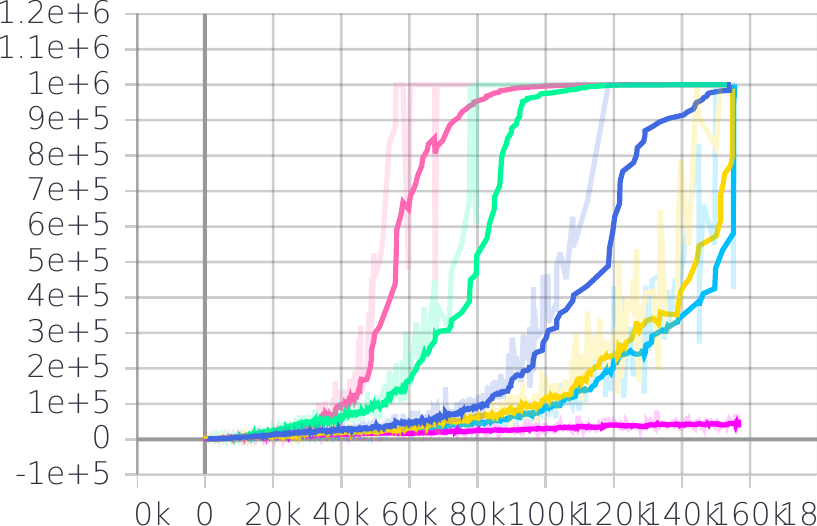}
}

\subfigure[Krull]{
\includegraphics[width=0.5\textwidth]{./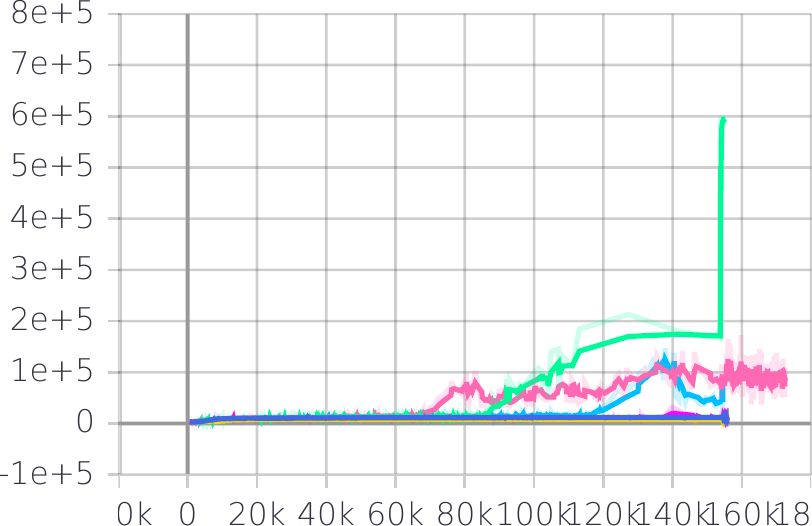}
}\hspace{1cm} 
\subfigure{
		\includegraphics[width=0.3\textwidth,height=0.2\textheight]{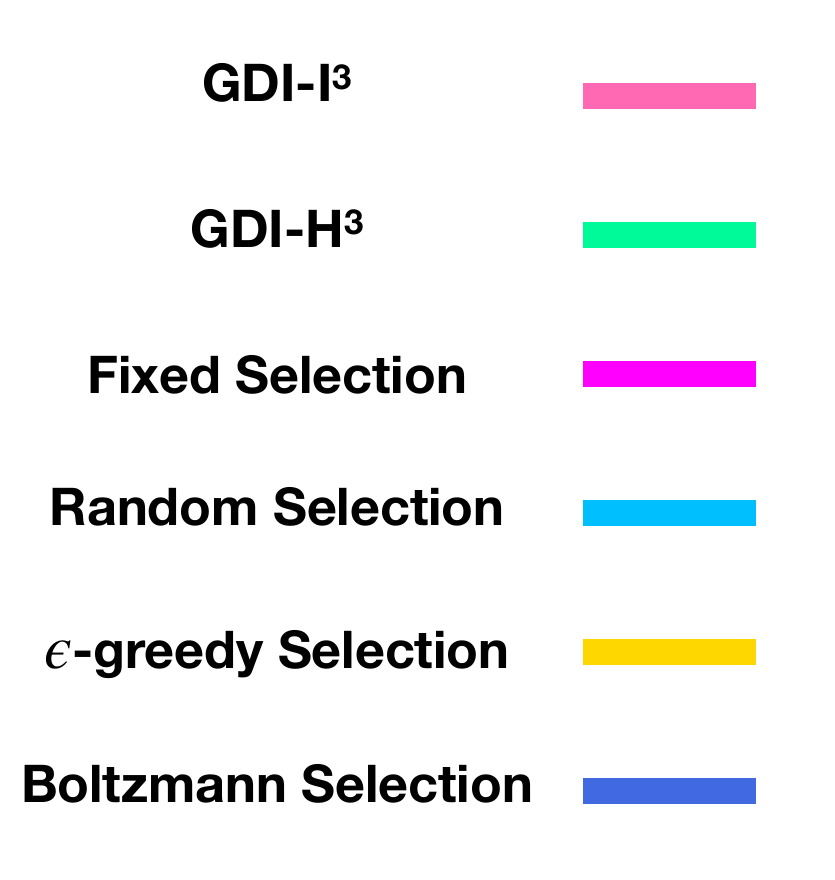}
	}
\caption{Evaluation curves of ablation study.
For ease of comparison, all curves are smoothed with rate 0.9.}
\label{fig:ablation_evl}
\end{figure*}

\clearpage
\subsection{t-SNE}
\label{app: tsne}
In all the t-SNE, we mark the state generated by GDI-I$^3$ as A$_i$ and mark the state generated by GDI-I$^1$ as B$_i$, where i = 1, 2, 3 represents three stages of the training process. WLOG, we choose the Boltzmann Selection as the representative of GDI-I$^1$ algorithms.

\begin{figure}[!ht]
	\subfigure[Early stage of GDI-I$^3$]{
		\includegraphics[width=0.5\textwidth,height=0.25\textheight]{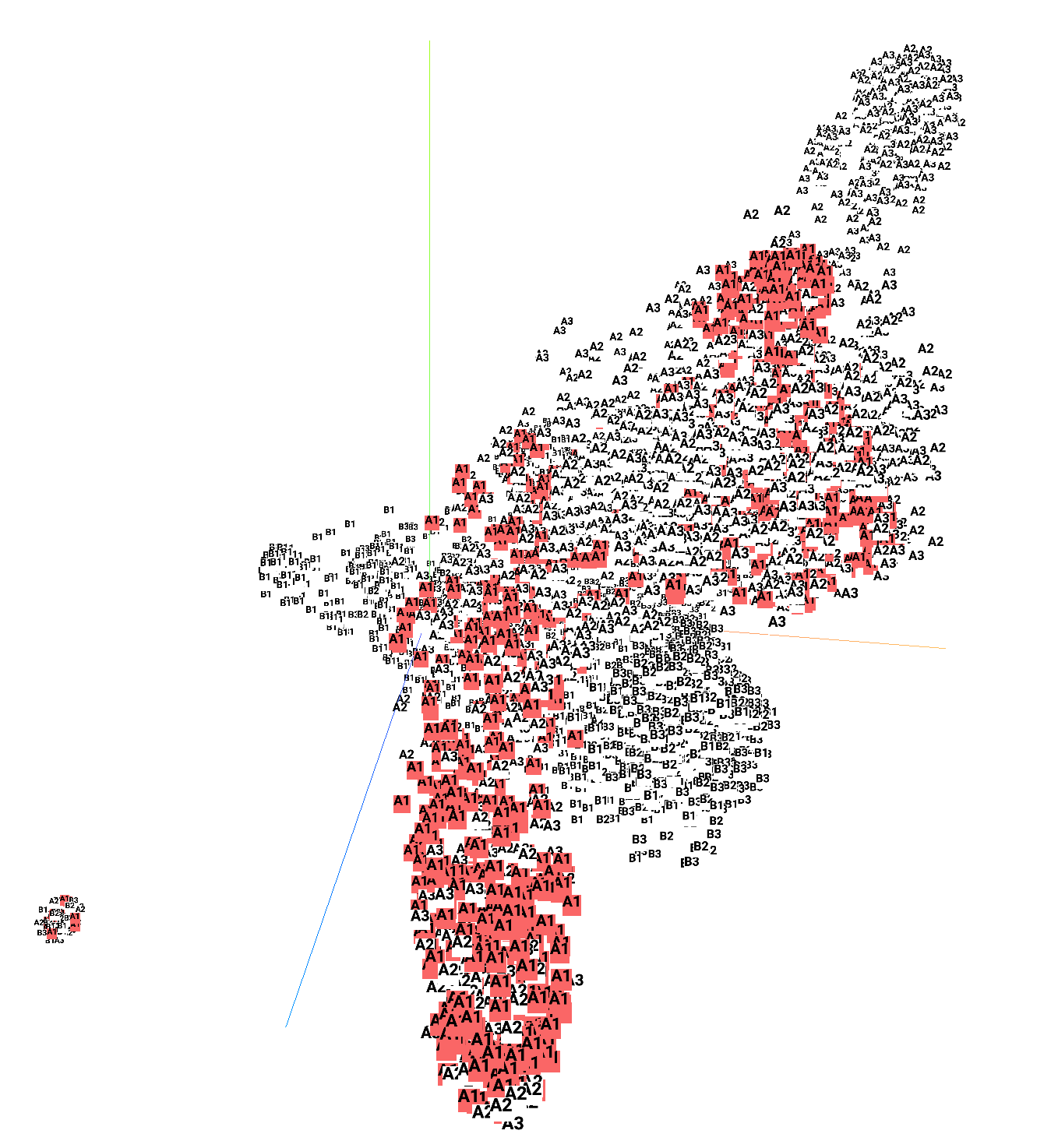}
	    
	   }
	\subfigure[Early stage of GDI-I$^1$]{
		\includegraphics[width=0.5\textwidth,height=0.25\textheight]{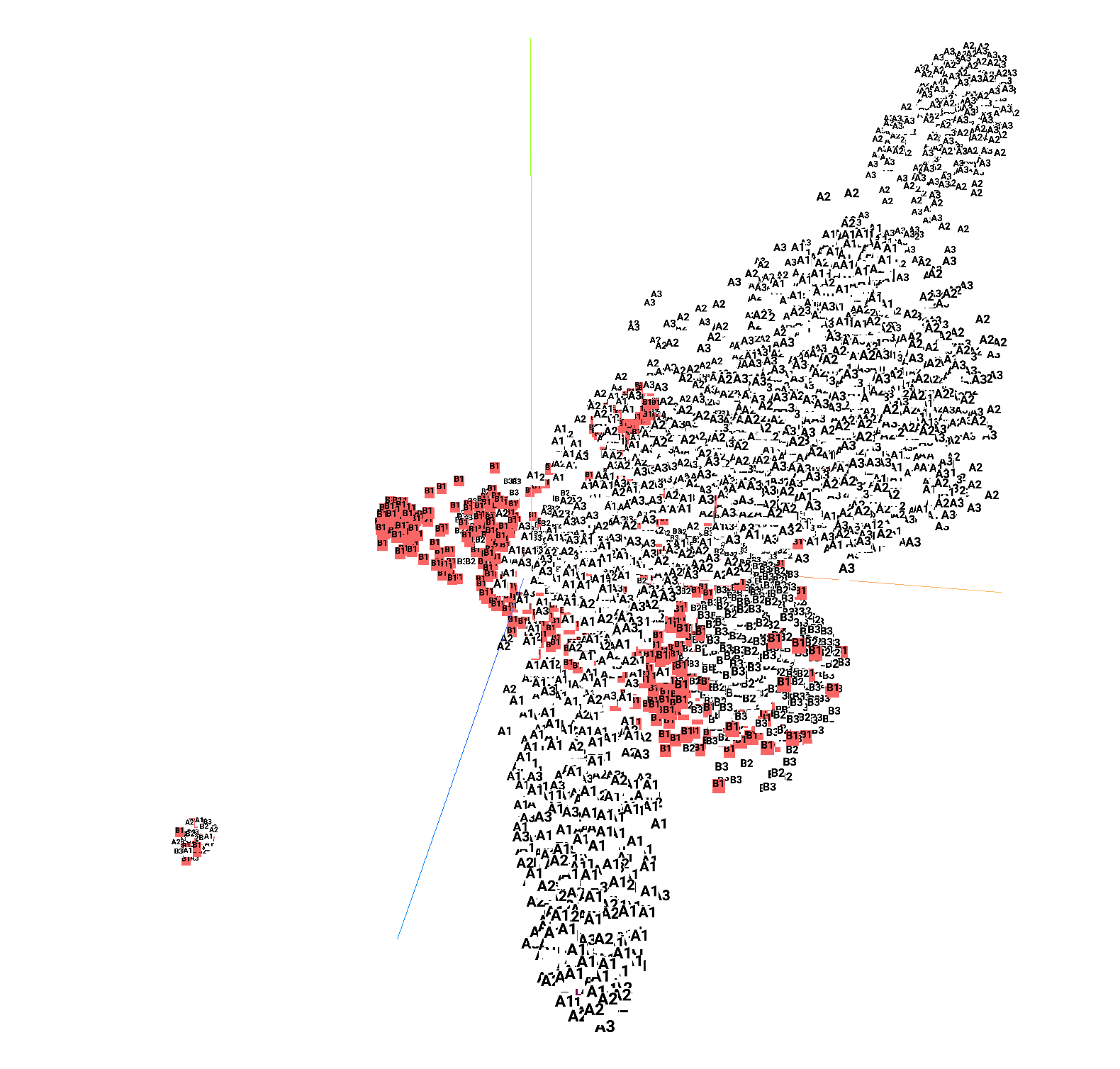}
	}
	
	\subfigure[Middle stage of GDI-I$^3$]{
		\includegraphics[width=0.5\textwidth,height=0.25\textheight]{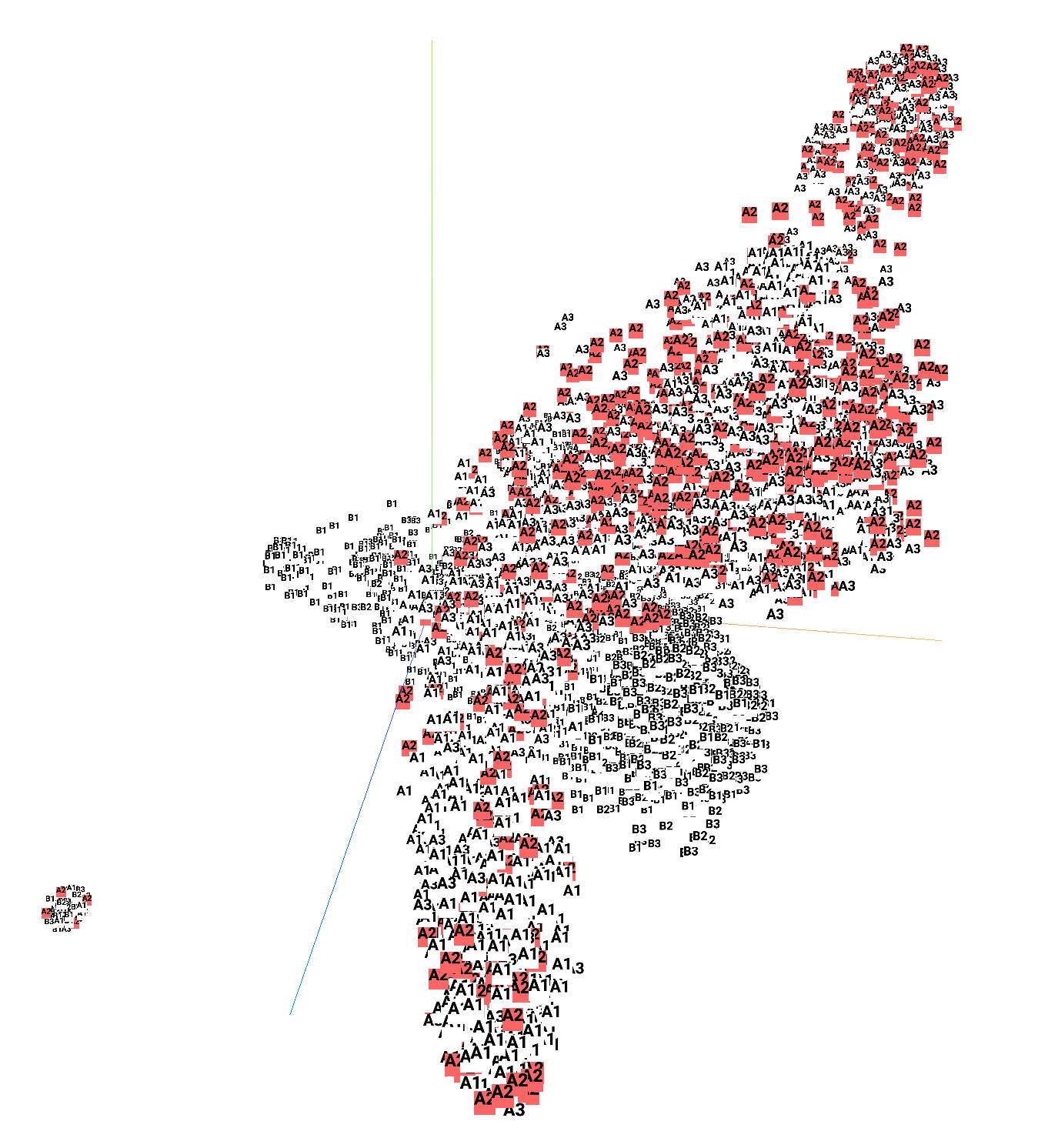}
	}
	\subfigure[Middle stage of GDI-I$^1$]{
		\includegraphics[width=0.5\textwidth,height=0.25\textheight]{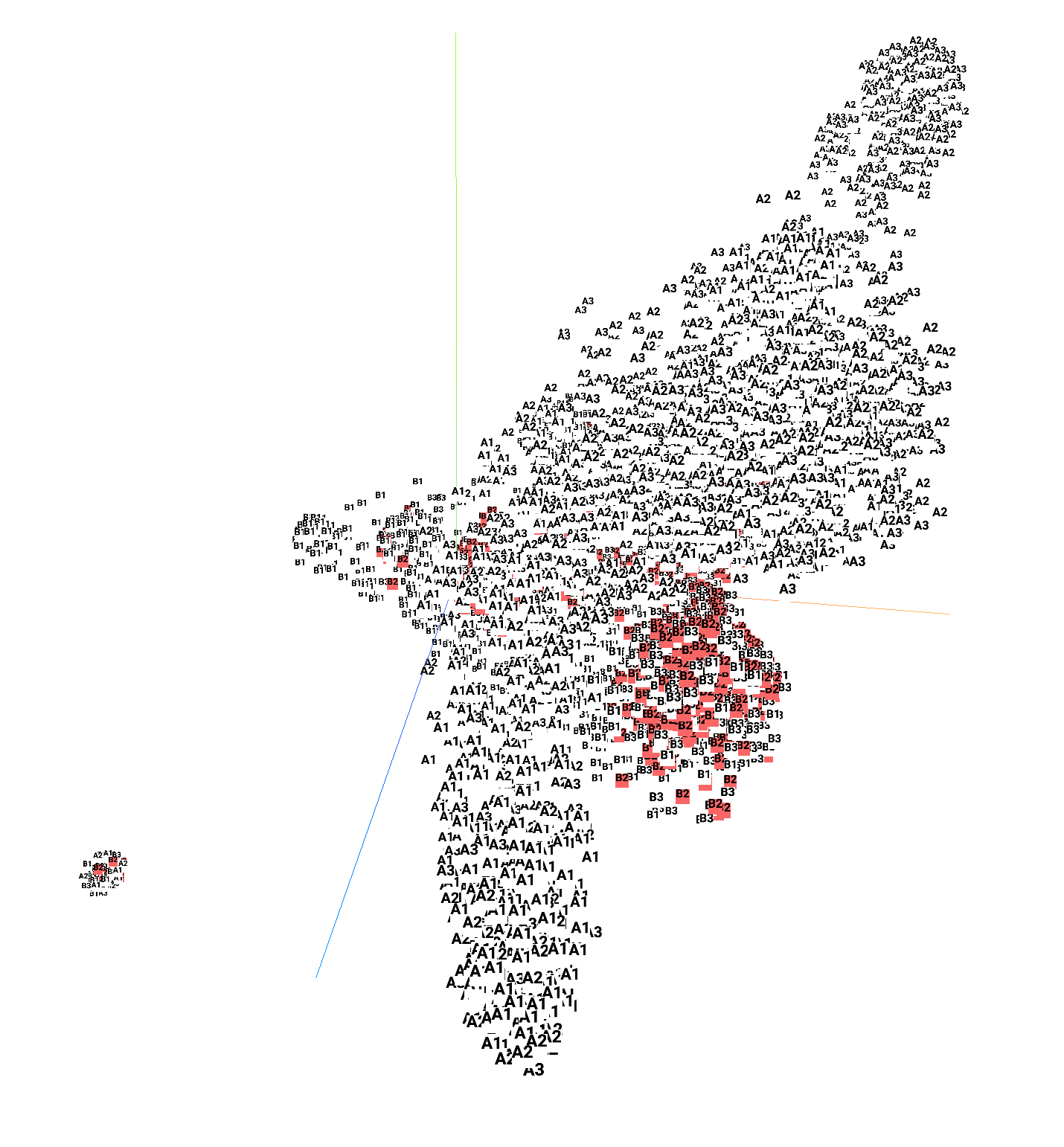}
	}
	
	\subfigure[Later stage of GDI-I$^3$]{
		\includegraphics[width=0.5\textwidth,height=0.25\textheight]{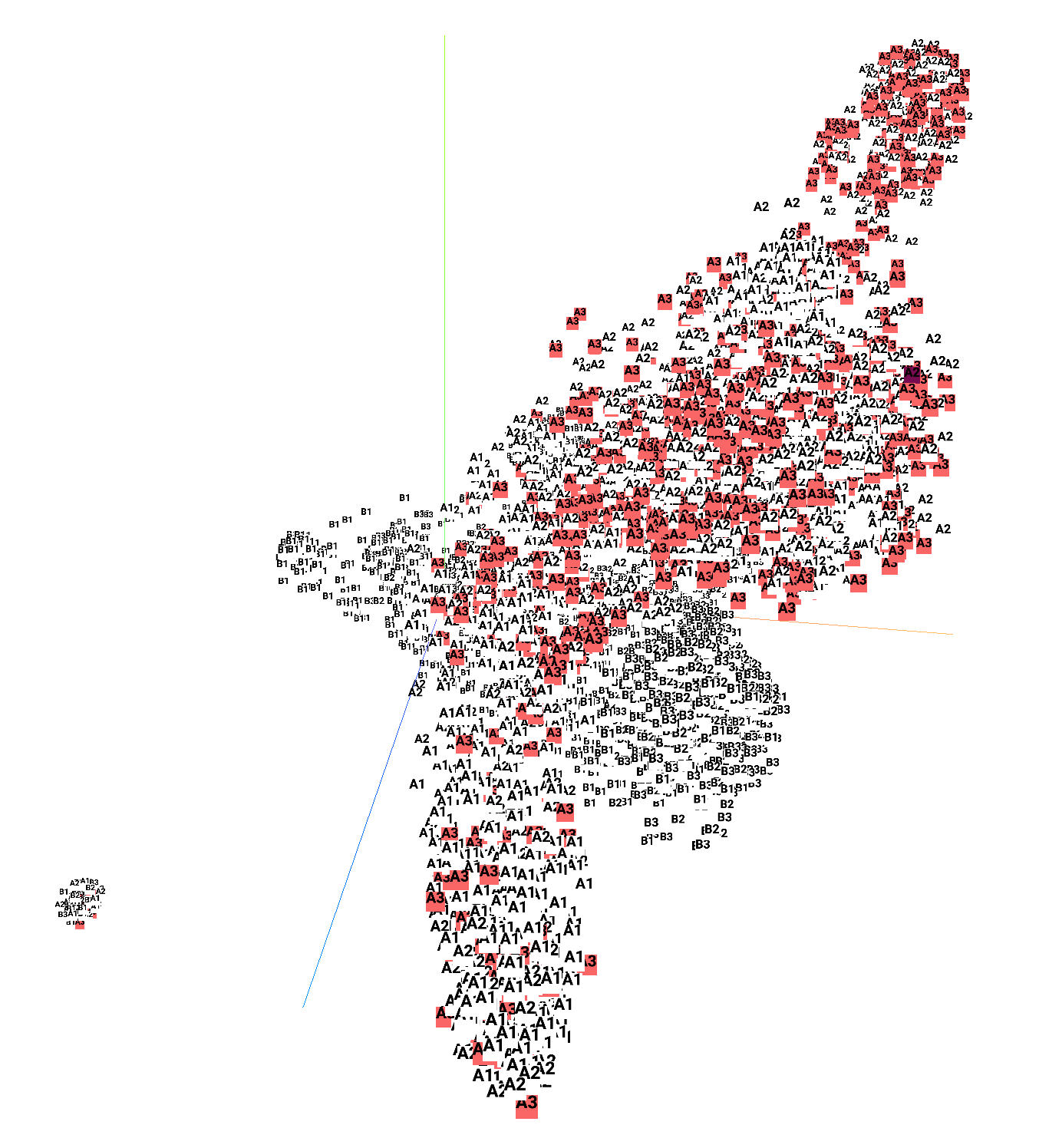}
	}
	\subfigure[Later stage of GDI-I$^1$]{
		\includegraphics[width=0.5\textwidth,height=0.25\textheight]{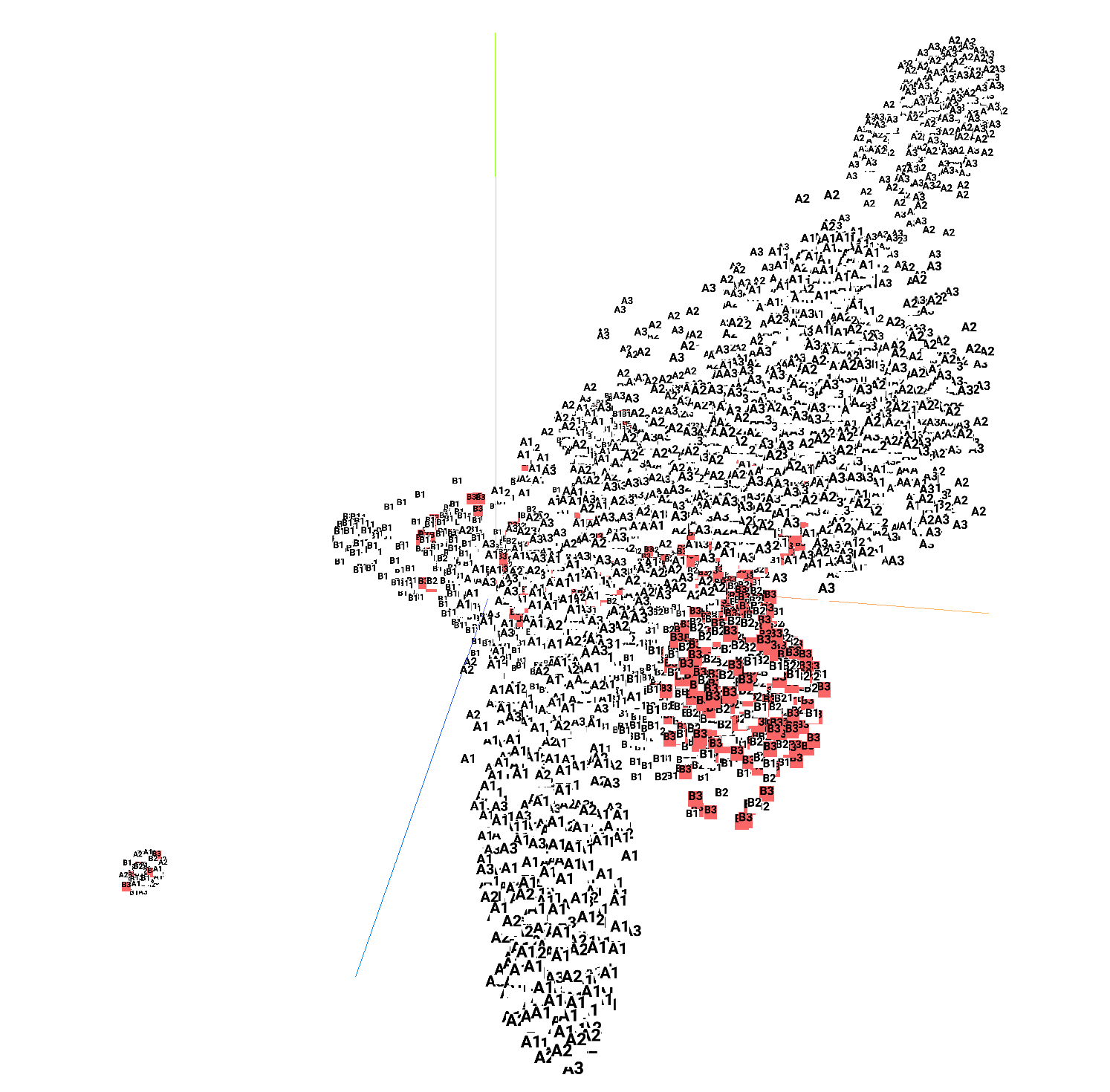}
	}
	\caption{t-SNE of Seaquest. 
t-SNE is drawn from 6k states.
	We sample 1k states from each stage of GDI-I$^3$ and GDI-I$^1$.
	We highlight 1k states of each stage of GDI-I$^3$ and GDI-I$^1$.}
\end{figure}

\begin{figure}[!ht]
	\subfigure[Early stage of GDI-I$^3$]{
		\includegraphics[width=0.45\textwidth]{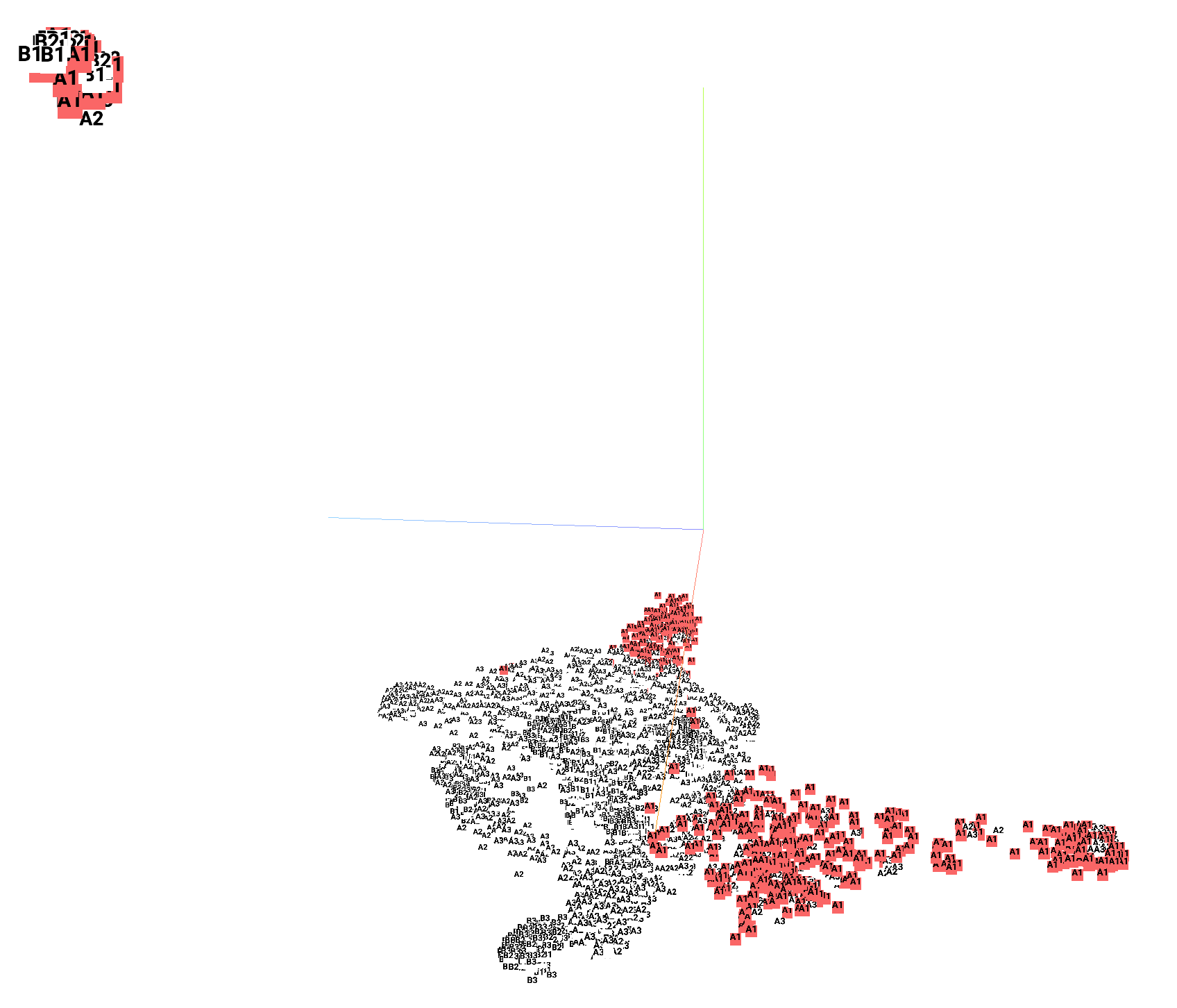}
	   }
	\subfigure[Early stage of GDI-I$^1$]{
		\includegraphics[width=0.45\textwidth]{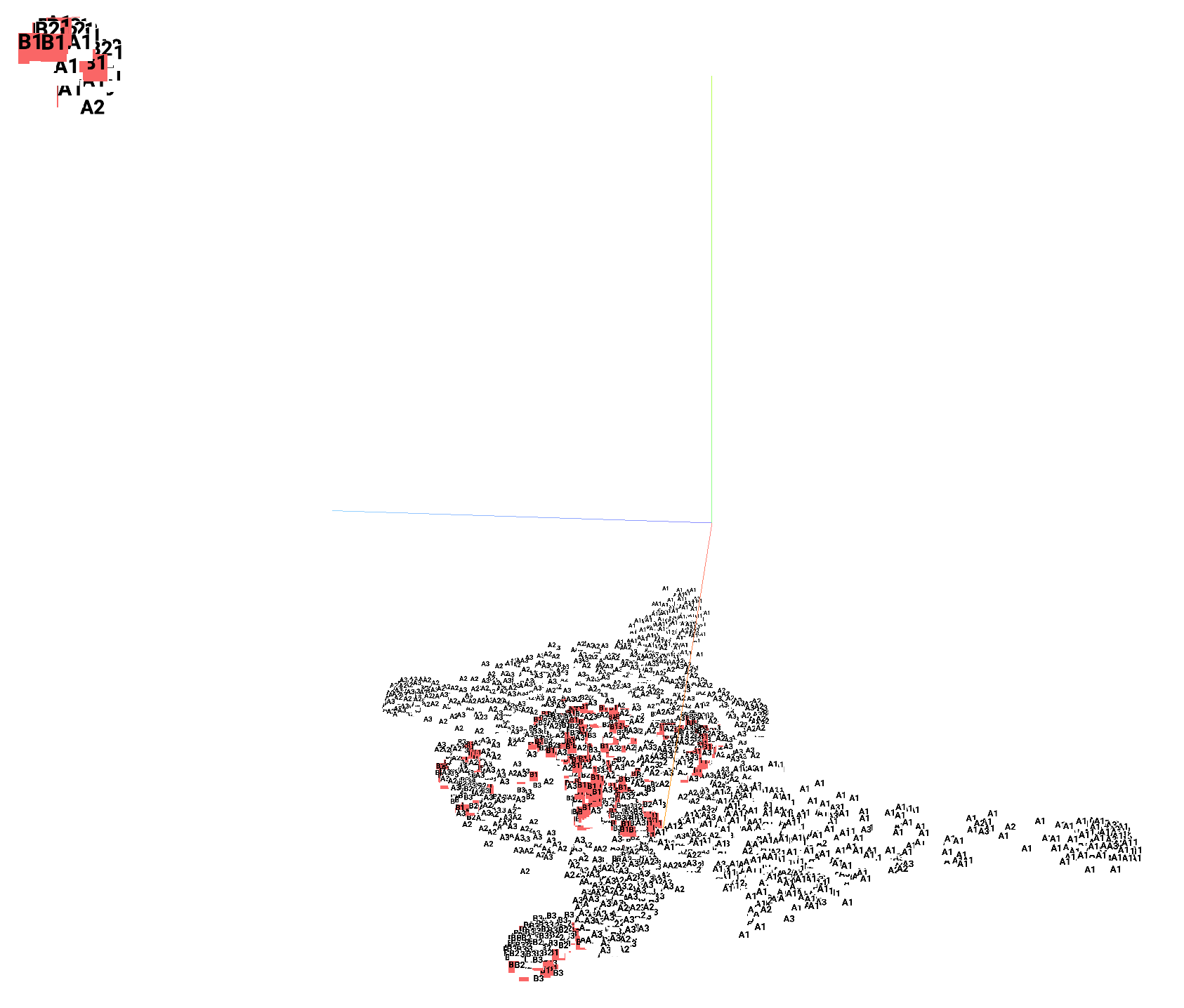}
	}
	
	\subfigure[Middle stage of GDI-I$^3$]{
		\includegraphics[width=0.45\textwidth]{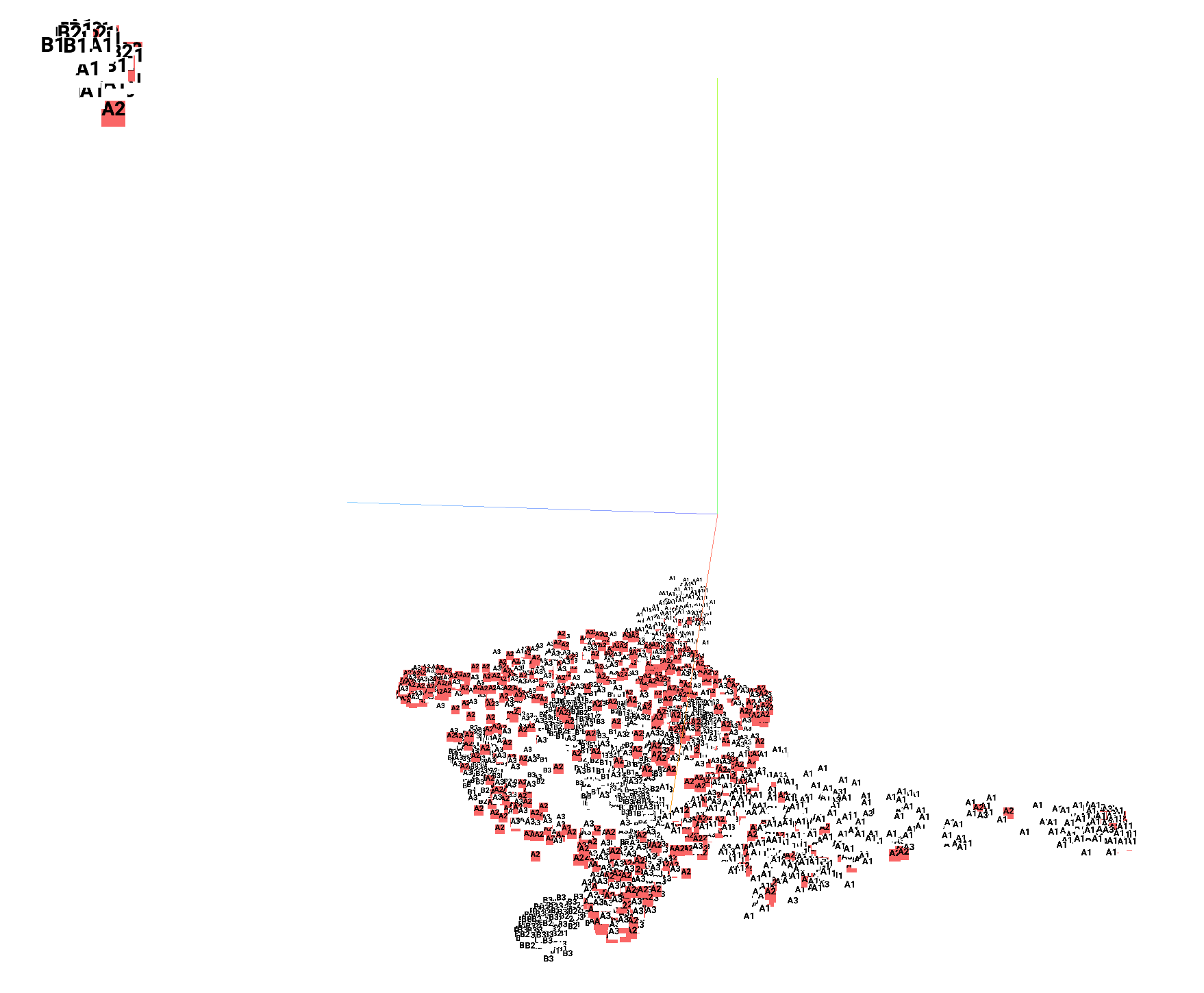}
	}
	\subfigure[Middle stage of GDI-I$^1$]{
		\includegraphics[width=0.45\textwidth]{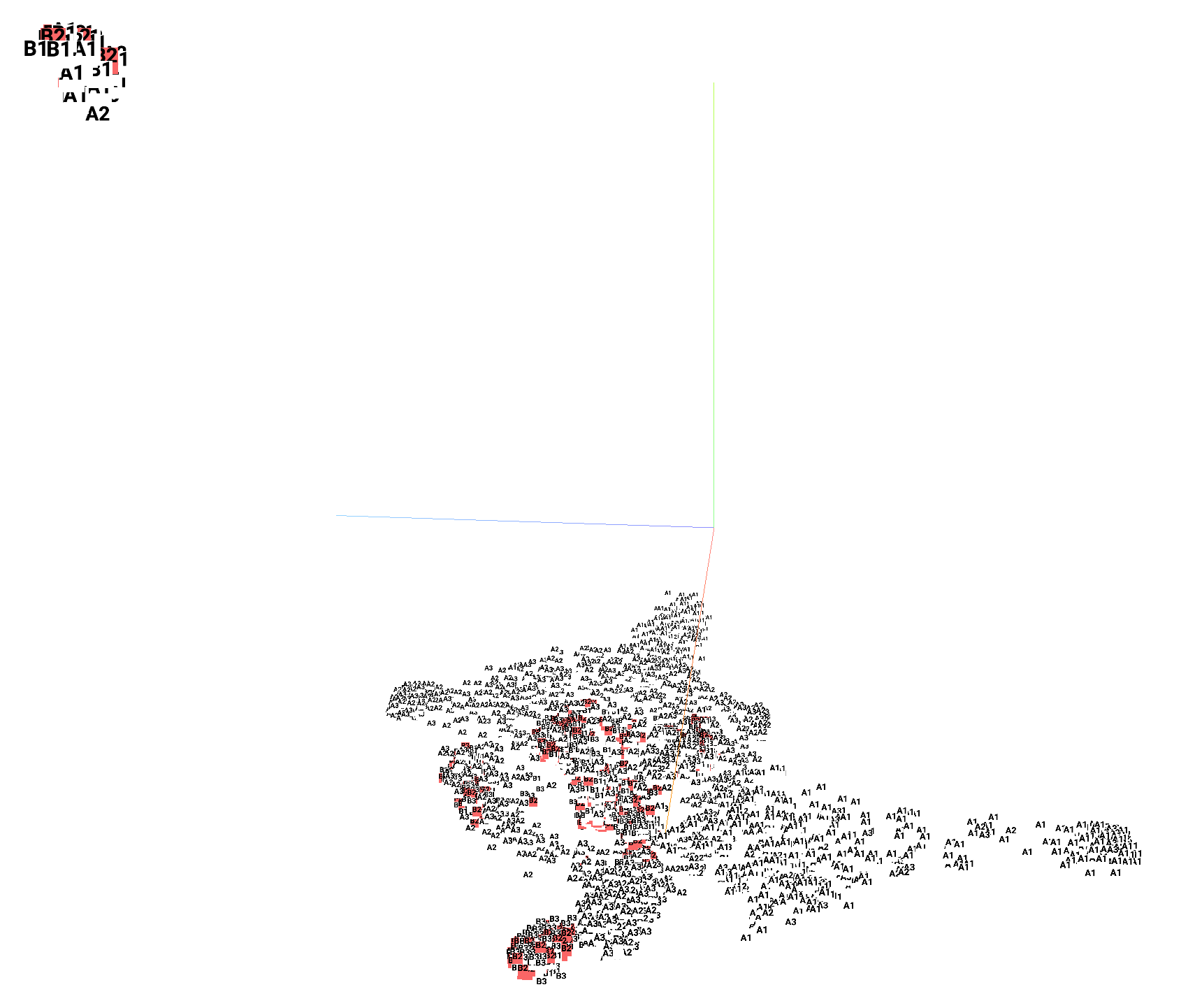}
	}
	
	\subfigure[Later stage of GDI-I$^3$]{
		\includegraphics[width=0.45\textwidth]{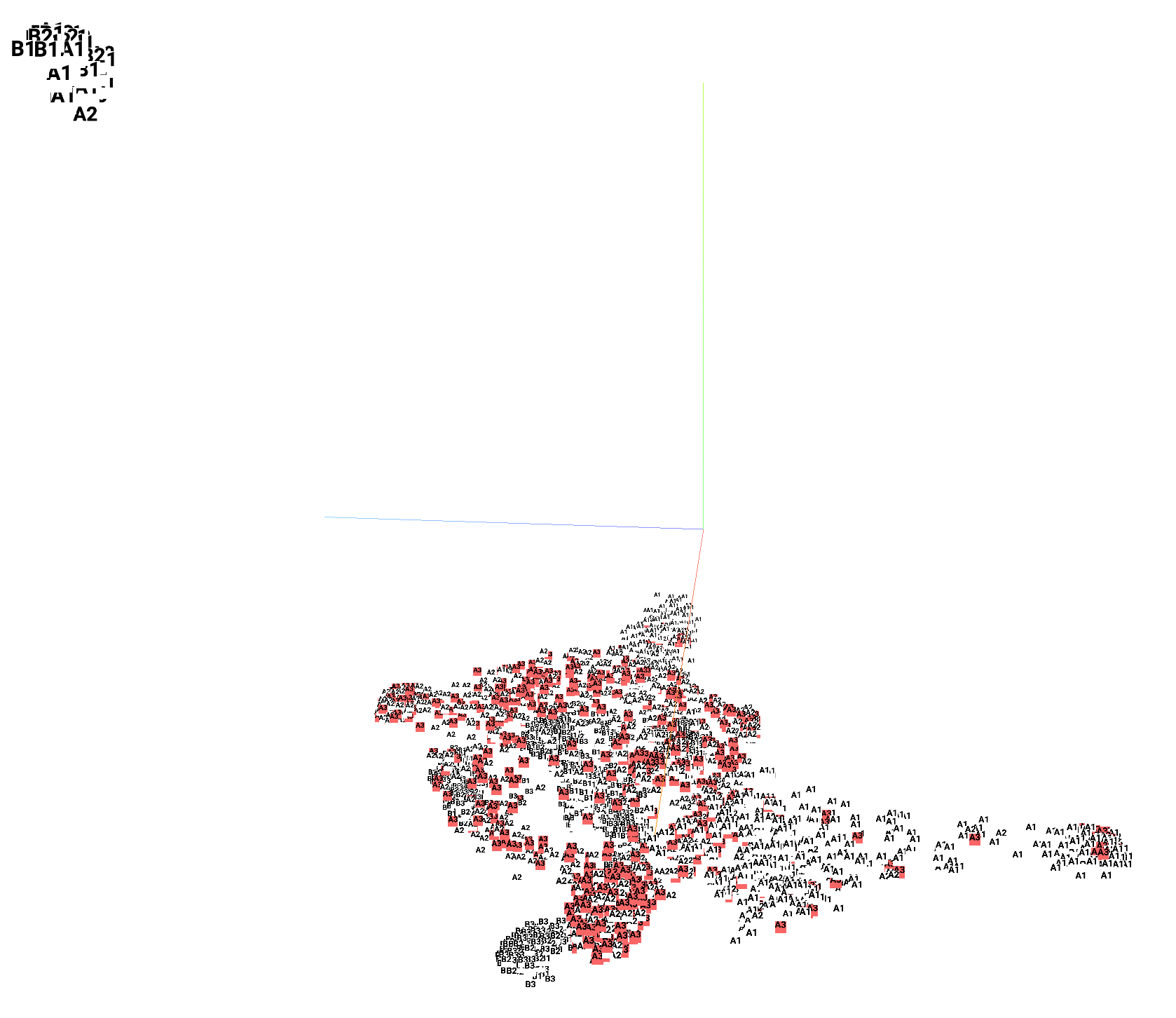}
	}
	\subfigure[Later stage of GDI-I$^1$]{
		\includegraphics[width=0.45\textwidth]{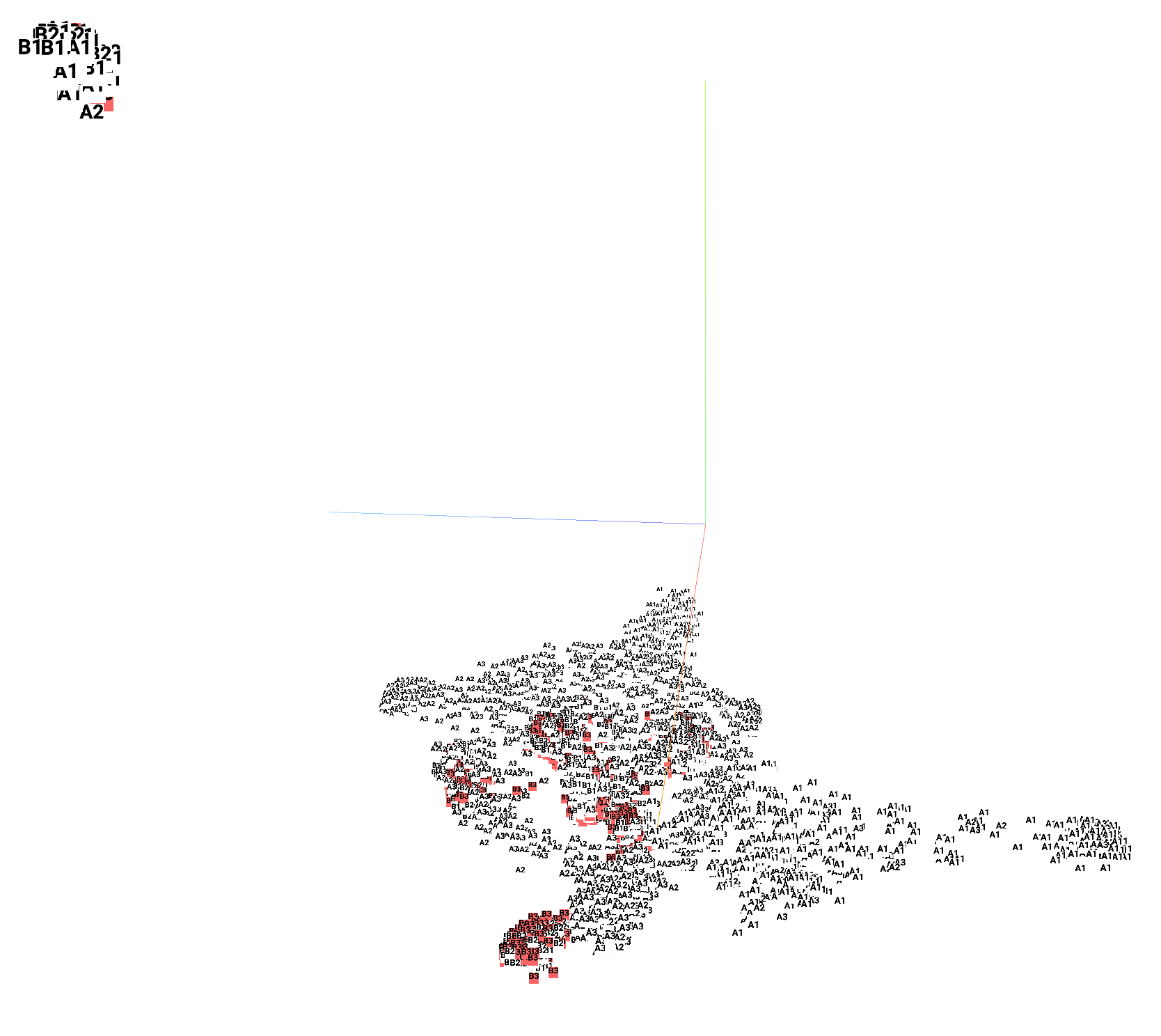}
	}
	\caption{t-SNE of ChopperCommand. 
	t-SNE is drawn from 6k states.
	We sample 1k states from each stage of GDI-I$^3$ and GDI-I$^1$.
	We highlight 1k states of each stage of GDI-I$^3$ and GDI-I$^1$.}
\end{figure}

\begin{figure}[!ht]
	\subfigure[Early stage of GDI-H$^3$]{
		\includegraphics[width=0.5\textwidth]{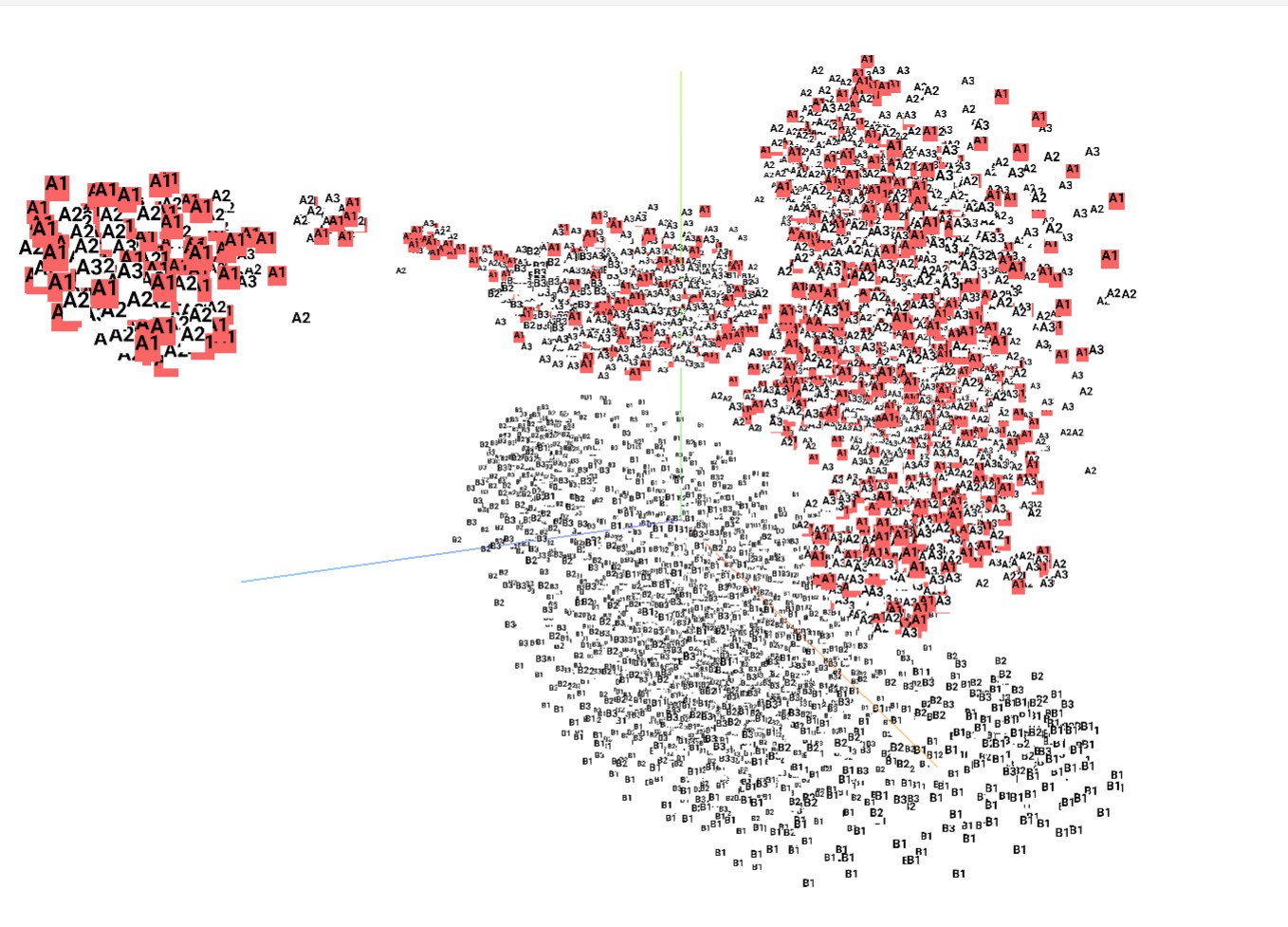}
	   }
	\subfigure[Early stage of GDI-I$^1$]{
		\includegraphics[width=0.5\textwidth]{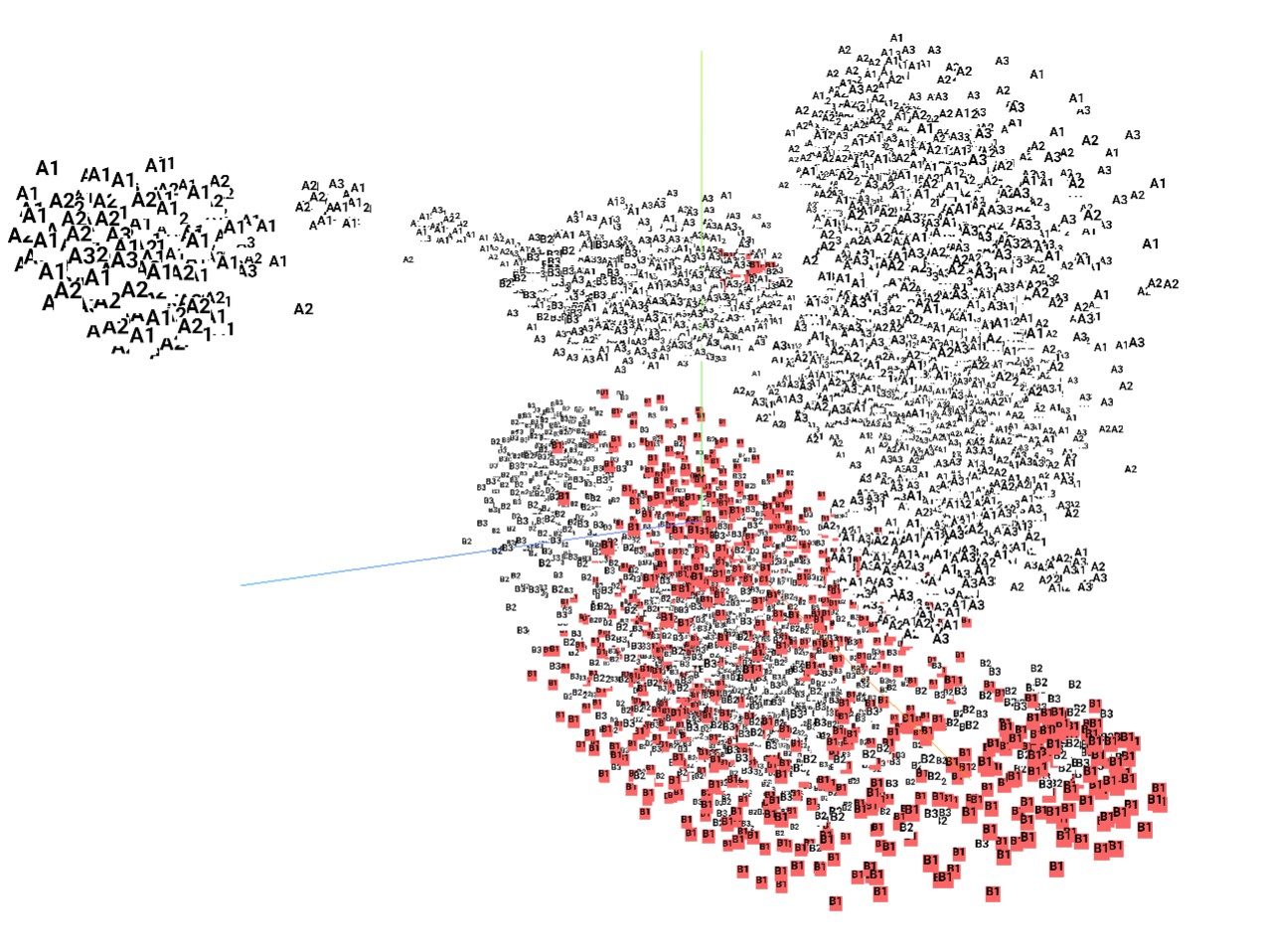}
	}
	
	\subfigure[Middle stage of GDI-H$^3$]{
		\includegraphics[width=0.5\textwidth]{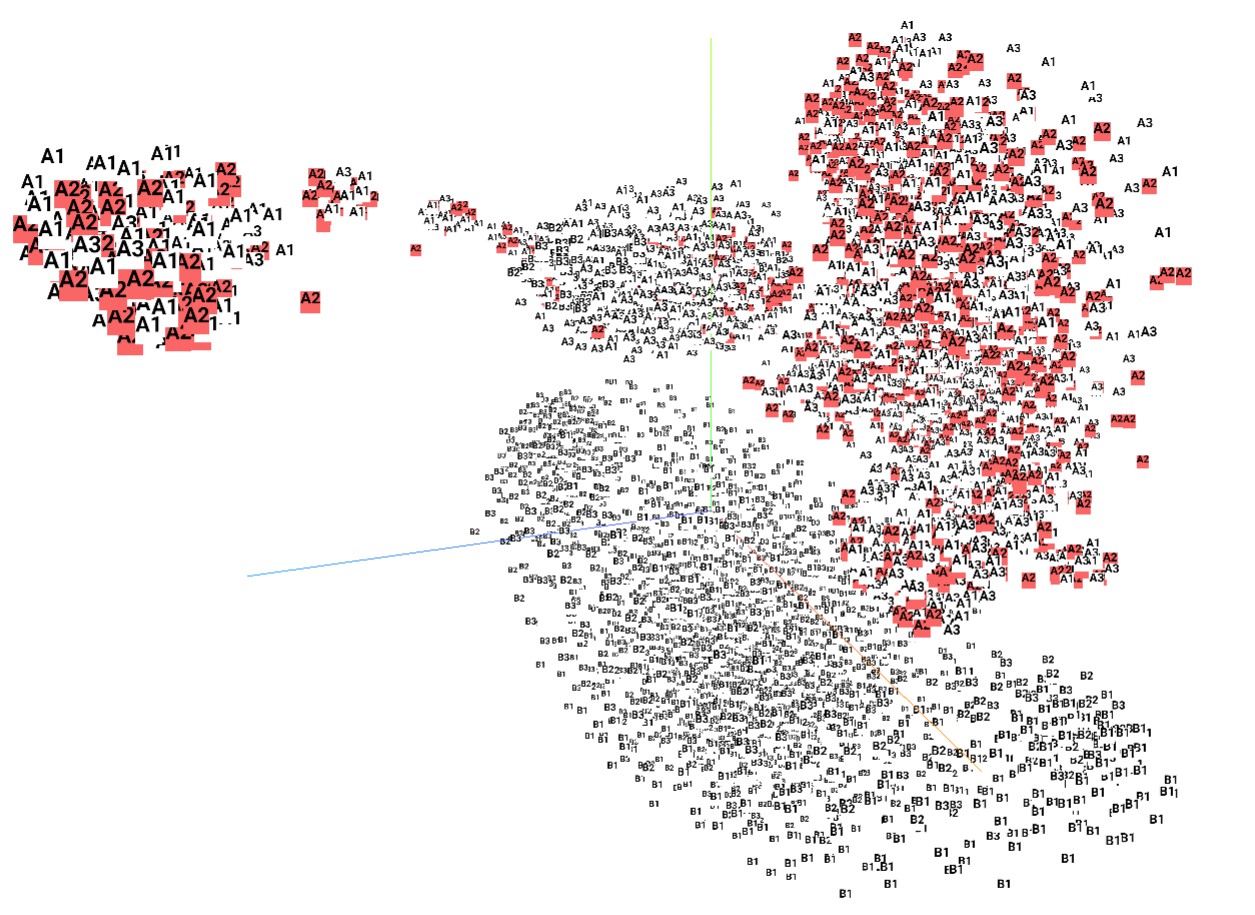}
	}
	\subfigure[Middle stage of GDI-I$^1$]{
		\includegraphics[width=0.5\textwidth]{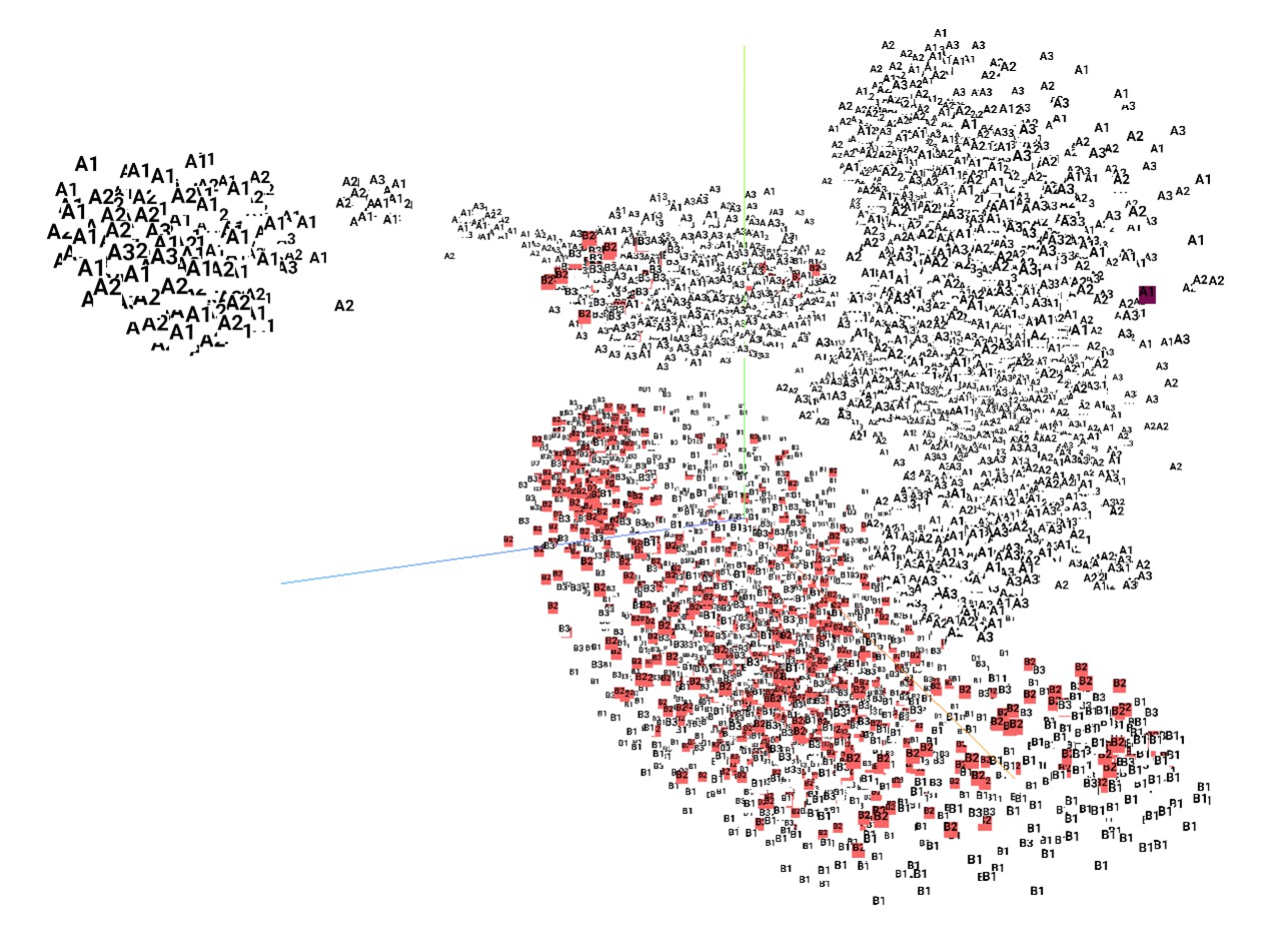}
	}
	
	\subfigure[Later stage of GDI-H$^3$]{
		\includegraphics[width=0.5\textwidth]{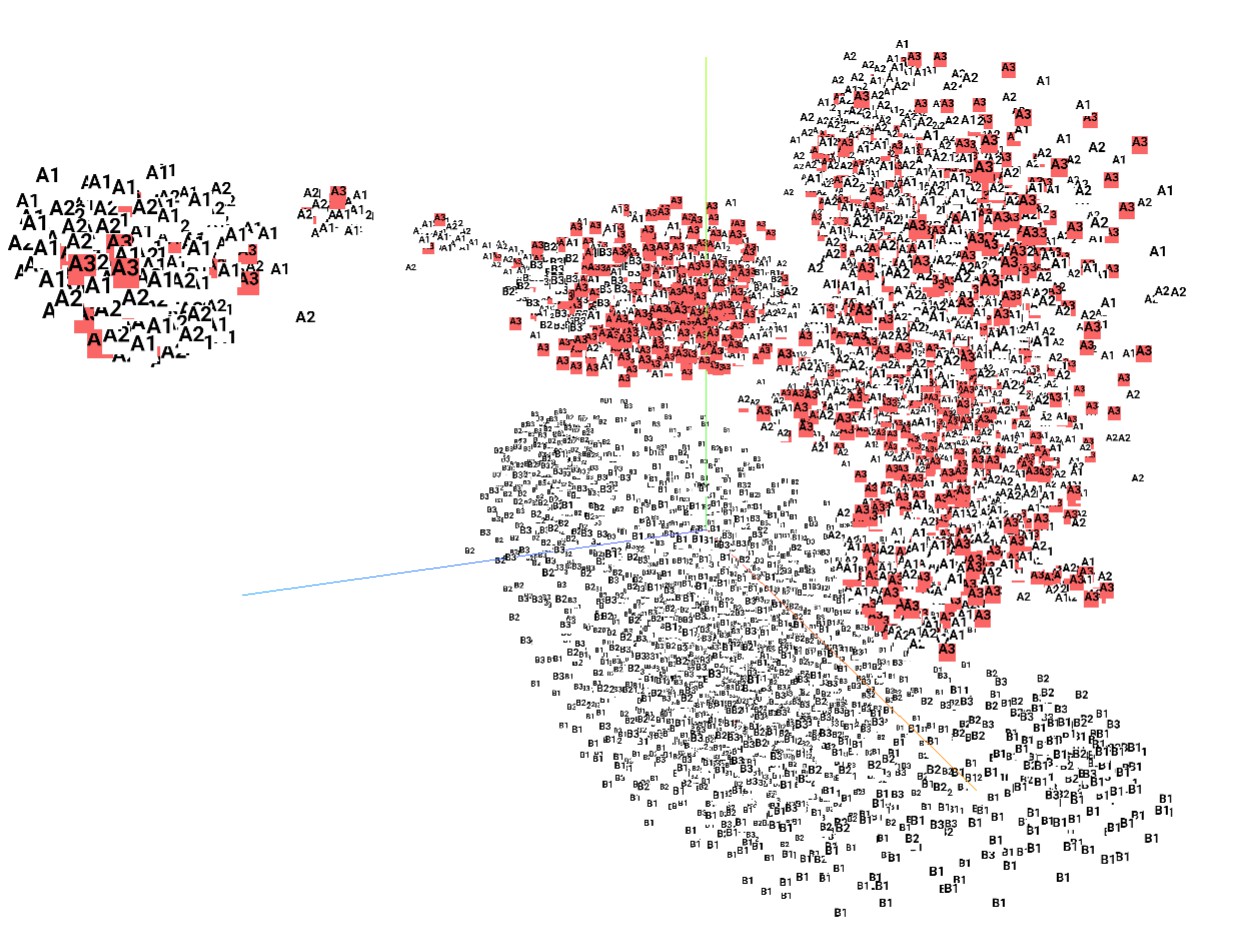}
	}
	\subfigure[Later stage of GDI-I$^1$]{
		\includegraphics[width=0.5\textwidth]{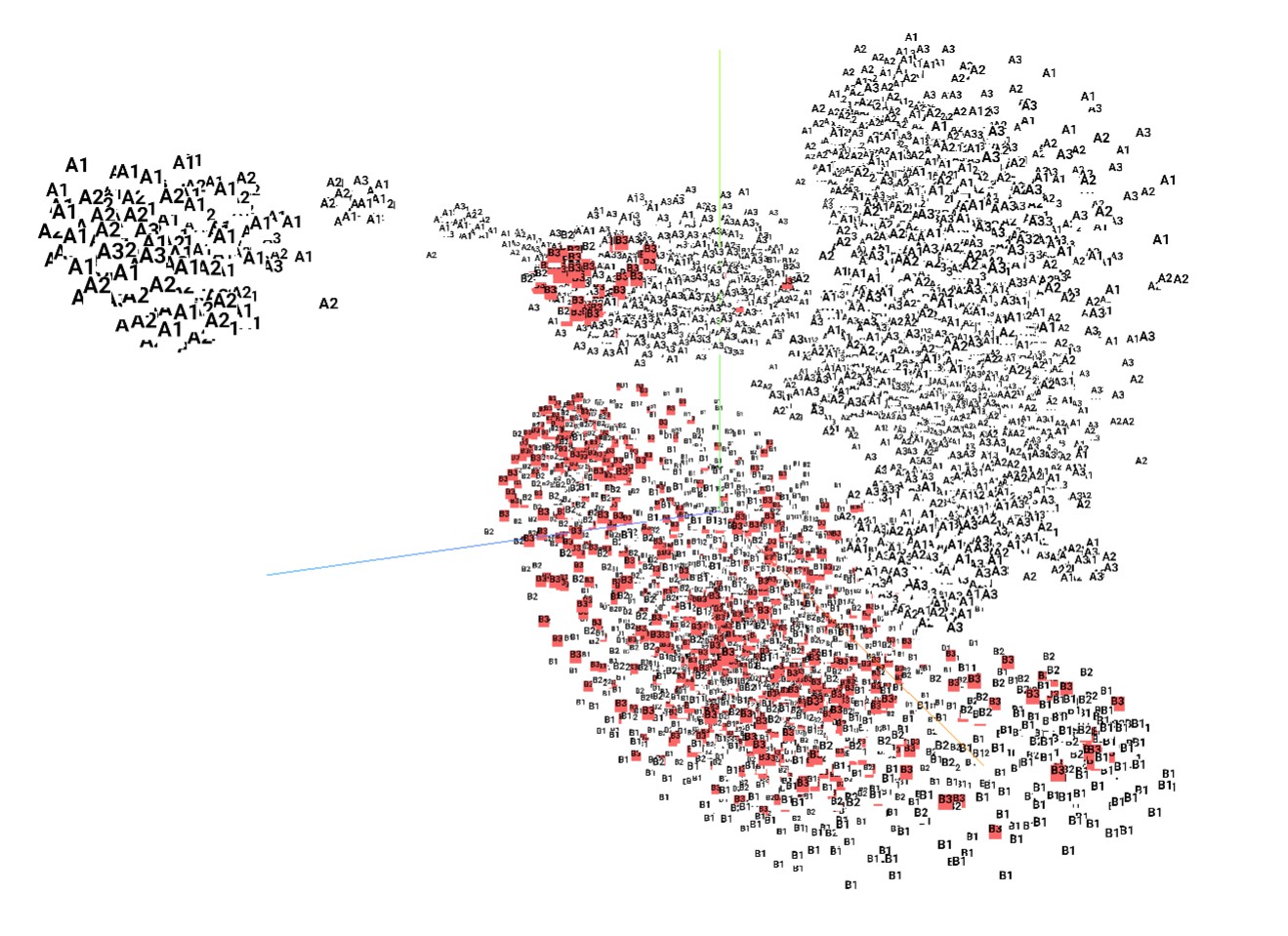}
	}
	\caption{t-SNE of  Krull. 
	t-SNE is drawn from 6k states.
	We sample 1k states from each stage of GDI-H$^3$ and GDI-I$^1$.
	We highlight 1k states of each stage of GDI-H$^3$ and GDI-I$^1$.}
\end{figure}

\begin{figure}[!ht]
    \subfigure[GDI-I$^3$ on Seaquest]{
		\includegraphics[width=0.5\textwidth]{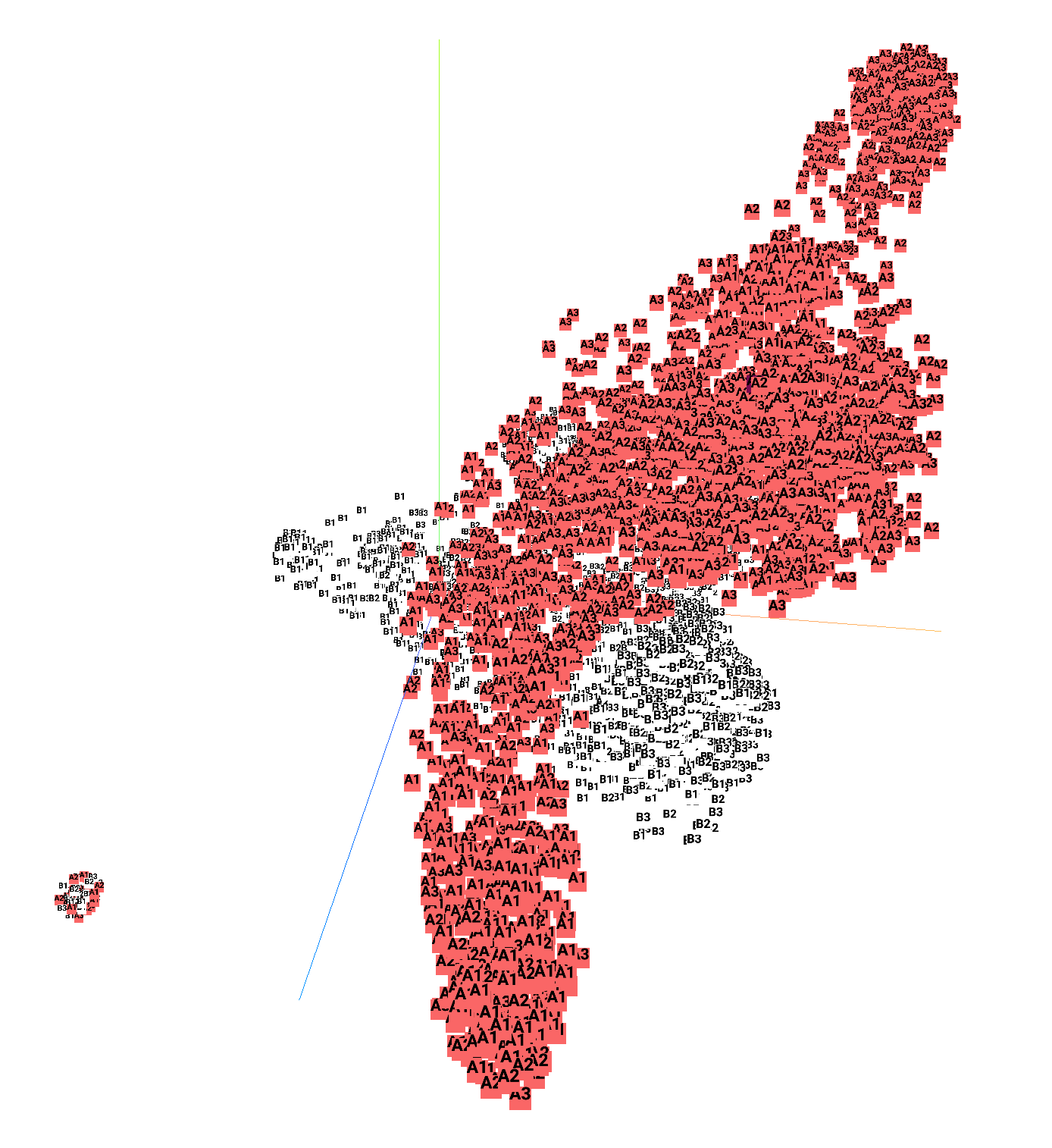}
	}
	\subfigure[GDI-I$^1$  on Seaquest]{
		\includegraphics[width=0.5\textwidth]{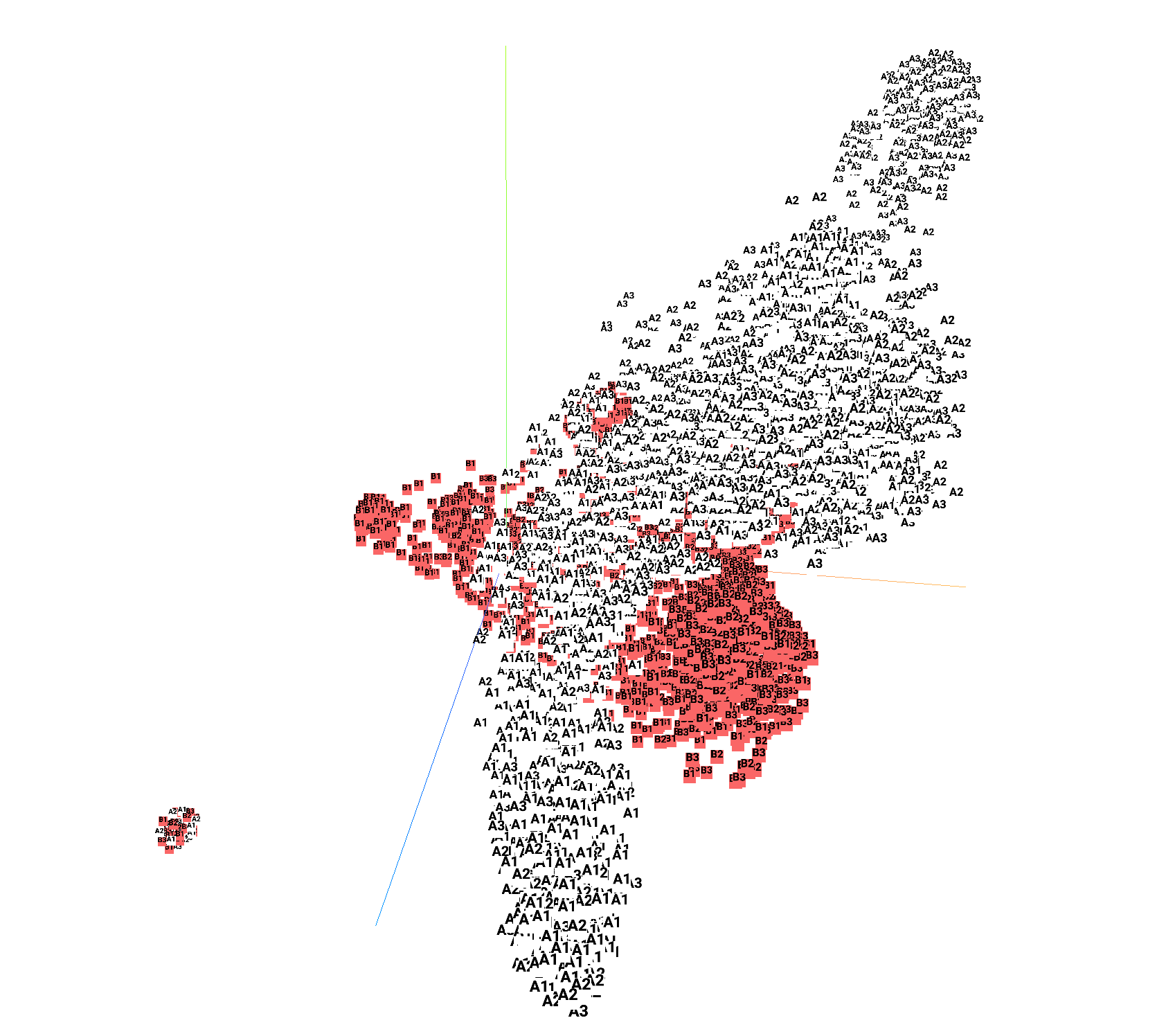}
	}
	\subfigure[GDI-I$^3$ on ChopperCommand]{
		\includegraphics[width=0.5\textwidth]{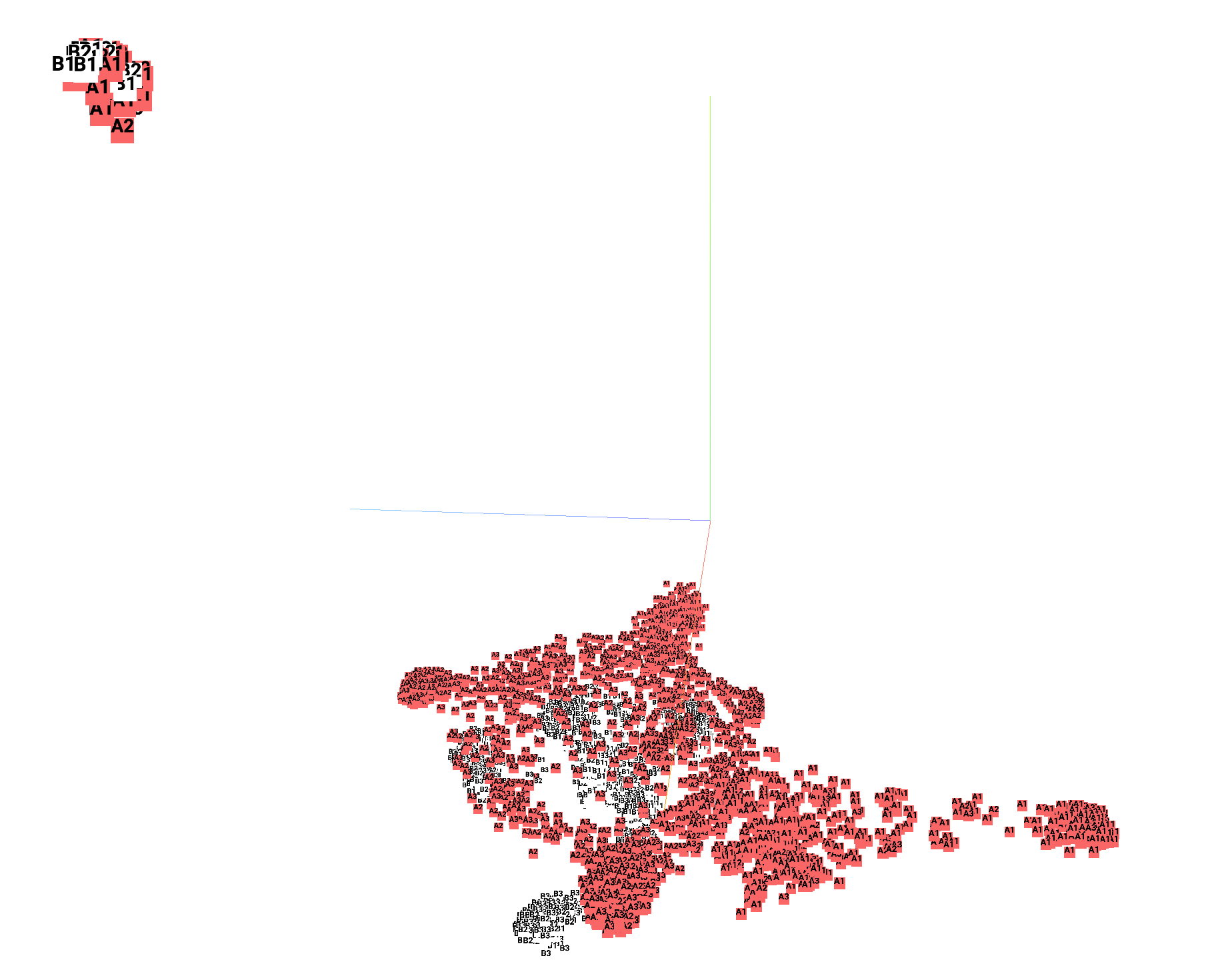}
	   }
	\subfigure[GDI-I$^1$ on ChopperCommand]{
		\includegraphics[width=0.5\textwidth]{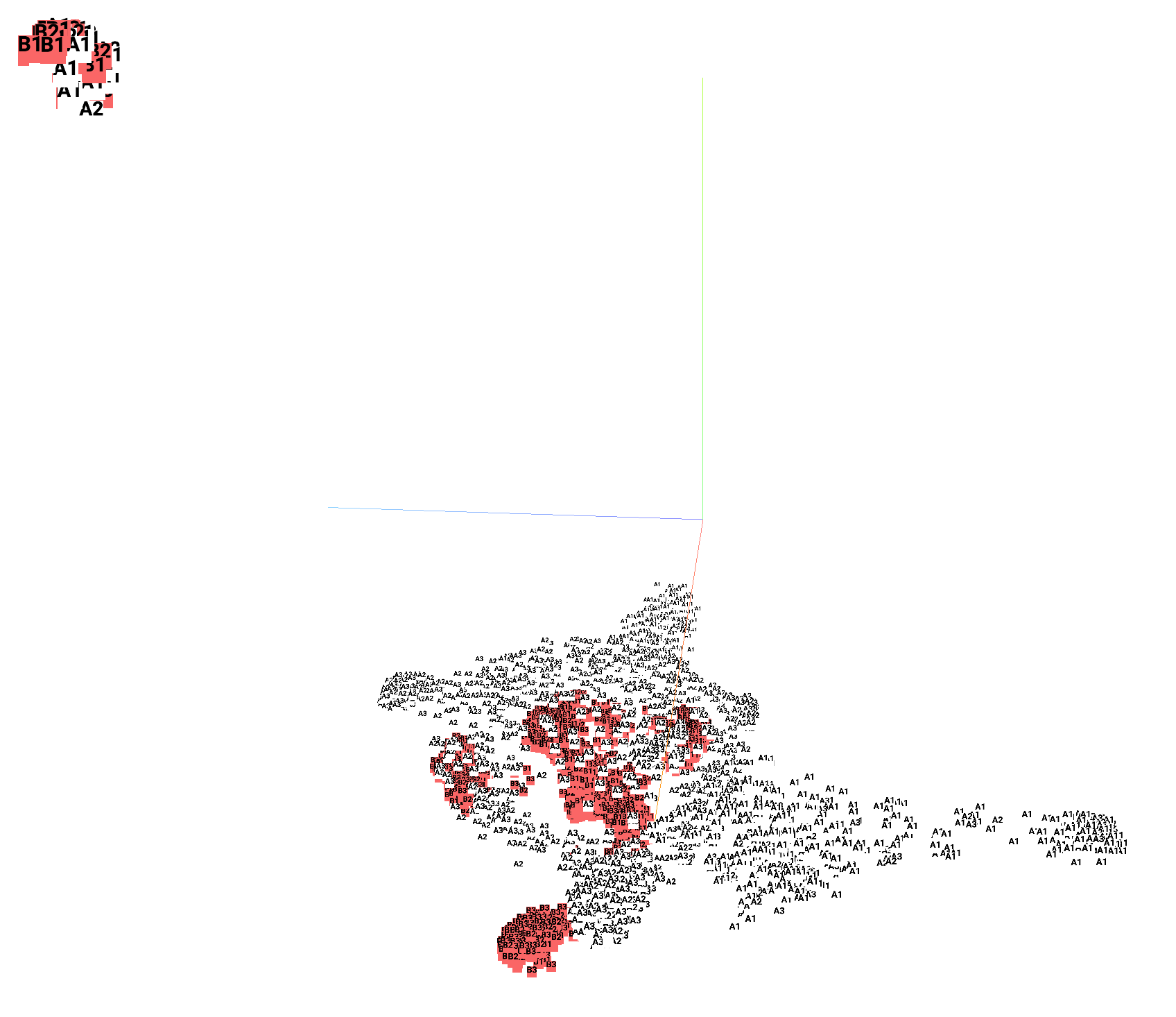}
	}
	
	\subfigure[ GDI-H$^3$ on  Krull]{
		\includegraphics[width=0.5\textwidth]{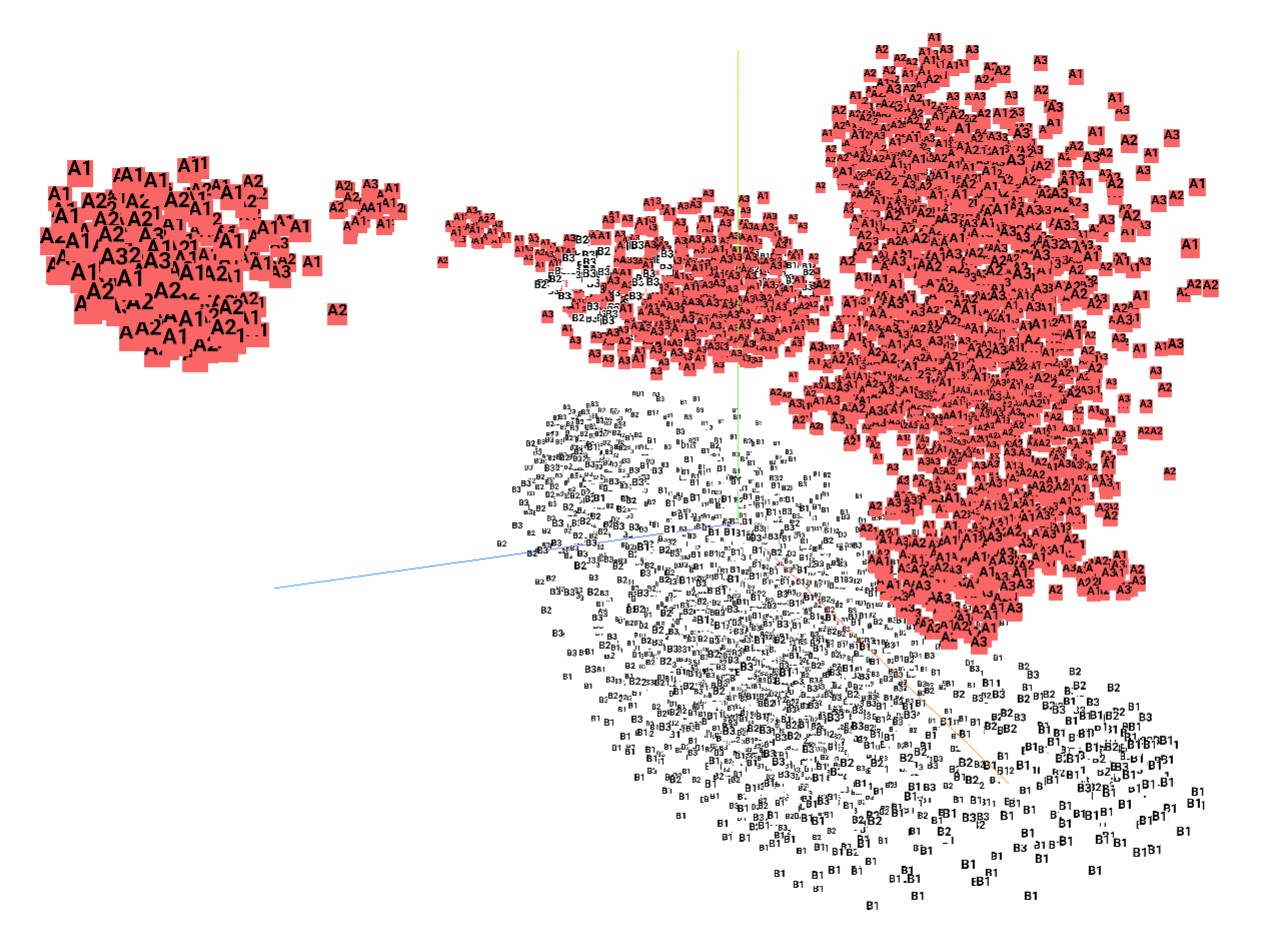}
	}
	\subfigure[GDI-I$^1$ on Krull]{
		\includegraphics[width=0.5\textwidth]{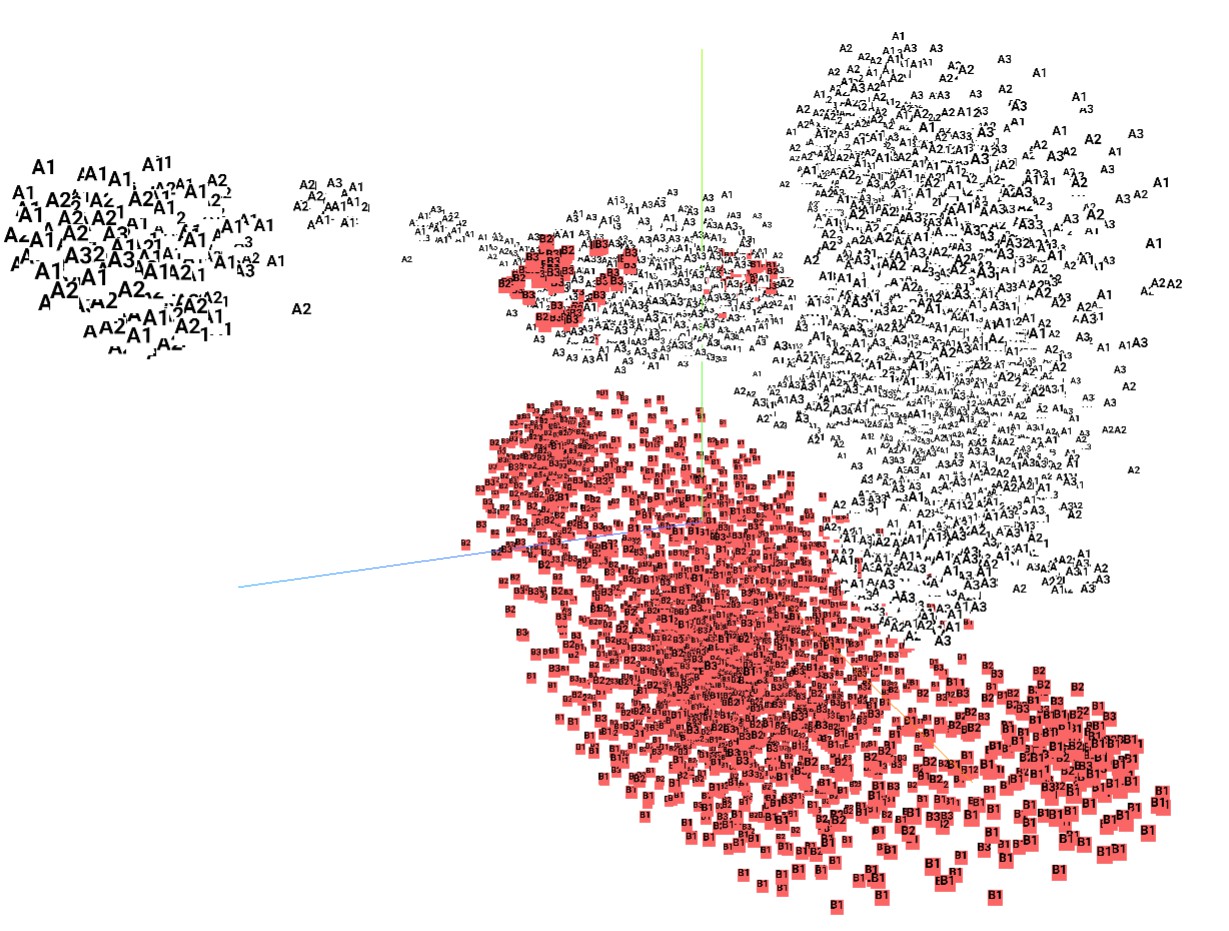}
	}
	
	\caption{Overview of t-SNE in Atari games. 
	Each t-SNE figure is drawn from 6k states.
	We highlight 3k states of GDI-I$^3$, GDI-H$^3$ and GDI-I$^1$, respectively.}
\end{figure}

\clearpage
\subsection{Table of Score}
\label{app: ablation score}

\begin{table}[!hb]
\scriptsize
\begin{center}
\setlength{\tabcolsep}{1.0pt}
\begin{tabular}{c c c c c c c c c}            
\toprule
 Games        & Fix Selection & HNS(\%)      & Boltzmann Selection              & HNS(\%)                     & GDI-I$^3$ & HNS(\%)               & GDI-H$^3$ & HNS(\%) \\
\midrule
Category      &  GDI-I$^0$ w/o $\mathcal{E}$   &            & GDI-I$^1$                     &                             & GDI-I$^3$              &                &  GDI-H$^3$   &\\
Scale         &  200M   &            & 200M                     &                             & 200M              &                &  200M   &\\
\midrule    
 alien              &13720        &195.54                         &10641        &150.92                & 43384             &625.45            &\textbf{48735}	             &\textbf{703.00}              \\
 amidar             &560        &32.34         &653.9        &37.82               & \textbf{1442}              &\textbf{83.81}             &1065              &61.81           \\
 assault            &16228        &3080.37                       &36251        &6933.91             & 63876      &12250.50   &\textbf{97155	}    &\textbf{18655.23}       \\
 asterix            &213580        &2572.80                        &851210       &10261.30                  & 759910            &9160.41           &\textbf{999999}    &\textbf{12055.38}\\
 asteroids          &18621        &38.36                  &759170       &1625.15                      & 751970     &1609.72    &\textbf{760005}            &\textbf{1626.94}       \\
 atlantis           &3211600        &19772.10                      &3670700      &22609.89                            & 3803000    &23427.66   &\textbf{3837300}           &\textbf{23639.67}       \\
 bank heist         &895.3        &119.24                          &1381         &184.98              & \textbf{1401}              &\textbf{187.68}            &1380              &184.84\\
 battle zone        &70137        &189.17                         &130410       &352.28                 & 478830            &1295.20           &\textbf{824360}            &\textbf{2230.29}\\
 beam rider         &34920        &208.64                         &104030       &625.90                 & 162100            &976.51            &\textbf{422390}   &\textbf{2548.07}\\
 berzerk            &1648        &60.81                  &1222         &43.81                  & 7607              &298.53            &\textbf{14649}             &\textbf{579.46}\\
 bowling            &162.4        &101.24                          &176.4        &111.41                  & 202               &129.94           &\textbf{205.2}             &\textbf{132.34}\\
 boxing             &98.3        &818.33                         &99.9         &831.67                       & \best{100}        &\best{832.50}    &\textbf{100}      &\textbf{832.50}        \\
 breakout           &624.3        &2161.81                       &696          &2410.76                         & \best{864}        &\best{2994.10}   &\textbf{864}      &\textbf{2994.10}        \\
 centipede          &102600        &1012.57               &38938        &371.21             & 155830            &1548.84                    &\textbf{195630}            &\textbf{1949.80}\\
 chopper command    &66690        &1001.69                       &\textbf{999999}        &\textbf{15192.62}                         & \best{999999}     &\best{15192.62}          &\textbf{999999}   &\textbf{15192.62}\\
 crazy climber      &161250        &600.70                        &157250       &584.73          & 201000            &759.39                   &\textbf{241170}	            &\textbf{919.76}\\
 defender           &421600        &2647.75                       &837750       &5279.21                 & 893110     &5629.27                        &\textbf{970540}   &\textbf{6118.89}\\
 demon attack       &291590        &16022.76                      &549450       &30199.46                                   & 675530     &37131.12         &\textbf{787985}                     &\textbf{43313.70}\\
 double dunk        &20.25        &1765.91                         &23           &1890.91                         & \best{24}         &\best{1936.36}          &\textbf{24}       &\textbf{1936.36}\\
 enduro             &10019        &1164.32                         &14317        &1663.80                          & \best{14330}      &\best{1665.31}          &14300             &1661.82\\
 fishing derby      &53.24        &273.99                          &48.8         &265.60                  & 59                &285.71                  &\textbf{65}               &\textbf{296.22}\\
 freeway            &3.46        &11.69                   &33.7         &113.85                  & \best{34}         &\best{114.86}           &\textbf{34}        &\textbf{114.86}\\
 frostbite          &1583        &35.55                  &8102         &188.24                & 10485             &244.05                   &\textbf{11330}	            &\textbf{263.84}\\
 gopher             &188680        &8743.90                        &454150       &21063.27                        & \best{488830}     &\best{22672.63}          &473560           &21964.01\\
 gravitar           &4311        &130.19                         &\textbf{6150}         &\textbf{188.05}                  & 5905              &180.34                   &5915             &180.66\\
 hero               &24236        &77.88                  &17655        &55.80                        & \textbf{38330}      &\textbf{125.18}            &38225	   &124.83\\
 ice hockey         &1.56        &105.45                         &-8.1         &25.62                    & 44.94         &463.97        &\textbf{47.11}           &\textbf{481.90}    \\
 jamesbond          &12468        &4543.10                        &567020       &207082.18                          & 594500     &217118.70         &\textbf{620780	}          &\textbf{226716.95}\\
 kangaroo           &5399        &179.25                         &14286        &477.17                  & 14500             &484.34                   &\textbf{14636}           &\textbf{488.90}\\
 krull              &12104.7        &984.23                        &11104        &890.49                          & 97575      &8990.82           &\textbf{594540}          &\textbf{55544.92}\\
 kung fu master     &124630.1        &553.31                        &1270800      &5652.43                          & 140440     &623.64            &\textbf{1666665	}          &\textbf{7413.57}\\
 montezuma revenge  &2488.4        &52.35                 &2528         &53.18                   & \textbf{3000}              &\textbf{63.11}                   &2500            &52.60\\
 ms pacman          &7579        &109.44                        &4296         &60.03                  & 11536             &169.00                  &\textbf{11573}           &\textbf{169.55}\\
 name this game     &32098        &517.76                          &30037        &481.95                 & 34434             &558.34                 &\textbf{36296}           &\textbf{590.68}\\
 phoenix            &498590        &7681.23                       &597580       &9208.60                   & 894460     &13789.30 &\textbf{959580	}          &\textbf{14794.07}   \\
 pitfall            &-17.8        &3.16                  &-21.8        &3.10                    & \textbf{0}                 &\textbf{3.43}               &-4.3            &3.36\\
 pong               &20.39        &116.40                        &21           &118.13              & \best{21}         &\best{118.13}      &\textbf{21}              &\textbf{118.13}      \\
 private eye        &134.1        &0.16                   &15095        &21.67               & \textbf{15100}             &\textbf{21.68}               &\textbf{15100}           &\textbf{21.68}\\
 qbert              &21043        &157.09                         &19091        &142.40                 & 27800             &207.93              &\textbf{28657}           &\textbf{214.38}\\
 riverraid          &11182        &62.38          &17081        &99.77                   & 28075             &169.44                       &\textbf{28349}           &\textbf{171.17}\\
 road runner        &251360        &3208.64                        &57102        &728.80                      &878600            &11215.78           &\textbf{999999}	          &\textbf{12765.53}\\
 robotank           &10.44        &84.95                      &69.7         &695.88                 & 108               &1092.78            &\textbf{113.4}           &\textbf{1146.39}\\
 seaquest           &2728        &6.33                  &11862         &28.09                   & \best{943910}    &\best{2247.98}               &\textbf{1000000}          &\textbf{2381.57}\\
 skiing             &-12730        &34.23                      &-9327        &60.90                   & -6774             &80.90               &\textbf{-6025}	         &\textbf{86.77}\\
 solaris            &2319        &9.76                       &3653         &21.79                   & \textbf{11074}             &\textbf{88.70}               &9105            &70.95\\
 space invaders     &3031        &189.58                         &105810       &6948.25                         & 140460     &9226.80       &\textbf{154380}          &\textbf{10142.17}\\
 star gunner        &337150        &3510.18                        &358650       &3734.47                & 465750     &4851.72                     &\textbf{677590}	          &\textbf{7061.61}\\
 surround           &-9.9998        &0.00          &-9.8         &1.21      & -8        &13.33                        &\textbf{2.606}           &\textbf{76.40}\\
 tennis             &-21.05        &17.74                  &23.7         &306.45                    & \best{24}         &\best{308.39}       &\textbf{24}    &\textbf{308.39}     \\
 time pilot         &84341        &4862.62             &150930       &8871.35        & 216770     &12834.99                     &\textbf{450810}	          &\textbf{26924.45}\\
 tutankham          &381        &236.62                         &380.3        &236.17                  & \textbf{424}               &\textbf{264.08}              &418.2           &260.44\\
 up n down          &416020        &3723.06                        &907170       &8124.13                         & \best{986440}     &\best{8834.45}       &966590          &8656.58    \\
 venture            &0        &0.00                    &1969         &165.81                  & \textbf{2035}              &\textbf{171.37}               &2000	            &168.42\\
 video pinball      &297920        &1686.22                        &673840       &3813.92                & 925830            &5240.18              &\textbf{978190}          &\textbf{5536.54}\\
 wizard of wor      &26008        &606.83                         &21325        &495.15                 & \textbf{64293}             &\textbf{1519.90}              &63735           &1506.59\\
 yars revenge       &76903.5        &143.37                         &84684        &158.48                 & \textbf{972000}            &\textbf{1881.96}              &968090          &1874.36\\
 zaxxon             &46070.8        &503.66                         &62133        &679.38                       & 109140     &1193.63       &\textbf{216020	}          &\textbf{2362.89}    \\
\hline    
MEAN HNS(\%)        & & 1783.24 &                       & 6712.31                       &            &  \GDIImeanhns &      & \textbf{\GDIHmeanhns} \\
\hline
MEDIAN HNS(\%)      & & 195.54           &                       & 477.17              &            & \GDIImedianhns   &      & \textbf{\GDIHmedianhns} \\
\bottomrule
\end{tabular}
\caption{Score table of the ablation study  on HNS.}
\end{center}
\end{table}

\clearpage

\section{Videos}

In the future, we will put all the videos of the performance of our algorithm on sites, which will contain the following:

\begin{itemize}
    \item Our performance on all Atari games: We provide an example video for each game in the Atari games sweep, wherein our algorithm surpass all the existing SOTA algorithms and achieves SOTA. 
    \item Adaptive Entropy Control: We show example videos for  algorithms in the game James Bond, wherein the agents automatically learns how to trade off the  exploitation policy and exploration.
    \item Human World Records Breakthroughs: We also provide an example video for \GDIHHWRB \ Atari games, wherein our algorithms achieved \GDIHHWRB \ human world records breakthroughs.
\end{itemize}

\end{appendices}
\end{document}